\documentclass{article}

\PassOptionsToPackage{numbers, compress}{natbib}

\usepackage[preprint]{neurips_2026}

\input{preamble.tex}

\definecolor{bgcolor}{rgb}{0.8,1,1}
\definecolor{bgcolor2}{rgb}{0.8,1,0.8}

\title{SILAGE: Memory-Efficient, Full-Gradient-Free Nonconvex Optimization for Nested Finite Sums}

\author{%
Igor Sokolov \\
KAUST\thanks{Center of Excellence for Generative AI, King Abdullah University of Science and Technology (KAUST), Thuwal, Saudi Arabia. Correspondence to: Igor Sokolov \texttt{<igor.sokolov.1@kaust.edu.sa>}.}
\And
Laurent Condat \\
KAUST
\And
Peter Richt\'arik \\
KAUST
}

\begin{document}

\maketitle
\begin{abstract}

Empirical risk minimization on massive datasets naturally exhibits a nested double finite-sum structure, where $N=nm$ total samples are logically or physically partitioned into $n$ blocks of size $m$ (e.g., in pooled data silos, out-of-core learning, or deliberate stratification). While variance-reduced methods achieve optimal oracle complexities for nonconvex objectives, they suffer from severe scaling bottlenecks in this centralized regime. Recursive estimators, such as \algname{PAGE}, require periodic global full-gradient refreshes over all $nm$ samples, which are computationally expensive. Conversely, single-loop methods, such as \algname{SILVER}, avoid such refreshes but require an impractical $\cO\rb{nm}$ memory footprint to store a control variate for every sample. In this paper, we propose \algname{SILAGE}, a variance-reduced algorithm that addresses this trade-off. By actively exploiting the double-sum structure, \algname{SILAGE} eliminates periodic global full-gradient refreshes over all $nm$ components---evaluating at most one local group gradient per iteration---while requiring only $\cO\rb{n}$ memory. Furthermore, we provide a tight convergence analysis that avoids pessimistic worst-case Lipschitz constants. Instead, \algname{SILAGE}'s complexity natively adapts to the underlying data geometry via nested functional similarities: across-group ($\delta_1$) and within-group ($\delta_2$) heterogeneity. Our results improve existing state-of-the-art bounds in several practically relevant regimes.

\end{abstract}

\section{Introduction}\label{sec:intro}

\subsection{The centralized double-sum paradigm}

Nonconvex optimization is ubiquitous across all areas of machine learning and artificial intelligence. In particular, empirical risk minimization (ERM) is the dominant paradigm in modern large-scale settings for supervised model training. While ERM is typically formulated as a flat finite sum of $N$ components, modern centralized training pipelines increasingly exhibit a \emph{nested double-sum structure} $N = nm$, where data are logically or physically partitioned into $n$ groups, blocks, or silos, each containing $m$ samples \citep{kon16a, mcm17, evgeniou2004regularized, caruana1997multitask, sadievdon, yi2024cohort, sokolov2024marina}.

Thus, we consider nonconvex double finite-sum problems of the form
\begin{equation}\label{eq:prob}
\min \limits_{x\in \R^d}  \left[f(x) \coloneqq \frac{1}{n}\sum \limits_{i=1}^n{f_i(x)} \right],\quad \text{with} \quad f_i(x) \coloneqq \frac{1}{m}\sum_{j=1}^m{f_{i,j}(x)} ,
\end{equation}
where $d$ is the model dimension, $[n]\coloneqq\{1,\ldots,n\}$, and $[m]\coloneqq\{1,\ldots,m\}$. The possibly nonconvex function $f_{i,j}:\R^d\to\R$ typically represents the loss on the $j$-th sample in the $i$-th group, and $f_i$ is the averaged objective function of that specific group.

Our analysis assumes that the aggregate objective $f$ is lower bounded and $L$-smooth for some $L>0$. Crucially, however, we do \textbf{not} require every individual $f_{i,j}$ to be smooth, in contrast to many existing methods (see Section~\ref{sec:assumptions} for our detailed assumptions). While standard gradient descent solves \eqref{eq:prob} with a gradient evaluation complexity\footnote{Here and throughout, the notation $g(t)=\cO\rb{h(t)}$ means that there exist a constant $C>0$ and a threshold $t_0$ such that $|g(t)|\le C\,h(t)$ for all $t\ge t_0$; that is, $h$ bounds $g$ up to a universal multiplicative constant.} of $\cO\rb{nm L \frac{\Delta_0}{\epsilon}}$, variance-reduced (VR) stochastic gradient methods are now standard tools for reducing this complexity \citep{johnson2013accelerating, fang2018spider, li21, sadiev2025improved, sokolov2025bernoulli, sokolov2022non}.

Exploiting this two-layer partition is vital because it naturally maps to several critical, single-machine settings where standard flat VR methods fail to scale \citep{shai_book, Bottou2012, goodfellow2016deep}:
\begin{itemize}[leftmargin=*]
    \item \textbf{Data Silos and Pooled Data Lakes:} Imagine a centralized data center where data are legally or physically partitioned into $n$ distinct ``silos.'' For example, a healthcare consortium aggregating patient records from $n$ different hospitals, or an enterprise training on $n$ different data modalities (text, images, logs) stored on a centralized cluster. While the computation is centralized—avoiding the communication latency and synchronization bottlenecks of federated learning—the data inherently retain their siloed structure. Each silo $i$ contains $m$ samples that share regional demographics or domain-specific features.
    \item \textbf{Out-of-Core and Memory-Constrained Learning:} In massive-scale training, such as optimizing foundation models on billions of tokens \citep{radford2019language, brown2020language}, the dataset $N$ vastly exceeds GPU VRAM or main memory. Data must be stored on slower NVMe/SSDs and loaded in chunks. Here, $n$ represents the number of physical storage blocks (or shards), and $m$ is the block size. Algorithms that require maintaining a state (control variate) for every single sample $j \in [N]$ incur an $\cO\rb{N}$ memory footprint, which is impractical in this regime. Conversely, storing just $n$ control variates (one per block) reduces this memory overhead by a factor of $m$.
    \item \textbf{Deliberate Stratification and Clustering:} Even if the data are provided as a flat array, an optimizer can actively induce a double-sum structure to accelerate training. By clustering data via lightweight preprocessing—such as grouping by class labels or applying $k$-means to input features—the dataset is partitioned into $n$ highly homogeneous clusters. This stratification creates a favorable variance geometry that our algorithm can natively exploit.
\end{itemize}

We emphasize that our setting is grouped (nested) finite-sum optimization in the centralized regime: federated and client-partitioned data serve as motivating special cases rather than the object of study, and our guarantees bound gradient (oracle) complexity and memory, not communication complexity.

\subsection{Related work and contributions}

All existing VR estimators exhibit scaling limitations at massive sizes, forcing a difficult trade-off between computation and memory. 

The first category consists of methods that necessitate periodic global \textbf{full-gradient refreshes} over all $nm$ samples.
This includes estimators such as \algname{SVRG} \citep{johnson2013accelerating,kovalev2020don}, \algname{SCSG} \citep{lei2017non}, and \algname{SSRGD} \citep{li2019ssrgd}, as well as recursive estimators such as \algname{SARAH}  \citep{ngu17},
 \algname{SPIDER} \citep{fang2018spider}, \algname{SpiderBoost} \citep{wang2018spiderboost}, and \algname{PAGE} \citep{li21, PAGE-AB} (also independently proposed as Loopless-SARAH (\algname{L2S})
 \citep{li20b}). Periodic global full-gradient refreshes are expensive and limit the scalability of these methods on very large datasets.

The second category comprises memory-based alternatives, such as classical \algname{SAG}/\algname{SAGA}-type estimators \citep{schmidt2017minimizing, defazio2014saga, bibi2018improving, demidovich2024guide} and recent methods like \algname{ZeroSARAH} \citep{li2021zerosarah} and \algname{SILVER} \citep{oko24}. These methods successfully avoid periodic global full-gradient refreshes. However, when applied to a flattened dataset, they suffer from an impractical \textbf{$\cO\rb{nmd}$ memory footprint} required to store a $d$-dimensional control variate for every individual sample. 

Thus, in the centralized large-scale regime considered here, the primary bottlenecks are \emph{gradient evaluation complexity} and \emph{memory footprint}. The $n$-group structure lets us navigate this trade-off. We introduce \textbf{\algname{SILAGE}} (\emph{SIngle Loop Average Gradient Estimator}), a novel variance-reduced algorithm tailored for double finite-sum problems. Our contributions are organized along three structural axes--- memory footprint,  avoidance of periodic global full-gradient refreshes, and  dependence of the convergence rate on the data geometry---together with a unification result that places \algname{SILAGE} within the existing landscape:
\begin{itemize}
    \item \textbf{Memory Efficiency via Nested Structure:} By maintaining $n$ control-variate table entries (one per block), plus $\cO(1)$ auxiliary $d$-vectors---namely the running aggregate $g^t=\frac{1}{n}\sum_{i=1}^n g_i^t$ that supplies the descent direction in Algorithms~\ref{algsilage1} and~\ref{algsilage2a}---\algname{SILAGE} requires only $\cO\rb{n}$ memory\footnote{Here ``$\cO\rb{n}$ memory'' counts, beyond the iterate itself, the $n+\cO(1)$ additional $d$-dimensional vectors just described. In \texttt{float64} precision this is approximately $8nd$ bytes for large $n$, ignoring lower-order implementation overhead such as sampled indices, array metadata, and temporary workspace.}---a substantial reduction compared to the prohibitively large $\cO\rb{nm}$ footprint of flattened methods such as \algname{SILVER}.
    \item \textbf{No Periodic Global Full-Gradient Refresh:} \algname{SILAGE} never performs a global full-gradient pass over all $nm$ components: each anchor reset evaluates only a single \emph{local} group gradient $\nabla f_{i^t}(x^{t+1})$, at an $\cO\rb{m}$ per-iteration cost. The method also permits arbitrary initial estimators $g_i^0$, so it requires neither the global full gradient $\nabla f(x^0)$ nor the local full gradients $\nabla f_i(x^0)$ at initialization; all initialization error enters the complexity only through the Lyapunov term $\Psi^0$ (Corollaries~\ref{cor:silage_mge_n_arbitrary_init} and~\ref{cor:silage_ngt_m_arbitrary_init}).
    \item \textbf{Exploitation of Nested Similarity:} We derive two-regime nonconvex convergence rates (for $m \ge n$ and $n > m$) that explicitly leverage nested similarity constants: $\delta_1$ (across-group) and $\delta_2$ (within-group). This structured decomposition is fundamentally more expressive and informative than standard flattened similarity bounds (detailed in Section~\ref{sec:assumptions}).
    \item \textbf{Exact Limiting Reductions (unification):} Under matching parameter choices, \algname{SILAGE} unifies existing approaches by exactly recovering \algname{PAGE} (when $n=1$) and \algname{SILVER} (when $m=1, b=1$), hence the algorithm's name.
\end{itemize}

At the level of building blocks, \algname{SILAGE} inherits two standard ideas: \algname{PAGE}-style recursive gradient tracking and the stored control variates of memory-based (\algname{SAGA}/\algname{SILVER}-style) estimators. Its novelty lies in how these are combined and analyzed for the nested problem: (i) an \emph{estimator coupling} that drives all $n$ group estimators through a single global coin and one uniformly sampled group, so that at most one group is fully refreshed per iteration; (ii) storing one control variate \emph{per group} rather than per sample, giving the $\cO\rb{n}$ footprint in place of the flattened $\cO\rb{nm}$; (iii) a \emph{shared-drift} update that handles the $n>m$ regime by correcting un-sampled groups without fresh per-group gradients; and (iv) a \emph{two-level} nonconvex analysis in the nested similarity constants $\delta_1,\delta_2$.

Table~\ref{tab:method-comparison} contrasts \algname{SILAGE} with representative baselines for nonconvex (double) finite-sum optimization along these axes, summarizing the contributions above.

\begin{table}[t]
\centering
\footnotesize
\caption{Positioning of \algname{SILAGE} relative to representative baselines for nonconvex (double) finite-sum optimization. A checkmark (\cmark) indicates that the property is satisfied; a cross (\xmark) that it is not. The \emph{Memory} column reports the control-variate memory footprint, up to universal constants. See the notes below for the precise meaning of each column.}
\label{tab:method-comparison}
\begin{threeparttable}
\renewcommand{\arraystretch}{1.1}
\begin{tabular}{|cc|ccc|c|p|}
\hline
\textbf{Algorithm} &
\begin{tabular}{c}\textbf{Full-}\\\textbf{gradient-}\\\textbf{free}\,\textsuperscript{\textcolor{blue}{(1)}}\end{tabular} &
\begin{tabular}{c}\textbf{Nested}\\\textbf{structure}\,\textsuperscript{\textcolor{blue}{(2)}}\end{tabular} &
\begin{tabular}{c}\textbf{Memory}\\\textbf{footprint}\,\textsuperscript{\textcolor{blue}{(3)}}\end{tabular} &
\textbf{Similarity}\,\textsuperscript{\textcolor{blue}{(4)}} &
\centering\arraybackslash \textbf{Note}\,\textsuperscript{\textcolor{blue}{(5)}} \\
\hline\hline
\begin{tabular}{c}\algname{GD}\end{tabular} & \xmark & \xmark & $\cO\rb{1}$ & N/A & Classic full-gradient baseline. \\
\hline
\begin{tabular}{c}\algname{SILVER}\\\citep{oko24}\end{tabular} & \cmark & \xmark & $\cO\rb{nm}$ & Flattened & Stores $nm$ control variates. \\
\hline
\begin{tabular}{c}\algname{PAGE}\\\citep{li21}\end{tabular}& \xmark & \xmark & $\cO\rb{1}$ & Flattened & Periodic global full-gradient refreshes. \\
\hline
\begin{tabular}{c}\algname{D-PAGE}\\\citep{PAGE-AB}\end{tabular}& \xmark & \cmark & $\cO\rb{1}$ & Flattened & Periodic global full-gradient refreshes. \\
\hline
\begin{tabular}{c}\algname{ZeroSARAH}\\\citep{li2021zerosarah} \end{tabular}& \cmark & \xmark & $\cO\rb{nm}$ & None & Relies on worst-case $L_{\max}$. \\
\hline
\begin{tabular}{c}\algname{D-ZeroSARAH}\\\citep{li2021zerosarah}\end{tabular} & \cmark & \cmark & $\cO\rb{nm}$ & None & Nested distributed variant; relies on $L_{\max}$. \\
\hline
\rowcolor{bgcolor}
\begin{tabular}{c}\algname{SILAGE}\\ {\bf[OURS]} \end{tabular} & \cmark & \cmark & $\cO\rb{n}$ & Nested & Proposes a novel nested similarity ($\delta_1,\delta_2$). \\
\hline
\end{tabular}
\begin{tablenotes}
\footnotesize
\item [{\color{blue}(1)}] \emph{Full-gradient-free}: the method's theoretical guarantees do not require periodic \emph{global} full-gradient refreshes over all $nm$ components to reach an $\epsilon$-stationary point. A full-gradient-free method may still evaluate at most one \emph{local} group gradient $\nabla f_i$ per iteration; \algname{SILAGE} does so at each anchor reset, at $\cO\rb{m}$ cost. The one-time $nm$ cost of exact initialization, used only for the like-for-like comparison in Section~\ref{sec:comp-exact-init} (Table~\ref{tab:oracle-complexities-full-init}), is separate and is not required by \algname{SILAGE} (Corollaries~\ref{cor:silage_mge_n_arbitrary_init} and~\ref{cor:silage_ngt_m_arbitrary_init}).
\item [{\color{blue}(2)}] \emph{Nested structure}: the method explicitly exploits the nested double finite-sum structure of the objective.
\item [{\color{blue}(3)}] \emph{Memory footprint}: the worst-case number of additional $d$-dimensional vectors (control variates) maintained throughout execution, reported up to universal constants.
\item [{\color{blue}(4)}] \emph{Similarity}: the type of functional-similarity assumption underlying the convergence analysis (flattened, nested, or none).
\item [{\color{blue}(5)}] \emph{Note}: a brief additional remark on the method.
\end{tablenotes}
\end{threeparttable}
\end{table}

Table~\ref{tab:properties_comparison} complements this overview with the limiting behavior of \algname{SILAGE}: for \algname{SILVER}, \algname{PAGE}, and \algname{SILAGE}, it reports the algorithm exactly recovered at the structural boundaries $m=1$ and $n=1$, highlighting how \algname{SILAGE} formally unifies \algname{SILVER} and \algname{PAGE}.

\begin{table}[t]
\centering
\caption{Limiting behavior of \algname{SILAGE} at the structural boundaries. For \algname{SILVER}, \algname{PAGE}, and \algname{SILAGE}, the table reports the algorithm exactly recovered when $m=1$ or $n=1$.}
\label{tab:properties_comparison}
\vspace{2mm}
\renewcommand{\arraystretch}{1.4}
\begin{tabular}{l|cc}
\hline
\bf Algorithm & $\boldsymbol{m=1}$ & $\boldsymbol{n=1}$ \\
\hline \hline
\algname{SILVER} on all $f_{i,j}$ &  \algname{SILVER} &  \algname{SILVER} \\
\algname{SILVER} on the $f_i$ &  \algname{SILVER} &  \algname{GD} \\
\hline
\algname{PAGE} on all $f_{i,j}$ &  \algname{PAGE} &  \algname{PAGE} \\
\algname{PAGE} on the $f_i$ &  \algname{PAGE} &  \algname{GD} \\
\hline
\rowcolor{bgcolor} 
\textbf{\algname{SILAGE}[OURS]}  &  \algname{SILVER} & \algname{PAGE} \\
\hline
\end{tabular}
\end{table}

\paragraph{Terminology.}
We use the following terms consistently throughout. A \emph{periodic global full-gradient refresh} is the repeated evaluation, once every few iterations, of the full gradient $\nabla f(x)=\frac{1}{nm}\sum_{i=1}^{n}\sum_{j=1}^{m}\nabla f_{i,j}(x)$ over all $nm$ components, as required by \algname{PAGE}-style estimators; we reserve the word \emph{refresh} for this recurring operation. It is distinct from a single \emph{local group gradient} $\nabla f_i$, which touches only the $m$ components of one group at an $\cO\rb{m}$ cost and is the only full-gradient computation \algname{SILAGE} performs (at each anchor reset). We further distinguish two one-time choices for the initial estimators $g_i^0$: under \emph{exact initialization} one sets $g_i^0=\nabla f_i(x^0)$ for all $i\in[n]$, at a single $nm$ component-gradient cost, whereas under \emph{arbitrary initialization} the $g_i^0$ are chosen freely (e.g., $g_i^0=0$) with no initial full-gradient pass. Accordingly, the \emph{full-gradient-free} property recorded in Table~\ref{tab:method-comparison} refers solely to the absence of periodic global full-gradient refreshes and is independent of the initialization choice.

\section{Assumptions}\label{sec:assumptions}

We first recall standard assumptions from nonconvex optimization. The following two assumptions are classical and are used in essentially the same form in the analysis of gradient descent and modern variance-reduced methods for nonconvex finite sums \citep{nes04, li21, li2021zerosarah, PAGE-AB}.

\begin{assumption}[Lower boundedness]\label{ass:lower-bounded}
There exists $\finf\in\R$ such that $f(x)\ge \finf$ for all $x\in\R^d$.
\end{assumption}

\begin{assumption}[Smoothness of the average objective]\label{ass:f-smooth}
There exists $L\ge 0$ such that, for all $x,y\in\R^d$,
\begin{align}
    \squeeze
    \norm{\nabla f(x)-\nabla f(y)}\le L\norm{x-y}.
    \notag
\end{align}
\end{assumption}

A key point of our setting is that Assumption~\ref{ass:f-smooth} is imposed strictly on the average objective $f$. Unless explicitly stated otherwise, we do not require every component $f_{i,j}$ to be individually smooth; they are merely assumed to be differentiable. Instead, we control the variance geometry of the double finite sum in \eqref{eq:prob} through two nested similarity conditions.

\begin{assumption}[Across-group similarity]\label{ass:delta1}
There exists $\delta_1\ge 0$ such that, for all $x,y\in\R^d$,
\begin{align}
    \frac1n\sum_{i=1}^n
    \sqn{
    \nabla f_i(x)-\nabla f(x)-\nabla f_i(y)+\nabla f(y)}
    \le \delta_1^2\sqn{x-y}.
    \label{eqs1}
\end{align}
\end{assumption}

\begin{assumption}[Within-group similarity]\label{ass:delta2}
There exists $\delta_2\ge 0$ such that, for all $x,y\in\R^d$,
\begin{align}
    \frac1{nm}\sum_{i=1}^n\sum_{j=1}^m
    \sqn{
    \nabla f_{i,j}(x)-\nabla f_i(x)-\nabla f_{i,j}(y)+\nabla f_i(y)}
    \le \delta_2^2\sqn{x-y}.
    \label{eqs2}
\end{align}
\end{assumption}

Assumption~\ref{ass:delta1} measures how much the group gradients deviate from the global gradient, while Assumption~\ref{ass:delta2} measures the average deviation of sample gradients from their respective group averages.

These are structural conditions of a different nature than component-wise smoothness: the latter is one sufficient condition for them, yet they can be much smaller than the worst-case component smoothness in favorable grouped and statistical regimes. For example, if all $f_i$ are $L_1$-smooth, then \eqref{eqs1} holds with $\delta_1\le L_1$; similarly, a mean-squared smoothness bound on the individual functions $f_{i,j}$ implies \eqref{eqs2}. In many distributed and statistical regimes, similarity constants can be substantially smaller than pessimistic smoothness bounds \citep{chayti2021linear,kovalev2022optimal,beznosikov2022compression,tian2022acceleration,horvath2022fedshuffle,khaled2022faster,karimireddy2020mime,CK22}. Furthermore, in i.i.d. ERM settings, related similarity quantities decrease proportionately with the local sample size under standard statistical assumptions \citep{vapnik1971uniform,zhang2018communication}. Importantly, \algname{SILAGE} still assumes the average objective $f$ is $L$-smooth (Assumption~\ref{ass:f-smooth}), and only avoids dependence on component-wise or worst-case smoothness.

While similarity assumptions are a well-established tool in variance reduction \citep{karimireddy2020mime, oko24}, the vast majority of existing baselines, including \algname{SILVER} \citep{oko24} and \algname{PAGE} \citep{PAGE-AB}, consider only a ``flattened'' similarity assumption of the form:
\begin{align}
    \frac1{nm}\sum_{i=1}^n\sum_{j=1}^m
    \sqn{\nabla f_{i,j}(x)-\nabla f_{i,j}(y)-\nabla f(x)+\nabla f(y)}
    \le \delta_{\mathrm{flat}}^2\sqn{x-y}.
    \label{eq:flat-similarity}
\end{align}

Note that \eqref{eq:flat-similarity} is linked to \eqref{eqs1} and \eqref{eqs2} via a simple algebraic equality:
\begin{align*}
    &\frac{1}{nm}\sum_{i=1}^n \sum_{j=1}^m \sqn{\nabla f_{i,j}(x)-\nabla f(x)-\nabla f_{i,j}(y)+\nabla f(y)} \\
    &\quad= \frac{1}{nm}\sum_{i=1}^n \sum_{j=1}^m \sqn{\nabla f_{i,j}(x)-\nabla f_i(x)-\nabla f_{i,j}(y)+\nabla f_i(y)} \\
    &\qquad+ \frac{1}{n}\sum_{i=1}^n\sqn{\nabla f_{i}(x)-\nabla f(x)-\nabla f_{i}(y)+\nabla f(y)} .
\end{align*}
Hence Assumptions \ref{ass:delta1}--\ref{ass:delta2} imply \eqref{eq:flat-similarity} with
\begin{equation*}
\delta_{\mathrm{flat}} \le \sqrt{\delta_1^2+\delta_2^2}.
\end{equation*}
Conversely, a flat bound controls both nested residuals, but the minimal constants on the two sides need not be related by a clean equality, since a supremum over $x,y$ of a sum need not equal the sum of the individual suprema. The two-parameter representation by the pair $(\delta_1,\delta_2)$ is therefore strictly more informative and provides a significantly finer algorithmic characterization. For instance, if there is no within-group variation, then $\delta_2=0$ and $\delta_{\mathrm{flat}}=\delta_1$; conversely, if there is no across-group variation, then $\delta_1=0$ and $\delta_{\mathrm{flat}}=\delta_2$. As we will demonstrate, \algname{SILAGE} successfully decouples these dependencies depending on the operational regime.

In the optimization literature, component-level smoothness assumptions are typically formulated as follows \citep{li21, PAGE-AB, li2021zerosarah, dasha23, zhao21}:
\begin{align}
    \squeeze
    \norm{\nabla f_i(x)-\nabla f_i(y)} &\leq L_i\norm{x-y}, \quad \forall x, y \in \R^d, \label{eq:smooth_i} \\
    \norm{\nabla f_{i,j}(x)-\nabla f_{i,j}(y)} &\leq L_{i,j}\norm{x-y}, \quad \forall x, y \in \R^d, \label{eq:smooth_ij}
\end{align}
for some constants $L_i, L_{i,j} < \infty$. The worst-case component smoothness is commonly denoted by $L_{\max} \eqdef \max_{i,j} L_{i,j}$. Additionally, some analyses utilize an averaged smoothness assumption:
\begin{align}
    \frac{1}{n} \sum_{i=1}^n\sqn{\nabla f_i(x)-\nabla f_i(y)} \leq L_{+}^2\sqn{x-y}, \quad \forall x, y \in \R^d,
    \label{eq:smooth_plus}
\end{align}
for some $L_{+} < \infty$. These classical assumptions naturally imply a hierarchy of smoothness constants: \[L^2 \le L_{+}^2 \leq \frac{1}{n} \sum_{i=1}^n L_i^2 \le L_{\max}^2.\]

Relying on these inequalities, existing methods yield oracle complexities dominated by the larger constants in this hierarchy. For instance, \algname{PAGE} \citep{li21} and \algname{ZeroSARAH} \citep{li2021zerosarah} achieve theoretical oracle complexities of $\cO\rb{nm + L_{+}\sqrt{nm}\frac{\Delta_0}{\epsilon}}$ and $\cO\rb{nm + L_{\max}\sqrt{nm}\frac{\Delta_0}{\epsilon}}$, respectively.

The Hessian-variance framework proposed by \citet{PAGE-AB} tightens this dependence. Under uniform weights in the flattened regime, it yields a complexity of \[\cO\rb{nm + \rb{L+\sqrt{nm}\delta_{\mathrm{flat}}}\frac{\Delta_0}{\epsilon}}.\] Furthermore, in the grouped regime under full client participation, it yields \[\cO\rb{nm+\rb{nL+\sqrt{nm}\bar\delta_2}\frac{\Delta_0}{\epsilon}},\] where $\bar\delta_2^2\eqdef \frac1n\sum_{i=1}^n\delta_{2,i}^2$, and $\delta_{2,i} \ge 0$ is a localized within-group constant such that, for all $x,y\in\R^d$,
\begin{align}
    \frac1m\sum_{j=1}^m
    \sqn{
    \nabla f_{i,j}(x)-\nabla f_i(x)
    -\nabla f_{i,j}(y)+\nabla f_i(y)}
    \leq
    \delta_{2,i}^2\sqn{x-y}.
    \label{eq:delta2i}
\end{align}

We emphasize that the total oracle complexity of \algname{SILAGE} depends strictly on $L$, $\delta_1$, and $\delta_2$, entirely bypassing any dependence on the potentially huge worst-case constant $L_{\max}$. This structural advantage makes our theoretical rates significantly tighter in heterogeneous regimes compared to existing component-smoothness approaches.

The fundamental novelty of considering two layers of similarity ($\delta_1$ and $\delta_2$) is not that their combination is always strictly smaller than the flattened similarity. Rather, Assumptions~\ref{ass:delta1} and \ref{ass:delta2} rigorously decompose the flattened residual into distinct between-group and within-group components. As we demonstrate in our complexity analysis, \algname{SILAGE} natively exploits this explicit separation: depending on the dataset dimensions, either $\delta_2$ alone dictates the leading complexity (in the $m \ge n$ regime), or both constants enter as multipliers of different dimensional terms (in the $n > m$ regime). To the best of our knowledge, prior variance-reduction guarantees for double finite sums do not yield this two-level dependence on both $\delta_1$ and $\delta_2$ while maintaining an $\cO\rb{n}$ memory footprint via a single estimator per group. 

This dual-parameterization $(\delta_1,\delta_2)$ provides a fundamentally more informative view of the geometry of~\eqref{eq:prob} than a single flattened similarity constant. By decoupling distinct sources of heterogeneity---where $\delta_1$ measures the variation of the group objectives $f_i$ around the global objective $f$, and $\delta_2$ measures the average variation of the component functions $f_{i,j}$ around their local group averages $f_i$---this framework naturally captures several practically relevant data regimes:
\begin{itemize}
    \item \textbf{Globally and locally homogeneous data ($\delta_1$ small, $\delta_2$ small).}
    This regime arises when the groups are mutually similar and each group is internally coherent. Examples include repeated measurements collected under nearly identical experimental conditions, data acquired across standardized sites following the same protocol, or stable temporal windows of a slowly varying process. In such settings, both across-group and within-group deviations are mild.

    \item \textbf{Redundant blocks ($\delta_1$ small, $\delta_2$ relatively large).}
    This regime is induced by the random sharding of a large dataset: each group behaves as a representative subsample of the global population. Consequently, different groups are statistically close to one another ($\delta_1$ is small), while the internal sample-level diversity inside each group remains substantial ($\delta_2$ is large relative to $\delta_1$). This situation is common in distributed ERM when data are partitioned purely for computational scaling rather than semantic grouping.

    \item \textbf{Homogeneous silos ($\delta_1$ large, $\delta_2$ small).}
    Here, each group is internally coherent, but different groups correspond to genuinely distinct populations, domains, tasks, or modalities. Examples include clustered federated learning, client-specific personalization, or task-based partitions. Within-group variability is minimal, but the discrepancy between local objectives is pronounced.

    \item \textbf{Heterogeneous silos ($\delta_1$ large, $\delta_2$ large).}
    This regime captures challenging large-scale settings in which different groups correspond to distinct populations and each individual group is itself internally diverse. Examples include hierarchical federated learning architectures \citep{liu2020client} spanning institutions with different demographics and broad case mixes, cross-market recommendation systems, or multi-domain corpora. In these cases, both levels of heterogeneity are substantial.
\end{itemize}

A single flattened similarity constant inherently obscures these qualitatively different geometries. By contrast, the $(\delta_1,\delta_2)$ pair makes the multiscale heterogeneity explicit, allowing the resulting oracle complexity bounds to cleanly separate the structural impact of across-group and within-group variation.

Crucially, \algname{SILAGE} is 
 \emph{agnostic} to these parameters during execution; it does not require prior knowledge of whether $\delta_1$ or $\delta_2$ dominates. It naturally adapts to whatever underlying data geometry is present, yielding theoretical complexities that strictly improve a posteriori based on the relative tightness of $\delta_1$ and $\delta_2$.

\section{\algname{SILAGE} Algorithm and Convergence Results}
\label{sec:silage_algorithms}

\begin{figure}[t]
\centering
\begin{minipage}[t]{0.485\textwidth}
\begin{algorithm}[H]
\small
\caption{\algname{SILAGE} (\textbf{new}), case $m\geq n$}
\label{algsilage1}
\begin{algorithmic}[1]
    \REQUIRE Stepsize $\gamma > 0$, probability $p \in (0,1]$
    \STATE \textbf{Initialize:} $x^0, g_1^0, \ldots, g_n^0 \in \mathbb{R}^d$,  $g^0 \eqdef \frac{1}{n}\sum_{i} g_i^0$
    \FOR{$t=0,1,\ldots$}
        \STATE $x^{t+1} \eqdef x^t - \gamma g^t$
        \STATE Sample $\theta^t \sim \text{Bernoulli}(p)$
        \IF{$\theta^t = 1$}
            \STATE Sample $i^t \sim \mathcal{U}([n])$
            \STATE $g_{i^t}^{t+1} \eqdef  \nabla f_{i^t}(x^{t+1})$ \hfill \COMMENT{Anchor reset}
            \STATE $\mathcal{I}^t \eqdef  [n] \setminus \{i^t\}$
        \ELSE
            \STATE $\mathcal{I}^t \eqdef  [n]$
        \ENDIF
        \FOR{$i \in \mathcal{I}^t$}
            \STATE Sample $j_i^t \sim \mathcal{U}([m])$
            \STATE $\Delta_i^t \eqdef  \nabla f_{i,j_i^t}(x^{t+1}) - \nabla f_{i,j_i^t}(x^t)$
            \STATE $g_i^{t+1} \eqdef  g_i^t + \Delta_i^t$ \hfill \COMMENT{Recursive update}
        \ENDFOR
        \STATE $g^{t+1} \eqdef  \frac{1}{n}\sum_{i=1}^n g_i^{t+1}$
    \ENDFOR
\end{algorithmic}
\end{algorithm}
\end{minipage}\hfill
\begin{minipage}[t]{0.485\textwidth}
\begin{algorithm}[H]
\small
\caption{\algname{SILAGE} (\textbf{new}), case $n>m$}
\label{algsilage2a}
\begin{algorithmic}[1]
    \REQUIRE Stepsize $\gamma > 0$, active group count $b_{\mathrm{grp}} \in [n]$
   \STATE \textbf{Initialize:} $x^0, g_1^0, \ldots, g_n^0 \in \mathbb{R}^d$,  $g^0 \eqdef \frac{1}{n}\sum_{i} g_i^0$
    \FOR{$t=0,1,\ldots$}
        \STATE $x^{t+1} \eqdef x^t - \gamma g^t$
        \STATE Sample $i^t \sim \mathcal{U}([n])$
        \STATE $g_{i^t}^{t+1} \eqdef \nabla f_{i^t}(x^{t+1})$ \hfill \COMMENT{Anchor reset}
        \STATE Sample $\Omega^t \subset [n] \!\setminus \!\{i^t\}$ unif., with $|\Omega^t| = b_{\mathrm{grp}}-1$%
        \STATE $\tilde{\Omega}^t \eqdef \Omega^t \cup \{i^t\}$
        \FOR{$i \in \Omega^t$}
            \STATE Sample $j_i^t \sim \mathcal{U}([m])$
            \STATE $\Delta_i^t \eqdef \nabla f_{i,j_i^t}(x^{t+1}) - \nabla f_{i,j_i^t}(x^t)$
            \STATE $g_i^{t+1} \eqdef g_i^t + \Delta_i^t$ \hfill \COMMENT{Recursive update}
        \ENDFOR
        \STATE Sample $j_{i^t}^t \sim \mathcal{U}([m])$
        \STATE $\Delta_{i^t}^t \eqdef \nabla f_{i^t,j_{i^t}^t}(x^{t+1}) - \nabla f_{i^t,j_{i^t}^t}(x^t)$
        \STATE $d^t \eqdef \frac{1}{b_{\mathrm{grp}}}\sum_{i \in \tilde{\Omega}^t} \Delta_i^t$ \hfill \COMMENT{Compute shared drift}
        \FOR{$i \in [n] \setminus \tilde{\Omega}^t$}
            \STATE $g_i^{t+1} \eqdef g_i^t + d^t$ 
        \ENDFOR
        \STATE $g^{t+1} \eqdef \frac{1}{n}\sum_{i=1}^n g_i^{t+1}$
    \ENDFOR
\end{algorithmic}
\end{algorithm}
\end{minipage}
\end{figure}

To obtain a single-loop estimator that entirely avoids periodic global full-gradient refreshes while exploiting the two-level structure of~\eqref{eq:prob}, \algname{SILAGE} relies on a carefully constructed fusion of two variance-reduction philosophies: it applies a \algname{SILVER}-like memory structure across the $n$ groups, and a \algname{PAGE}-like recursive tracking within the $m$ components of each group. However, naively deploying $n$ independent \algname{PAGE} estimators (one for each group) would fail to yield a practical algorithm. In standard \algname{PAGE}, an estimator is refreshed with the exact gradient with probability $1/m$. If we ran $n$ independent coin flips per iteration, the number of groups triggering a full refresh simultaneously would be a binomial random variable. Consequently, the number of component gradients evaluated per iteration would be $nm$ in the worst case, which is precisely what we want to avoid. 
%
To guarantee a strictly bounded maximum batch size, \algno carefully couples the two strategies. The group estimators are correlated in a specific way
that depends on the aspect ratio of the dataset: 

\begin{itemize}[leftmargin=*]
    \item \textbf{The $m \ge n$ regime (Algorithm~\ref{algsilage1}):} Our design target is that each of the $n$ group estimators be fully refreshed with average probability $1/m$ per iteration, matching the refresh rate of a \algname{PAGE} estimator on a block of $m$ components. To meet this target while never refreshing more than one group per iteration, we couple a \emph{single global coin} with a uniform group selection: with probability $p$ the coin lands heads, and conditioned on heads we select exactly \emph{one} group $i^t$ uniformly at random to undergo a full local refresh $\nabla f_{i^t}(x^{t+1})$. Under this scheme, the marginal probability that any fixed group is refreshed in a given iteration is $p\cdot\frac{1}{n}$; equating this with the target rate $1/m$ yields $p = n/m$, which is a valid probability precisely because $m \ge n$. Thus Algorithm~\ref{algsilage1} evaluates at most one local group gradient $\nabla f_{i^t}$ per iteration, and only on reset iterations (those for which the coin lands heads); consequently, the maximum batch size stays at $\cO\rb{m+n}=\cO\rb{m}$. In this regime, the \algname{PAGE} estimator is fully operational on each block independently; thus, the blocks do not interfere with each other, and across-group heterogeneity ($\delta_1$) does not affect the leading complexity.
        \item \textbf{The $n > m$ regime (Algorithm~\ref{algsilage2a}):} The previous strategy fails because $n/m > 1$.
        To maintain an $\cO\rb{m}$ per-iteration cost, we cannot compute multiple (ideally $n/m$) group refreshes $\nabla f_{i}$, so we compute exactly one.
        Moreover, updating the remaining $n-1$ estimators with fresh local data would cost $\cO\rb{n}$, breaking our $\cO\rb{m}$ budget. We circumvent this by selecting only a small subset of $b$ groups (typically $b=m$) to receive fresh gradient evaluations. There is no gradient evaluation for the remaining un-sampled groups; instead, their estimators are updated using a \emph{shared drift} computed exclusively from the $b$ active groups. We pay a theoretical price for this cost-saving design: because the un-sampled groups rely on updates generated by other groups, the across-group heterogeneity ($\delta_1$) enters the convergence complexity.
        \end{itemize}
At the exact boundary where $n=m$, both formulations naturally coincide under the right parameter choices. In the following subsections, we state the convergence guarantees for both regimes. Both versions store exactly one estimator $g_i^t$ for each group $f_i$ and use their average $g^t \eqdef \frac1n\sum_{i=1}^n g_i^t$ as the descent direction.

\subsection{Convergence in the $m \ge n$ regime}
\label{subsec:silage_mge_n_convergence}

We first state our main convergence result covering the $m\ge n$ regime.
\begin{restatable}[General convergence, $m\ge n$]{theorem}{silageMgeNNC}
\label{theo1}
Let \Cref{ass:lower-bounded} (lower-boundednes), \Cref{ass:f-smooth} ($L$-smoothness), and \Cref{ass:delta2} (sithin-group $\delta_2$-similarity) hold, with $m\ge n$. In \algno (Algorithm~\ref{algsilage1}),
assume the stepsize satisfies \[0<\gamma\le \rb{L+\delta_2\sqrt{\frac{n-p}{np}}}^{-1}.\] For any $T \ge 0$, 
\begin{equation}\label{eq:80y0f9u0df}  \Exp{\sqn{\nf{\tilde{x}^T}}} \le \frac{2\Psi^0}{\gamma(T+1)},\end{equation} where
 $\tilde{x}^T$ is chosen uniformly at random from the iterates $\{x^0,\ldots,x^T\}$, and
\[\Psi^0 \eqdef f(x^0)-\finf +\frac{\gamma n}{4p}\sqn{g^0-\nf{x^0}} +\frac{\gamma}{4np}\sum_{i=1}^n\sqn{g_i^0-\nfi{x^0}}.\]

\end{restatable}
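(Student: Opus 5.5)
The plan is a Lyapunov argument in the style of the \algname{PAGE} analysis, with two error quantities instead of one. Write $e_i^t \eqdef g_i^t-\nabla f_i(x^t)$, and define
\[
A^t\eqdef\sqn{g^t-\nabla f(x^t)}=\sqn{\tfrac1n\sum_i e_i^t},\qquad B^t\eqdef\tfrac1n\sum_i\sqn{e_i^t}.
\]
The potential is
\[
\Psi^t\eqdef f(x^t)-\finf+\tfrac{\gamma n}{4p}A^t+\tfrac{\gamma}{4p}B^t ,
\]
which coincides with the stated $\Psi^0$ at $t=0$.

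\textbf{Step 1 (descent).} Since $x^{t+1}=x^t-\gamma g^t$, the $L$-smoothness of $f$ (\Cref{ass:f-smooth}) gives the standard \algname{PAGE} descent inequality
\[
f(x^{t+1})\le f(x^t)-\tfrac\gamma2\sqn{\nabla f(x^t)}-\rb{\tfrac1{2\gamma}-\tfrac L2}\sqn{x^{t+1}-x^t}+\tfrac\gamma2 A^t .
\]

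\textbf{Step 2 (recursions for $A^t$ and $B^t$).} Condition on $x^t$ and $x^{t+1}$. For every non-reset group $i$ we have
\[
e_i^{t+1}=e_i^t+\xi_i,\qquad \xi_i\eqdef\Delta_i^t-\rb{\nabla f_i(x^{t+1})-\nabla f_i(x^t)} .
\]
The $\xi_i$ are mean zero and independent across $i$. By \eqref{eq:delta2i}, $\mathbb{E}\sqn{\xi_i}\le\delta_{2,i}^2\sqn{x^{t+1}-x^t}$, and averaging over $i$ turns $\delta_{2,i}^2$ into $\delta_2^2$ by \Cref{ass:delta2}. The reset group $i^t$ has $e_{i^t}^{t+1}=0$. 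Each fixed group is reset with probability $p/n$. This gives
\[
\mathbb{E}B^{t+1}\le\rb{1-\tfrac pn}B^t+\rb{1-\tfrac pn}\delta_2^2\sqn{x^{t+1}-x^t}.
\]
For $A^t$, on a non-reset step $\mathbb{E}A^{t+1}\le A^t+\tfrac{\delta_2^2}{n}\sqn{x^{t+1}-x^t}$. On a reset step the mean part is $e^t-\tfrac1n e_{i^t}^t$, where $e^t\eqdef\tfrac1n\sum_i e_i^t$. Averaging over the uniform $i^t$ gives
\[
\tfrac1n\sum_i\sqn{e^t-\tfrac1ne_i^t}=\rb{1-\tfrac2n}A^t+\tfrac1{n^2}B^t .
\]
The noise from the $n-1$ updated groups adds at most $\tfrac{n-1}{n^2}\delta_2^2\sqn{x^{t+1}-x^t}$. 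Combining the two cases yields
\[
\mathbb{E}A^{t+1}\le\rb{1-\tfrac{2p}n}A^t+\tfrac p{n^2}B^t+\tfrac{n-p}{n^2}\delta_2^2\sqn{x^{t+1}-x^t}.
\]
One point needs care here: the reset group must be excluded from the noise count. That exclusion produces the factor $n-p$ rather than $n$ or $2n-p$, and it is where I expect the main difficulty.

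\textbf{Step 3 (combine and telescope).} Add the recursions to the descent inequality with weights $\alpha=\tfrac{\gamma n}{4p}$ and $\beta=\tfrac{\gamma}{4p}$.
\begin{itemize}
\item The $A^t$ terms: $-\alpha\tfrac{2p}{n}A^t=-\tfrac\gamma2A^t$, which exactly cancels the $+\tfrac\gamma2A^t$ from Step 1.
\item The $B^t$ terms: $\alpha\tfrac p{n^2}B^t=\tfrac{\gamma}{4n}B^t$, which exactly cancels the contraction gain $\beta\tfrac pn B^t$.
\item The $\sqn{x^{t+1}-x^t}$ terms: the total coefficient is $\tfrac{\gamma\delta_2^2}{4p}\cdot 2\cdot\tfrac{n-p}{n}=\gamma\delta_2^2\tfrac{n-p}{2np}$.
\end{itemize}
This last coefficient is dominated by $\tfrac1{2\gamma}-\tfrac L2$ exactly when $\gamma L+\gamma^2\delta_2^2\tfrac{n-p}{np}\le1$. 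That inequality follows from the assumed stepsize bound via the elementary fact that $\gamma\le(L+\sqrt M)^{-1}$ implies $\gamma L+\gamma^2M\le1$.

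We conclude $\mathbb{E}\Psi^{t+1}\le\mathbb{E}\Psi^t-\tfrac\gamma2\mathbb{E}\sqn{\nabla f(x^t)}$. Summing over $t=0,\dots,T$, using $\Psi^{T+1}\ge0$ (by \Cref{ass:lower-bounded}), and noting that $\tilde x^T$ is uniform over $\{x^0,\dots,x^T\}$ gives \eqref{eq:80y0f9u0df}.
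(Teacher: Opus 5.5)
Your proof follows the paper's argument essentially line by line. It uses the same Lyapunov function: your $\tfrac{\gamma}{4p}B^t$ is the paper's $\tfrac{\gamma}{4np}\sum_i\sqn{g_i^t-\nabla f_i(x^t)}$. It derives the same recursions, $\mathbb{E}B^{t+1}\le(1-\tfrac pn)B^t+(1-\tfrac pn)\delta_2^2\sqn{x^{t+1}-x^t}$ and $\mathbb{E}A^{t+1}\le(1-\tfrac{2p}{n})A^t+\tfrac{p}{n^2}B^t+\tfrac{n-p}{n^2}\delta_2^2\sqn{x^{t+1}-x^t}$. Your direct identity $\tfrac1n\sum_k\sqn{e^t-\tfrac1n e_k^t}=(1-\tfrac2n)A^t+\tfrac1{n^2}B^t$ is a slightly cleaner route than the paper's regrouping, but it yields the identical bound. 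The weight cancellation and the stepsize reduction are also the same as in the paper.

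One step needs correcting. You bound $\mathbb{E}\sqn{\xi_i}\le\delta_{2,i}^2\sqn{x^{t+1}-x^t}$ via \eqref{eq:delta2i} and then claim that averaging over $i$ turns $\delta_{2,i}^2$ into $\delta_2^2$. That is false. Averaging per-group constants gives $\bar\delta_2^2=\frac1n\sum_i\delta_{2,i}^2$, and $\delta_2\le\bar\delta_2$ with a possibly large gap, because each $\delta_{2,i}$ is a supremum over its own worst pair $(x,y)$. This is exactly the distinction the paper uses to separate its bound from \algname{D-PAGE}'s. The per-group constants are not hypotheses of the theorem, and taking suprema group by group would only prove the result with $\bar\delta_2$.

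The fix is to keep the exact conditional second moment $\mathbb{E}\sqn{\xi_i}=\frac1m\sum_{j=1}^m\sqn{\nabla f_{i,j}(x^{t+1})-\nabla f_i(x^{t+1})-\nabla f_{i,j}(x^t)+\nabla f_i(x^t)}$, with every group evaluated at the same pair $(x^{t+1},x^t)$. In both recursions these terms enter with weights independent of $i$: $(1-\tfrac pn)\tfrac1n$ in $B$, and $\tfrac{1-p}{n^2}$ and $\tfrac{p}{n^2}\cdot\tfrac{n-1}{n}$ in $A$ after averaging over $i^t$. So you can sum over $i$ first and apply Assumption~\ref{ass:delta2} once to $\frac1{nm}\sum_{i,j}$. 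With that change, your constants $\tfrac{\delta_2^2}{n}$ and $\tfrac{n-1}{n^2}\delta_2^2$, and everything downstream, go through exactly as you wrote them.
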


The above result directly implies the following iteration complexity bound.
\begin{restatable}[Complexity, 
$m\ge n$]{corollary}{silageMgeNArbitraryInit}
\label{cor:silage_mge_n_arbitrary_init}
Under the conditions of Theorem~\ref{theo1}, choose the largest admissible stepsize, i.e., \[\gamma=\rb{L+\delta_2\sqrt{\frac{n-p}{np}}}^{-1}.\] Then the iteration complexity of \algno to reach an $\epsilon$-approximate stationary point is
\begin{equation}\label{eq:n8fy0dy80fd}\cO\rb{\rb{L+\frac{\delta_2}{\sqrt p}}\frac{\Psi^0}{\epsilon}},\end{equation}
and there are $\cO\rb{pm+n}$ component-gradient evaluations per iteration.
In particular, by choosing $p=\frac nm$, the gradient complexity is \[\cO\rb{\rb{nL+\sqrt{nm}\,\delta_2}\frac{\Psi^0}{\epsilon}}.\]
Furthermore, with
exact initialization ($g_i^0 = \nabla f_i(x^0)\ \forall i \in [n]$), we have $\Psi^0 = \Delta_0 \eqdef f(x^0) - \finf$, and the gradient complexity is
\[\cO\rb{nm + \rb{nL+\sqrt{nm}\,\delta_2}\frac{\Delta_0}{\epsilon}}.\]
\end{restatable}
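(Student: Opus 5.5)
The corollary follows from Theorem~\ref{theo1} in three steps: convert the $\cO(1/T)$ rate into an iteration count, multiply by the per-iteration oracle cost, and specialize the parameters.

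\textbf{Iteration count.} With the largest admissible stepsize $\gamma=\rb{L+\delta_2\sqrt{\frac{n-p}{np}}}^{-1}$, the bound \eqref{eq:80y0f9u0df} gives $\Exp{\sqn{\nf{\tilde{x}^T}}}\le \epsilon$ as soon as
\[
T+1\ge \frac{2\Psi^0}{\gamma\epsilon}=2\rb{L+\delta_2\sqrt{\tfrac{n-p}{np}}}\frac{\Psi^0}{\epsilon}.
\]
Since $p\in(0,1]$ and $n\ge1$, we have $\frac{n-p}{np}=\frac1p-\frac1n\le\frac1p$. This yields \eqref{eq:n8fy0dy80fd}. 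One caveat is that $\Psi^0$ itself depends on $\gamma$. That is harmless, because the statement is expressed in terms of $\Psi^0$ evaluated at this fixed $\gamma$.

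\textbf{Per-iteration cost.} In Algorithm~\ref{algsilage1}, each iteration evaluates at most $2n$ component gradients through the differences $\Delta_i^t$. In addition, with probability $p$ it evaluates one local gradient $\nabla f_{i^t}$, which costs $m$ component gradients. The expected cost per iteration is therefore $pm+2n=\cO\rb{pm+n}$.

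\textbf{Choosing $p=\frac nm$.} This is a valid probability because $m\ge n$. The expected cost per iteration becomes $\cO\rb{n}$. The iteration count becomes $\cO\rb{\rb{L+\delta_2\sqrt{m/n}}\frac{\Psi^0}{\epsilon}}$. Multiplying the two gives
\[
\cO\rb{\rb{nL+\sqrt{nm}\,\delta_2}\frac{\Psi^0}{\epsilon}}.
\]

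\textbf{Exact initialization.} If $g_i^0=\nfi{x^0}$ for all $i$, the last term of $\Psi^0$ vanishes. Averaging over $i$ gives $g^0=\frac1n\sum_i\nfi{x^0}=\nf{x^0}$, so the middle term vanishes as well. Hence $\Psi^0=\Delta_0$. Computing all $n$ local gradients at $x^0$ costs a one-time $nm$ component gradients, which is added to the previous bound and gives $\cO\rb{nm+\rb{nL+\sqrt{nm}\,\delta_2}\frac{\Delta_0}{\epsilon}}$.

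The only step needing any care is the cost accounting. The complexity is an expected oracle count, and the count of $\cO(1)$ evaluations per sampled difference must be made explicit. There is no genuine obstacle here; all the difficulty lies in Theorem~\ref{theo1}, which is assumed.
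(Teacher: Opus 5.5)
Your proposal is correct and follows essentially the same route as the paper. Both arguments turn the rate of Theorem~\ref{theo1} into $T=\cO\rb{\rb{L+\delta_2/\sqrt p}\Psi^0/\epsilon}$ via $\frac{n-p}{np}\le\frac1p$, multiply by the expected per-iteration cost, specialize to $p=n/m$, and add the one-time $nm$ initialization cost. The only cosmetic difference is that you bound the expected per-iteration cost by $pm+2n$, whereas the paper computes it exactly as $(1-p)\,2n+p\,(m+2(n-1))=2n+pm-2p$; both are $\cO\rb{n+pm}$.
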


Crucially, because Algorithm~\ref{algsilage1} utilizes the single-coin strategy and updates estimators strictly with their own local data, the estimator error relies exclusively on the within-group variability. Consequently, only $\delta_2$ appears in the bound.

\subsection{Convergence in the $n > m$ regime}
\label{subsec:silage_ngt_m_convergence}

For the $n > m$ regime, in \algno (Algorithm~\ref{algsilage2a}), we refresh exactly one group $i^t$ and select an active subset $\Omega^t$ of size $b_{\mathrm{grp}}-1$ for recursive updates. For the remaining un-sampled groups, the algorithm applies the \emph{shared} recursive drift $d^t$ computed from the active subset. Because this shared correction forces estimators to track directions generated by different groups, its variance relies on both the across-group ($\delta_1$) and within-group ($\delta_2$) heterogeneities.

\begin{restatable}[General Convergence, $n>m$]{theorem}{silageNgtMNC}
\label{theo2} Consider the $n>m$ regime, and let \Cref{ass:lower-bounded} (lower-boundednes), \Cref{ass:f-smooth} ($L$-smoothness), \Cref{ass:delta1} (across-group $\delta_1$ similarity) and \Cref{ass:delta2} (within-group $\delta_2$-similarity) hold. In \algno (Algorithm~\ref{algsilage2a}),
assume the stepsize satisfies \[0<\gamma\le \rb{L+\sqrt{\delta_1^2\frac{(n-b_{\mathrm{grp}})(5n+4)}{n-1} +\delta_2^2\frac{n^2+2n-7b_{\mathrm{grp}}+4b_{\mathrm{grp}}^2}{2b_{\mathrm{grp}}n}}}^{-1}.\] For any $T \ge 0$, 
\begin{equation}\label{eq:b98fdy980fdg}  \Exp{\sqn{\nf{\tilde{x}^T}}} \le \frac{2\Psi^0}{\gamma(T+1)},\end{equation} where
 $\tilde{x}^T$ is chosen uniformly at random from the iterates, 
and
\[\Psi^0 \eqdef f(x^0)-\finf +\frac{\gamma n}{4}\sqn{g^0-\nf{x^0}} +\frac{\gamma}{n}\sum_{i=1}^n\sqn{g_i^0-\nfi{x^0}}.\]
\end{restatable}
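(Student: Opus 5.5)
We follow the standard \algname{PAGE}/\algname{SILVER}-style Lyapunov argument. The Lyapunov function is
\[
\Psi^t \eqdef f(x^t)-\finf + \frac{\gamma n}{4}\sqn{g^t-\nf{x^t}} + \frac{\gamma}{n}\sum_{i=1}^n \sqn{g_i^t-\nfi{x^t}},
\]
and the goal is to show the one-step descent inequality $\Exp{\Psi^{t+1}} \le \Exp{\Psi^t} - \frac{\gamma}{2}\Exp{\sqn{\nf{x^t}}}$. Telescoping this over $t=0,\dots,T$ and using $\Psi^{T+1}\ge 0$ (from \Cref{ass:lower-bounded}) gives \eqref{eq:b98fdy980fdg} for a uniformly drawn iterate.

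The first ingredient is the standard smoothness step. Since $x^{t+1}=x^t-\gamma g^t$ and $\gamma\le 1/L$, $L$-smoothness (\Cref{ass:f-smooth}) yields
\[
f(x^{t+1})\le f(x^t)-\frac{\gamma}{2}\sqn{\nf{x^t}}-\Big(\frac{1}{2\gamma}-\frac L2\Big)\sqn{x^{t+1}-x^t}+\frac{\gamma}{2}\sqn{g^t-\nf{x^t}}.
\]

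The two error terms are then handled as follows.

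\textbf{Local errors $e_i^t\eqdef g_i^t-\nfi{x^t}$.} Each group falls into one of three cases:
\begin{itemize}
\item The reset group $i^t$ (selected with probability $1/n$) gets $e_i^{t+1}=0$.
\item A group in $\Omega^t$ (probability $(b_{\mathrm{grp}}-1)/n$) gets $e_i^{t+1}=e_i^t+(\Delta_i^t-\nfi{x^{t+1}}+\nfi{x^t})$. The bracketed term has zero conditional mean and variance at most $\delta_{2,i}^2\sqn{x^{t+1}-x^t}$.
\item An unsampled group (probability $(n-b_{\mathrm{grp}})/n$) gets $e_i^{t+1}=e_i^t+d^t-(\nfi{x^{t+1}}-\nfi{x^t})$.
\end{itemize}
For the unsampled case we decompose
\[
d^t - (\nfi{x^{t+1}}-\nfi{x^t}) = [d^t-(\nf{x^{t+1}}-\nf{x^t})] + [(\nf{x^{t+1}}-\nf{x^t})-(\nfi{x^{t+1}}-\nfi{x^t})].
\]
The second bracket is deterministic given $x^{t+1}$ and, averaged over $i$, is controlled by $\delta_1^2\sqn{x^{t+1}-x^t}$ through \Cref{ass:delta1}. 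The first bracket is a sampling error of a $b_{\mathrm{grp}}$-subset mean. Without-replacement sampling contributes the $\frac{n-b_{\mathrm{grp}}}{n-1}$ factors on $\delta_1^2$, and within-group sampling contributes $\delta_2^2/b_{\mathrm{grp}}$. Because the cross term $\langle e_i^t,\cdot\rangle$ is not zero-mean for the deterministic part, we apply Young's inequality. This produces a contraction factor of roughly $(1-\frac1n)$ in expectation, together with the constants $5n+4$, etc. Summing over $i$ and dividing by $n$ gives
\[
\Exp{\tfrac1n\textstyle\sum_i\sqn{e_i^{t+1}}}\le (1-\tfrac{1}{2n})\tfrac1n\textstyle\sum_i\sqn{e_i^t}+ A\,\sqn{x^{t+1}-x^t},
\]
with $A$ explicit in $\delta_1,\delta_2,n,b_{\mathrm{grp}}$.

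\textbf{Global error $g^t-\nf{x^t}$.} Averaging the updates gives
\[
g^{t+1}-\nf{x^{t+1}} = g^t-\nf{x^t} - \tfrac1n e_{i^t}^t + \text{(zero-mean noise)},
\]
because the reset of group $i^t$ removes $e_{i^t}^t$, and with uniform $i^t$ we have $\Exp{e_{i^t}^t}=\bar e^t=g^t-\nf{x^t}$. Hence $\Exp{\sqn{g^{t+1}-\nf{x^{t+1}}}}$ is at most $(1-\frac1n)^2\sqn{\bar e^t}+\frac1{n^2}\frac1n\sum_i\sqn{e_i^t}$ plus variance terms times $\sqn{x^{t+1}-x^t}$. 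The $\frac1{n^2}$-weighted local sum, multiplied by the weight $\frac{\gamma n}{4}$, is absorbed by the contraction slack $\frac{\gamma}{n}\cdot\frac{1}{2n}$ of the local term, which is why the weights $\frac{\gamma n}{4}$ and $\frac{\gamma}{n}$ are chosen. The $\frac{\gamma}{2}\sqn{\bar e^t}$ coming from the descent step is absorbed by the $\frac{\gamma n}{4}\cdot\frac{2}{n}$ contraction of the global term.

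\textbf{Closing the argument.} Collecting all coefficients of $\sqn{x^{t+1}-x^t}$, we must show that $\frac1{2\gamma}-\frac L2$ dominates $\gamma\,C(\delta_1,\delta_2,n,b_{\mathrm{grp}})$. The standard lemma (if $\gamma\le (L+\sqrt{C'})^{-1}$ then $\frac1{\gamma}-L-\gamma C'\ge0$) turns this into exactly the stated stepsize bound.

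The main obstacle is the bookkeeping for the shared-drift groups. The drift $d^t$ is correlated across all unsampled groups and with the index $i^t$, so the cross terms between $e_i^t$ and the drift, and between different groups in the global average, must be computed exactly to obtain the specific constants $\frac{(n-b_{\mathrm{grp}})(5n+4)}{n-1}$ and $\frac{n^2+2n-7b_{\mathrm{grp}}+4b_{\mathrm{grp}}^2}{2b_{\mathrm{grp}}n}$. I would first compute the exact conditional second moments, using hypergeometric (without-replacement) variance formulas, and only then apply Young's inequality with carefully tuned parameters.
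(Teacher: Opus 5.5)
\textbf{Verdict: right strategy, but two one-step bounds are false as written.} Your architecture matches the paper's: the same Lyapunov weights $\frac{\gamma n}{4}$ and $\frac{\gamma}{n}$, the same three-way case split, Young's inequality on non-centered cross terms, and Lemma~\ref{lem:quadratic_step_bound} at the end. However, the two one-step bounds you write down are false as stated, and fixing them is exactly what produces the constants in the theorem.

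\textbf{Global error.} Write $b=b_{\mathrm{grp}}$, $e_i^t=g_i^t-\nabla f_i(x^t)$, $\bar e^t=g^t-\nabla f(x^t)$, and $\bar\Delta_i^t=\nabla f_i(x^{t+1})-\nabla f(x^{t+1})-\nabla f_i(x^t)+\nabla f(x^t)$.
\begin{itemize}
\item Your ``zero-mean noise'' is zero-mean only unconditionally. The anchor always belongs to $\tilde\Omega^t$, so $d^t$ is biased toward group $i^t$: $\mathbb{E}[d^t-(\nabla f(x^{t+1})-\nabla f(x^t))\mid\mathcal{F}^t,i^t]=\frac{n-b}{b(n-1)}\bar\Delta_{i^t}^t$.
\item Hence this noise is correlated with $-\frac1n(e_{i^t}^t-\bar e^t)$, and the cross term equals $-\frac{2(n-b)}{n^2b(n-1)}\sum_i\langle e_i^t,\bar\Delta_i^t\rangle$. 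This is linear in the $e_i^t$: take $e_i^t-\bar e^t=-s\bar\Delta_i^t$ and let $s\to\infty$. So for $b<n$ no multiple of $\|x^{t+1}-x^t\|^2$ can absorb it, and your bound with coefficient $\frac{1}{n^3}\sum_i\|e_i^t\|^2$ fails.
\item The paper keeps the inner noise separate, since it is exactly orthogonal. It bounds the outer part by $\|U+V\|^2\le 2\|U\|^2+2\|V\|^2$, obtaining $\frac{n-2}{n}\|\bar e^t\|^2+\frac{2}{n^3}\sum_i\|e_i^t\|^2+\big(\frac{2(n-b)}{b(n-1)}\delta_1^2+\frac{n^2-2n+b}{bn^2}\delta_2^2\big)\|x^{t+1}-x^t\|^2$.
\item The factor $2$ is not slack: $\frac{\gamma n}{4}\cdot\frac{2}{n^3}$ equals the local slack $\frac{\gamma}{n}\cdot\frac{1}{2n}$ exactly.
\item The contraction $\frac{n-2}{n}$ also needs the $-\|\bar e^t\|^2$ in $\mathbb{E}\|e_{i^t}^t-\bar e^t\|^2=\frac1n\sum_i\|e_i^t\|^2-\|\bar e^t\|^2$, which you dropped. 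Your $(1-\frac1n)^2=1-\frac2n+\frac{1}{n^2}$ alone does not absorb $\frac{\gamma}{2}\|\bar e^t\|^2$ at weight $\frac{\gamma n}{4}$.
\end{itemize}

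\textbf{Local drift case.} Your centering is also off here.
\begin{itemize}
\item Conditioned on $i\notin\tilde\Omega^t$, the set $\tilde\Omega^t$ is uniform over $b$-subsets of $[n]\setminus\{i\}$. So $d^t$ is centered at the leave-one-out mean $\frac{1}{n-1}\sum_{i'\neq i}(\nabla f_{i'}(x^{t+1})-\nabla f_{i'}(x^t))$, not at $\nabla f(x^{t+1})-\nabla f(x^t)$.
\item Your ``sampling error'' bracket therefore has conditional mean $-\frac{1}{n-1}\bar\Delta_i^t$. Its cross term with $e_i^t$ does not vanish, and applying Young only to the second bracket misses it.
\item Done correctly, the deterministic part is $e_i^t-\frac{n}{n-1}\bar\Delta_i^t$, and the without-replacement variance is over a population of size $n-1$.
\item Young with parameter $2n$ on $-\frac{2(n-b)}{n-1}\sum_i\langle e_i^t,\bar\Delta_i^t\rangle$ then gives the $(1-\frac{1}{2n})$ contraction. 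It also gives the $\delta_1^2$-coefficient $\frac{2(n-b)(n+1)}{n-1}$ for the averaged error.
\item Combined with the global term, the $\delta_1^2$ coefficient in the Lyapunov step is $\gamma\frac{n-b}{n-1}\big(\frac{n}{2b}+2(n+1)\big)\le\gamma\frac{(n-b)(5n+4)}{2(n-1)}$. This is where the stated stepsize comes from.
\end{itemize}
Finally, the theorem assumes only the averaged $\delta_2$, not per-group constants $\delta_{2,i}$. This is easy to fix: every group enters $\Omega^t$ with the same probability, so average over $i$ before invoking Assumption~\ref{ass:delta2}.
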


The above result directly implies the following iteration complexity bound.
\begin{restatable}[Complexity, $n>m$]{corollary}{silageNgtMArbitraryInit}
\label{cor:silage_ngt_m_arbitrary_init}
Under the conditions of Theorem~\ref{theo2}, choose the largest admissible stepsize, i.e., 
\[\gamma=\rb{L+\sqrt{\delta_1^2\frac{(n-b_{\mathrm{grp}})(5n+4)}{n-1}+\delta_2^2\frac{n^2+2n-7b_{\mathrm{grp}}+4b_{\mathrm{grp}}^2}{2b_{\mathrm{grp}}n}}}^{-1}.\]
Then the iteration complexity\footnote{Here and throughout, we write $\Psi_{b_{\mathrm{grp}}}^0$, with the explicit subscript, whenever we wish to emphasize the dependence of the initial Lyapunov quantity on the batch-size parameter $b_{\mathrm{grp}}$; the plain symbol $\Psi^0$ denotes the same quantity with this dependence left implicit.} of \algno to reach an $\epsilon$-approximate stationary point is
\begin{equation}\label{eq:09709fy09gf}\cO\rb{\rb{L+\sqrt{\delta_1^2\frac{(n-b_{\mathrm{grp}})(5n+4)}{n-1}+\delta_2^2\frac{n^2+2n-7b_{\mathrm{grp}}+4b_{\mathrm{grp}}^2}{2b_{\mathrm{grp}}n}}}\frac{\Psi_{b_{\mathrm{grp}}}^0}{\epsilon}},\end{equation}
and there are $\cO\rb{m+b_{\mathrm{grp}}}$ component-gradient evaluations per iteration.

In particular, by choosing $b_{\mathrm{grp}}=m$, the gradient complexity is \[\cO\rb{\rb{mL+m\sqrt{n-m}\,\delta_1+\sqrt{nm}\,\delta_2}\frac{\Psi_m^0}{\epsilon}}.\]
Furthermore, with
exact initialization ($g_i^0 = \nabla f_i(x^0)\ \forall i \in [n]$), we have $\Psi_m^0 = \Delta_0 \eqdef f(x^0) - \finf$, and the gradient complexity is
\[\cO\rb{nm + \rb{mL+m\sqrt{n-m}\,\delta_1+\sqrt{nm}\,\delta_2}\frac{\Delta_0}{\epsilon}}.\]

\end{restatable}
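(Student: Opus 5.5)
The corollary follows from Theorem~\ref{theo2} by inverting the rate and then bookkeeping the per-iteration cost. Write $A\eqdef L+\sqrt{\delta_1^2\frac{(n-b_{\mathrm{grp}})(5n+4)}{n-1}+\delta_2^2\frac{n^2+2n-7b_{\mathrm{grp}}+4b_{\mathrm{grp}}^2}{2b_{\mathrm{grp}}n}}$ and take $\gamma=1/A$.

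\textbf{Iteration complexity.} Plugging $\gamma=1/A$ into \eqref{eq:b98fdy980fdg} gives $\Exp{\sqn{\nf{\tilde x^T}}}\le 2A\Psi^0/(T+1)$. This is at most $\epsilon$ as soon as $T+1\ge 2A\Psi^0/\epsilon$, which is exactly \eqref{eq:09709fy09gf}. Note that $\Psi^0$ depends on $\gamma$, hence on $b_{\mathrm{grp}}$, through its two weighted error terms. This is why the bound is written with $\Psi^0_{b_{\mathrm{grp}}}$; we leave it as is rather than unfold it.

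\textbf{Per-iteration cost.} The anchor reset $\nabla f_{i^t}(x^{t+1})$ costs $m$ component gradients. Each of the $b_{\mathrm{grp}}$ active groups in $\tilde\Omega^t$ costs two component gradients, one at $x^{t+1}$ and one at $x^t$. The shared drift $d^t$ requires no further oracle calls. The total per iteration is therefore $\cO\rb{m+b_{\mathrm{grp}}}$.

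\textbf{Choosing $b_{\mathrm{grp}}=m$.} Since $n>m$, this is admissible. We simplify the two coefficients up to constants.
\begin{itemize}
\item The $\delta_1$ coefficient: $\frac{(n-m)(5n+4)}{n-1}\le 9(n-m)$ for $n\ge 2$, because $\frac{5n+4}{n-1}\le 14$ in general and tends to $5$ for large $n$. So it is $\cO\rb{n-m}$.
\item The $\delta_2$ coefficient: $\frac{n^2+2n-7m+4m^2}{2mn}\le\frac{n^2+2n+4m^2}{2mn}\le\frac{7n^2}{2mn}$, using $m<n$. So it is $\cO\rb{n/m}$.
\end{itemize}
Hence, using $\sqrt{a+b}\le\sqrt a+\sqrt b$,
\[
A=\cO\rb{L+\sqrt{n-m}\,\delta_1+\sqrt{n/m}\,\delta_2}.
\]
Multiplying by the per-iteration cost $\cO\rb{m}$ yields $\cO\rb{\rb{mL+m\sqrt{n-m}\,\delta_1+\sqrt{nm}\,\delta_2}\Psi_m^0/\epsilon}$.

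\textbf{Exact initialization.} Setting $g_i^0=\nabla f_i(x^0)$ for all $i$ makes every term $\sqn{g_i^0-\nfi{x^0}}$ vanish. It also forces $g^0=\nf{x^0}$, so $\Psi_m^0=\Delta_0$. This initialization costs a one-time $nm$ component gradients, which we add to the complexity.

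The only step needing care is the constant-tracking in the two coefficient simplifications; everything else is direct substitution.
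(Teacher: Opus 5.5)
Your proposal is correct and follows essentially the same route as the paper: set $\gamma=1/A$ in Theorem~\ref{theo2}, invert the rate, count $\mathcal{O}(m+b_{\mathrm{grp}})$ component gradients per iteration, bound the two coefficients at $b_{\mathrm{grp}}=m$ by $\mathcal{O}(n-m)$ and $\mathcal{O}(n/m)$, and add the one-time $nm$ cost of exact initialization. One small slip: $\frac{(n-m)(5n+4)}{n-1}\le 9(n-m)$ fails for $n\in\{2,3\}$ (for example, $n=2$ gives the factor $14$), so use the constant $14$ that your own justification supplies, as the paper does; this leaves the $\mathcal{O}(n-m)$ conclusion unchanged.
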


The drift update in Algorithm~\ref{algsilage2a} is written to touch all $n$ estimators, which is convenient for the analysis but would suggest an $\cO\rb{n}$ per-iteration cost. Appendix~\ref{sec:equiv-n>m} removes this overhead: storing each estimator in the shifted form $g_i^t=h_i^t+q^t$ with a single shared accumulator $q^t$, the equivalent Algorithm~\ref{algsilage2} applies the shared drift via one update of $q^t$ and refreshes only the $\tilde{\Omega}^t$ active entries; Lemma~\ref{lem:equiv-alg2a-alg2} proves the two forms generate identical iterates, so the implementation realizes the $\cO\rb{m+b_{\mathrm{grp}}}$ per-iteration cost.

Note that when $b_{\mathrm{grp}}=m$, the coefficient multiplying $\delta_1$ becomes $m\sqrt{n-m}$. This term vanishes at the exact boundary $n=m$, yielding algebraic equivalence with the bounds from the $m \ge n$ regime. At the opposite extreme, when $m=1$ and $b_{\mathrm{grp}}=1$, the subset $\Omega^t$ is empty, and
\algno reverts to \algn{SILVER}, with our complexity matching the one in \citet{oko24}.

\subsection{Comparison with closely related baselines}
\label{sec:comparison_baselines}

In this section, we compare the theoretical guarantees of \algno against closely related variance-reduced baselines under exact initialization. This is the fairest main-text comparison to methods that begin from the true full gradient, while keeping the core discussion focused. The complementary comparison under arbitrary initialization is deferred to Appendix~\ref{sec:app-comparison-baselines-arbitrary-init}.

\subsubsection{Exact initialization (full-gradient)}\label{sec:comp-exact-init}

We now compare the algorithms under exact initialization, where $g_i^0=\nabla f_i(x^0)$ for all $i\in[n]$. In this setting, $g^0=\nabla f(x^0)$ and the initial Lyapunov quantity perfectly collapses to $\Psi^0=\Delta_0 \eqdef f(x^0)-\finf$, at the cost of $nm$ component-gradient evaluations.

We emphasize that \algname{SILAGE} does not require exact initialization to enjoy its theoretical convergence guarantees: arbitrary initial estimators $g_i^0$ are admissible, and the initialization error enters the complexity only through the Lyapunov quantity $\Psi^0$. The exact initialization adopted here is used solely for a like-for-like comparison with baselines that themselves rely on it.

To avoid overloading batch-size notation in the comparison, we write $b_{\mathrm{fp}}$ for the flattened \algname{PAGE} minibatch size, $b_{\mathrm{sv}}$ for the \algname{SILVER} minibatch size, $b_{\mathrm{zs}}$ for the \algname{ZeroSARAH} minibatch size, $(s_{\mathrm{dp}},b_{\mathrm{dp}})$ for the outer/client and local batch sizes of \algname{D-PAGE}, $(s_{\mathrm{dzs}},b_{\mathrm{dzs}})$ for the client and local batch sizes of \algname{D-ZeroSARAH}, and $b_{\mathrm{grp}}$ for the group-batch parameter of \algname{SILAGE} in the $n>m$ regime; this is the parameter that appears in Algorithm~\ref{algsilage2a}, Theorem~\ref{theo2}, and Corollary~\ref{cor:silage_ngt_m_arbitrary_init}.

The results of this comparison are illustrated in Table~\ref{tab:oracle-complexities-full-init}, which reports the specialization of each oracle complexity under the canonical batch-size choice for the corresponding method. The canonical choices themselves, together with the full tunable form of each bound, are deferred to Appendix~\ref{sec:app-comparison-baselines-general-table} (Table~\ref{tab:oracle-complexities-general}).

\begin{table}[t]
\centering
\small
\setlength{\tabcolsep}{4pt}
\caption{Oracle-complexity comparison under exact full-gradient initialization, listing the same algorithms in the same order as Table~\ref{tab:method-comparison}. Every bound counts component-gradient evaluations $\nabla f_{i,j}$ and includes the one-time initialization cost $nm$. The \algname{SILAGE} bounds are taken from its convergence corollaries, and the baseline bounds from the discussion in Section~\ref{sec:comp-exact-init}.}
\label{tab:oracle-complexities-full-init}
\begin{threeparttable}
\renewcommand{\arraystretch}{1.4}
\begin{tabular}{|c|p{0.65\textwidth}|}
\hline
\textbf{Algorithm}
&
\centering\arraybackslash \textbf{Oracle complexity\tnote{$\star$}}
\\
\hline\hline
\algname{GD}
&
$\cO\rb{nm\,L\,\frac{\Delta_0}{\epsilon}}$
\\
\hline
\begin{tabular}{c}\algname{SILVER}\\\citep{oko24}\end{tabular}
&
$\cO\rb{nm+\rb{L_+\vee \sqrt{nm}\,\delta_{\mathrm{flat}}}\frac{\Delta_0}{\epsilon}}$
\\
\hline
\begin{tabular}{c}\algname{PAGE}\\\citep{li21}\end{tabular}
&
$m\ge n$: $\cO\rb{nm+\rb{nL+\sqrt{nm}\,\delta_{\mathrm{flat}}}\frac{\Delta_0}{\epsilon}}$
\newline
$n>m$: $\cO\rb{nm+\rb{mL+\sqrt{nm}\,\delta_{\mathrm{flat}}}\frac{\Delta_0}{\epsilon}}$
\\
\hline
\begin{tabular}{c}\algname{D-PAGE}\\\citep{PAGE-AB}\end{tabular}
&
$m\ge n$: $\cO\rb{nm+\rb{nL+\sqrt{nm}\,\bar{\delta}_2}\frac{\Delta_0}{\epsilon}}$
\newline
$n>m$: $\cO\big(nm+\big(mL+\sqrt{m(n-m)}\,\delta_1+\sqrt{nm}\,\bar{\delta}_2\big)\frac{\Delta_0}{\epsilon}\big)$
\\
\hline
\begin{tabular}{c}\algname{ZeroSARAH}\\\citep{li2021zerosarah}\end{tabular}
&
$\cO\rb{nm+\sqrt{nm}\,L_{\max}\frac{\Delta_0}{\epsilon}}$
\\
\hline
\begin{tabular}{c}\algname{D-ZeroSARAH}\\\citep{li2021zerosarah}\end{tabular}
&
$\cO\rb{nm+\sqrt{nm}\,L_{\max}\frac{\Delta_0}{\epsilon}}$
\\
\hline
\rowcolor{bgcolor}
\begin{tabular}{c}\textbf{\algname{SILAGE}($m\ge n$)}\\\textbf[OURS]\end{tabular}
&
$\cO\rb{nm+\rb{nL+\sqrt{nm}\,\delta_2}\frac{\Delta_0}{\epsilon}}$
\\
\hline
\rowcolor{bgcolor}
\begin{tabular}{c}\textbf{\algname{SILAGE}($n>m$)}\\ \textbf[OURS] \end{tabular}
&
$\cO\big(nm+\big(mL+m\sqrt{n-m}\,\delta_1+\sqrt{nm}\,\delta_2\big)\frac{\Delta_0}{\epsilon}\big)$
\\
\hline
\end{tabular}

\begin{tablenotes}
\footnotesize
\item [{$\star$}] Specializations under the canonical batch-size choices for each method. The choices themselves, as well as the full tunable form of each bound, are given in Appendix~\ref{sec:app-comparison-baselines-general-table}, Table~\ref{tab:oracle-complexities-general}.
\item [{\color{blue}(1)}] All rows include the exact-initialization cost $nm$. For \algname{SILAGE}, this corresponds to setting $g_i^0=\nabla f_i(x^0)$ for all $i\in[n]$, so that $\Psi^0=\Delta_0\eqdef f(x^0)-\finf$.
\item [{\color{blue}(2)}] Here $\delta_{\mathrm{flat}}$ is the flattened similarity constant over all $nm$ components, while $\delta_1$ and $\delta_2$ are the across-group and within-group similarity constants used by \algname{SILAGE}. The quantity $\bar{\delta}_2$ denotes the within-group similarity appearing in the double-sampling \algname{D-PAGE} bound.
\item [{\color{blue}(3)}] The constants $L_+$ and $L_{\max}$ are baseline-specific worst-case smoothness quantities. In contrast, \algname{SILAGE}'s bounds are stated in terms of the smoothness $L$ of the averaged objective together with the nested similarity constants $\delta_1,\delta_2$.
\end{tablenotes}
\end{threeparttable}
\end{table}

Under Corollary~\ref{cor:silage_mge_n_arbitrary_init} and~\ref{cor:silage_ngt_m_arbitrary_init}, \algno achieves an oracle complexity of \[\cO\rb{nm+\rb{nL+\sqrt{nm}\,\delta_2}\frac{\Delta_0}{\epsilon}}\] for $m\ge n$, and \[\cO\rb{nm+\rb{mL+m\sqrt{n-m}\,\delta_1+\sqrt{nm}\,\delta_2}\frac{\Delta_0}{\epsilon}}\] for $n>m$ with $b_{\mathrm{grp}}=m$.

\paragraph{Comparison with \algname{ZeroSARAH} variants.}
With exact initialization, both \algname{ZeroSARAH} \citep{li2021zerosarah}  and \algname{D-ZeroSARAH} \citep{li2021zerosarah} achieve  \[\cO\rb{nm+\sqrt{nm}\,L_{\max}\frac{\Delta_0}{\epsilon}}\] iteration complexity. Mirroring the arbitrary initialization case, \algno systematically improves upon this by replacing $L_{\max}$ with the structured terms $L$, $\delta_1$, and $\delta_2$. In the $m\ge n$ regime, \algno is superior whenever the within-group variability $\delta_2$ and global smoothness $L$ are small relative to the worst-case component smoothness $L_{\max}$.

\paragraph{Comparison with \algname{SILVER}.}
Run on the flattened problem, \algname{SILVER} \citep{oko24} with exact initialization has a tuned complexity of \[\cO\rb{nm+\rb{L_+\vee \sqrt{nm}\,\delta_{\mathrm{flat}}}\frac{\Delta_0}{\epsilon}}.\] Crucially, this flattened implementation stores one estimator per sample, demanding an impractical $\cO\rb{nm}$ memory. \algno reduces this to $\cO\rb{n}$ memory while simultaneously improving the gradient complexity whenever the nested constants are substantially smaller than the flattened quantities (e.g., $\sqrt{n/m}L+\delta_2 \ll \delta_{\mathrm{flat}}$ when $m \ge n$). Consequently, \algno can be viewed as the nested analogue of \algname{SILVER}: it recovers \algname{SILVER} when $m=1$, but for genuine double finite sums, it structurally separates the nested geometry ($\delta_1, \delta_2$) instead of collapsing it into $\delta_{\mathrm{flat}}$.

\paragraph{\normalcolor Comparison with \algname{PAGE}\normalcolor variants.}
{\algname{PAGE}}-type baselines warrant a separate analysis because, unlike single-loop methods, they structurally require periodic global full-gradient refreshes \emph{even after} exact initialization. Let \[\bar{\delta}_2^2\eqdef \frac{1}{n}\sum_{i=1}^n\delta_{2,i}^2\] denote the averaged within-group similarity. Because a supremum of an average is at most the average of the suprema, these constants always satisfy $\delta_2 \le \bar{\delta}_2$, with a gap that can be large when the worst-case within-group geometry occurs at different points across groups.

For flattened \algname{PAGE} (with Nice minibatch size $b_{\mathrm{fp}}$ and refresh probability $p(b_{\mathrm{fp}})=b_{\mathrm{fp}}/(nm+b_{\mathrm{fp}})$), the complexity is \[\cO\rb{nm + \frac{nm\,b_{\mathrm{fp}}}{nm+b_{\mathrm{fp}}}\rb{L+\frac{\sqrt{nm(nm-b_{\mathrm{fp}})}}{b_{\mathrm{fp}}\sqrt{nm-1}}\,\delta_{\mathrm{flat}}}\frac{\Delta_0}{\epsilon}}.\] Taking $b_{\mathrm{fp}}=n$ (when $m\ge n$) and $b_{\mathrm{fp}}=m$ (when $n>m$) yields leading factors of $nL+\sqrt{nm}\,\delta_{\mathrm{flat}}$ and $mL+\sqrt{nm}\,\delta_{\mathrm{flat}}$, respectively. Like flattened \algname{SILVER}, this inherently merges nested variability into a single constant.

The structured double-Nice \algname{D-PAGE} baseline is considerably stronger. With outer/client batch size $s_{\mathrm{dp}}$, local batch size $b_{\mathrm{dp}}$, and refresh probability $p=s_{\mathrm{dp}}b_{\mathrm{dp}}/(nm+s_{\mathrm{dp}}b_{\mathrm{dp}})$, its theoretical cost is \[\cO\rb{nm + \frac{nm\,s_{\mathrm{dp}}b_{\mathrm{dp}}}{nm+s_{\mathrm{dp}}b_{\mathrm{dp}}}\rb{L+\sqrt{\frac{nm(m-b_{\mathrm{dp}})}{s_{\mathrm{dp}}^2b_{\mathrm{dp}}^2(m-1)}\,\bar{\delta}_2^2+\frac{nm(n-s_{\mathrm{dp}})}{s_{\mathrm{dp}}^2b_{\mathrm{dp}}(n-1)}\,\delta_1^2}}\frac{\Delta_0}{\epsilon}}.\]
\begin{itemize}
    \item \textbf{Regime $m\ge n$:} The useful full-outer-sampling choice $s_{\mathrm{dp}}=n$ and $b_{\mathrm{dp}}=1$ gives a leading factor of $nL+\sqrt{nm}\,\bar{\delta}_2$. This matches the leading \algno dependence up to the distinction between $\delta_2$ and $\bar{\delta}_2$. However, \algname{D-PAGE} pays for this with periodic global full-gradient refreshes. Furthermore, since $\delta_2 \le \bar{\delta}_2$ (above), with $\bar{\delta}_2$ averaging client-wise worst-case constants and $\delta_2$ a global within-group average, \algno is weakly preferable in this regime, and the gap can be substantial in heterogeneous settings.
    \item \textbf{Regime $n>m$:} The choice $s_{\mathrm{dp}}=m$ and $b_{\mathrm{dp}}=1$ gives a leading factor of $mL+\sqrt{m(n-m)}\,\delta_1+\sqrt{nm}\,\bar{\delta}_2$, against $mL+m\sqrt{n-m}\,\delta_1+\sqrt{nm}\,\delta_2$ for the $b_{\mathrm{grp}}=m$ specialization of \algno. \algname{D-PAGE} is a strong full-refresh baseline here, and the two are \emph{not} ordered coefficient-wise: \algname{D-PAGE} has the smaller across-group coefficient ($\sqrt{m(n-m)}$ versus \algno's $m\sqrt{n-m}$, a factor of $\sqrt{m}$), whereas \algno has the smaller within-group constant, since $\delta_2 \le \bar{\delta}_2$. Thus \algno improves on \algname{D-PAGE} only when its within-group advantage outweighs this across-group penalty, i.e.\ whenever $m\sqrt{n-m}\,\delta_1+\sqrt{nm}\,\delta_2 \ll \sqrt{m(n-m)}\,\delta_1+\sqrt{nm}\,\bar{\delta}_2$, and it does so without the periodic global full-gradient refreshes \algname{D-PAGE} requires.
\end{itemize}

Ultimately, the cleanest distinction between \algno and \algname{PAGE}-type methods is algorithmic rather than a uniform oracle-complexity improvement. \algname{D-PAGE} can attain competitive leading factors and keeps a flat $\cO\rb{1}$ memory footprint, but, like every \algname{PAGE}-type estimator, it requires periodic global full-gradient refreshes over all $nm$ components. \algno is instead single-loop: it removes these refreshes entirely, at the cost of $\cO\rb{n}$ group-level memory (one estimator per group), while still exploiting the nested similarity structure. Its advantage over \algname{D-PAGE} is therefore the elimination of periodic global refreshes---most valuable when such refreshes dominate cost---rather than a claim of improving every oracle-complexity term.

The constants $L$, $\delta_1$, and $\delta_2$ are objects of the \emph{analysis}: they determine the largest provably admissible stepsize and the resulting convergence rate, but they do not appear in the algorithm itself. \algname{SILAGE} samples groups, flips the global coin, and performs anchor resets and recursive updates without ever referencing them, so a single run is identical regardless of their values. Calibrating a stepsize to problem constants is standard for variance-reduced methods---$L$ for gradient descent, $L_{\max}$ for \algname{ZeroSARAH}---and the distinguishing feature of \algname{SILAGE} is not independence from such constants but dependence on the \emph{structured} quantities $L,\delta_1,\delta_2$ rather than the pessimistic $L_{\max}$ or the flattened $\delta_{\mathrm{flat}}$; this is precisely what yields the larger admissible stepsize and the improved rates established above. In practice, the constants may be estimated whenever curvature access is available---as for the logistic-regression objectives in our experiments, where Hessian--vector products make $L,\delta_1,\delta_2$ cheap to probe (Appendix~\ref{sec:experiments_similarity_logreg})---replaced by any valid upper bound, or sidestepped by a standard stepsize search, all without altering the algorithm. Our experiments adopt the first route: probe-set estimates are used in the theoretically admissible stepsize while batch sizes are tuned per method, so that no exact prior knowledge of these constants is assumed.

\begin{figure}[t]
    \centering
    \includegraphics[width=\textwidth]{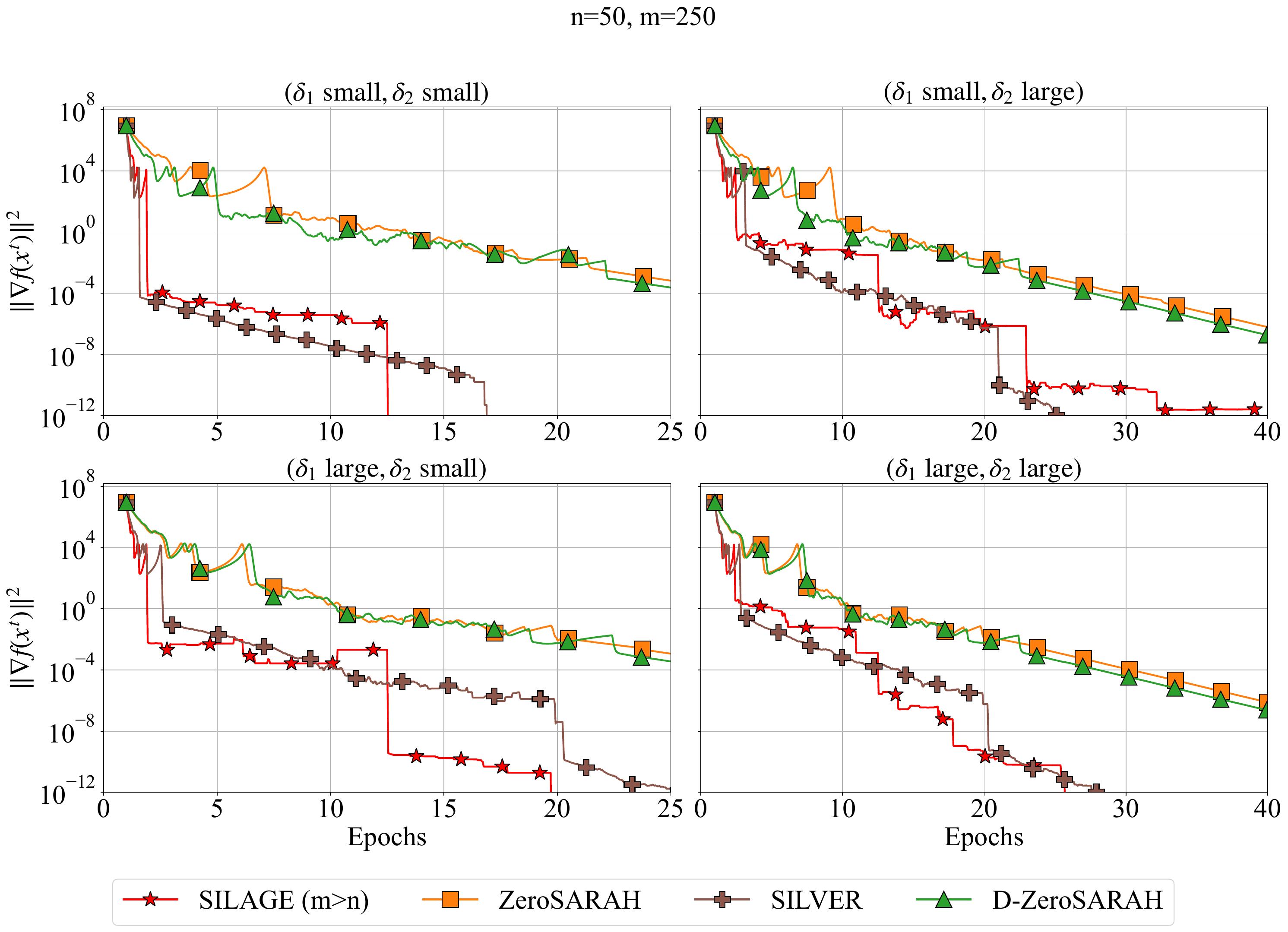}
    \caption{Optimization trajectories on the controlled nonconvex-regularized logistic regression task, $m\ge n$ regime ($n=50$, $m=250$). Each panel fixes one of the four similarity regimes; the corresponding measured constants $(\delta_1,\delta_2,L,L_{\max})$ are reported in Table~\ref{tab:synthetic-regimes}.}
    \label{fig:trajectory-mge-n}
\end{figure}

\section{Experiments}
\label{sec:experiments}

In this section, we empirically validate whether the theoretically predicted dependence on nested similarity parameters ($\delta_1, \delta_2$) translates to practical optimization performance. We consider a binary logistic regression task of the form \eqref{eq:prob} with a smooth, nonconvex regularizer applied identically across all components.  In particular, for feature-label pairs $(a_{i,j},y_{i,j}) \in \R^d \times \{-1,1\}$, the individual sample loss is \[f_{i,j}(x) \eqdef \log(1+\exp(-y_{i,j}a_{i,j}^{\top}x)) + \lambda\sum_{\ell=1}^d \frac{x_\ell^2 }{ (1+x_\ell^2)}.\] We set $\lambda=200$ across all runs. This heavy regularization ensures the nonconvex penalty dominates the global smoothness scale.
\begin{figure}[t]
    \centering
    \includegraphics[width=\textwidth]{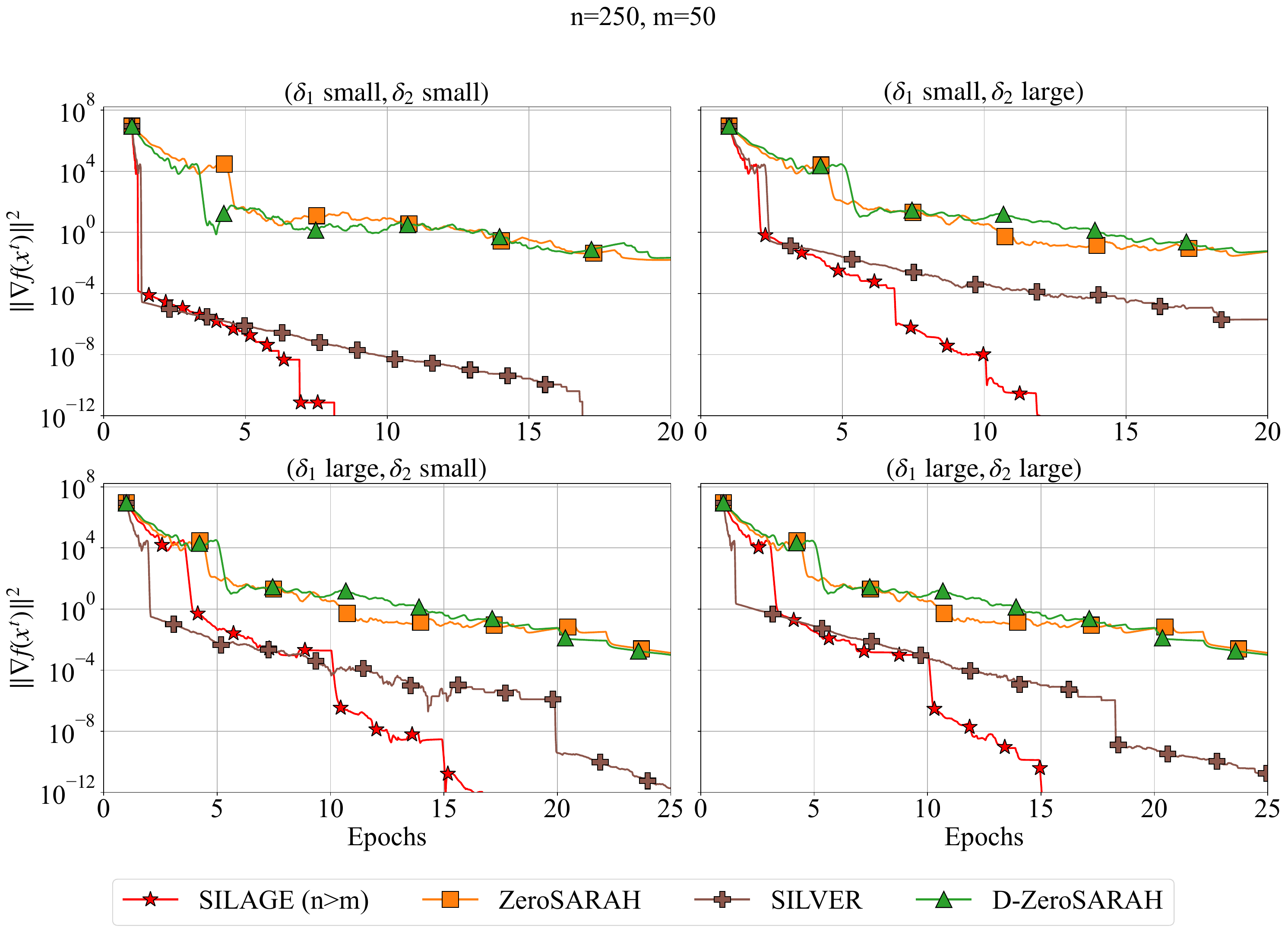}
    \caption{Optimization trajectories on the controlled nonconvex-regularized logistic regression task, $n>m$ regime ($n=250$, $m=50$). Each panel fixes one of the four similarity regimes; the corresponding measured constants $(\delta_1,\delta_2,L,L_{\max})$ are reported in Table~\ref{tab:synthetic-regimes}.}
    \label{fig:trajectory-ngt-m}
\end{figure}
We construct eight synthetic grouped datasets to test two distinct shape regimes: $m \ge n$ with $(n,m)=(50,250)$, and $n > m$ with $(n,m)=(250,50)$. Within each regime, we synthesize four controlled datasets to independently realize small or large across-group similarity ($\delta_1$) alongside small or large within-group similarity ($\delta_2$). The source code, together with instructions for reproducing Figures~\ref{fig:trajectory-mge-n} and~\ref{fig:trajectory-ngt-m}, is available at \href{https://github.com/IgorSokoloff/silage-experiments}{\texttt{github.com/IgorSokoloff/silage-experiments}}. Full details regarding data generation, Hessian-vector product estimation of $L, \delta_1, \delta_2$, and implementation specifics are deferred to Appendix~\ref{sec:experiments_similarity_logreg}. Table~\ref{tab:synthetic-regimes} summarizes the empirically measured constants for all eight datasets, confirming that our construction realizes the intended relative magnitudes of $\delta_1$ and $\delta_2$; it also gives the precise quantitative meaning of the qualitative ``small/large'' labels used in the panels of Figures~\ref{fig:trajectory-mge-n} and~\ref{fig:trajectory-ngt-m}.

\begin{table}[t]
    \centering
    \setlength{\tabcolsep}{5pt}
    \begin{tabular}{llrrrr}
        \toprule
        \textbf{Setting} & \textbf{Regime} & $L$ & $L_{\max}$ & $\delta_1$ & $\delta_2$ \\
        \midrule
        $m\ge n$ & $(\delta_1 \text{ small}, \delta_2 \text{ small})$ & 400.03 & 416.17 & 0.04 & 0.66 \\
        $m\ge n$ & $(\delta_1 \text{ small}, \delta_2 \text{ large})$ & 409.24 & 557.39 & 7.04 & 116.89 \\
        $m\ge n$ & $(\delta_1 \text{ large}, \delta_2 \text{ small})$ & 408.86 & 556.74 & 117.11 & 2.83 \\
        $m\ge n$ & $(\delta_1 \text{ large}, \delta_2 \text{ large})$ & 403.66 & 626.29 & 130.62 & 102.82 \\
        \midrule
        $n>m$ & $(\delta_1 \text{ small}, \delta_2 \text{ small})$ & 400.03 & 416.09 & 0.05 & 0.32 \\
        $n>m$ & $(\delta_1 \text{ small}, \delta_2 \text{ large})$ & 425.50 & 626.32 & 21.30 & 165.92 \\
        $n>m$ & $(\delta_1 \text{ large}, \delta_2 \text{ small})$ & 424.88 & 625.28 & 168.75 & 1.20 \\
        $n>m$ & $(\delta_1 \text{ large}, \delta_2 \text{ large})$ & 404.24 & 626.28 & 129.89 & 102.67 \\
        \bottomrule
    \end{tabular}
    \caption{
    Empirical geometric diagnostics measured on the synthetic datasets. The values confirm that our construction successfully controls the relative magnitudes of across-group similarity ($\delta_1$) and within-group similarity ($\delta_2$), while maintaining a consistently large global smoothness scale ($L \approx 400$).
    }
    \label{tab:synthetic-regimes}
\end{table}

We compare \algname{SILAGE} against three baselines that successfully avoid periodic global full-gradient refreshes: flattened \algname{ZeroSARAH} \citep{li2021zerosarah}, flattened \algname{SILVER} \citep{oko24}, and the nested distributed variant \algname{D-ZeroSARAH} \citep{li2021zerosarah}. We intentionally omit \algname{PAGE} and \algname{D-PAGE}, whose estimators fundamentally rely on periodic global full-gradient refreshes over all $nm$ components \citep{li21,PAGE-AB}; the empirical comparison thus isolates the no-refresh design class, while the oracle-cost of those refreshes is quantified analytically in Section~\ref{sec:comp-exact-init} (Table~\ref{tab:oracle-complexities-full-init}). We deploy Algorithm~\ref{algsilage1} for the $m\ge n$ regime and Algorithm~\ref{algsilage2a} for the $n>m$ regime. All methods use the largest theoretically admissible constant stepsize from their respective analyses, evaluated with empirical estimates of the relevant smoothness and similarity constants; batch-size parameters are tuned per method, with the full protocol (probe-set estimation, tuning grids, and fallback rule) described in Appendix~\ref{sec:hyperparameter_tuning}. For \algname{SILAGE} in the $n>m$ regime, the headline specialization $b_{\mathrm{grp}}=m$ gives a clean closed-form rate but is not necessarily oracle-optimal: Theorem~\ref{theo2} holds for every $b_{\mathrm{grp}}\in[n]$, the oracle-optimal choice is problem-dependent (Appendix~\ref{sec:app-opt-bgrp}), and we accordingly tune $b_{\mathrm{grp}}$ over a grid, so the selected value can be smaller than $m$. We report the exact squared gradient norm, $\sqn{\nabla f(x^t)}$, recomputed every two iterations, until reaching $\sqn{\nabla f(x^t)} \le 10^{-12}$ or exhausting the epoch budget.

Across these controlled synthetic runs, Figures~\ref{fig:trajectory-mge-n} and~\ref{fig:trajectory-ngt-m} show \algname{SILAGE} to be the fastest, or among the fastest, of the tested methods, with the empirical trajectories closely tracking our theoretical predictions. In the $m \ge n$ regime (Figure~\ref{fig:trajectory-mge-n}), our theory indicates the leading similarity term depends entirely on $\delta_2$, ignoring $\delta_1$. The plots are consistent with this: in both small-$\delta_2$ panels, \algname{SILAGE} converges rapidly even when the across-group heterogeneity $\delta_1$ is large. While a large $\delta_2$ naturally increases the problem difficulty, \algname{SILAGE} still outperforms both \algname{ZeroSARAH} variants and matches or exceeds \algname{SILVER} in these runs. In the $n > m$ regime (Figure~\ref{fig:trajectory-ngt-m}), where both $\delta_1$ and $\delta_2$ factor into the bound, \algname{SILAGE} outperforms the tested baselines across all four similarity profiles shown. Taken together, these controlled experiments support our theoretical predictions, indicating a practical benefit from explicitly separating and exploiting nested similarity rather than relying on flattened structures or pessimistic worst-case component smoothness.

We emphasize that these experiments are a controlled synthetic validation of the predicted similarity-dependence rather than a systems study: the $\cO\rb{n}$ memory and scalability advantages are established by construction and summarized in Table~\ref{tab:method-comparison}, not measured here, and a real grouped-data or wall-clock evaluation is left to future work.

\section{Additional Results}\label{sec:additional-results}

Several complementary results are deferred to the appendix. A full description of the experimental setup---covering data-generation protocols, the Hessian-vector-product estimation of $L$, $\delta_1$, and $\delta_2$, hyperparameter tuning, and implementation details---is provided in Appendix~\ref{sec:experiments_similarity_logreg}. The Polyak--\L{}ojasiewicz convergence analyses of \algname{SILAGE} appear in Appendix~\ref{sec:app-pl-mge-n} for the $m\ge n$ regime and in Appendix~\ref{sec:app-pl-ngt-m} for the $n>m$ regime. Finally, the corresponding complexity comparison against the same baselines under arbitrary (zero-full-gradient) initialization is given in Appendix~\ref{sec:app-comparison-baselines-arbitrary-init}.

\section*{Acknowledgements}
This work was supported by funding from King Abdullah University of Science and Technology (KAUST): i) KAUST Baseline Research Scheme,
ii) Center of Excellence for Generative AI (award no.\ 5940),
iii) Competitive Research Grant (CRG) Program (award no.\ 6460).


\bibliography{bib}
\bibliographystyle{abbrvnat}


\appendix

\newpage

\tableofcontents

\newpage

\section{Additional Comparison with Closely Related Baselines}
\label{sec:app-comparison-baselines-arbitrary}

\subsection{Exact initialization (full-gradient)}
\label{sec:app-comparison-baselines-general-table}

For readability, the main-text Table~\ref{tab:oracle-complexities-full-init} reports only the specialization of each method's oracle complexity under canonical batch-size choices, and omits the choices themselves. For completeness, Table~\ref{tab:oracle-complexities-general} below reproduces the same comparison but now also includes the \emph{general tunable form} of each bound, as a function of the method-specific batch-size parameters introduced in Section~\ref{sec:comp-exact-init}: $b_{\mathrm{sv}}$ for \algname{SILVER}, $b_{\mathrm{fp}}$ for flattened \algname{PAGE}, $(s_{\mathrm{dp}},b_{\mathrm{dp}})$ for \algname{D-PAGE}, $b_{\mathrm{zs}}$ and $\lambda$ for \algname{ZeroSARAH}, $(s_{\mathrm{dzs}},b_{\mathrm{dzs}})$ for \algname{D-ZeroSARAH}, and $b_{\mathrm{grp}}$ for \algname{SILAGE} in the $n>m$ regime. The right-hand column of Table~\ref{tab:oracle-complexities-general} additionally records the canonical batch-size choice producing each specialization; these annotations are the only difference relative to the main-text Table~\ref{tab:oracle-complexities-full-init}.

\begin{table}[t]
\centering
\scriptsize
\setlength{\tabcolsep}{4pt}
\caption{Full oracle-complexity comparison under exact full-gradient initialization, listing the same algorithms in the same order as Table~\ref{tab:method-comparison}. Every bound counts component-gradient evaluations $\nabla f_{i,j}$ and includes the one-time initialization cost $nm$. The first complexity column reports the general (tunable) form; the second reports a useful specialization. The \algname{SILAGE} bounds are taken from its convergence corollaries, and the baseline bounds from the discussion in Section~\ref{sec:comp-exact-init}.}
\label{tab:oracle-complexities-general}
\begin{threeparttable}
\renewcommand{\arraystretch}{1.4}
\begin{tabular}{|c|p{0.46\textwidth}|p{0.30\textwidth}|}
\hline
\textbf{Algorithm}
&
\centering\arraybackslash \textbf{General (tunable) form}
&
\centering\arraybackslash \textbf{Specialization}
\\
\hline\hline
\algname{GD}
&
$\cO\rb{nm\,L\,\frac{\Delta_0}{\epsilon}}$
&
No tuning.
\\
\hline
\begin{tabular}{c}\algname{SILVER}\\\citep{oko24}\end{tabular}
&
$\cO\rb{
nm+
\rb{
\sqrt{nm}\,\delta_{\mathrm{flat}}
\vee
L_+ b_{\mathrm{sv}}
}
\frac{\Delta_0}{\epsilon}
}$
&
$\cO\rb{
nm+
\rb{
L_+
\vee
\sqrt{nm}\,\delta_{\mathrm{flat}}
}
\frac{\Delta_0}{\epsilon}
}$
for $b_{\mathrm{sv}}=1$.
\\
\hline
\begin{tabular}{c}\algname{PAGE}\\\citep{li21}\end{tabular}
&
$\cO\rb{
nm+
\frac{nm\,b_{\mathrm{fp}}}{nm+b_{\mathrm{fp}}}
\rb{
L+
\frac{\sqrt{nm(nm-b_{\mathrm{fp}})}}{b_{\mathrm{fp}}\sqrt{nm-1}}\,
\delta_{\mathrm{flat}}
}
\frac{\Delta_0}{\epsilon}
}$
&
$m\ge n$ ($b_{\mathrm{fp}}=n$):
\newline
$\cO\rb{nm+\rb{nL+\sqrt{nm}\,\delta_{\mathrm{flat}}}\frac{\Delta_0}{\epsilon}}$.
\newline
$n>m$ ($b_{\mathrm{fp}}=m$):
\newline
$\cO\rb{nm+\rb{mL+\sqrt{nm}\,\delta_{\mathrm{flat}}}\frac{\Delta_0}{\epsilon}}$.
\\
\hline
\begin{tabular}{c}\algname{D-PAGE}\\\citep{PAGE-AB}\end{tabular}
&
$\cO\bigg(
nm+
\frac{nm\,s_{\mathrm{dp}}b_{\mathrm{dp}}}{nm+s_{\mathrm{dp}}b_{\mathrm{dp}}}
\bigg(
L+$
\newline
$\quad
\sqrt{
\frac{nm(m-b_{\mathrm{dp}})}{s_{\mathrm{dp}}^2b_{\mathrm{dp}}^2(m-1)}
\,\bar{\delta}_2^2
+
\frac{nm(n-s_{\mathrm{dp}})}{s_{\mathrm{dp}}^2b_{\mathrm{dp}}(n-1)}
\,\delta_1^2
}
\;\bigg)
\frac{\Delta_0}{\epsilon}
\bigg)$
&
$m\ge n$ ($s_{\mathrm{dp}}=n$, $b_{\mathrm{dp}}=1$):
\newline
$\cO\rb{nm+\rb{nL+\sqrt{nm}\,\bar{\delta}_2}\frac{\Delta_0}{\epsilon}}$.
\newline
$n>m$ ($s_{\mathrm{dp}}=m$, $b_{\mathrm{dp}}=1$):
\newline
$\cO\big(nm+\big(mL+\sqrt{m(n-m)}\,\delta_1$
\newline
$\quad+\sqrt{nm}\,\bar{\delta}_2\big)\frac{\Delta_0}{\epsilon}\big)$.
\\
\hline
\begin{tabular}{c}\algname{ZeroSARAH}\\\citep{li2021zerosarah}\end{tabular}
&
$\cO\rb{
nm+
b_{\mathrm{zs}}L_{\max}
\rb{
1+
\sqrt{
\frac{2}{\lambda b_{\mathrm{zs}}}
+
\frac{8\lambda (nm)^2}{b_{\mathrm{zs}}^3}
}
}
\frac{\Delta_0}{\epsilon}
}$
&
$\cO\rb{
nm+
\sqrt{nm}\,L_{\max}
\frac{\Delta_0}{\epsilon}
}$
for $b_{\mathrm{zs}}=\sqrt{nm}$ and
$\lambda=b_{\mathrm{zs}}/(2nm)$.
\\
\hline
\begin{tabular}{c}\algname{D-ZeroSARAH}\\\citep{li2021zerosarah}\end{tabular}
&
$\cO\bigg(
nm+
s_{\mathrm{dzs}}\,b_{\mathrm{dzs}}\,L_{\max}
\bigg(
1+$
\newline
$\quad
\sqrt{
\frac{2}{\lambda s_{\mathrm{dzs}} b_{\mathrm{dzs}}}
+
\frac{8\lambda n^2m^2}{s_{\mathrm{dzs}}^3 b_{\mathrm{dzs}}^3}
}
\;\bigg)
\frac{\Delta_0}{\epsilon}
\bigg)$
&
$\cO\rb{
nm+
\sqrt{nm}\,L_{\max}
\frac{\Delta_0}{\epsilon}
}$
for $s_{\mathrm{dzs}}=\sqrt n$, $b_{\mathrm{dzs}}=\sqrt m$, and
$\lambda=s_{\mathrm{dzs}}b_{\mathrm{dzs}}/(2nm)$.
\\
\hline
\rowcolor{bgcolor}
\begin{tabular}{c}\textbf{\algname{SILAGE}($m\ge n$)}\\\textbf[OURS]\end{tabular}
&
$\cO\rb{
nm+
\rb{
nL+\sqrt{nm}\,\delta_2
}
\frac{\Delta_0}{\epsilon}
}$
&
No batch-size/client-size tuning in this regime.
\\
\hline
\rowcolor{bgcolor}
\begin{tabular}{c}\textbf{\algname{SILAGE}($n>m$)}\\ \textbf[OURS] \end{tabular}
&
$\cO\bigg(
nm+
\rb{m+b_{\mathrm{grp}}}
\bigg(
L+$
\newline
$\quad
\sqrt{
\delta_1^2\frac{(n-b_{\mathrm{grp}})(5n+4)}{n-1}
+
\delta_2^2\frac{n^2+2n-7b_{\mathrm{grp}}+4b_{\mathrm{grp}}^2}{2b_{\mathrm{grp}}n}
}
\;\bigg)
\frac{\Delta_0}{\epsilon}
\bigg)$
&
$\cO\big(nm+\big(mL+m\sqrt{n-m}\,\delta_1$
\newline
$\quad+\sqrt{nm}\,\delta_2\big)\frac{\Delta_0}{\epsilon}\big)$
\newline
for $b_{\mathrm{grp}}=m$.
\\
\hline
\end{tabular}

\begin{tablenotes}
\footnotesize
\item [{\color{blue}(1)}] All rows include the exact-initialization cost $nm$. For \algname{SILAGE}, this corresponds to setting $g_i^0=\nabla f_i(x^0)$ for all $i\in[n]$, so that $\Psi^0=\Delta_0\eqdef f(x^0)-\finf$.
\item [{\color{blue}(2)}] Here $\delta_{\mathrm{flat}}$ is the flattened similarity constant over all $nm$ components, while $\delta_1$ and $\delta_2$ are the across-group and within-group similarity constants used by \algname{SILAGE}. The quantity $\bar{\delta}_2$ denotes the within-group similarity appearing in the double-sampling \algname{D-PAGE} bound.
\item [{\color{blue}(3)}] The tunable parameters are: the minibatch size $b_{\mathrm{sv}}$ in \algname{SILVER}; the flattened \algname{PAGE} minibatch size $b_{\mathrm{fp}}$; the outer/client batch size $s_{\mathrm{dp}}$ and local batch size $b_{\mathrm{dp}}$ in \algname{D-PAGE}; the minibatch size $b_{\mathrm{zs}}$ and weight $\lambda$ in \algname{ZeroSARAH}; the client batch size $s_{\mathrm{dzs}}$ and local batch size $b_{\mathrm{dzs}}$ in \algname{D-ZeroSARAH}; and the group batch size $b_{\mathrm{grp}}$ in \algname{SILAGE} ($n>m$).
\item [{\color{blue}(4)}] The constants $L_+$ and $L_{\max}$ are baseline-specific worst-case smoothness quantities. In contrast, \algname{SILAGE}'s bounds are stated in terms of the smoothness $L$ of the averaged objective together with the nested similarity constants $\delta_1,\delta_2$.
\end{tablenotes}
\end{threeparttable}
\end{table}

\subsection{Arbitrary initialization (zero full-gradient)}
\label{sec:app-comparison-baselines-arbitrary-init}

In this setting, we assume that the estimators $g_i$ are initialized in an arbitrary manner (e.g., $g_i^0 = 0$), bypassing the need to compute any full gradients at the start of the algorithm. We compare \algno against methods that similarly admit arbitrary initialization.

As in the main-text comparison, $b_{\mathrm{grp}}$ denotes the group-batch parameter of \algname{SILAGE} in the $n>m$ regime, as defined in Algorithm~\ref{algsilage2a} and used in Theorem~\ref{theo2} and Corollary~\ref{cor:silage_ngt_m_arbitrary_init}.

For \algno, Corollary~\ref{cor:silage_mge_n_arbitrary_init} establishes an oracle complexity of $\cO\rb{\rb{nL+\sqrt{nm}\,\delta_2}\frac{\Psi^0}{\epsilon}}$ in the $m\ge n$ regime. In the $n>m$ regime, the general bound yields $\cO\rb{\rb{m+b_{\mathrm{grp}}}\rb{L+\delta_1\sqrt{n-b_{\mathrm{grp}}}+\delta_2\sqrt{\tfrac{n}{b_{\mathrm{grp}}}}}\frac{\Psi_{b_{\mathrm{grp}}}^0}{\epsilon}}$. Specifically, the choice $b_{\mathrm{grp}}=m$ gives $\cO\rb{\rb{mL+m\sqrt{n-m}\,\delta_1+\sqrt{nm}\,\delta_2}\frac{\Psi_m^0}{\epsilon}}$, while the endpoint $b_{\mathrm{grp}}=n$ removes the across-group term entirely, yielding $\cO\rb{\rb{nL+n\delta_2}\frac{\Psi_n^0}{\epsilon}}$.

\paragraph{Comparison with \algname{ZeroSARAH} variants.}
Flattened \algname{ZeroSARAH} and \algname{D-ZeroSARAH} achieve complexities of $\cO\rb{\sqrt{nm}\rb{L_{\max}\Delta_0+G_0}/\epsilon}$ and $\cO\rb{\sqrt{nm}\rb{L_{\max}\Delta_0+G'_0}/\epsilon}$, respectively. Here, $L_{\max}\eqdef \max_{i,j}L_{i,j}$ is the worst-case component smoothness, $G_0 \eqdef \frac{1}{n} \sum_{i=1}^n\sqn{\nabla f_i(x^0)}$ is the initial estimator error for \algname{ZeroSARAH}, and $G_0^{\prime} \eqdef \frac{1}{nm} \sum_{i=1}^n \sum_{j=1}^m\sqn{\nabla f_{i,j}(x^0)}$ is the corresponding error for \algname{D-ZeroSARAH}.

Thus, up to the differing initial Lyapunov quantities, \algno completely replaces the pessimistic worst-case component smoothness $L_{\max}$ with the structured global parameters $L, \delta_1$, and $\delta_2$. 
\begin{itemize}[leftmargin=*]
    \item For $m\ge n$, \algno is favorable whenever $nL+\sqrt{nm}\,\delta_2 \ll \sqrt{nm}\,L_{\max}$.
    \item For $n>m$, the choice $b_{\mathrm{grp}}=m$ is favorable whenever $mL+m\sqrt{n-m}\,\delta_1+\sqrt{nm}\,\delta_2 \ll \sqrt{nm}\,L_{\max}$. If $\delta_1$ is large but the global $L$ is much smaller than $L_{\max}$, the endpoint choice $b_{\mathrm{grp}}=n$ can be preferable as it strictly removes the $\delta_1$ dependence.
\end{itemize}

\paragraph{Comparison with \algname{SILVER}.}
The oracle complexity of flattened \algname{SILVER} is $\cO\rb{\rb{\Delta_0\rb{\delta_{\mathrm{flat}}\sqrt{nm}\vee Lb_{\mathrm{sv}}}+\frac{nm}{b_{\mathrm{sv}}}\sigma_{\mathrm{flat}}^2} / \epsilon}$, where $b_{\mathrm{sv}}\in[nm]$ is the flattened minibatch size and $\sigma_{\mathrm{flat}}^2 \eqdef \frac{1}{nm}\sum_{r=1}^{nm}\sqn{\nabla \tilde f_r(x^0)-\nabla f(x^0)}$ measures the initial variance.

Assuming a low-initial-variance regime where $\Psi^0$ and $\Delta_0$ are comparable, \algno improves upon optimized, zero-initialized flattened \algname{SILVER} under the following conditions:
\begin{itemize}[leftmargin=*]
    \item \textbf{Regime $m \ge n$:} \algno is strictly better when $nL+\sqrt{nm}\,\delta_2 \ll \sqrt{nm}\,\delta_{\mathrm{flat}}$, or equivalently $\sqrt{n/m}L+\delta_2 \ll \delta_{\mathrm{flat}}$. This occurs when flattening is inherently costly---for instance, when the flat heterogeneity $\delta_{\mathrm{flat}}$ is primarily driven by across-group variation ($\delta_1$), which \algno entirely avoids paying for in this regime. If instead $\delta_{\mathrm{flat}} \approx \delta_2$ and $\sigma_{\mathrm{flat}}^2$ is small, the theoretical rate matches \algname{SILVER}, but \algno retains its massive $\cO\rb{n}$ rather than $\cO\rb{nm}$ memory advantage.
    \item \textbf{Regime $n > m$:} Comparing the leading factors for a chosen group batch size $b_{\mathrm{grp}}\in[m,n]$, \algno is favorable when $\rb{m+b_{\mathrm{grp}}}\rb{L+\sqrt{n-b_{\mathrm{grp}}}\,\delta_1+\sqrt{\tfrac{n}{b_{\mathrm{grp}}}}\,\delta_2} \ll \sqrt{nm}\,\delta_{\mathrm{flat}}$. For our targeted choice $b_{\mathrm{grp}}=m$, this simplifies to $\sqrt{m/n}L + \sqrt{m(n-m)/n}\,\delta_1 + \delta_2 \ll \delta_{\mathrm{flat}}$. Thus, the small-batch choice is highly effective when the scaled across-group term is small. If $\delta_1$ is large, increasing $b_{\mathrm{grp}}$ to $n$ eliminates the across-group term entirely, proving preferable whenever $m\sqrt{n-m}\,\delta_1 \gtrsim (n-m)L+(n-\sqrt{nm})\delta_2$ (ignoring initial $\Psi^0$ differences).
\end{itemize}

\newpage

\section{Auxiliary results}

This section collects several auxiliary results that we repeatedly use in the proofs.
Most of them are elementary identities for smooth functions, conditional second moments, and uniform sampling without replacement; we state them explicitly to keep the main proof steps focused on the estimator recursions.

\begin{lemma}[Inexact-gradient descent; \citep{li21}]\label{lem:inexact_descent}
Let $f$ be $L$-smooth, let $\gamma>0$, and let $x^+=x-\gamma g$ for some $x,g\in\R^d$. Then
\begin{align*}
f(x^+)
\leq f(x)
+\frac{\gamma}{2}\sqn{g-\nabla f(x)}
-\frac{\gamma}{2}\sqn{\nabla f(x)}
+\left(\frac{L}{2}-\frac{1}{2\gamma}\right)\sqn{x^+-x}.
\end{align*}
\end{lemma}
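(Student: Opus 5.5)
This inequality follows from $L$-smoothness of $f$ alone, by a descent-lemma argument plus one polarization identity. First, apply the standard quadratic upper bound implied by Assumption~\ref{ass:f-smooth}:
\[
f(x^+)\le f(x)+\langle \nabla f(x),x^+-x\rangle+\frac L2\sqn{x^+-x}.
\]
Then substitute $x^+-x=-\gamma g$. This turns the linear term into $-\gamma\langle \nabla f(x),g\rangle$.

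Next, rewrite this inner product with the polarization identity
\[
-\langle a,b\rangle=\tfrac12\sqn{a-b}-\tfrac12\sqn{a}-\tfrac12\sqn{b},
\]
taking $a=\nabla f(x)$ and $b=g$. Multiplying by $\gamma$ gives
\[
-\gamma\langle \nabla f(x),g\rangle=\frac\gamma2\sqn{g-\nabla f(x)}-\frac\gamma2\sqn{\nabla f(x)}-\frac\gamma2\sqn{g}.
\]

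Finally, express the last term through the step length. Since $\sqn{g}=\sqn{x^+-x}/\gamma^2$, we have $-\frac\gamma2\sqn{g}=-\frac1{2\gamma}\sqn{x^+-x}$. Collecting the $\sqn{x^+-x}$ terms yields the coefficient $\frac L2-\frac1{2\gamma}$, which is exactly the claimed bound.

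There is no genuine obstacle here. The one point to get right is the sign convention in the polarization identity, so that the term $\sqn{g-\nabla f(x)}$ enters with a plus sign.
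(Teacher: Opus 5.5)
Your proposal is correct and matches the paper's proof: both apply the $L$-smoothness quadratic upper bound, substitute $x^+-x=-\gamma g$, and rewrite the inner product via the same polarization identity, with $\frac{\gamma}{2}\sqn{g}=\frac{1}{2\gamma}\sqn{x^+-x}$ producing the coefficient $\frac{L}{2}-\frac{1}{2\gamma}$. You spell out the polarization step and its sign explicitly, which the paper's proof states only as an identity.
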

\begin{proof}
By $L$-smoothness,
\begin{align*}
f(x^+)
\leq f(x)+\inner{\nabla f(x)}{x^+-x}+\frac{L}{2}\sqn{x^+-x}.
\end{align*}
Since $x^+-x=-\gamma g$, the identity
\begin{align*}
\inner{\nabla f(x)}{x^+-x}
=\frac{\gamma}{2}\sqn{g-\nabla f(x)}
-\frac{\gamma}{2}\sqn{\nabla f(x)}
-\frac{1}{2\gamma}\sqn{x^+-x}
\end{align*}
gives the claim.
\end{proof}

\begin{lemma}[Conditional second-moment identities]\label{lem:conditional_second_moment}
Let $\mathcal{G}$ be a sigma-algebra. For any square-integrable random vector $Z$,
\begin{align}
\Expb{\sqn{Z}}{\mathcal{G}}
=\sqn{\Expb{Z}{\mathcal{G}}}
+\Expb{\sqn{Z-\Expb{Z}{\mathcal{G}}}}{\mathcal{G}}.
\label{eq:conditional_variance_decomposition}
\end{align}
Moreover, let $X$ be $\mathcal{G}$-measurable and let $Y_1,\ldots,Y_K$ be conditionally independent given $\mathcal{G}$, with finite second moments and $\Expb{Y_k}{\mathcal{G}}=0$ for all $k$. Then, for any deterministic scalars $a_1,\ldots,a_K$,
\begin{align}
\Expb{\sqn{X+\sum_{k=1}^K a_kY_k}}{\mathcal{G}}
=\sqn{X}+\sum_{k=1}^K a_k^2\Expb{\sqn{Y_k}}{\mathcal{G}}.
\label{eq:conditional_zero_mean_sum}
\end{align}
\end{lemma}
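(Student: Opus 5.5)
Both identities reduce to expanding a squared norm and killing cross terms with conditional expectations; no earlier result of the paper is needed.

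\textbf{First identity \eqref{eq:conditional_variance_decomposition}.} Write $\mu\eqdef\Expb{Z}{\mathcal{G}}$, which is $\mathcal{G}$-measurable and square-integrable by conditional Jensen. Decompose $Z=\mu+(Z-\mu)$ and expand:
\[
\sqn{Z}=\sqn{\mu}+2\inner{\mu}{Z-\mu}+\sqn{Z-\mu}.
\]
Then take $\Expb{\cdot}{\mathcal{G}}$ term by term.
\begin{itemize}
\item The first term is $\mathcal{G}$-measurable, so it is unchanged.
\item For the cross term, pull out the $\mathcal{G}$-measurable factor $\mu$ ("taking out what is known"): $\Expb{\inner{\mu}{Z-\mu}}{\mathcal{G}}=\inner{\mu}{\Expb{Z-\mu}{\mathcal{G}}}=0$. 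This step needs integrability of $\inner{\mu}{Z-\mu}$, which follows from Cauchy--Schwarz since both factors are in $L^2$.
\item The last term is left as $\Expb{\sqn{Z-\mu}}{\mathcal{G}}$, which is exactly the second summand in the claim.
\end{itemize}

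\textbf{Second identity \eqref{eq:conditional_zero_mean_sum}.} Expand $\sqn{X+\sum_k a_kY_k}$ into three kinds of terms:
\begin{itemize}
\item $\sqn{X}$, which is $\mathcal{G}$-measurable and so unchanged by conditioning;
\item mixed terms $2a_k\inner{X}{Y_k}$, whose conditional expectation is $2a_k\inner{X}{\Expb{Y_k}{\mathcal{G}}}=0$, again by pulling out the $\mathcal{G}$-measurable $X$;
\item diagonal terms $a_k^2\sqn{Y_k}$, which give the stated sum, and off-diagonal terms $2a_ka_l\inner{Y_k}{Y_l}$ with $k\neq l$.
\end{itemize}
The only step needing care is the off-diagonal terms. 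By conditional independence given $\mathcal{G}$, applied coordinatewise to the inner product,
\[
\Expb{\inner{Y_k}{Y_l}}{\mathcal{G}}=\inner{\Expb{Y_k}{\mathcal{G}}}{\Expb{Y_l}{\mathcal{G}}}=0 .
\]
This factorization holds almost surely. Integrability again follows from Cauchy--Schwarz and finite second moments, and $X$ must itself be square-integrable, which is implicit in the statement.

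Summing the surviving terms gives $\sqn{X}+\sum_k a_k^2\Expb{\sqn{Y_k}}{\mathcal{G}}$, as claimed.
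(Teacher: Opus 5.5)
Your proposal is correct and matches the paper's proof: expand the square, and kill the cross terms using $\Expb{Z-\Expb{Z}{\mathcal{G}}}{\mathcal{G}}=0$ in the first identity, and $\Expb{Y_k}{\mathcal{G}}=0$ together with conditional independence in the second; your integrability checks make explicit what the paper's two-line sketch leaves implicit. One small correction: you do not need to assume $X$ is square-integrable, because your argument applied to the truncation $\mathbf{1}_{\{\norm{X}\le n\}}X$ gives the identity almost surely on the $\mathcal{G}$-measurable event $\{\norm{X}\le n\}$, and these events exhaust the sample space.
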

\begin{proof}
The first identity follows by expanding
$Z=\Expb{Z}{\mathcal{G}}+\left(Z-\Expb{Z}{\mathcal{G}}\right)$ and using that the second term has conditional mean zero.
For the second identity, expand the squared norm. The cross term with $X$ vanishes because each $Y_k$ has zero conditional mean, and the cross terms
$\Expb{\inner{Y_k}{Y_\ell}}{\mathcal{G}}$ vanish for $k\neq \ell$ by conditional independence and zero conditional means.
\end{proof}

\begin{lemma}[Finite-population variance; Proposition~1 in \citep{con22mu}]\label{lem:prop1}
Let $N\geq 1$, $B\in [N]$, and vectors $(v_i)_{i=1}^N$. Let $v\eqdef \frac{1}{N}\sum_{i=1}^N v_i$ and let $\Omega$ be a subset of $[N]$ of size $B$ chosen uniformly at random.
Then 
\begin{align}
\Exp{\sqn{\frac{1}{B}\sum_{i\in\Omega}v_i-v}}=\frac{N-B}{BN(N-1)}\sum_{i=1}^N \sqn{v_i-v}
\end{align}
if $N\geq 2$, or equals zero if $N=B=1$.
\end{lemma}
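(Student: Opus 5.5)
The statement to prove is Lemma~\ref{lem:prop1}, the finite-population variance identity for uniform sampling without replacement. The plan is a direct second-moment computation based on pairwise inclusion probabilities.

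First, set $w_i \eqdef v_i - v$, so that $\sum_{i=1}^N w_i = 0$. Since
\[
\frac1B\sum_{i\in\Omega}v_i - v = \frac1B\sum_{i\in\Omega} w_i = \frac1B\sum_{i=1}^N \chi_i w_i,
\]
with $\chi_i \eqdef \mathbf{1}[i\in\Omega]$, expanding the squared norm gives
\[
\Exp{\sqn{\frac1B\sum_{i=1}^N\chi_i w_i}} = \frac{1}{B^2}\sum_{i,k}\Exp{\chi_i\chi_k}\inner{w_i}{w_k}.
\]
The case $N=B=1$ is trivial, since then $\Omega=[N]$ and the left side is zero. Assume now $N\ge 2$.

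Second, compute the inclusion probabilities under uniform sampling of a $B$-subset:
\[
\Exp{\chi_i} = \frac{B}{N}, \qquad \Exp{\chi_i\chi_k} = \frac{B(B-1)}{N(N-1)} \quad (i\neq k).
\]

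Third, split the double sum into diagonal and off-diagonal parts. The zero-mean property gives
\[
\sum_{i\neq k}\inner{w_i}{w_k} = \sqn{\sum_i w_i} - \sum_i\sqn{w_i} = -\sum_i\sqn{w_i}.
\]
Hence the expectation equals
\[
\frac{1}{B^2}\left(\frac BN - \frac{B(B-1)}{N(N-1)}\right)\sum_i\sqn{w_i}.
\]
The prefactor simplifies as
\[
\frac{1}{BN}\cdot\frac{(N-1)-(B-1)}{N-1} = \frac{N-B}{BN(N-1)}.
\]

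No step is a real obstacle. The only point that needs care is the off-diagonal cancellation, which uses the centering $\sum_i w_i=0$; without it, the cross terms would not reduce to $-\sum_i\sqn{w_i}$.
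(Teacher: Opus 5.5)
Your proof is correct: the inclusion probabilities $B/N$ and $B(B-1)/(N(N-1))$, together with the centering $\sum_i (v_i - v) = 0$ that collapses the off-diagonal terms, give exactly the constant $\frac{N-B}{BN(N-1)}$, and you handle the degenerate case $N=B=1$ separately as you should. The paper gives no proof of its own and simply cites Proposition~1 of \citep{con22mu}; your computation is the standard self-contained derivation of that identity, and nothing is missing.
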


\begin{lemma}[Uniformity of anchored subset sampling]\label{lem:anchored_subset_uniform}
Let $N\geq 1$ and $B\in[N]$. Draw $i\sim\mathcal{U}([N])$ and, conditionally on $i$, draw $\Omega$ uniformly among all subsets of $[N]\setminus\{i\}$ of size $B-1$. Set $\tilde{\Omega}\eqdef \Omega\cup\{i\}$. Then $\tilde{\Omega}$ is uniformly distributed over all subsets of $[N]$ of size $B$.
Moreover, for any $k\in[N]$ with $B<N$,
\begin{align*}
\Prob(k\notin\tilde{\Omega})=\frac{N-B}{N},
\end{align*}
and, conditionally on $\{k\notin\tilde{\Omega}\}$, the set $\tilde{\Omega}$ is uniformly distributed over all subsets of $[N]\setminus\{k\}$ of size $B$.
\end{lemma}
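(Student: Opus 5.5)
The plan is to prove uniformity of $\tilde{\Omega}$ by computing, for an arbitrary fixed subset $S\subseteq[N]$ with $|S|=B$, the probability $\Prob(\tilde{\Omega}=S)$ directly. Then I deduce the two remaining claims from uniformity by elementary counting. The case $B=1$ is trivial: $\Omega=\emptyset$ and $\tilde{\Omega}=\{i\}$ with $i$ uniform. So assume $B\ge 2$ where needed; the formula below covers all cases anyway.

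\textbf{Step 1: the law of $\tilde{\Omega}$.} Fix $S$ with $|S|=B$. The event $\{\tilde{\Omega}=S\}$ occurs exactly when $i\in S$ and $\Omega=S\setminus\{i\}$, since $\Omega\subseteq[N]\setminus\{i\}$ forces $i\notin\Omega$. Conditioning on $i$ and using that $\Omega$ is uniform among the $\binom{N-1}{B-1}$ subsets of $[N]\setminus\{i\}$ of size $B-1$, I get
\[
\Prob(\tilde{\Omega}=S)=\sum_{k\in S}\frac1N\cdot\binom{N-1}{B-1}^{-1}=\frac{B}{N}\binom{N-1}{B-1}^{-1}=\binom{N}{B}^{-1},
\]
using the identity $\frac{N}{B}\binom{N-1}{B-1}=\binom{N}{B}$. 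This does not depend on $S$, so $\tilde{\Omega}$ is uniform over the $B$-subsets of $[N]$.

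\textbf{Step 2: marginal exclusion probability.} Fix $k\in[N]$ and assume $B<N$. By uniformity, $\Prob(k\notin\tilde{\Omega})$ equals the number of $B$-subsets avoiding $k$ divided by $\binom{N}{B}$. This ratio is
\[
\binom{N-1}{B}\Big/\binom{N}{B}=\frac{N-B}{N}.
\]
It is positive because $B<N$, so conditioning on the event is legitimate.

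\textbf{Step 3: conditional law.} For any $B$-subset $S\subseteq[N]\setminus\{k\}$,
\[
\Prob\big(\tilde{\Omega}=S\mid k\notin\tilde{\Omega}\big)=\binom{N}{B}^{-1}\Big/\Big(\binom{N-1}{B}\Big/\binom{N}{B}\Big)=\binom{N-1}{B}^{-1}.
\]
Subsets containing $k$ receive conditional probability zero. Hence the conditional law is uniform over the $B$-subsets of $[N]\setminus\{k\}$.

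No step is a real obstacle. The only point needing care is Step 1: I must note that the decomposition $\tilde{\Omega}=S$ forces $i\in S$, and that distinct choices of $i$ give disjoint events. The rest is the binomial identity.
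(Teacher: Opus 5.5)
Your proof is correct and follows the paper's argument essentially step for step: you decompose $\{\tilde{\Omega}=S\}$ over the anchor in $S$, use $\frac{B}{N}\binom{N-1}{B-1}^{-1}=\binom{N}{B}^{-1}$, compute the exclusion probability as $\binom{N-1}{B}/\binom{N}{B}=\frac{N-B}{N}$, and apply Bayes' rule for the conditional law. One cosmetic point: you reuse $k$ as the summation index in Step 1, which clashes with the fixed index $k$ in the later claims.
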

\begin{proof}
Fix $S\subset[N]$ with $|S|=B$. The event $\{\tilde{\Omega}=S\}$ occurs exactly when $i=s$ for some $s\in S$ and $\Omega=S\setminus\{s\}$. Hence
\begin{align*}
\Prob(\tilde{\Omega}=S)
=\sum_{s\in S}\frac{1}{N}\frac{1}{\binom{N-1}{B-1}}
=\frac{B}{N\binom{N-1}{B-1}}
=\frac{1}{\binom{N}{B}},
\end{align*}
which proves uniformity. The probability of excluding a fixed $k$ is therefore
\begin{align*}
\Prob(k\notin\tilde{\Omega})
=\frac{\binom{N-1}{B}}{\binom{N}{B}}
=\frac{N-B}{N}.
\end{align*}
For any $S\subset[N]\setminus\{k\}$ with $|S|=B$, Bayes' rule gives
\begin{align*}
\Prob(\tilde{\Omega}=S\mid k\notin\tilde{\Omega})
=\frac{\binom{N}{B}^{-1}}{(N-B)/N}
=\frac{1}{\binom{N-1}{B}},
\end{align*}
so the conditional law is uniform over the $B$-subsets of $[N]\setminus\{k\}$.
\end{proof}

\begin{lemma}[Young-type bounds]\label{lem:young_bounds}
For any $u,v\in\R^d$ and any $\alpha>0$,
\begin{align}
-2\inner{u}{v}\leq \frac{1}{\alpha}\sqn{u}+\alpha\sqn{v}.
\label{eq:young_cross_term}
\end{align}
Consequently, for any $\alpha>0$,
\begin{align}
\sqn{u+v}\leq (1+\alpha)\sqn{u}+\left(1+\frac{1}{\alpha}\right)\sqn{v}.
\label{eq:young_two_vector}
\end{align}
In particular, $\sqn{u+v}\leq 2\sqn{u}+2\sqn{v}$.
\end{lemma}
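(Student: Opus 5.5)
The two inequalities of Lemma~\ref{lem:young_bounds} will be derived in sequence from the nonnegativity of a squared norm and then from a direct expansion.

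For \eqref{eq:young_cross_term}, I start from
\begin{align*}
0\le \sqn{\tfrac{1}{\sqrt{\alpha}}u+\sqrt{\alpha}\,v}=\tfrac{1}{\alpha}\sqn{u}+2\inner{u}{v}+\alpha\sqn{v},
\end{align*}
which holds for every $\alpha>0$. Rearranging this gives $-2\inner{u}{v}\le \frac1\alpha\sqn{u}+\alpha\sqn{v}$ directly.

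For \eqref{eq:young_two_vector}, I expand
\begin{align*}
\sqn{u+v}=\sqn{u}+2\inner{u}{v}+\sqn{v}.
\end{align*}
The cross term $2\inner{u}{v}=-2\inner{u}{-v}$ is then bounded by applying \eqref{eq:young_cross_term} to the pair $(u,-v)$ with the roles of $\alpha$ and $1/\alpha$ arranged suitably. To make the placement explicit, I use \eqref{eq:young_cross_term} with the parameter $1/\alpha$ in place of $\alpha$. This yields
\begin{align*}
2\inner{u}{v}\le \alpha\sqn{u}+\tfrac1\alpha\sqn{v}.
\end{align*}
Substituting this bound into the expansion gives $\sqn{u+v}\le(1+\alpha)\sqn{u}+(1+\frac1\alpha)\sqn{v}$.

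The particular case $\sqn{u+v}\le 2\sqn{u}+2\sqn{v}$ follows by taking $\alpha=1$.

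There is no real obstacle in this argument. The only point requiring care is the bookkeeping of $\alpha$ versus $1/\alpha$ when the cross-term bound is applied to $(u,-v)$, since the weights $(1+\alpha)$ and $(1+\frac1\alpha)$ must land on $\sqn{u}$ and $\sqn{v}$ respectively.
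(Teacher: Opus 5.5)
Your proof is correct and follows essentially the same route as the paper: nonnegativity of $\|\alpha^{-1/2}u+\alpha^{1/2}v\|^2$ gives the cross-term bound, and the second inequality follows by expanding $\|u+v\|^2$ and using $2\langle u,v\rangle\le\alpha\|u\|^2+\alpha^{-1}\|v\|^2$. You obtain that last bound correctly by applying the first inequality to $(u,-v)$ with parameter $1/\alpha$, and $\alpha=1$ gives the final claim.
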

\begin{proof}
The first bound follows from
$0\leq \sqn{\alpha^{-1/2}u+\alpha^{1/2}v}$.
The second follows by expanding $\sqn{u+v}$ and applying
$2\inner{u}{v}\leq \alpha\sqn{u}+\alpha^{-1}\sqn{v}$.
\end{proof}

\begin{lemma}[\citep{li21, fat21, ric21, richtarik20223pc}]\label{lem:quadratic_step_bound}
Let $a\geq 0$ and $b>0$. If
\begin{align*}
0\leq \gamma \leq \frac{1}{\sqrt{a}+b},
\end{align*}
then
\begin{align*}
a\gamma^2+b\gamma \leq 1.
\end{align*}
Moreover, if $a>0$, the bound is tight up to a factor of $2$ since
\begin{align*}
\frac{1}{\sqrt{a}+b}\leq \min\left\{\frac{1}{\sqrt{a}},\frac{1}{b}\right\}\leq \frac{2}{\sqrt{a}+b}.
\end{align*}
\end{lemma}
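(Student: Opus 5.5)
The plan is to prove the first claim directly and then deduce the tightness sandwich from elementary inequalities.

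For the first claim, take $a\ge 0$, $b>0$ and $0\le\gamma\le 1/(\sqrt a+b)$. The function $\gamma\mapsto a\gamma^2+b\gamma$ is nondecreasing on $[0,\infty)$, since both coefficients are nonnegative. It therefore suffices to evaluate it at the right endpoint $\gamma_\star\eqdef 1/(\sqrt a+b)$. There we get
\begin{align*}
a\gamma_\star^2+b\gamma_\star=\frac{a+b(\sqrt a+b)}{(\sqrt a+b)^2}=\frac{a+b\sqrt a+b^2}{a+2b\sqrt a+b^2}\le 1,
\end{align*}
because the numerator and the denominator differ by $b\sqrt a\ge 0$. The denominator is strictly positive because $b>0$, so every expression is well defined. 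This settles the main claim.

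For the second part, assume $a>0$. The left inequality holds because $\sqrt a+b\ge\sqrt a$ and $\sqrt a+b\ge b$, with both quantities positive; inverting gives $1/(\sqrt a+b)\le 1/\sqrt a$ and $1/(\sqrt a+b)\le 1/b$, hence the minimum bound.

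The right inequality is equivalent to $\max\{\sqrt a,b\}\ge(\sqrt a+b)/2$, since $\min\{1/\sqrt a,1/b\}=1/\max\{\sqrt a,b\}$. This holds because the maximum of two numbers is at least their average.

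There is no genuine obstacle in this argument. The only care needed is the monotonicity reduction, which relies on $a,b\ge0$, and the positivity of the denominators, which the hypotheses $b>0$ and $a>0$ guarantee.
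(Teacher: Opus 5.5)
Your proof is correct and complete. The monotonicity reduction is valid because $a,b\ge 0$. The endpoint computation is right, and the slack at the threshold is exactly $b\sqrt a/(\sqrt a+b)^2$. The sandwich follows from $\sqrt a+b\ge\max\{\sqrt a,b\}\ge(\sqrt a+b)/2$ after inverting.

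The paper does not prove this lemma itself; it imports it from the cited works, so there is no in-paper argument to compare against. A common one-line alternative avoids both the monotonicity step and the endpoint evaluation. The hypothesis gives $0\le\sqrt a\,\gamma\le 1$, hence $a\gamma^2=(\sqrt a\,\gamma)^2\le\sqrt a\,\gamma$, and therefore $a\gamma^2+b\gamma\le(\sqrt a+b)\gamma\le 1$. Your computation, in exchange, shows exactly how much room is left at $\gamma=1/(\sqrt a+b)$.

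If you want the ``tight up to a factor of $2$'' remark to mean more than the displayed inequalities, add one line. Any $\gamma\ge 0$ with $a\gamma^2+b\gamma\le 1$ must satisfy $\gamma\le 1/\sqrt a$ and $\gamma\le 1/b$, since both terms are nonnegative. Hence the largest admissible stepsize lies between $1/(\sqrt a+b)$ and $2/(\sqrt a+b)$.
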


\newpage
\section{\algname{SILAGE}, case $m \geq n$}

\subsection{Proofs for general nonconvex functions}

We first prove the one-step estimator recursions needed for the nonconvex analysis.
This is the main helper lemma for the case $m\ge n$: it tracks both the average table error and the error of the aggregated estimator used in the descent step.

\begin{lemma}\label{lem:silage_variance_bounds}
Consider Algorithm~\ref{algsilage1} (case $m \geq n$). Let $\mathcal{F}^t$ be the filtration generated by the algorithm up to iteration $t$. Under the assumption that the internal similarity condition \eqref{eqs2} holds with parameter $\delta_2$, the following bounds hold for the gradient estimators:

1.  \textbf{Component-wise Recursion:} For any $i \in [n]$,
    \begin{align*}
    \Expc{\sqn{g_i^{t+1}-\nabla f_i(x^{t+1})}} &= \left(1-\frac{p}{n}\right) \sqn{g_i^{t}-\nabla f_i(x^{t})}\\
    &\quad+\left(1-\frac{p}{n}\right)\frac{1}{m}\sum_{j=1}^m \sqn{\nabla f_{i,j}(x^{t+1})-\nabla f_i(x^{t+1})-\nabla f_{i,j}(x^{t})+\nabla f_i(x^{t})}.
    \end{align*}

2.  \textbf{Average Component Error:}
    \begin{align*}
    \Expc{\frac{1}{n}\sum_{i=1}^n\sqn{g_i^{t+1}-\nabla f_i(x^{t+1})}} &\leq \left(1-\frac{p}{n}\right)\frac{1}{n}\sum_{i=1}^n\sqn{g_i^{t}-\nabla f_i(x^{t})} +\left(1-\frac{p}{n}\right)\delta_2^2\sqn{x^{t+1}-x^t}.
    \end{align*}

3.  \textbf{Global Estimator Error:}
    \begin{align*}
    \Expc{\sqn{g^{t+1}-\nabla f(x^{t+1})}} &\leq \left(1-\frac{2p}{n}\right)\sqn{g^t-\nabla f(x^t)}+\frac{p}{n^3}\sum_{i=1}^n\sqn{g_{i}^t-\nabla f_{i}(x^{t})} \\
    &\quad +\frac{n-p}{n^2}\delta_2^2\sqn{x^{t+1}-x^t}.
    \end{align*}
\end{lemma}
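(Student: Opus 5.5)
The approach is to write every estimator error as ``old error plus a conditionally zero-mean sampling noise,'' split on the outcome of the coin $\theta^t$ and the anchor $i^t$, and then apply Lemma~\ref{lem:conditional_second_moment}. Throughout, $x^{t+1}=x^t-\gamma g^t$ is $\mathcal{F}^t$-measurable, so the only randomness at step $t$ is $(\theta^t,i^t,(j_i^t)_i)$.

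I will use the notation $e_i^t\eqdef g_i^t-\nabla f_i(x^t)$, $e^t\eqdef g^t-\nabla f(x^t)=\frac1n\sum_i e_i^t$, and
\[\xi_i^t\eqdef \Delta_i^t-\bigl(\nabla f_i(x^{t+1})-\nabla f_i(x^t)\bigr).\]
Since $j_i^t\sim\mathcal{U}([m])$ independently, each $\xi_i^t$ has zero conditional mean, the $\xi_i^t$ are conditionally independent across $i$, and
\[V_i^t\eqdef\Expc{\sqn{\xi_i^t}}=\frac1m\sum_{j}\sqn{\nabla f_{i,j}(x^{t+1})-\nabla f_i(x^{t+1})-\nabla f_{i,j}(x^t)+\nabla f_i(x^t)}.\]

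\textbf{Part 1.} Fix $i$. Group $i$ is reset exactly on the event $\{\theta^t=1,i^t=i\}$, which has probability $p/n$, and then $e_i^{t+1}=0$. Otherwise $e_i^{t+1}=e_i^t+\xi_i^t$, where $j_i^t$ is independent of the coin and anchor. On that event, \eqref{eq:conditional_zero_mean_sum} gives the conditional second moment $\sqn{e_i^t}+V_i^t$. Weighting by $1-p/n$ yields the stated identity with equality.

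\textbf{Part 2.} Average Part 1 over $i$ and apply \eqref{eqs2} with $x=x^{t+1}$, $y=x^t$, which gives $\frac1n\sum_i V_i^t\le\delta_2^2\sqn{x^{t+1}-x^t}$.

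\textbf{Part 3.} This is the only step needing care, because the reset couples the groups through $e^t$. I will condition on $\theta^t$.
\begin{itemize}
\item If $\theta^t=0$, then $e^{t+1}=e^t+\frac1n\sum_i\xi_i^t$. By \eqref{eq:conditional_zero_mean_sum} its conditional second moment is $\sqn{e^t}+\frac1{n^2}\sum_iV_i^t$.
\item If $\theta^t=1$ and $i^t=k$, then $e^{t+1}=e^t-\frac1n e_k^t+\frac1n\sum_{i\ne k}\xi_i^t$. The conditional second moment is $\sqn{e^t-e_k^t/n}+\frac1{n^2}\sum_{i\ne k}V_i^t$.
\end{itemize}
Averaging over $k$ uniformly, the key computation uses $\frac1n\sum_k e_k^t=e^t$:
\[\frac1n\sum_k\sqn{e^t-e_k^t/n}=\Bigl(1-\frac2n\Bigr)\sqn{e^t}+\frac1{n^3}\sum_k\sqn{e_k^t},\]
and the noise term becomes $\frac{n-1}{n^3}\sum_iV_i^t$.

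Combining the two branches with weights $1-p$ and $p$ gives the error coefficient $(1-p)+p(1-2/n)=1-2p/n$, plus the term $\frac{p}{n^3}\sum_k\sqn{e_k^t}$. The noise coefficient is
\[\frac1{n^2}\Bigl[(1-p)+p\,\frac{n-1}{n}\Bigr]=\frac{n-p}{n^3}.\]
Finally, bounding $\sum_iV_i^t\le n\delta_2^2\sqn{x^{t+1}-x^t}$ via \eqref{eqs2} gives the $\frac{n-p}{n^2}\delta_2^2$ term.

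The main thing to get right is this anchor-averaging step: the cross term $-\frac2n\inner{e^t}{e_k^t}$ must be averaged over $k$ to produce the contraction $-\frac{2p}{n}\sqn{e^t}$, rather than being bounded crudely with Young's inequality.
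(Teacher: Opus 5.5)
Your proof is correct and follows the paper's argument. You condition on the coin and then the anchor, write each error as old error plus conditionally independent zero-mean noise, and average the anchor term over $k$ to obtain $(1-\frac{2}{n})\sqn{e^t}+\frac{1}{n^3}\sum_k\sqn{e_k^t}$. The only cosmetic difference is that the paper splits $e^t-\frac1n e_{i^t}^t$ as $\frac{n-1}{n}e^t+\frac1n(e^t-e_{i^t}^t)$ and applies a variance identity, whereas you expand the square and average directly; this is the same computation.
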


\begin{proof}
Let $\mathcal{F}^t \eqdef \sigma\rb{x^0, \{g_i^0\}_{i=1}^n, \dots, x^t, \{g_i^t\}_{i=1}^n}$ be the $\sigma$-algebra generated by the iterates and the individual gradient estimators (table entries) up to iteration $t$; in particular, the averaged estimator $g^t = \frac{1}{n}\sum_{i=1}^n g_i^t$ is $\mathcal{F}^t$-measurable. The transition from $x^t$ to $x^{t+1}$ is deterministic given $\mathcal{F}^t$ (since $x^{t+1} = x^t - \gamma g^t$). However, the construction of $g^{t+1}$ depends on the independent random variables realized at step $t$: the coin flip $\theta^t \in \{0,1\}$, the index $i^t \sim \mathcal{U}([n])$ (if $\theta^t=1$), and the minibatch indices $j_i^t \sim \mathcal{U}([m])$.

Fix an index $i \in [n]$. To analyze the error $\Expc{\sqn{g_i^{t+1}-\nabla f_i(x^{t+1})}}$, we apply the \textbf{law of total expectation} by conditioning on the update rule applied to index $i$. From Algorithm~\ref{algsilage1}, there are two mutually exclusive cases for updating $g_i^{t+1}$:
\begin{enumerate}
    \item \textbf{Exact Reset (Case 1):} This occurs if $\theta^t = 1$ and $i^t = i$. The probability of this event is:
    $$ \Prob(\theta^t=1) \cdot \Prob(i^t=i \mid \theta^t=1) = p \cdot \frac{1}{n}. $$
    In this case, $g_i^{t+1} \eqdef \nabla f_i(x^{t+1})$, so the error is zero.
    
    \item \textbf{Recursive Update (Case 2):} This occurs if $\theta^t = 0$, or if $\theta^t = 1$ but $i^t \neq i$. The probability of this event is the complement of Case 1:
    $$ 1 - \frac{p}{n}. $$
    In this case, $g_i^{t+1} \eqdef g_i^t + \nabla f_{i, j_i^t}(x^{t+1}) - \nabla f_{i, j_i^t}(x^t)$.
\end{enumerate}

Applying the law of total expectation over the random choices at iteration $t$, we obtain:
\begin{align*}
&\Expc{\sqn{g_i^{t+1}-\nabla f_i(x^{t+1})}} \\
&= \frac{p}{n} \cdot 0 + \left(1 - \frac{p}{n}\right) \Expb{\sqn{g_i^t + \nabla f_{i, j_i^t}(x^{t+1}) - \nabla f_{i, j_i^t}(x^t) - \nabla f_i(x^{t+1})}}{\mathcal{F}^t} \\
&= \left(1 - \frac{p}{n}\right) \Expb{\left\|g_i^t - \nabla f_i(x^t) + \underbrace{\nabla f_{i, j_i^t}(x^{t+1}) - \nabla f_{i, j_i^t}(x^t) - (\nabla f_i(x^{t+1}) - \nabla f_i(x^t))}_{:=\, \xi_{i,j}^t}\right\|^2}{\mathcal{F}^t} \\
&= \left(1-\frac{p}{n}\right) \sqn{g_i^{t}-\nabla f_i(x^{t})}\\
&\quad+\left(1-\frac{p}{n}\right)\frac{1}{m}\sum_{j=1}^m \sqn{\nabla f_{i,j}(x^{t+1})-\nabla f_i(x^{t+1})-\nabla f_{i,j}(x^{t})+\nabla f_i(x^{t})}.
\end{align*}
The first equality separates the case where $g_i$ is reset exactly (with probability $p/n$) from the recursive update case. The final equality relies on the standard variance decomposition $\Exp{\sqn{X+Y}} = \sqn{X} + \Exp{\sqn{Y}}$, valid here because the random vector $\xi_{i,j}^t$ has zero mean conditioned on $\mathcal{F}^t$.

Therefore,
 \begin{align*}
&\Expc{\frac{1}{n}\sum_{i=1}^n\sqn{g_i^{t+1}-\nabla f_i(x^{t+1})}} = \left(1-\frac{p}{n}\right) \frac{1}{n}\sum_{i=1}^n\sqn{g_i^{t}-\nabla f_i(x^{t})}\\
&\quad+\left(1-\frac{p}{n}\right)\frac{1}{nm}\sum_{i=1}^n\sum_{j=1}^m \sqn{\nabla f_{i,j}(x^{t+1})-\nabla f_i(x^{t+1})-\nabla f_{i,j}(x^{t})+\nabla f_i(x^{t})}\\
&\leq \left(1-\frac{p}{n}\right)\frac{1}{n}\sum_{i=1}^n\sqn{g_i^{t}-\nabla f_i(x^{t})} +\left(1-\frac{p}{n}\right)\delta_2^2\sqn{x^{t+1}-x^t}.
\end{align*}

Similarly, for the global gradient estimator $g^{t+1}$, we condition on the coin flip $\theta^t$. When $\theta^t=0$, all indices update recursively using independent minibatches $j_i^t$. Exploiting this independence, the variance of the sum becomes the sum of the variances:
\begin{align*}
&\Expc{\sqn{g^{t+1}-\nabla f(x^{t+1})}} \\
&= (1-p)\Expb{\left\|g^{t}-\nabla f(x^{t}) + \frac{1}{n}\sum_{i=1}^n \underbrace{\left( \nabla f_{i,j_i^t}(x^{t+1})-\nabla f_{i,j_i^t}(x^{t}) - (\nabla f_i(x^{t+1})-\nabla f_i(x^{t})) \right)}_{\xi_{i,j}^t}\right\|^2}{\mathcal{F}^t} \\
&\quad + p \Expb{\sqn{g^{t+1}-\nabla f(x^{t+1})}}{\mathcal{F}^t,\theta^t=1} \\
&=(1-p)\left(\sqn{g^{t}-\nabla f(x^{t})} + \frac{1}{n^2}\sum_{i=1}^n \Expb{\sqn{\xi_{i,j}^t}}{\mathcal{F}^t}\right) \\
&\quad + p \Expb{\left\|g^{t}-\frac{1}{n}g_{i^t}^t+\frac{1}{n}\sum_{i\in[n]\backslash\{i^t\}}\left(\nabla f_{i,j_i^t}(x^{t+1})-\nabla f_{i}(x^{t+1})-\nabla f_{i,j_i^t}(x^{t})\right)\right\|^2}{\mathcal{F}^t,\theta^t=1} \\
&=(1-p)\sqn{g^{t}-\nabla f(x^{t})} \\
&\quad+\frac{1-p}{n^2 m}\sum_{i=1}^n \sum_{j=1}^m \sqn{\nabla f_{i,j}(x^{t+1})-\nabla f_i(x^{t+1})-\nabla f_{i,j}(x^{t})+\nabla f_i(x^{t})} \\
&\quad + p \Expb{\left\|g^{t}-\frac{1}{n}g_{i^t}^t+\frac{1}{n}\sum_{i\in[n]\backslash\{i^t\}}\left(\nabla f_{i,j_i^t}(x^{t+1})-\nabla f_{i}(x^{t+1})-\nabla f_{i,j_i^t}(x^{t})\right)\right\|^2}{\mathcal{F}^t,\theta^t=1}.
\end{align*}
The second equality applies the identity $\Exp{\sqn{X+\sum Y_i}} = \sqn{X} + \sum \Exp{\sqn{Y_i}}$ for independent zero-mean variables $Y_i$. The final equality expands the expectation of the stochastic noise terms $\xi_{i,j}^t$ and substitutes the specific update rule for the $\theta^t=1$ case.

To analyze the expectation conditioned on $\theta^t=1$, we first simplify the term inside the norm. By adding and subtracting $\nabla f_i(x^t)$ within the summation to recover the noise terms $\xi_{i,j}^t$, and applying the identity $\sum_{i \neq k} a_i = \sum_{\text{all}} a_i - a_k$, we derive the following algebraic relation:
\begin{align*}
&g^{t}-\frac{1}{n}g_{i^t}^t+\frac{1}{n}\sum_{i \in [n]\setminus\{i^t\}}\left(\nabla f_{i,j_i^t}(x^{t+1})-\nabla f_{i}(x^{t+1})-\nabla f_{i,j_i^t}(x^{t})\right) \\
&= g^{t}-\frac{1}{n}g_{i^t}^t + \frac{1}{n}\sum_{i \neq i^t} \left( \xi_{i,j}^t + \nabla f_i(x^t) - \nabla f_i(x^t) \right) \\
&= g^{t}-\frac{1}{n}g_{i^t}^t + \frac{1}{n}\sum_{i \neq i^t} \left( \nabla f_i(x^t) + \xi_{i,j}^t \right) - \frac{n-1}{n}\nabla f(x^t) \\ 
&= \frac{1}{n}\sum_{i \neq i^t} \left( g_i^t - \nabla f_i(x^t) \right) + \frac{1}{n}\sum_{i \neq i^t} \xi_{i,j}^t \\
&= \frac{1}{n} \left[ \sum_{i=1}^n (g_i^t - \nabla f_i(x^t)) - (g_{i^t}^t - \nabla f_{i^t}(x^t)) \right] + \frac{1}{n}\sum_{i \neq i^t} \xi_{i,j}^t \\
&= \left( g^t - \nabla f(x^t) \right) - \frac{1}{n}\left( g_{i^t}^t - \nabla f_{i^t}(x^t) \right) + \frac{1}{n}\sum_{i \neq i^t} \xi_{i,j}^t \\
&= \frac{n-1}{n}\left(g^t-\nabla f(x^t)\right) + \frac{1}{n}\left(g^{t}-g_{i^t}^t+\nabla f_{i^t}(x^{t})-\nabla f(x^{t})\right) + \frac{1}{n}\sum_{i \neq i^t} \xi_{i,j}^t.
\end{align*}
Substituting this decomposition back into the main inequality, and expanding the square, we obtain:
\begin{align*}
&\Expc{\sqn{g^{t+1}-\nabla f(x^{t+1})}} \\
&\leq (1-p)\sqn{g^{t}-\nabla f(x^{t})} + \frac{1-p}{n} \delta_2^2\sqn{x^{t+1}-x^t} \\
&\quad + p \Expb{\left\| \frac{n-1}{n}\left(g^t-\nabla f(x^t)\right) + \frac{1}{n}\left(g^{t}-g_{i^t}^t+\nabla f_{i^t}(x^{t})-\nabla f(x^{t})\right) + \frac{1}{n}\sum_{i \neq i^t} \xi_{i,j}^t \right\|^2}{\mathcal{F}^t,\theta^t=1} \\
&= (1-p)\sqn{g^{t}-\nabla f(x^{t})}+\frac{1-p}{n} \delta_2^2\sqn{x^{t+1}-x^t} + p \left(\frac{n-1}{n}\right)^2 \sqn{g^t-\nabla f(x^t)} \\
&\quad +\frac{p}{n^2}\Expb{\sqn{g^{t}-g_{i^t}^t+\nabla f_{i^t}(x^{t})-\nabla f(x^{t})}}{\mathcal{F}^t,\theta^t=1} \\
&\quad+\frac{p}{n^2}\Expb{\sum_{i\in[n]\backslash\{i^t\}}\sqn{\xi_{i,j}^t}}{\mathcal{F}^t,\theta^t=1}.
\end{align*}
where the last inequality follows from the fact that the $j_i^t$ are independent of each other, so that the variance of the sum is the sum of the variances. 

To resolve the sum over the random set $[n]\setminus\{i^t\}$, we rewrite the expectation using indicator variables $\mathbb{I}(i \neq i^t)$. Since $i^t$ is uniform on $[n]$, $\Prob(i \neq i^t) = \frac{n-1}{n}$. Furthermore, the randomness of $j_i^t$ is independent of $i^t$. Thus:
\begin{align*}
&\Expb{\sum_{i\in[n]\backslash\{i^t\}}\sqn{\xi_{i,j}^t}}{\mathcal{F}^t,\theta^t=1}\\
&= \Expb{\sum_{i=1}^n \mathbb{I}(i \neq i^t) \sqn{\xi_{i,j}^t}}{\mathcal{F}^t,\theta^t=1} \\
&= \sum_{i=1}^n \Prob(i \neq i^t) \Expb{\sqn{\xi_{i,j}^t}}{\mathcal{F}^t} \\
&= \sum_{i=1}^n \frac{n-1}{n} \cdot \frac{1}{m}\sum_{j=1}^m \sqn{\nabla f_{i,j}(x^{t+1})-\nabla f_{i}(x^{t+1})-\nabla f_{i,j}(x^{t})+\nabla f_{i}(x^{t})}\\
&=\frac{n-1}{nm}\sum_{i=1}^n \sum_{j=1}^m \sqn{\nabla f_{i,j}(x^{t+1})-\nabla f_{i}(x^{t+1})-\nabla f_{i,j}(x^{t})+\nabla f_{i}(x^{t})}\\
&\leq (n-1)\delta_2^2\sqn{x^{t+1}-x^t}.
\end{align*}

Note that,
\begin{align*}
&\Expb{\sqn{g^{t}-g_{i^t}^t+\nabla f_{i^t}(x^{t})-\nabla f(x^{t})}}{\mathcal{F}^t,\theta^t=1} \\
&= \Expb{\sqn{(g^t - \nabla f(x^t)) - (g_{i^t}^t - \nabla f_{i^t}(x^t))}}{\mathcal{F}^t,\theta^t=1} \\
&= \frac{1}{n}\sum_{i=1}^n \sqn{g_i^t - \nabla f_i(x^t)} - \sqn{g^t - \nabla f(x^t)}.
\end{align*}

Hence,
 \begin{align*}
\Expc{\sqn{g^{t+1}-\nabla f(x^{t+1})}} &\leq (1-p)\sqn{g^{t}-\nabla f(x^{t})}+ \frac{p(n-1)^2}{n^2}\sqn{g^t-\nabla f(x^t)}\\
&\quad+\frac{p}{n^3}\sum_{i=1}^n\sqn{g_{i}^t-\nabla f_{i}(x^{t})}-\frac{p}{n^2}\sqn{g^t-\nabla f(x^{t})}\\
&\quad+\frac{1-p}{n} \delta_2^2\sqn{x^{t+1}-x^t}+\frac{p(n-1)}{n^2}\delta_2^2\sqn{x^{t+1}-x^t}\\
&= \left(1-\frac{2p}{n}\right)\sqn{g^t-\nabla f(x^t)}+\frac{p}{n^3}\sum_{i=1}^n\sqn{g_{i}^t-\nabla f_{i}(x^{t})}\\
&\quad+\frac{n-p}{n^2}\delta_2^2\sqn{x^{t+1}-x^t}.
\end{align*}
\end{proof}

\subsubsection*{Proof of Theorem~\ref{theo1}}

We next restate the main nonconvex convergence result for Algorithm~\ref{algsilage1}.
The proof combines the descent inequality for smooth functions with Lemma~\ref{lem:silage_variance_bounds} to build a Lyapunov recursion.

\silageMgeNNC*

\begin{proof}[Proof of Theorem~\ref{theo1}]

We rely on the variance bounds established in Lemma~\ref{lem:silage_variance_bounds}. Specifically, item 3 of the Lemma provides the bound for the global estimator error:
\begin{align*}
\Expc{\sqn{g^{t+1}-\nabla f(x^{t+1})}} &\leq \left(1-\frac{2p}{n}\right)\sqn{g^t-\nabla f(x^t)}+\frac{p}{n^3}\sum_{i=1}^n\sqn{g_{i}^t-\nabla f_{i}(x^{t})} \\
&\quad +\frac{n-p}{n^2}\delta_2^2\sqn{x^{t+1}-x^t}.
\end{align*}
Additionally, item 2 of the Lemma provides the bound for the average component error:
\begin{align*}
\Expc{\frac{1}{n}\sum_{i=1}^n\sqn{g_i^{t+1}-\nabla f_i(x^{t+1})}} &\leq \left(1-\frac{p}{n}\right)\frac{1}{n}\sum_{i=1}^n\sqn{g_i^{t}-\nabla f_i(x^{t})} +\left(1-\frac{p}{n}\right)\delta_2^2\sqn{x^{t+1}-x^t}.
\end{align*}

We have the descent lemma
\begin{align*}
f(x^{t+1})
&\leq f(x^t)
+\frac{\gamma}{2}\sqn{g^{t}-\nabla f(x^t)}
-\frac{\gamma}{2}\sqn{\nabla f(x^t)}\\
&\quad+\left(\frac{L}{2}-\frac{1}{2\gamma}\right)\sqn{x^{t+1}-x^t}.
\end{align*}
We assume $p>0$. We define the Lyapunov function 
$$\Psi^t\eqdef f(x^t)-\finf + \frac{\gamma n}{4p}\sqn{g^{t}-\nabla f(x^t)}+\frac{\gamma}{4np}\sum_{i=1}^n\sqn{g_i^t-\nabla f_{i}(x^{t})}.$$ 

Let us denote the Lyapunov function coefficients as $C_1 \eqdef \frac{\gamma n}{4p}$ and $C_2 \eqdef \frac{\gamma}{4np}$. Taking the conditional expectation of $\Psi^{t+1}$ and substituting the descent lemma, the global gradient recursion, and the component-wise gradient recursion, we have:
\begin{align*}
\Expc{\Psi^{t+1}} &= \Expc{f(x^{t+1})} - \finf + C_1 \Expc{\sqn{g^{t+1}-\nabla f(x^{t+1})}} + C_2 \Expc{\sum_{i=1}^n \sqn{g_i^{t+1}-\nabla f_i(x^{t+1})}} \\
&\leq \underbrace{f(x^t) - \finf +\frac{\gamma}{2}\sqn{g^{t}-\nabla f(x^t)}-\frac{\gamma}{2}\sqn{\nabla f(x^t)} + \left(\frac{L}{2}-\frac{1}{2\gamma}\right) \sqn{x^{t+1}-x^t}}_{\text{Descent Lemma}} \\
&\quad + C_1 \left[ \left(1-\frac{2p}{n}\right)\sqn{g^t-\nabla f(x^t)}+\frac{p}{n^3}\sum_{i=1}^n\sqn{g_{i}^t-\nabla f_{i}(x^{t})} + \frac{n-p}{n^2}\delta_2^2\sqn{x^{t+1}-x^t} \right] \\
&\quad + C_2 \left[ \left(1-\frac{p}{n}\right)\sum_{i=1}^n \sqn{g_i^{t}-\nabla f_i(x^{t})} + n\left(1-\frac{p}{n}\right)\delta_2^2\sqn{x^{t+1}-x^t} \right].
\end{align*}
We now group the terms by the error components $\sqn{g^t-\nabla f(x^t)}$, $\sum \sqn{g_i^t-\nabla f_i(x^t)}$, and the stepsize $\sqn{x^{t+1}-x^t}$:
\begin{align*}
\Expc{\Psi^{t+1}} &\leq f(x^t) - \finf -\frac{\gamma}{2}\sqn{\nabla f(x^t)} \\
&\quad + \left( \frac{\gamma}{2} + C_1\left(1-\frac{2p}{n}\right) \right) \sqn{g^{t}-\nabla f(x^t)} \\
&\quad + \left( C_1\frac{p}{n^3} + C_2\left(1-\frac{p}{n}\right) \right) \sum_{i=1}^n\sqn{g_i^t-\nabla f_{i}(x^{t})} \\
&\quad + \left( \frac{L}{2}-\frac{1}{2\gamma} + C_1\frac{n-p}{n^2}\delta_2^2 + C_2 n\left(\frac{n-p}{n}\right)\delta_2^2 \right) \sqn{x^{t+1}-x^t}.
\end{align*}
Substituting $C_1$ and $C_2$ back into the coefficients reveals the simplifications:
\begin{align*}
\Expc{\Psi^{t+1}} &\leq f(x^t) -\finf -\frac{\gamma}{2}\sqn{\nabla f(x^t)} \\
&\quad + \left( \frac{\gamma}{2} + \frac{\gamma n}{4p} - \frac{\gamma n}{4p}\cdot\frac{2p}{n} \right) \sqn{g^{t}-\nabla f(x^t)}\\
&\quad + \left( \frac{\gamma n}{4p}\cdot\frac{p}{n^3} + \frac{\gamma}{4np}\left(1-\frac{p}{n}\right) \right)\sum_{i=1}^n\sqn{g_i^t-\nabla f_{i}(x^{t})}\\
&\quad + \left(\frac{L}{2}-\frac{1}{2\gamma}+ \frac{\gamma n}{4p}\cdot\frac{n-p}{n^2}\delta_2^2 + \frac{\gamma}{4np}\cdot(n-p)\delta_2^2 \right) \sqn{x^{t+1}-x^t}\\
&= f(x^t) - \finf -\frac{\gamma}{2}\sqn{\nabla f(x^t)} \\
&\quad + \underbrace{\left( \frac{\gamma}{2} + \frac{\gamma n}{4p} - \frac{\gamma}{2} \right)}_{= \frac{\gamma n}{4p}} \sqn{g^{t}-\nabla f(x^t)}\\
&\quad + \underbrace{\left( \frac{\gamma}{4n^2} + \frac{\gamma}{4np} - \frac{\gamma}{4n^2} \right)}_{= \frac{\gamma}{4np}}\sum_{i=1}^n\sqn{g_i^t-\nabla f_{i}(x^{t})}\\
&\quad + \left(\frac{L}{2}-\frac{1}{2\gamma}+ \frac{\gamma(n-p)}{4np}\delta_2^2 + \frac{\gamma(n-p)}{4np}\delta_2^2 \right) \sqn{x^{t+1}-x^t}.
\end{align*}
Recognizing that the coefficients for the gradient error terms match the definition of $\Psi^t$, we obtain:
\begin{align*}
\Expc{\Psi^{t+1}} &\leq \Psi^t -\frac{\gamma}{2}\sqn{\nabla f(x^t)} +\left(\frac{L}{2}-\frac{1}{2\gamma}+\frac{\gamma(n-p)}{2np}\delta_2^2 \right) \sqn{x^{t+1}-x^t}.
\end{align*}

Hence, the last term is nonpositive whenever
\begin{align*}
L-\frac{1}{\gamma}+\gamma\,\delta_2^2\frac{n-p}{np}\leq 0,
\end{align*}
that is, whenever
\begin{align*}
\delta_2^2\frac{n-p}{np}\gamma^2+L\gamma\leq 1.
\end{align*}
By Lemma~\ref{lem:quadratic_step_bound}, applied with
\begin{align*}
a &\eqdef \delta_2^2\frac{n-p}{np},
\qquad
b \eqdef L,
\end{align*}
it therefore suffices to impose
\begin{align*}
\gamma\leq \frac{1}{L+\delta_2\sqrt{\frac{n-p}{np}}},
\end{align*}
which is exactly the stepsize condition in the theorem.

Assuming that this condition holds, and unrolling the recursion, we obtain, for every $T\geq 0$, by defining $\tilde{x}^T$ as $x^t$ for $t$ chosen uniformly at random in $\{0,\ldots,T\}$,
\begin{align*}
\Exp{\sqn{\nabla f(\tilde{x}^T)}} =\frac{1}{T+1}\sum_{t=0}^{T} \Exp{\sqn{\nabla f(x^t)}} &\leq \frac{2\Psi^0}{\gamma(T+1)}.
\end{align*}

\end{proof}

The following corollary turns Theorem~\ref{theo1} into a component-gradient complexity bound when the initial estimators are arbitrary.
It then specializes the bound to the choice $p=n/m$, which gives the stated complexity in the regime $m\ge n$.

\silageMgeNArbitraryInit*

\begin{proof}[Proof of Corollary~\ref{cor:silage_mge_n_arbitrary_init}]
Theorem~\ref{theo1} gives
\begin{align*}
\Exp{\sqn{\nf{\tilde{x}^T}}}
&=
\frac{1}{T+1}\sum_{t=0}^{T}\Exp{\sqn{\nf{x^t}}}
\le
\frac{2\Psi^0}{\gamma(T+1)}.
\end{align*}
Hence, if $T+1\ge 2\Psi^0/(\gamma\epsilon)$, then
\begin{align*}
\Exp{\sqn{\nf{\tilde{x}^T}}}\le \epsilon.
\end{align*}
Since
\begin{align*}
\frac1\gamma
&=
L+\delta_2\sqrt{\frac{n-p}{np}}
\le
L+\frac{\delta_2}{\sqrt p},
\end{align*}
the required number of iterations satisfies
\begin{align*}
T
=
\cO\rb{
\rb{L+\frac{\delta_2}{\sqrt p}}
\frac{\Psi^0}{\epsilon}
}.
\end{align*}

It remains to count component-gradient evaluations. At one iteration of
Algorithm~\ref{algsilage1}, there are two cases. If $\theta^t=0$, which happens
with probability $1-p$, the algorithm updates every group estimator using one
sample $j_i^t\in[m]$ per group $i\in[n]$. For each such group it computes
\begin{align*}
\nabla f_{i,j_i^t}(x^{t+1})
\qquad\text{and}\qquad
\nabla f_{i,j_i^t}(x^t),
\end{align*}
so this case costs $2n$ component-gradient evaluations.

If $\theta^t=1$, which happens with probability $p$, the algorithm samples one
group $i^t\in[n]$ and computes the full group gradient
\begin{align*}
\nabla f_{i^t}(x^{t+1})
=
\frac1m\sum_{j=1}^m \nabla f_{i^t,j}(x^{t+1}),
\end{align*}
which costs $m$ component-gradient evaluations. For the remaining $n-1$ groups,
the algorithm performs the same two-point component-gradient update as above,
costing $2(n-1)$ additional component-gradient evaluations. Therefore, this
case costs $m+2(n-1)$ component-gradient evaluations.

Thus the expected number of component-gradient evaluations per iteration is
\begin{align*}
(1-p)\,2n+p\rb{m+2(n-1)}
&=
2n+pm-2p
=
\cO\rb{n+pm}.
\end{align*}
Multiplying this expected per-iteration cost by the iteration bound yields
\begin{align*}
\cO\rb{
(n+pm)
\rb{L+\frac{\delta_2}{\sqrt p}}
\frac{\Psi^0}{\epsilon}
}.
\end{align*}

Finally, because $m\ge n$, the choice $p=n/m$ belongs to $(0,1]$. Substituting
$p=n/m$ gives
\begin{align*}
n+pm
&=
n+\frac{n}{m}m
=
2n,
\\
L+\frac{\delta_2}{\sqrt p}
&=
L+\delta_2\sqrt{\frac{m}{n}}.
\end{align*}
Hence
\begin{align*}
(n+pm)\rb{L+\frac{\delta_2}{\sqrt p}}
&=
2n\rb{L+\delta_2\sqrt{\frac{m}{n}}}
\\
&=
2nL+2\sqrt{nm}\,\delta_2,
\end{align*}
which gives
\begin{align*}
\cO\rb{
\rb{nL+\sqrt{nm}\,\delta_2}
\frac{\Psi_{n/m}^0}{\epsilon}
}.
\end{align*}

We now look at the case of exact initialization.


Since $m\ge n$, the choice $p=n/m$ belongs to $(0,1]$. For this value of $p$, the stepsize condition in Theorem~\ref{theo1} gives
\begin{align*}
\gamma
&=
\rb{
L+\delta_2\sqrt{\frac{n-p}{np}}
}^{-1}
=
\rb{
L+\delta_2\sqrt{
\frac{n-\frac{n}{m}}{n\frac{n}{m}}
}
}^{-1}
\\
&=
\rb{
L+\delta_2\sqrt{\frac{m-1}{n}}
}^{-1}.
\end{align*}

Next, the exact initialization $g_i^0=\nfi{x^0}$ for every $i\in[n]$ implies
\begin{align*}
g^0
&=
\frac1n\sum_{i=1}^n g_i^0
=
\frac1n\sum_{i=1}^n \nfi{x^0}
=
\nf{x^0}.
\end{align*}
Therefore, both initialization-error terms in the Lyapunov quantity from Theorem~\ref{theo1} vanish:
\begin{align*}
\sqn{g^0-\nf{x^0}}
&=0,
&
\frac1n\sum_{i=1}^n\sqn{g_i^0-\nfi{x^0}}
&=0.
\end{align*}
Hence, for $p=n/m$,
\begin{align*}
\Psi_{n/m}^0
&=
f(x^0)-\finf
=
\Delta_0.
\end{align*}

Thus, setting $p=n/m$ and using $\Psi_{n/m}^0=\Delta_0$ gives the post-initialization iteration complexity
\begin{align*}
T
&=
\cO\rb{
\rb{
L+\frac{\delta_2}{\sqrt{n/m}}
}
\frac{\Delta_0}{\epsilon}
}
=
\cO\rb{
\rb{
L+\delta_2\sqrt{\frac{m}{n}}
}
\frac{\Delta_0}{\epsilon}
}.
\end{align*}
The same corollary gives the expected post-initialization component-gradient complexity
\begin{align*}
\cO\rb{
\rb{
nL+\sqrt{nm}\,\delta_2
}
\frac{\Delta_0}{\epsilon}
}.
\end{align*}

It remains to add the initialization cost. Computing $g_i^0=\nfi{x^0}$ requires evaluating the full group gradient
\begin{align*}
\nfi{x^0}
&=
\frac1m\sum_{j=1}^m \nabla f_{i,j}(x^0),
\end{align*}
which costs $m$ component-gradient evaluations for a fixed group $i$. Repeating this for all $i\in[n]$ costs
\begin{align*}
\sum_{i=1}^n m
&=
mn
\end{align*}
component-gradient evaluations. Therefore, the total expected component-gradient complexity, including initialization, is
\begin{align*}
\cO\rb{
nm +
\rb{
nL+\sqrt{nm}\,\delta_2
}
\frac{\Delta_0}{\epsilon}
}.
\end{align*}
\end{proof}

\subsection{Analysis under the Polyak--\L{}ojasiewicz condition}\label{sec:app-pl-mge-n}

We next record the P\L{} specialization for the case $m\ge n$.
The algorithm and the estimator recursion are unchanged, but the P\L{} inequality turns the descent estimate into a contraction for the objective gap.
Thus the accuracy dependence becomes $O\left(\cdot\log\frac{1}{\epsilon}\right)$ for reaching $\Exp{f(x^T)-\fstar}\le \epsilon$, instead of the $O\left(\cdot\frac{1}{\epsilon}\right)$ dependence in the preceding general nonconvex stationarity bounds.
The theorem is stated under a global P\L{} condition.
However, the proof uses this condition only at the iterates, so the same argument applies on any P\L{} region that contains the trajectory.

We state the P\L{} condition once here because it is shared by the P\L{} analyses in both regimes.

\begin{assumption}[Polyak--\L{}ojasiewicz (P\L) condition]\label{ass:pl_condition}
The function $f$ satisfies the P\L-condition with parameter $\mu>0$ if
\begin{align}
\label{eq:pl_condition}
\sqn{\nabla f(x)} \ge 2\mu\big(f(x)-\fstar\big),\qquad \forall x\in\R^d,
\end{align}
where $\fstar\eqdef \inf_{x\in\R^d} f(x)>-\infty$.
\end{assumption}

The following lemma is a generic P\L{} recursion.
It lets us separate the algorithm-specific Lyapunov calculation from the final step that converts such a recursion into linear convergence.

\begin{lemma}[\citep{dasha23}]
    \label{lemma:good_recursion_pl}
    Suppose that Assumption~\ref{ass:pl_condition} holds (implying $\fstar > -\infty$). Let $\{\Psi^t\}_{t\ge 0}$ be a sequence of nonnegative real numbers. Suppose that for constants $C \geq 0$, $\mu > 0$, and stepsize $\gamma \in (0, 1 / \mu)$, the following inequality holds for all $t \geq 0$:
    \begin{align*}
        \Exp{f(x^{t+1})} + \gamma \Psi^{t+1} \leq \Exp{f(x^t)} - \frac{\gamma}{2}\Exp{\sqn{\nabla f(x^t)}} + (1 - \gamma \mu)\gamma \Psi^{t} + \gamma C.
    \end{align*}
    Then, for all $T \geq 0$,
    \begin{align}
        \label{eq:good_recursion_pl}
        \Exp{f(x^{T}) - \fstar} \leq (1 - \gamma \mu)^{T}\left((f(x^0) - \fstar) + \gamma \Psi^{0}\right) + \frac{C}{\mu}.
    \end{align}
\end{lemma}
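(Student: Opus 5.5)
We prove the claim by telescoping the assumed recursion after subtracting $\fstar$ from both sides and discarding the gradient term. Set $\Phi^t \eqdef \Exp{f(x^t)-\fstar} + \gamma\Psi^t$. This quantity is nonnegative, because $f\ge\fstar$ and $\Psi^t\ge 0$.

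First, we bound the gradient term using Assumption~\ref{ass:pl_condition}:
\[
-\frac{\gamma}{2}\Exp{\sqn{\nabla f(x^t)}}\le -\gamma\mu\,\Exp{f(x^t)-\fstar}.
\]
Substituting this into the hypothesis and subtracting $\fstar$ from both sides gives
\[
\Phi^{t+1}\le (1-\gamma\mu)\Exp{f(x^t)-\fstar} + (1-\gamma\mu)\gamma\Psi^t + \gamma C = (1-\gamma\mu)\Phi^t+\gamma C .
\]
This is a one-dimensional linear recursion. The condition $\gamma<1/\mu$ guarantees $0<1-\gamma\mu<1$, so the contraction factor lies in $(0,1)$ and the right-hand side is monotone in $\Phi^t$.

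Next, we unroll the recursion by induction over $t$:
\[
\Phi^T\le (1-\gamma\mu)^T\Phi^0+\gamma C\sum_{k=0}^{T-1}(1-\gamma\mu)^k .
\]
The geometric sum is at most $\frac{1}{\gamma\mu}$, so the noise term is at most $C/\mu$. Because $x^0$ is deterministic, $\Phi^0=(f(x^0)-\fstar)+\gamma\Psi^0$.

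Finally, since $\gamma\Psi^T\ge 0$, we have $\Exp{f(x^T)-\fstar}\le\Phi^T$, which yields \eqref{eq:good_recursion_pl}.

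There is no real obstacle here. The only points needing care are the following.
\begin{itemize}
\item The P\L{} step must be applied inside the expectation, which is valid because \eqref{eq:pl_condition} holds pointwise.
\item $\gamma<1/\mu$ is needed both for the contraction and for bounding the geometric series.
\item If the $\Psi^t$ are random, the statement should be read with $\Exp{\Psi^t}$ in place of $\Psi^t$; the argument is otherwise unchanged.
\end{itemize}
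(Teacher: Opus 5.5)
Your proof is correct and follows the paper's argument essentially line for line: subtract $\fstar$, use the P\L{} inequality to fold the gradient term into a $(1-\gamma\mu)$ contraction of $\Exp{f(x^t)-\fstar}+\gamma\Psi^t$, unroll, bound the geometric sum by $1/(\gamma\mu)$, and drop $\gamma\Psi^T\ge 0$. Your remark that $1-\gamma\mu\ge 0$ is what lets the upper bounds propagate through the induction is a small point the paper leaves implicit.
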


\begin{proof}
    Subtracting $\fstar$ from both sides of the assumed inequality and applying the P\L-condition~\eqref{eq:pl_condition} (specifically, $-\frac{1}{2}\sqn{\nabla f(x^t)} \leq -\mu(f(x^t) - \fstar)$), we obtain
    \begin{align*}
        \Exp{f(x^{t+1}) - \fstar} + \gamma \Psi^{t+1} 
        &\leq \Exp{f(x^t) - \fstar} - \gamma\mu\Exp{f(x^t)-\fstar} + (1 - \gamma \mu)\gamma \Psi^{t} + \gamma C \\
        &= (1 - \gamma \mu)\left(\Exp{f(x^t) - \fstar} + \gamma \Psi^{t}\right) + \gamma C.
    \end{align*}
    Unrolling this recursion from $t=T-1$ down to $0$, we have
    \begin{align*}
        \Exp{f(x^{T}) - \fstar} + \gamma \Psi^{T} 
        &\leq (1 - \gamma \mu)^{T}\left((f(x^0) - \fstar) + \gamma \Psi^{0}\right) + \gamma C \sum_{k = 0}^{T-1} (1 - \gamma \mu)^k \\
        &\leq (1 - \gamma \mu)^{T}\left((f(x^0) - \fstar) + \gamma \Psi^{0}\right) + \frac{C}{\mu},
    \end{align*}
    where in the last step we bounded the finite geometric series by the infinite sum $\sum_{k=0}^\infty (1-\gamma\mu)^k = \frac{1}{\gamma\mu}$.
    Finally, using the nonnegativity of $\Psi^{T}$ and $\gamma$, we have $\Exp{f(x^{T}) - \fstar} \leq \Exp{f(x^{T}) - \fstar} + \gamma \Psi^{T}$, which yields the desired result.
\end{proof}

The next theorem is the P\L{} counterpart of Theorem~\ref{theo1}.
It applies Lemma~\ref{lemma:good_recursion_pl} to Algorithm~\ref{algsilage1} after choosing Lyapunov weights that match the estimator recursions from Lemma~\ref{lem:silage_variance_bounds}.

\begin{restatable}[Linear convergence under P\L, case $m\ge n$]{theorem}{silageMgeNPL}
\label{theo:silage_pl}
Let Assumptions~\ref{ass:lower-bounded}, \ref{ass:f-smooth}, \ref{ass:delta2}, and~\ref{ass:pl_condition} hold,
where Assumption~\ref{ass:pl_condition} holds with parameter $\mu>0$.
Run Algorithm~\ref{algsilage1} with probability $p\in(0,1]$, arbitrary initial estimators
$g_1^0,\ldots,g_n^0\in\R^d$, and $g^0\eqdef \frac1n\sum_{i=1}^n g_i^0$.
Let the stepsize be set as
$0<\gamma\le \min\left\{\rb{L+\delta_2\sqrt{\frac{2(n-p)}{np}}}^{-1},\,\frac{p}{2\mu n}\right\}$.
Then, for every $T\ge 0$,
\begin{align}
\Exp{f(x^T)-\fstar}
\le
(1-\gamma\mu)^T\rb{\Psi_{\mathrm{PL}}^0},
\label{eq:silage_pl_rate}
\end{align}
where
\begin{equation*}
\Psi_{\mathrm{PL}}^0
\eqdef
f(x^0)-\fstar
+\frac{\gamma n}{3p}\sqn{g^0-\nf{x^0}}
+\frac{2\gamma}{3p}\frac1n\sum_{i=1}^n\sqn{g_i^0-\nfi{x^0}}.
\end{equation*}
\end{restatable}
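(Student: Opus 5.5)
We will reuse the one-step recursions of Lemma~\ref{lem:silage_variance_bounds} (items 2 and 3) together with the inexact-descent Lemma~\ref{lem:inexact_descent}, and feed them into the generic P\L{} recursion of Lemma~\ref{lemma:good_recursion_pl} with $C=0$. Write $A^t\eqdef\sqn{g^t-\nf{x^t}}$ and $B^t\eqdef\frac1n\sum_i\sqn{g_i^t-\nfi{x^t}}$, and look for weights so that $\gamma\Psi^t \eqdef \gamma(a A^t + b B^t)$ satisfies
\[
\Expc{f(x^{t+1})}+\gamma\,\Expc{aA^{t+1}+bB^{t+1}} \le f(x^t)-\tfrac{\gamma}{2}\sqn{\nf{x^t}}+(1-\gamma\mu)\gamma(aA^t+bB^t).
\]
The target weights are $\gamma a=\frac{\gamma n}{3p}$ and $\gamma b=\frac{2\gamma}{3p}$, so that $f(x^0)-\fstar+\gamma\Psi^0=\Psi^0_{\mathrm{PL}}$.

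Substituting the bounds, the coefficient of $A^t$ on the right is $\frac{\gamma}{2}+\gamma a(1-\frac{2p}{n})$. We need this to be at most $(1-\gamma\mu)\gamma a$, i.e. $\frac12 + \gamma\mu a \le \frac{2p}{n}a$. With $a=\frac{n}{3p}$ this reads $\frac12+\frac{\gamma\mu n}{3p}\le\frac23$. That holds since $\gamma\le\frac{p}{2\mu n}$ gives $\frac{\gamma\mu n}{3p}\le\frac16$.

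The coefficient of $B^t$ is $\gamma a\frac{p}{n^2}+\gamma b(1-\frac pn)$, since $\frac{p}{n^3}\sum_i = \frac{p}{n^2}B$. We need this to be at most $(1-\gamma\mu)\gamma b$, i.e. $\frac{a p}{n^2}+\gamma\mu b\le \frac pn b$. Plugging in gives $\frac{1}{3n}+\frac{2\gamma\mu}{3p}\le\frac{2}{3n}$, which again holds because $\gamma\mu\le\frac{p}{2n}$. Both checks are routine once the weights are fixed; choosing the weights is the only delicate part, and $1/3,2/3$ leave exactly the slack $\frac{p}{n}\cdot\frac13$ needed to absorb the $\gamma\mu$ factor.

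The main obstacle is the $\sqn{x^{t+1}-x^t}$ term, whose coefficient is
\[
\tfrac L2-\tfrac1{2\gamma}+\gamma a\tfrac{n-p}{n^2}\delta_2^2+\gamma b(1-\tfrac pn)\delta_2^2 .
\]
With the chosen weights the last two terms are $\frac{\gamma(n-p)}{3np}\delta_2^2+\frac{2\gamma(n-p)}{3np}\delta_2^2=\frac{\gamma(n-p)}{np}\delta_2^2$. Nonpositivity then requires $\frac{2(n-p)}{np}\delta_2^2\gamma^2+L\gamma\le1$. By Lemma~\ref{lem:quadratic_step_bound} with $a=\frac{2(n-p)}{np}\delta_2^2$ and $b=L$, this is implied by the first stepsize constraint. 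Dropping this term yields the desired recursion.

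Finally, we take total expectations and apply Lemma~\ref{lemma:good_recursion_pl}. Its requirement $\gamma<1/\mu$ follows from $\gamma\le p/(2\mu n)\le 1/(2\mu)$. This gives $\Exp{f(x^T)-\fstar}\le(1-\gamma\mu)^T\Psi^0_{\mathrm{PL}}$, which is \eqref{eq:silage_pl_rate}.
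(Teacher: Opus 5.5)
Your proposal is correct and follows the paper's proof essentially step for step. It uses the same Lyapunov weights $\frac{\gamma n}{3p}$ and $\frac{2\gamma}{3p}$ on the global and averaged table errors, the same two coefficient checks reducing to $\gamma\mu\le \frac{p}{2n}$, the same use of Lemma~\ref{lem:quadratic_step_bound} to drop the $\sqn{x^{t+1}-x^t}$ term, and the same final appeal to Lemma~\ref{lemma:good_recursion_pl} with $C=0$; you also explicitly verify $\gamma\le \frac{p}{2\mu n}\le \frac{1}{2\mu}<\frac1\mu$, which that lemma requires and the paper leaves implicit.
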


\begin{proof}[Proof of Theorem~\ref{theo:silage_pl}]
Let us fix constants $\nu,\rho\in[0,\infty)$ that we will define later.
We start from the descent lemma for $L$-smooth functions, applied with $x^{t+1}=x^t-\gamma g^t$:
\begin{align}
f(x^{t+1})
&\leq f(x^t) + \frac{\gamma}{2}\sqn{g^{t}-\nabla f(x^t)}-\frac{\gamma}{2}\sqn{\nabla f(x^t)} \notag\\
&\quad + \left(\frac{L}{2}-\frac{1}{2\gamma}\right)\sqn{x^{t+1}-x^t}.
\label{eq:descent_pl_silage}
\end{align}
Taking expectation and adding $\nu\,\Exp{\sqn{g^{t+1}-\nabla f(x^{t+1})}}$ and
$\rho\,\Exp{\frac{1}{n}\sum_{i=1}^n\sqn{g_i^{t+1}-\nabla f_i(x^{t+1})}}$ to both sides, and then using Lemma~\ref{lem:silage_variance_bounds}, we obtain
\begin{align*}
&\Exp{f(x^{t+1})} + \nu\,\Exp{\sqn{g^{t+1}-\nabla f(x^{t+1})}}
+ \rho\,\Exp{\frac{1}{n}\sum_{i=1}^n\sqn{g_i^{t+1}-\nabla f_i(x^{t+1})}}\\
&\leq \Exp{f(x^t)} -\frac{\gamma}{2}\Exp{\sqn{\nabla f(x^t)}}
+ \frac{\gamma}{2}\Exp{\sqn{g^{t}-\nabla f(x^t)}}
+ \left(\frac{L}{2}-\frac{1}{2\gamma}\right)\Exp{\sqn{x^{t+1}-x^t}}\\
&\quad+\nu\,\Exp{\left(1-\frac{2p}{n}\right)\sqn{g^t-\nabla f(x^t)}
+\frac{p}{n^3}\sum_{i=1}^n\sqn{g_{i}^t-\nabla f_{i}(x^{t})}
+\frac{n-p}{n^2}\delta_2^2\sqn{x^{t+1}-x^t}}\\
&\quad+\rho\,\Exp{\left(1-\frac{p}{n}\right)\frac{1}{n}\sum_{i=1}^n\sqn{g_i^{t}-\nabla f_i(x^{t})}
+\left(1-\frac{p}{n}\right)\delta_2^2\sqn{x^{t+1}-x^t}}.
\end{align*}
Rearranging the terms yields
\begin{align}
&\Exp{f(x^{t+1})} + \nu\,\Exp{\sqn{g^{t+1}-\nabla f(x^{t+1})}}
+ \rho\,\Exp{\frac{1}{n}\sum_{i=1}^n\sqn{g_i^{t+1}-\nabla f_i(x^{t+1})}}\notag\\
&\leq \Exp{f(x^t)} -\frac{\gamma}{2}\Exp{\sqn{\nabla f(x^t)}} \notag\\
&\quad+\left(\frac{\gamma}{2}+\nu\left(1-\frac{2p}{n}\right)\right)\Exp{\sqn{g^{t}-\nabla f(x^t)}}\notag\\
&\quad+\left(\rho\left(1-\frac{p}{n}\right)+\nu\frac{p}{n^2}\right)
\Exp{\frac{1}{n}\sum_{i=1}^n\sqn{g_{i}^t-\nabla f_{i}(x^{t})}}\notag\\
&\quad+\left(\frac{L}{2}-\frac{1}{2\gamma}
+\nu\frac{n-p}{n^2}\delta_2^2
+\rho\left(1-\frac{p}{n}\right)\delta_2^2\right)\Exp{\sqn{x^{t+1}-x^t}}.
\label{eq:pl_silage_prechoice}
\end{align}

We now choose
\begin{align}
\label{eq:nu_rho_choice}
\nu \eqdef \frac{\gamma n}{3p},
\qquad
\rho \eqdef \frac{2\gamma}{3p}.
\end{align}
With this choice, the coefficient of $\Exp{\sqn{x^{t+1}-x^t}}$ in~\eqref{eq:pl_silage_prechoice} becomes
\begin{align*}
\frac{L}{2}-\frac{1}{2\gamma}
+\nu\frac{n-p}{n^2}\delta_2^2
+\rho\left(1-\frac{p}{n}\right)\delta_2^2
&=\frac{L}{2}-\frac{1}{2\gamma}
+\frac{\gamma(n-p)}{pn}\delta_2^2.
\end{align*}

Thus, the term is nonpositive whenever
\begin{align*}
L-\frac{1}{\gamma}+2\gamma\,\delta_2^2\frac{n-p}{pn}\leq 0,
\end{align*}
that is, whenever
\begin{align*}
2\delta_2^2\frac{n-p}{pn}\gamma^2+L\gamma\leq 1.
\end{align*}
By Lemma~\ref{lem:quadratic_step_bound}, applied with
\begin{align*}
a &\eqdef 2\delta_2^2\frac{n-p}{pn},
\qquad
b \eqdef L,
\end{align*}
it therefore suffices to impose
\begin{align*}
\gamma \leq \left(L+\delta_2\sqrt{\frac{2(n-p)}{np}}\right)^{-1},
\end{align*}
which is exactly the first constraint in the stepsize condition of Theorem~\ref{theo:silage_pl}.

Hence, we can drop the $\Exp{\sqn{x^{t+1}-x^t}}$ term in~\eqref{eq:pl_silage_prechoice}.

Next, with the choice~\eqref{eq:nu_rho_choice}, we claim that if $\gamma \leq \frac{p}{2\mu n}$, then
\begin{align}
\frac{\gamma}{2}+\nu\left(1-\frac{2p}{n}\right) &\leq (1-\gamma\mu)\nu,
\label{eq:pl_coeff1}\\
\rho\left(1-\frac{p}{n}\right)+\nu\frac{p}{n^2} &\leq (1-\gamma\mu)\rho.
\label{eq:pl_coeff2}
\end{align}
Indeed, for~\eqref{eq:pl_coeff1},
\begin{align*}
\frac{\gamma}{2}+\nu\left(1-\frac{2p}{n}\right)
&=\frac{\gamma}{2}+\frac{\gamma n}{3p}-\frac{2\gamma}{3}
=\frac{\gamma n}{3p}-\frac{\gamma}{6} \notag\\
&\leq \frac{\gamma n}{3p}-\gamma^2\mu\frac{n}{3p}
=(1-\gamma\mu)\nu.
\end{align*}
For~\eqref{eq:pl_coeff2}, we have
\begin{align*}
\rho\left(1-\frac{p}{n}\right)+\nu\frac{p}{n^2}
&=\frac{2\gamma}{3p}\left(1-\frac{p}{n}\right)+\frac{\gamma n}{3p}\cdot \frac{p}{n^2}
=\frac{2\gamma}{3p}\left(1-\frac{p}{n}\right)+\frac{\gamma}{3n}\\
&=\frac{2\gamma}{3p}\left(1-\frac{p}{2n}\right)
\leq \frac{2\gamma}{3p}(1-\gamma\mu)
=(1-\gamma\mu)\rho.
\end{align*}

Substituting~\eqref{eq:nu_rho_choice},~\eqref{eq:pl_coeff1}, and~\eqref{eq:pl_coeff2} into~\eqref{eq:pl_silage_prechoice}, we obtain
\begin{align}
&\Exp{f(x^{t+1})}
+ \frac{\gamma n}{3p}\Exp{\sqn{g^{t+1}-\nabla f(x^{t+1})}}
+ \frac{2\gamma}{3p}\Exp{\frac{1}{n}\sum_{i=1}^n\sqn{g_i^{t+1}-\nabla f_i(x^{t+1})}}\notag\\
&\leq \Exp{f(x^{t})}
-\frac{\gamma}{2}\Exp{\sqn{\nabla f(x^t)}} \notag\\
&\quad + (1-\gamma\mu)\left(
\frac{\gamma n}{3p}\Exp{\sqn{g^{t}-\nabla f(x^{t})}}
+ \frac{2\gamma}{3p}\Exp{\frac{1}{n}\sum_{i=1}^n\sqn{g_i^{t}-\nabla f_i(x^{t})}}
\right).
\label{eq:pl_silage_recursion}
\end{align}

Finally, define the nonnegative sequence
\begin{align}
\label{eq:psi_pl_silage}
\Psi^t \eqdef \frac{n}{3p}\Exp{\sqn{g^{t}-\nabla f(x^{t})}}
+ \frac{2}{3p}\Exp{\frac{1}{n}\sum_{i=1}^n\sqn{g_i^{t}-\nabla f_i(x^{t})}}.
\end{align}
Then~\eqref{eq:pl_silage_recursion} reads
\begin{align*}
\Exp{f(x^{t+1})} + \gamma \Psi^{t+1}
\leq \Exp{f(x^{t})} -\frac{\gamma}{2}\Exp{\sqn{\nabla f(x^t)}} + (1-\gamma\mu)\gamma \Psi^{t}.
\end{align*}
In view of Lemma~\ref{lemma:good_recursion_pl} (with $C=0$), we conclude that for every $T\geq 0$,
\begin{align*}
\Exp{f(x^{T})-\fstar} \leq (1-\gamma\mu)^{T}\left((f(x^0)-\fstar)+\gamma\Psi^0\right).
\end{align*}
Substituting~\eqref{eq:psi_pl_silage} at $t=0$ yields~\eqref{eq:silage_pl_rate}.
\end{proof}

\newpage
\section{\algname{SILAGE}, case $n>m$}
\subsection{Implementation-efficient form and equivalence}\label{sec:equiv-n>m}

The algorithm below is an implementation-efficient form of Algorithm~\ref{algsilage2a}.
It avoids explicitly updating all table entries by storing a shared shift, while keeping the same iterates as the analysis-friendly version.

\begin{algorithm}[t]
\small
\caption{\algname{SILAGE} (\textbf{new}), case $n>m$, implementation-efficient form}
\label{algsilage2}
\begin{algorithmic}[1]
    \STATE \textbf{input:} stepsize $\gamma$, batch size $b_{\mathrm{grp}}\in[n]$
    \STATE \textbf{init:} $x^0\in\R^d$, $h_1^0,\ldots,h_n^0\in\R^d$, $q^0\eqdef 0$
    \STATE $h^0\eqdef \frac1n\sum_{i=1}^n h_i^0$
    \FOR{$t=0,1,\ldots$}
        \STATE $x^{t+1}\eqdef x^t-\gamma\rb{h^t+q^t}$
        \STATE pick $i^t\in[n]$ uniformly at random
        \STATE pick $j_{i^t}^t\in[m]$ uniformly at random
        \STATE compute $\nabla f_{i^t}(x^{t+1})$ and retain $\nabla f_{i^t,j_{i^t}^t}(x^{t+1})$
        \STATE pick $\Omega^t\subset[n]\setminus\{i^t\}$ uniformly, $|\Omega^t|=b_{\mathrm{grp}}-1$
        \STATE $\tilde{\Omega}^t\eqdef \Omega^t\cup\{i^t\}$
        \FOR{$i\in\Omega^t$}
            \STATE pick $j_i^t\in[m]$ uniformly at random
            \STATE $\Delta_i^t\eqdef \nabla f_{i,j_i^t}(x^{t+1})-\nabla f_{i,j_i^t}(x^t)$
        \ENDFOR
        \STATE $\Delta_{i^t}^t\eqdef \nabla f_{i^t,j_{i^t}^t}(x^{t+1})-\nabla f_{i^t,j_{i^t}^t}(x^t)$
        \STATE $d^t\eqdef \frac1b_{\mathrm{grp}}\sum_{i\in\tilde{\Omega}^t}\Delta_i^t$
        \STATE $q^{t+1}\eqdef q^t+d^t$
        \STATE $h^{t+1}\eqdef h^t+\frac1n\rb{\nabla f_{i^t}(x^{t+1})-h_{i^t}^t-q^t-\Delta_{i^t}^t}$
        \STATE $h_{i^t}^{t+1}\eqdef \nabla f_{i^t}(x^{t+1})-q^{t+1}$
        \FOR{$i\in\Omega^t$}
            \STATE $h_i^{t+1}\eqdef h_i^t+\Delta_i^t-d^t$
        \ENDFOR
        \STATE implicitly keep $h_i^{t+1}\eqdef h_i^t$ for all $i\in[n]\setminus\tilde{\Omega}^t$
    \ENDFOR
\end{algorithmic}
\end{algorithm}

Algorithm~\ref{algsilage2a} is the form used in the analysis, while Algorithm~\ref{algsilage2} is its implementation-efficient form.
They are linked by the reparameterization
\begin{align}
\label{eq:reparam-ghq}
g_i^t \eqdef h_i^t+q^t,\qquad i\in[n],
\qquad
g^t\eqdef \frac1n\sum_{i=1}^n g_i^t=h^t+q^t,
\end{align}
where $h^t\eqdef \frac1n\sum_{i=1}^n h_i^t$.
The equivalence lemma below justifies analyzing Algorithm~\ref{algsilage2a} while implementing Algorithm~\ref{algsilage2}.
It shows that the change of variables in~\eqref{eq:reparam-ghq} is exact, not an approximation.

\paragraph*{Coupling of the randomness.}
To compare the two procedures pathwise, we run them with the same random draws at every iteration:
the same index $i^t\in[n]$, the same subset $\Omega^t\subset[n]\setminus\{i^t\}$ of size $b_{\mathrm{grp}}-1$, and the same component indices $\{j_i^t\}_{i\in\tilde{\Omega}^t}$, where $\tilde{\Omega}^t\eqdef \Omega^t\cup\{i^t\}$.
Once these random variables are fixed, both algorithms are deterministic.

\begin{lemma}[Exact equivalence via the change of variables~\eqref{eq:reparam-ghq}]
\label{lem:equiv-alg2a-alg2}
Assume the initial conditions are consistent with~\eqref{eq:reparam-ghq}, i.e., $q^0=0$ and $g_i^0=h_i^0$ for all $i\in[n]$.
Run Algorithm~\ref{algsilage2a} and Algorithm~\ref{algsilage2} with the same random draws at every iteration.
Define the implicit variables produced by Algorithm~\ref{algsilage2} as
\begin{align}
\label{eq:implicit-g-from-hq}
\widehat g_i^t\eqdef h_i^t+q^t,\qquad
\widehat g^t\eqdef \frac1n\sum_{i=1}^n \widehat g_i^t=h^t+q^t.
\end{align}
Then, for all $t\ge 0$ and all $i\in[n]$, $\widehat g_i^t=g_i^t$.
Consequently, Algorithms~\ref{algsilage2a} and~\ref{algsilage2} generate the same iterate sequence $\{x^t\}_{t\ge 0}$.
\end{lemma}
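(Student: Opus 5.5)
We argue by induction on $t$, proving simultaneously that $\widehat g_i^t=g_i^t$ for all $i\in[n]$ and that both procedures hold the same iterate $x^t$. We also carry the auxiliary invariant $h^t=\frac1n\sum_{i=1}^n h_i^t$, because Algorithm~\ref{algsilage2} updates $h^t$ by its own recursion rather than recomputing the average. The base case $t=0$ holds by assumption: $q^0=0$ and $g_i^0=h_i^0$, so $\widehat g_i^0=g_i^0$. Moreover, $h^0$ is defined as the average of the $h_i^0$, so $\widehat g^0=h^0+q^0=g^0$.

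For the inductive step, assume the claim at time $t$. Then $x^{t+1}=x^t-\gamma(h^t+q^t)=x^t-\gamma g^t$ in both algorithms. Under the coupling, the draws $i^t$, $\Omega^t$ and $j_i^t$ coincide, so the increments $\Delta_i^t$ for $i\in\tilde\Omega^t$ and the drift $d^t$ are literally the same vectors. We then check $h_i^{t+1}+q^{t+1}=g_i^{t+1}$ in three cases.
\begin{itemize}
\item For $i=i^t$: $h_{i^t}^{t+1}+q^{t+1}=\nabla f_{i^t}(x^{t+1})=g_{i^t}^{t+1}$.
\item For $i\in\Omega^t$: $h_i^t+\Delta_i^t-d^t+q^t+d^t=g_i^t+\Delta_i^t=g_i^{t+1}$.
\item For $i\notin\tilde\Omega^t$: $h_i^t+q^t+d^t=g_i^t+d^t=g_i^{t+1}$.
\end{itemize}

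The one step requiring care is the averaged accumulator $h^{t+1}$, since Algorithm~\ref{algsilage2} never recomputes it. We sum the individual changes $h_i^{t+1}-h_i^t$ over $i\in[n]$:
\begin{itemize}
\item the entry $i^t$ contributes $\nabla f_{i^t}(x^{t+1})-q^t-d^t-h_{i^t}^t$;
\item each $i\in\Omega^t$ contributes $\Delta_i^t-d^t$;
\item all other entries contribute $0$.
\end{itemize}
Using $\sum_{i\in\Omega^t}\Delta_i^t=b_{\mathrm{grp}}d^t-\Delta_{i^t}^t$, the total becomes
\[
\nabla f_{i^t}(x^{t+1})-q^t-h_{i^t}^t+\bigl(b_{\mathrm{grp}}d^t-\Delta_{i^t}^t\bigr)-b_{\mathrm{grp}}d^t
=\nabla f_{i^t}(x^{t+1})-h_{i^t}^t-q^t-\Delta_{i^t}^t.
\]
Dividing by $n$ gives exactly the update line for $h^{t+1}$ in Algorithm~\ref{algsilage2}, so the invariant $h^{t+1}=\frac1n\sum_i h_i^{t+1}$ persists. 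Hence $\widehat g^{t+1}=h^{t+1}+q^{t+1}=g^{t+1}$ and $x^{t+2}$ agrees, which closes the induction.

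This bookkeeping of $h^{t+1}$, including the cancellation of the $d^t$ terms, is the only nonroutine computation. It also needs a remark on the edge case $b_{\mathrm{grp}}=1$, where $\Omega^t=\emptyset$; the same identity still holds because then $d^t=\Delta_{i^t}^t$.
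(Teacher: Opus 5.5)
Your proof is correct and follows essentially the same route as the paper: induction on $t$ with the auxiliary invariant $h^t=\frac1n\sum_i h_i^t$, the same three-case check that $h_i^{t+1}+q^{t+1}=g_i^{t+1}$, and the same use of $b_{\mathrm{grp}}d^t=\sum_{i\in\tilde{\Omega}^t}\Delta_i^t$ to cancel the drift terms and recover the incremental update of $h^{t+1}$. Your remark on the edge case $b_{\mathrm{grp}}=1$, where $\Omega^t$ is empty and $d^t=\Delta_{i^t}^t$, is a small explicit check that the paper leaves implicit.
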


\begin{proof}
We prove by induction on $t$ that $\widehat g_i^t=g_i^t$ for all $i\in[n]$ and that $h^t=\frac1n\sum_{i=1}^n h_i^t$.

\emph{Base case.}
Since $q^0=0$ and $g_i^0=h_i^0$ for all $i\in[n]$, we have
\begin{align*}
\widehat g_i^0
&=
h_i^0+q^0
=
g_i^0.
\end{align*}
Moreover, Algorithm~\ref{algsilage2} initializes $h^0=\frac1n\sum_{i=1}^n h_i^0$.
Thus $\widehat g^0=h^0+q^0=g^0$.

\emph{Induction step.}
Assume that, at the beginning of iteration $t$, we have $\widehat g_i^t=g_i^t$ for all $i\in[n]$ and $h^t=\frac1n\sum_{i=1}^n h_i^t$.
Then $\widehat g^t=g^t$, and Algorithm~\ref{algsilage2} updates
\begin{align*}
x^{t+1}
&=
x^t-\gamma\rb{h^t+q^t}
=
x^t-\gamma \widehat g^t
=
x^t-\gamma g^t,
\end{align*}
which is exactly the update of Algorithm~\ref{algsilage2a}.
Hence both algorithms produce the same $x^{t+1}$.

For each $i\in\tilde{\Omega}^t$, define
\begin{align*}
\Delta_i^t
&\eqdef
\nabla f_{i,j_i^t}(x^{t+1})-\nabla f_{i,j_i^t}(x^t),
&
d^t
&\eqdef
\frac1b_{\mathrm{grp}}\sum_{i\in\tilde{\Omega}^t}\Delta_i^t .
\end{align*}
Algorithm~\ref{algsilage2} sets $q^{t+1}=q^t+d^t$.
We now verify that the implicit variables $\widehat g_i^{t+1}=h_i^{t+1}+q^{t+1}$ match the table updates of Algorithm~\ref{algsilage2a}.

\smallskip
\noindent\textbf{Case 1: $i=i^t$.}
Algorithm~\ref{algsilage2} sets $h_{i^t}^{t+1}=\nabla f_{i^t}(x^{t+1})-q^{t+1}$.
Therefore,
\begin{align*}
\widehat g_{i^t}^{t+1}
&=
h_{i^t}^{t+1}+q^{t+1}
=
\nabla f_{i^t}(x^{t+1}),
\end{align*}
which coincides with the update $g_{i^t}^{t+1}=\nabla f_{i^t}(x^{t+1})$ in Algorithm~\ref{algsilage2a}.

\smallskip
\noindent\textbf{Case 2: $i\in\Omega^t$.}
Algorithm~\ref{algsilage2} sets $h_i^{t+1}=h_i^t+\Delta_i^t-d^t$.
Thus
\begin{align*}
\widehat g_i^{t+1}
&=
h_i^{t+1}+q^{t+1}
=
h_i^t+\Delta_i^t-d^t+q^t+d^t
\\
&=
\widehat g_i^t+\Delta_i^t
=
g_i^t+\Delta_i^t,
\end{align*}
where the last equality uses the induction hypothesis.
This coincides with the update of Algorithm~\ref{algsilage2a}.

\smallskip
\noindent\textbf{Case 3: $i\in[n]\setminus\tilde{\Omega}^t$.}
Algorithm~\ref{algsilage2} leaves $h_i$ unchanged.
Hence
\begin{align*}
\widehat g_i^{t+1}
&=
h_i^{t+1}+q^{t+1}
=
h_i^t+q^t+d^t
\\
&=
\widehat g_i^t+d^t
=
g_i^t+d^t,
\end{align*}
again by the induction hypothesis.
This is exactly the update of Algorithm~\ref{algsilage2a}.

Therefore $\widehat g_i^{t+1}=g_i^{t+1}$ for all $i\in[n]$.

It remains to check that the aggregate $h^{t+1}$ maintained in Algorithm~\ref{algsilage2} is equal to $\frac1n\sum_{i=1}^n h_i^{t+1}$.
Since only entries in $\tilde{\Omega}^t$ change, we have
\begin{align*}
\frac1n\sum_{i=1}^n h_i^{t+1}-h^t
&=
\frac1n\sum_{i\in\tilde{\Omega}^t}\rb{h_i^{t+1}-h_i^t}
\\
&=
\frac1n\rb{
\nabla f_{i^t}(x^{t+1})-q^{t+1}-h_{i^t}^t
+
\sum_{i\in\Omega^t}\rb{\Delta_i^t-d^t}
}
\\
&=
\frac1n\rb{
\nabla f_{i^t}(x^{t+1})-h_{i^t}^t-q^t-d^t
+
\sum_{i\in\Omega^t}\Delta_i^t
-
(b_{\mathrm{grp}}-1)d^t
}
\\
&=
\frac1n\rb{
\nabla f_{i^t}(x^{t+1})-h_{i^t}^t-q^t
+
\sum_{i\in\Omega^t}\Delta_i^t
-
bd^t
}
\\
&=
\frac1n\rb{
\nabla f_{i^t}(x^{t+1})-h_{i^t}^t-q^t-\Delta_{i^t}^t
},
\end{align*}
where the last equality uses $b_{\mathrm{grp}}d^t=\sum_{i\in\tilde{\Omega}^t}\Delta_i^t$.
This is exactly the aggregate update in Algorithm~\ref{algsilage2}.
Thus $h^{t+1}=\frac1n\sum_{i=1}^n h_i^{t+1}$, completing the induction.

Since both algorithms use the same gradient estimator, $g^t=\widehat g^t=h^t+q^t$, in the update of $x^{t+1}$, the iterate sequences coincide for all $t\ge 0$.
\end{proof}

\paragraph*{Why Algorithm~\ref{algsilage2} is algorithmically more efficient.}
Algorithm~\ref{algsilage2a} contains two dense bookkeeping operations.
First, it adds the same vector $d^t$ to all $n-b_{\mathrm{grp}}$ table entries outside $\tilde{\Omega}^t$.
Second, it recomputes the average $g^{t+1}=\frac1n\sum_{i=1}^n g_i^{t+1}$ by passing over all $n$ stored vectors.
These operations cost $\Theta(nd)$ arithmetic and memory traffic per iteration.

Algorithm~\ref{algsilage2} eliminates both dense passes.
The common update $g_i^{t+1}=g_i^t+d^t$ for all $i\notin\tilde{\Omega}^t$ is represented by the single shift update $q^{t+1}=q^t+d^t$, while the centered variables $h_i^t$ are left unchanged for those indices.
Hence only the $b_{\mathrm{grp}}$ entries in $\tilde{\Omega}^t$ are explicitly touched.

The average $h^t=\frac1n\sum_{i=1}^n h_i^t$ is also maintained incrementally.
Since only entries in $\tilde{\Omega}^t$ change, the proof of Lemma~\ref{lem:equiv-alg2a-alg2} shows that
\begin{align*}
h^{t+1}
&=
h^t+\frac1n\sum_{i\in\tilde{\Omega}^t}\rb{h_i^{t+1}-h_i^t}
\\
&=
h^t+\frac1n\rb{
\nabla f_{i^t}(x^{t+1})-h_{i^t}^t-q^t-\Delta_{i^t}^t
}.
\end{align*}
This update uses only quantities already computed at iteration $t$ and costs $\Theta(d)$ additional arithmetic after the sampled gradients have been evaluated.

Consequently, Algorithms~\ref{algsilage2a} and~\ref{algsilage2} use the same component-gradient evaluations and produce the same iterates, but Algorithm~\ref{algsilage2} avoids the $\Theta(nd)$ dense bookkeeping cost of Algorithm~\ref{algsilage2a}.
Its explicit table work scales with $\Theta(b_{\mathrm{grp}}d)$ per iteration instead of $\Theta(nd)$.

For $b_{\mathrm{grp}}<n$, the component-gradient count per iteration is $m+1+2(b_{\mathrm{grp}}-1)$ if the component $\nabla f_{i^t,j_{i^t}^t}(x^{t+1})$ is retained while computing the full group gradient $\nabla f_{i^t}(x^{t+1})$.
Indeed, computing $\nabla f_{i^t}(x^{t+1})$ costs $m$ component gradients, computing $\Delta_{i^t}^t$ requires one additional gradient at $x^t$, and computing $\Delta_i^t$ for each $i\in\Omega^t$ costs two component gradients.
When $b_{\mathrm{grp}}=m$, this is $\Theta(m)$ component-gradient evaluations per iteration.
When $b_{\mathrm{grp}}=n$, one may optionally omit the computation of $\Delta_{i^t}^t$ and the shared-shift update because $[n]\setminus\tilde{\Omega}^t=\emptyset$; this minor implementation shortcut does not affect the analysis.

\subsection{Proofs for general nonconvex functions}

The one-step analysis throughout this subsection uses the random variables generated by Algorithm~\ref{algsilage2a} at iteration $t$. For ease of auditing, we collect them here; conditioned on the filtration $\mathcal{F}^t\eqdef\sigma\rb{x^0,\{g_i^0\}_{i=1}^n,\dots,x^t,\{g_i^t\}_{i=1}^n}$, the iterate $x^{t+1}=x^t-\gamma g^t$ is deterministic, while the remaining randomness is drawn hierarchically: $i^t\sim\mathcal{U}([n])$; conditional on $i^t$, the subset $\Omega^t$ is uniform over the size-$(b_{\mathrm{grp}}-1)$ subsets of $[n]\setminus\{i^t\}$; and, conditional on $(i^t,\Omega^t)$, the component indices $\{j_i^t\}_{i\in\tilde{\Omega}^t}$ are independent and uniform on $[m]$.

\begin{center}
\small
\renewcommand{\arraystretch}{1.25}
\begin{tabular}{@{}c@{\quad}p{0.80\linewidth}@{}}
\toprule
Symbol & Definition (case $n>m$, Algorithm~\ref{algsilage2a}, iteration $t$) \\
\midrule
$i^t$ & Anchor index, $i^t\sim\mathcal{U}([n])$; the anchored estimator is reset exactly, $g_{i^t}^{t+1}=\nabla f_{i^t}(x^{t+1})$. \\
$\Omega^t$ & Recursive-update subset, sampled uniformly from $[n]\setminus\{i^t\}$ with $|\Omega^t|=b_{\mathrm{grp}}-1$. \\
$\tilde{\Omega}^t$ & Active set including the anchor, $\tilde{\Omega}^t\eqdef\Omega^t\cup\{i^t\}$, so $|\tilde{\Omega}^t|=b_{\mathrm{grp}}$. \\
$j_i^t$ & Component index for $i\in\tilde{\Omega}^t$, sampled $j_i^t\sim\mathcal{U}([m])$ (including $j_{i^t}^t$). \\
$\Delta_i^t$ & Stochastic difference for $i\in\tilde{\Omega}^t$, $\Delta_i^t\eqdef\nabla f_{i,j_i^t}(x^{t+1})-\nabla f_{i,j_i^t}(x^t)$. \\
$d^t$ & Shared drift, $d^t\eqdef\frac{1}{b_{\mathrm{grp}}}\sum_{i\in\tilde{\Omega}^t}\Delta_i^t$, added to every group outside $\tilde{\Omega}^t$. \\
$g_i^t$ & Group-$i$ gradient estimator (table entry), $\mathcal{F}^t$-measurable. \\
$g^t$ & Aggregated estimator, $g^t\eqdef\frac{1}{n}\sum_{i=1}^n g_i^t$, used in the step $x^{t+1}=x^t-\gamma g^t$. \\
\bottomrule
\end{tabular}
\end{center}

At each iteration, every entry $g_i^{t+1}$ follows exactly one of three mutually exclusive rules: \emph{anchor reset} if $i=i^t$ (so $g_i^{t+1}=\nabla f_i(x^{t+1})$), \emph{recursive update} if $i\in\Omega^t$ (so $g_i^{t+1}=g_i^t+\Delta_i^t$), and \emph{shared drift} otherwise (so $g_i^{t+1}=g_i^t+d^t$).

\subsubsection*{Key variance lemma}

As in the case $m\ge n$, the convergence proof starts with one-step estimator bounds.
The next lemma is the key technical input for Theorem~\ref{theo2}: it controls the table error and the aggregated estimator error for arbitrary batch size $b_{\mathrm{grp}}$.

\begin{lemma}\label{lem:silage_variance_bounds_ngtm}
Consider Algorithm~\ref{algsilage2a} (case $n>m$) with batch size $b_{\mathrm{grp}}\in[n]$. Let $\mathcal{F}^t$ be the filtration generated by the algorithm up to iteration $t$. Assume that the similarity conditions hold with parameters $\delta_1$ and $\delta_2$. Then the following bounds hold for the gradient estimators:

1. \textbf{Average error.}
\begin{align*}
&\Expc{\frac{1}{n}\sum_{i=1}^n \sqn{g_i^{t+1}-\nabla f_i(x^{t+1})}}\\
&\leq\left(1-\frac{1}{2n}\right)\frac{1}{n}\sum_{i=1}^n \sqn{g_i^t-\nabla f_i(x^{t})}
+\frac{2(n-b_{\mathrm{grp}})(n+1)}{n-1}\delta_1^2\sqn{x^{t+1}-x^t}
+\frac{n-2b_{\mathrm{grp}}+b_{\mathrm{grp}}^2}{b_{\mathrm{grp}}n}\delta_2^2\sqn{x^{t+1}-x^t}.
\end{align*}

2. \textbf{Global estimator error.}
\begin{align*}
&\Expc{\sqn{g^{t+1}-\nabla f(x^{t+1})}}\\
&\leq\frac{n-2}{n}\sqn{g^t-\nabla f(x^{t})}
+\frac{2}{n^3}\sum_{i=1}^n \sqn{g_i^t-\nabla f_i(x^{t})}
+\frac{2(n-b_{\mathrm{grp}})}{b_{\mathrm{grp}}(n-1)}\delta_1^2\sqn{x^{t+1}-x^t}
+\frac{n^2-2n+b_{\mathrm{grp}}}{b_{\mathrm{grp}}n^2}\delta_2^2\sqn{x^{t+1}-x^t},
\end{align*}
where $g^t\eqdef \frac{1}{n}\sum_{i=1}^n g_i^t$.
\end{lemma}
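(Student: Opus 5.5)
Fix $t$, condition on $\mathcal{F}^t$, and abbreviate
$e_i\eqdef g_i^t-\nabla f_i(x^t)$, $e\eqdef g^t-\nabla f(x^t)=\frac1n\sum_i e_i$,
$a_i\eqdef \nabla f_i(x^{t+1})-\nabla f_i(x^t)$ and $\bar a\eqdef \nabla f(x^{t+1})-\nabla f(x^t)$.
Also write $\xi_i\eqdef \Delta_i^t-a_i$; conditionally on $(i^t,\Omega^t)$, each $\xi_i$ has zero mean and they are independent across $i\in\tilde\Omega^t$.
By Assumptions~\ref{ass:delta1}--\ref{ass:delta2},
\[
\frac1n\sum_i\sqn{a_i-\bar a}\le\delta_1^2\sqn{x^{t+1}-x^t},
\qquad
\frac1n\sum_i\Exp{\sqn{\xi_i}}\le\delta_2^2\sqn{x^{t+1}-x^t}.
\]
The new errors $e_i^{+}\eqdef g_i^{t+1}-\nabla f_i(x^{t+1})$ follow the three rules listed before the lemma:
\begin{itemize}
\item $e_{i^t}^{+}=0$ for the anchor;
\item $e_i^{+}=e_i+\xi_i$ for $i\in\Omega^t$;
\item $e_i^{+}=e_i+(d^t-a_i)$ for $i\notin\tilde\Omega^t$.
\end{itemize}
Here $d^t-a_i=(\bar a_{\tilde\Omega}-a_i)+\bar\xi_{\tilde\Omega}$, where $\bar a_{\tilde\Omega}\eqdef\frac1{b_{\mathrm{grp}}}\sum_{k\in\tilde\Omega^t}a_k$ and $\bar\xi_{\tilde\Omega}$ is defined analogously. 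Lemma~\ref{lem:anchored_subset_uniform} makes $\tilde\Omega^t$ a uniform $b_{\mathrm{grp}}$-subset, so all marginal computations reduce to simple-random-sampling formulas.

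\textbf{Item 1.} I will compute $\Expc{\sqn{e_i^{+}}}$ separately for each fixed $i$.
With probability $1/n$ the group is the anchor and contributes $0$.
With probability $(b_{\mathrm{grp}}-1)/n$ it lies in $\Omega^t$ and contributes $\sqn{e_i}+\Exp{\sqn{\xi_i}}$.
With probability $(n-b_{\mathrm{grp}})/n$ it is excluded. Conditionally on exclusion, $\tilde\Omega^t$ is uniform on $[n]\setminus\{i\}$ (Lemma~\ref{lem:anchored_subset_uniform}), and the group contributes $\sqn{e_i+(\bar a_{\tilde\Omega}-a_i)}$ plus the noise term $\frac1{b_{\mathrm{grp}}^2}\sum_{k\in\tilde\Omega}\Exp{\sqn{\xi_k}}$.
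Because $e_i$ does not contract on the recursive and excluded branches, I will apply Young's inequality \eqref{eq:young_two_vector} with $\alpha=1/(2(n-b_{\mathrm{grp}}))$ (or simply $\alpha$ of order $1/n$) to the excluded term. This turns the total coefficient of $\sqn{e_i}$ into $(1-\frac1n)(1+O(1/n))\le 1-\frac1{2n}$, at the cost of a factor of order $n$ on $\sqn{\bar a_{\tilde\Omega}-a_i}$.

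For the heterogeneity piece I use
$\bar a_{\tilde\Omega}-a_i=(\bar a_{\tilde\Omega}-\bar a)+(\bar a-a_i)$.
The first summand is bounded by Lemma~\ref{lem:prop1} applied on $[n]\setminus\{i\}$; the second, after averaging over $i$, is at most $\delta_1^2\sqn{x^{t+1}-x^t}$.
The $\delta_2$ noise contributions are averaged over $i$ using $\Prob(k\in\tilde\Omega\mid i\notin\tilde\Omega)=b_{\mathrm{grp}}/(n-1)$.
Collecting constants should give $\frac{(n-b_{\mathrm{grp}})(n+1)}{n-1}$ up to the factor $2$ for $\delta_1$, and $\frac{b_{\mathrm{grp}}-1}{n}+\frac{n-b_{\mathrm{grp}}}{nb_{\mathrm{grp}}}=\frac{n-2b_{\mathrm{grp}}+b_{\mathrm{grp}}^2}{b_{\mathrm{grp}}n}$ for $\delta_2$. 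This last identity matches the claim exactly, which confirms the decomposition.

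\textbf{Item 2.} I average the three update rules:
\[
g^{t+1}-\nabla f(x^{t+1})=\frac1n\sum_{i\ne i^t}e_i+\frac1n\sum_{i\in\Omega}\xi_i+\frac{n-b_{\mathrm{grp}}}{n}\,(d^t-\text{mean of }a_i\text{ over excluded}) .
\]
I will rewrite the bias part as $\frac{n-1}{n}e+\frac1n(e-e_{i^t})$, exactly as in Lemma~\ref{lem:silage_variance_bounds}. The remaining part is a zero-mean (given $\mathcal F^t$) combination of the sampling error $\bar a_{\tilde\Omega}-\bar a$ and the noises.
For the bias I take $\sqn{\cdot}$ with Young's inequality (factor 2). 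Together with $\Exp{\sqn{e-e_{i^t}}}=\frac1n\sum\sqn{e_i}-\sqn e$, this gives the coefficients $\frac{n-2}{n}$ and $\frac{2}{n^3}$.
The $\delta_1$ term comes from Lemma~\ref{lem:prop1}, giving $\frac{n-b_{\mathrm{grp}}}{b_{\mathrm{grp}}(n-1)}$ times $2$. The $\delta_2$ term comes from computing the coefficient of each $\xi_k$, namely $\frac1n+\frac{n-b_{\mathrm{grp}}}{nb_{\mathrm{grp}}}=\frac1{b_{\mathrm{grp}}}$ for $k\in\tilde\Omega$ (the anchor's noise enters only through $d^t$), and summing the squared coefficients.

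\textbf{Main obstacle.} The hardest step is Item 2's cross terms. The random bias $\frac1n(e-e_{i^t})$ and the sampling error $\bar a_{\tilde\Omega}-\bar a$ are correlated through $i^t$, so they are not orthogonal. I would first try to handle this with a single Young split (factor 2 on both). If the stated constants $\frac{n-2}{n}$ and $\frac2{n^3}$ do not come out of that, I would instead keep the exact orthogonality of $\frac{n-1}{n}e$ against all zero-mean terms and apply Young's inequality only between the two correlated terms.
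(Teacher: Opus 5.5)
Your Item~2 plan is sound and matches the paper's argument, provided you go straight to your fallback. That means: keep the conditional mean $\frac{n-1}{n}e$ exact; separate the inner noises (write $b$ for $b_{\mathrm{grp}}$; the coefficients are $\frac1b$ on $\Omega^t$ and $\frac{n-b}{nb}$ on the anchor); and use the factor-$2$ split only between $\frac1n(e-e_{i^t})$ and $\bar a_{\tilde\Omega}-\bar a$. A split that also doubles $\frac{n-1}{n}e$ would destroy the contraction.

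\textbf{The gap is in Item~1, in the $\delta_1$ constant.} On the excluded branch you apply Young's inequality to the whole random quantity $\|e_i+(\bar a_{\tilde\Omega}-a_i)\|^2$. To keep the coefficient of $\|e_i\|^2$ at most $1-\frac1{2n}$, you need $\alpha\le\frac1{2(n-b)}$. So the factor $1+\frac1\alpha\ge 1+2(n-b)$ multiplies not only the deterministic bias but also the $\tilde\Omega^t$-sampling variance.

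Suppose you then compute $\mathbb{E}\|\bar a_{\tilde\Omega}-a_i\|^2$ exactly; summed over $i$ it equals $\frac{n(b+1)}{b(n-1)}\sum_i\|a_i-\bar a\|^2$. The resulting coefficient of $\delta_1^2\|x^{t+1}-x^t\|^2$ is $\frac{(n-b)(2(n-b)+1)(b+1)}{b(n-1)}$. This exceeds the claimed $\frac{2(n-b)(n+1)}{n-1}$ whenever $2b^2+3b<2n+1$. For example, with $b=1$ and $n\ge 3$ you get $2(2n-1)$ instead of $2(n+1)$. So your claim that collecting constants gives the stated value does not hold on your route.

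Your further split $(\bar a_{\tilde\Omega}-\bar a)+(\bar a-a_i)$ is also mis-centered. Conditionally on $i\notin\tilde\Omega^t$, the mean of $\bar a_{\tilde\Omega}$ is the leave-one-out average $a_i^t=\frac1{n-1}\sum_{k\neq i}a_k$, not $\bar a$. Hence Lemma~\ref{lem:prop1} on $[n]\setminus\{i\}$ controls $\bar a_{\tilde\Omega}-a_i^t$, and your two summands have a nonzero cross term.

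\textbf{The fix.} Treat the outer sampling fluctuation exactly as you already treat the inner noise. Before any Young step, use the conditional variance decomposition
$\mathbb{E}[\|e_i+\bar a_{\tilde\Omega}-a_i\|^2\mid\mathcal{F}^t,\,i\notin\tilde\Omega^t]=\|e_i-\frac{n}{n-1}(a_i-\bar a)\|^2+\mathbb{E}[\|\bar a_{\tilde\Omega}-a_i^t\|^2\mid\mathcal{F}^t,\,i\notin\tilde\Omega^t]$.
Then expand the first square and apply Young only to the deterministic cross term $-\frac{2n}{n-1}\langle e_i,a_i-\bar a\rangle$.

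Two ingredients give the exact heterogeneity coefficient $\frac{(n-b)(b+1)}{b(n-1)}$ in the summed recursion: Lemma~\ref{lem:prop1} with $N=n-1$, and the identity $\frac1{n-1}\sum_i\sum_{k\neq i}\|a_k-a_i^t\|^2=\frac{n(n-2)}{(n-1)^2}\sum_i\|a_i-\bar a\|^2$. Now apply Young with $\alpha=2n$ in \eqref{eq:young_cross_term}, weighted by $\frac{n-b}{n-1}$. This adds only $\frac{n-b}{2n(n-1)}\le\frac1{2n}$ to the coefficient $\frac{n-1}{n}$ of $\|e_i\|^2$, and $\frac{2n(n-b)}{n-1}$ to the heterogeneity coefficient. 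Using $\frac{b+1}{b}\le 2$ then yields exactly $\frac{2(n-b)(n+1)}{n-1}$.

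Your $\delta_2$ bookkeeping and the $1-\frac1{2n}$ contraction in Item~1 are already correct. The $\delta_2$ part works precisely because there you did separate the zero-mean inner noise before using Young.
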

\begin{proof}

Let $\mathcal{F}^t \eqdef \sigma\rb{x^0, \{g_i^0\}_{i=1}^n, \dots, x^t, \{g_i^t\}_{i=1}^n}$ be the $\sigma$-algebra generated by the iterates and the individual gradient estimators (table entries) up to iteration $t$; the averaged estimator $g^t = \frac{1}{n}\sum_{i=1}^n g_i^t$ is therefore $\mathcal{F}^t$-measurable. Conditional on $\mathcal{F}^t$, the update
\begin{align*}
x^{t+1}=x^t-\gamma g^t
\end{align*}
is deterministic. On the other hand, the construction of the table $(g_i^{t+1})_{i=1}^n$ depends on the randomness generated at iteration $t$, drawn hierarchically: the index $i^t\sim\mathcal{U}([n])$; conditional on $i^t$, the subset $\Omega^t$ uniform over the size-$(b_{\mathrm{grp}}-1)$ subsets of $[n]\setminus\{i^t\}$; and, conditional on $(i^t,\Omega^t)$, the indices $j_i^t\sim\mathcal{U}([m])$ for $i\in\tilde{\Omega}^t\eqdef \Omega^t\cup\{i^t\}$ (including $j_{i^t}^t$), independent across $i$. Recall that $d^t$ is defined by
\begin{align*}
d^t\eqdef \frac{1}{b_{\mathrm{grp}}}\sum_{i\in \tilde{\Omega}^t}\Big(\nabla f_{i,j_i^t}(x^{t+1})-\nabla f_{i,j_i^t}(x^{t})\Big).
\end{align*}

Fix an index $i\in[n]$. To analyze the conditional error $\Expc{\sqn{g_i^{t+1}-\nabla f_i(x^{t+1})}}$, we apply the \textbf{law of total expectation} by conditioning on which update rule is applied to the $i$-th table entry at iteration $t$. From Algorithm~\ref{algsilage2a}, there are three mutually exclusive cases:

\begin{enumerate}
\item \textbf{Anchor reset ($i=i^t$).} This event has probability $\Prob(i=i^t)=\frac{1}{n}$. In this case, $g_i^{t+1}=\nabla f_i(x^{t+1})$, hence $\sqn{g_i^{t+1}-\nabla f_i(x^{t+1})}=0$.

\item \textbf{Recursive update ($i\in\Omega^t$).} Since $\Omega^t$ is sampled uniformly from $[n]\setminus\{i^t\}$, we have
\begin{align*}
\Prob(i\in\Omega^t)=\Prob(i^t\neq i)\Prob(i\in\Omega^t\mid i^t\neq i)=\frac{n-1}{n}\cdot \frac{b_{\mathrm{grp}}-1}{n-1}=\frac{b_{\mathrm{grp}}-1}{n}.
\end{align*}
In this case,
\begin{align*}
g_i^{t+1}=g_i^t+\nabla f_{i,j_i^t}(x^{t+1})-\nabla f_{i,j_i^t}(x^{t}),
\end{align*}
and by the same variance decomposition argument as in the case $m\geq n$ (using that $j_i^t$ is uniform on $[m]$ so the centered increment has zero conditional mean), we obtain
\begin{align*}
&\Expb{\sqn{g_i^{t+1}-\nabla f_i(x^{t+1})}}{\mathcal{F}^t,\, i\in\Omega^t}\\
&=\sqn{g_i^t-\nabla f_i(x^{t})}+\frac{1}{m}\sum_{j=1}^m \sqn{\nabla f_{i,j}(x^{t+1})-\nabla f_i(x^{t+1})-\nabla f_{i,j}(x^{t})+\nabla f_i(x^{t})}.
\end{align*}

\item \textbf{Drift update ($i\notin\tilde{\Omega}^t$).} This event has probability
\begin{align*}
\Prob(i\notin\tilde{\Omega}^t)=1-\Prob(i=i^t)-\Prob(i\in\Omega^t)=1-\frac{1}{n}-\frac{b_{\mathrm{grp}}-1}{n}=\frac{n-b_{\mathrm{grp}}}{n}.
\end{align*}
In this case, $g_i^{t+1}=g_i^t+d^t$, and the conditional error is $\sqn{g_i^{t}+d^t-\nabla f_i(x^{t+1})}$.
\end{enumerate}

Combining these cases via the law of total expectation and omitting the anchor case (which contributes $\frac{1}{n}\cdot 0$), we obtain
\begin{align}
&\Expc{\sqn{g_i^{t+1}-\nabla f_i(x^{t+1})}} \notag\\
&=\frac{b_{\mathrm{grp}}-1}{n} \Expb{\sqn{g_i^{t+1}-\nabla f_i(x^{t+1})}}{\mathcal{F}^t, i\in\Omega^t}\notag\\
&\quad +\frac{n-b_{\mathrm{grp}}}{n} \Expb{\sqn{g_i^{t+1}-\nabla f_i(x^{t+1})}}{\mathcal{F}^t, i\notin\tilde{\Omega}^t}\notag\\
&=\frac{b_{\mathrm{grp}}-1}{n} \sqn{g_i^t - \nabla f_i(x^{t})}\notag\\
&\quad+\frac{b_{\mathrm{grp}}-1}{nm}\sum_{j=1}^m \sqn{\nabla f_{i,j}(x^{t+1})-\nabla f_i(x^{t+1})-\nabla f_{i,j}(x^{t})+\nabla f_i(x^{t})}\notag\\
&\quad+\frac{n-b_{\mathrm{grp}}}{n} \Expb{\sqn{g_i^{t}+d^t-\nabla f_i(x^{t+1})}}{\mathcal{F}^t, i\notin\tilde{\Omega}^t}.
\label{eqpp1}
\end{align}

We now focus on the last conditional expectation in~\eqref{eqpp1}. If $b_{\mathrm{grp}}=n$, then $\tilde{\Omega}^t=[n]$ and the event $\{i\notin\tilde{\Omega}^t\}$ is empty, hence this term is zero. Thus, in what follows we assume $b_{\mathrm{grp}}\leq n-1$. Using the definition of $d^t$, we have
\begin{align}
&\Expb{\sqn{g_i^{t}+d^t-\nabla f_i(x^{t+1})}}{\mathcal{F}^t, i\notin\tilde{\Omega}^t}\notag\\
&=\Expb{\sqn{g_i^{t}-\nabla f_i(x^{t+1})+\frac{1}{b_{\mathrm{grp}}}\sum_{i'\in \tilde{\Omega}^t}\Big(\nabla f_{i',j_{i'}^t}(x^{t+1})-\nabla f_{i',j_{i'}^t}(x^{t})\Big)}}{\mathcal{F}^t, i\notin\tilde{\Omega}^t}\notag\\
&=\Expb{\sqn{g_i^{t}-\nabla f_i(x^{t+1})+\frac{1}{b_{\mathrm{grp}}}\sum_{i'\in \tilde{\Omega}^t}\Big(\nabla f_{i'}(x^{t+1})-\nabla f_{i'}(x^{t})\Big)}}{\mathcal{F}^t, i\notin\tilde{\Omega}^t}\notag\\
&\quad +\frac{1}{b_{\mathrm{grp}}^2} \Expb{\sqn{\sum_{i'\in \tilde{\Omega}^t} \Big(\nabla f_{i',j_{i'}^t}(x^{t+1})- \nabla f_{i'}(x^{t+1})-\nabla f_{i',j_{i'}^t}(x^{t})+\nabla f_{i'}(x^{t})\Big)}}{\mathcal{F}^t, i\notin\tilde{\Omega}^t}.
\label{eqp1_firstsplit}
\end{align}
In~\eqref{eqp1_firstsplit}, we decomposed each stochastic difference into its mean part
$\nabla f_{i'}(x^{t+1})-\nabla f_{i'}(x^{t})$ and its centered noise part
$\nabla f_{i',j_{i'}^t}(x^{t+1})-\nabla f_{i'}(x^{t+1})-\nabla f_{i',j_{i'}^t}(x^{t})+\nabla f_{i'}(x^{t})$,
and used that the latter has zero conditional mean given $\mathcal{F}^t$. Next, define the population mean drift (excluding $i$)
\begin{align*}
a_i^{t}\eqdef \frac{1}{n-1}\sum_{i'\in [n]\backslash\{i\}} \left(\nabla f_{i'}(x^{t+1})-\nabla f_{i'}(x^{t})\right).
\end{align*}




We justify that, despite being generated in two stages, the set $\tilde{\Omega}^t$ is uniformly distributed.

Recall the sampling procedure in Algorithm~\ref{algsilage2a}: first, we draw $i^t\sim\mathcal{U}([n])$. Then, conditional on $i^t$, we draw $\Omega^t$ uniformly at random among all subsets of $[n]\setminus\{i^t\}$ of cardinality $b_{\mathrm{grp}}-1$, and we set
\begin{align*}
\tilde{\Omega}^t \eqdef \Omega^t \cup \{i^t\},
\qquad |\tilde{\Omega}^t|=b_{\mathrm{grp}}.
\end{align*}
Although this is a two-stage procedure, the resulting set $\tilde{\Omega}^t$ is uniform over all subsets of $[n]$ of size $b_{\mathrm{grp}}$.

Indeed, fix an arbitrary subset $S\subset[n]$ with $|S|=b_{\mathrm{grp}}$. The event $\{\tilde{\Omega}^t=S\}$ occurs if and only if the anchor index satisfies $i^t=s$ for some $s\in S$ and, simultaneously, the second-stage sample satisfies $\Omega^t=S\setminus\{s\}$. These events are mutually exclusive over different choices of $s\in S$, hence by the law of total probability,
\begin{align*}
\Prob(\tilde{\Omega}^t=S)
&=\sum_{s\in S}\Prob\big(i^t=s,\ \Omega^t=S\setminus\{s\}\big)\\
&=\sum_{s\in S}\Prob(i^t=s)\,\Prob\big(\Omega^t=S\setminus\{s\}\mid i^t=s\big).
\end{align*}
Since $i^t$ is uniform on $[n]$, we have $\Prob(i^t=s)=\frac{1}{n}$. Moreover, conditional on $i^t=s$, the set $\Omega^t$ is uniform among all $\binom{n-1}{b_{\mathrm{grp}}-1}$ subsets of $[n]\setminus\{s\}$ of size $b_{\mathrm{grp}}-1$, and $S\setminus\{s\}$ is one such subset. Therefore,
\begin{align*}
\Prob\big(\Omega^t=S\setminus\{s\}\mid i^t=s\big)=\frac{1}{\binom{n-1}{b_{\mathrm{grp}}-1}}.
\end{align*}
Substituting these probabilities back gives
\begin{align*}
\Prob(\tilde{\Omega}^t=S)
&=\sum_{s\in S}\frac{1}{n}\cdot\frac{1}{\binom{n-1}{b_{\mathrm{grp}}-1}}
=\frac{|S|}{n\binom{n-1}{b_{\mathrm{grp}}-1}}
=\frac{b_{\mathrm{grp}}}{n\binom{n-1}{b_{\mathrm{grp}}-1}}.
\end{align*}
This expression does not depend on the particular choice of $S$, hence $\tilde{\Omega}^t$ is uniformly distributed over all subsets of $[n]$ of size $b_{\mathrm{grp}}$, i.e., over $\binom{[n]}{b_{\mathrm{grp}}}$.

We now derive the conditional uniformity used in the proof. Fix $i\in[n]$. Since $\tilde{\Omega}^t$ is uniform among all size-$b_{\mathrm{grp}}$ subsets of $[n]$, we have
\begin{align*}
\Prob(i\notin\tilde{\Omega}^t)
=\frac{\#\{S\subset[n]\setminus\{i\}:\ |S|=b_{\mathrm{grp}}\}}{\#\{S\subset[n]:\ |S|=b_{\mathrm{grp}}\}}
=\frac{\binom{n-1}{b_{\mathrm{grp}}}}{\binom{n}{b_{\mathrm{grp}}}}
=\frac{n-b_{\mathrm{grp}}}{n}.
\end{align*}
Now take any subset $S\subset[n]\setminus\{i\}$ with $|S|=b_{\mathrm{grp}}$. Then $\{ \tilde{\Omega}^t=S\}\subseteq \{i\notin\tilde{\Omega}^t\}$, and thus
\begin{align*}
\Prob(\tilde{\Omega}^t=S \mid i\notin\tilde{\Omega}^t)
&=\frac{\Prob(\tilde{\Omega}^t=S)}{\Prob(i\notin\tilde{\Omega}^t)}\\
&=\frac{\frac{b_{\mathrm{grp}}}{n\binom{n-1}{b_{\mathrm{grp}}-1}}}{\frac{n-b_{\mathrm{grp}}}{n}}
=\frac{b_{\mathrm{grp}}}{(n-b_{\mathrm{grp}})\binom{n-1}{b_{\mathrm{grp}}-1}}
=\frac{1}{\binom{n-1}{b_{\mathrm{grp}}}},
\end{align*}
where in the last step we used the identity $\binom{n-1}{b_{\mathrm{grp}}}=\frac{n-b_{\mathrm{grp}}}{b_{\mathrm{grp}}}\binom{n-1}{b_{\mathrm{grp}}-1}$.
Hence the conditional probability above is constant over all $S\subset[n]\setminus\{i\}$ with $|S|=b_{\mathrm{grp}}$. Therefore, conditioned on the event $\{i\notin\tilde{\Omega}^t\}$, the set $\tilde{\Omega}^t$ is uniformly distributed over all subsets of $[n]\setminus\{i\}$ of size $b_{\mathrm{grp}}$.

Finally, conditioning on $\mathcal{F}^t$ does not affect this conclusion because $\tilde{\Omega}^t$ is sampled at iteration $t$ independently of $\mathcal{F}^t$. Consequently, conditioned on $\{\mathcal{F}^t,\, i\notin\tilde{\Omega}^t\}$, the set $\tilde{\Omega}^t$ is a uniformly random subset of $[n]\setminus\{i\}$ of size $b_{\mathrm{grp}}$.

\begin{align*}
\Expb{\frac{1}{b_{\mathrm{grp}}}\sum_{i'\in \tilde{\Omega}^t}\left(\nabla f_{i'}(x^{t+1})-\nabla f_{i'}(x^{t})\right)}{\mathcal{F}^t, i\notin\tilde{\Omega}^t}
=a_i^t.
\end{align*}
Therefore, applying the same variance decomposition as before (cross term vanishes because the sample mean is unbiased for $a_i^t$ under the conditioning), we obtain
\begin{align}
&\Expb{\sqn{g_i^{t}-\nabla f_i(x^{t+1})+\frac{1}{b_{\mathrm{grp}}}\sum_{i'\in \tilde{\Omega}^t}\left(\nabla f_{i'}(x^{t+1})-\nabla f_{i'}(x^{t})\right)}}{\mathcal{F}^t, i\notin\tilde{\Omega}^t}\notag\\
&=\sqn{g_i^{t}-\nabla f_i(x^{t+1})+a_i^t} \notag\\
&\quad +\Expb{\sqn{\frac{1}{b_{\mathrm{grp}}}\sum_{i'\in \tilde{\Omega}^t}\left( \nabla f_{i'}(x^{t+1})-\nabla f_{i'}(x^{t})\right)-a_i^t}}{\mathcal{F}^t, i\notin\tilde{\Omega}^t}.
\label{eqp1_secondsplit}
\end{align}
Finally, in the noise term of~\eqref{eqp1_firstsplit}, we use the independence of the indices $(j_{i'}^t)_{i'\in\tilde{\Omega}^t}$ (conditional on $\mathcal{F}^t$ and on the sampled set $\tilde{\Omega}^t$) to replace the squared norm of the sum by the sum of squared norms (cross terms vanish because each centered noise has zero conditional mean). Putting~\eqref{eqp1_firstsplit} and~\eqref{eqp1_secondsplit} together yields
\begin{align}
&\Expb{\sqn{g_i^{t}+d^t-\nabla f_i(x^{t+1})}}{\mathcal{F}^t, i\notin\tilde{\Omega}^t}\notag\\
&=\sqn{g_i^{t}-\nabla f_i(x^{t+1})+a_i^t}\notag\\
&\quad+\Expb{\sqn{\frac{1}{b_{\mathrm{grp}}}\sum_{i'\in \tilde{\Omega}^t}\left( \nabla f_{i'}(x^{t+1})-\nabla f_{i'}(x^{t})\right)-a_i^t}}{\mathcal{F}^t, i\notin\tilde{\Omega}^t}\notag\\
&\quad +\frac{1}{b_{\mathrm{grp}}^2} \Expb{\sum_{i'\in \tilde{\Omega}^t} \sqn{\nabla f_{i',j_{i'}^t}(x^{t+1})- \nabla f_{i'}(x^{t+1})-\nabla f_{i',j_{i'}^t}(x^{t})+\nabla f_{i'}(x^{t})}}{\mathcal{F}^t, i\notin\tilde{\Omega}^t},
\label{eqp1}
\end{align}
which is the desired three-term expansion. In~\eqref{eqp1}, the second term captures the sampling variance of the mean drift over the random set $\tilde{\Omega}^t$, while the third term captures the variance due to the inner sampling $(j_{i'}^t)$.

Let us further expand the three terms in~\eqref{eqp1}. Throughout, we work under the conditioning
$\{\mathcal{F}^t,\, i\notin\tilde{\Omega}^t\}$, so that $\tilde{\Omega}^t$ is a uniformly random subset of $[n]\setminus\{i\}$ of size $b_{\mathrm{grp}}$ (as shown above), and all quantities depending on $(x^t,x^{t+1},(g_i^t)_{i=1}^n)$ are $\mathcal{F}^t$-measurable.

\textbf{First term.}
Recall that
\begin{align*}
a_i^{t}\eqdef \frac{1}{n-1}\sum_{i'\in [n]\backslash\{i\}} \left(\nabla f_{i'}(x^{t+1})-\nabla f_{i'}(x^{t})\right).
\end{align*}
We rewrite $a_i^t$ by adding and subtracting the missing $i$-th component inside the sum:
\begin{align*}
a_i^t
&=\frac{1}{n-1}\sum_{i'=1}^n\left(\nabla f_{i'}(x^{t+1})-\nabla f_{i'}(x^{t})\right)
-\frac{1}{n-1}\left(\nabla f_{i}(x^{t+1})-\nabla f_{i}(x^{t})\right).
\end{align*}
Using $\sum_{i'=1}^n \nabla f_{i'}(x)=n\nabla f(x)$, we obtain
\begin{align*}
a_i^t
&=\frac{n}{n-1}\left(\nabla f(x^{t+1})-\nabla f(x^{t})\right)
-\frac{1}{n-1}\left(\nabla f_{i}(x^{t+1})-\nabla f_{i}(x^{t})\right).
\end{align*}
Substituting this expression into the first term of~\eqref{eqp1} yields
\begin{align*}
\sqn{g_i^{t}-\nabla f_i(x^{t+1})+a_i^t}
&=\left\| g_i^{t}-\nabla f_i(x^{t+1})
+\frac{n}{n-1}\left(\nabla f(x^{t+1})-\nabla f(x^{t})\right)\right.\\
&\quad\left.{}-\frac{1}{n-1}\left(\nabla f_{i}(x^{t+1})-\nabla f_{i}(x^{t})\right)\right\|^2.
\end{align*}
Finally, we regroup the terms by adding and subtracting $\nabla f_i(x^t)$:
\begin{align*}
&g_i^{t}-\nabla f_i(x^{t+1})
-\frac{1}{n-1}\left(\nabla f_{i}(x^{t+1})-\nabla f_{i}(x^{t})\right)\\
&\qquad = g_i^{t}-\nabla f_i(x^{t})
-\frac{n}{n-1}\left(\nabla f_{i}(x^{t+1})-\nabla f_{i}(x^{t})\right),
\end{align*}
and therefore
\begin{align*}
\sqn{g_i^{t}-\nabla f_i(x^{t+1})+a_i^t}
&=\left\| g_i^{t}-\nabla f_i(x^{t})
-\frac{n}{n-1}\left(\nabla f_{i}(x^{t+1})-\nabla f_{i}(x^{t})\right.\right.\\
&\quad\left.\left.{}-\nabla f(x^{t+1})+\nabla f(x^{t})\right)\right\|^2.
\end{align*}

\textbf{Second term.}
We now bound the sampling variance of the mean drift over the random set $\tilde{\Omega}^t$.
Assume first that $n\geq 3$ (so that $n-1\geq 2$). Under $\{i\notin\tilde{\Omega}^t\}$, the set
$\tilde{\Omega}^t$ is a uniformly random subset of $[n]\setminus\{i\}$ of size $b_{\mathrm{grp}}$.
Apply Lemma~\ref{lem:prop1} with
\begin{align*}
N\eqdef n-1,\qquad B\eqdef b_{\mathrm{grp}},\qquad
v_{i'}\eqdef \nabla f_{i'}(x^{t+1})-\nabla f_{i'}(x^{t})\quad (i'\in[n]\setminus\{i\}),
\qquad v\eqdef a_i^t.
\end{align*}
Then we obtain
\begin{align*}
&\Expb{\sqn{\frac{1}{b_{\mathrm{grp}}}\sum_{i'\in \tilde{\Omega}^t}\left( \nabla f_{i'}(x^{t+1})-\nabla f_{i'}(x^{t})\right)-a_i^t}}{\mathcal{F}^t, i\notin\tilde{\Omega}^t}\\
&=\frac{(n-1)-b_{\mathrm{grp}}}{b_{\mathrm{grp}}(n-1)\big((n-1)-1\big)}\sum_{i'\in [n]\backslash\{i\}}
\sqn{\left(\nabla f_{i'}(x^{t+1})-\nabla f_{i'}(x^{t})\right)-a_i^t}\\
&=\frac{n-1-b_{\mathrm{grp}}}{b_{\mathrm{grp}}(n-2)}\frac{1}{n-1}\sum_{i'\in [n]\backslash\{i\}}
\sqn{\nabla f_{i'}(x^{t+1})-\nabla f_{i'}(x^{t})-a_i^t}.
\end{align*}
If $n=2$, then necessarily $b_{\mathrm{grp}}=1$ (since $b_{\mathrm{grp}}\leq n-1$ under $\{i\notin\tilde{\Omega}^t\}$), and in this case the second term equals zero (the sample equals the population, hence the variance vanishes).

\textbf{Third term.}
We now compute the expectation of the averaged inner-noise term. Define, for each $i'\in[n]$,
\begin{align*}
\widetilde{\Delta}_{i'}^t(j)\eqdef \nabla f_{i',j}(x^{t+1})-\nabla f_{i'}(x^{t+1})-\nabla f_{i',j}(x^{t})+\nabla f_{i'}(x^{t}).
\end{align*}
Conditioned on $\{\mathcal{F}^t,\, i\notin\tilde{\Omega}^t\}$, the set $\tilde{\Omega}^t$ is a uniformly random subset of $[n]\setminus\{i\}$ of size $b_{\mathrm{grp}}$, and for each $i'\in\tilde{\Omega}^t$ the index $j_{i'}^t$ is uniform on $[m]$ and independent of the set selection. Hence,
\begin{align*}
&\Expb{\frac{1}{b_{\mathrm{grp}}}\sum_{i'\in \tilde{\Omega}^t} \sqn{\widetilde{\Delta}_{i'}^t( j_{i'}^t)}}{\mathcal{F}^t, i\notin\tilde{\Omega}^t} \\
&=\Expb{\frac{1}{b_{\mathrm{grp}}}\sum_{i'\in \tilde{\Omega}^t}\Expb{\sqn{\widetilde{\Delta}_{i'}^t(j_{i'}^t)}}{\mathcal{F}^t,\tilde{\Omega}^t}}{\mathcal{F}^t, i\notin\tilde{\Omega}^t}\\
&=\Expb{\frac{1}{b_{\mathrm{grp}}}\sum_{i'\in \tilde{\Omega}^t}\left(\frac{1}{m}\sum_{j=1}^m \sqn{\widetilde{\Delta}_{i'}^t(j)}\right)}{\mathcal{F}^t, i\notin\tilde{\Omega}^t} \\
&=\frac{1}{n-1}\sum_{i'\in [n]\backslash\{i\}}\frac{1}{m}\sum_{j=1}^m \sqn{\widetilde{\Delta}_{i'}^t(j)}\\
&=\frac{1}{(n-1)m}\sum_{i'=1}^n\sum_{j=1}^m \sqn{\widetilde{\Delta}_{i'}^t(j)}
-\frac{1}{(n-1)m}\sum_{j=1}^m \sqn{\widetilde{\Delta}_{i}^t(j)}\\
&=\frac{1}{(n-1)m}\sum_{i'=1}^n\sum_{j=1}^m
\sqn{\nabla f_{i',j}(x^{t+1})- \nabla f_{i'}(x^{t+1})-\nabla f_{i',j}(x^{t})+\nabla f_{i'}(x^{t})}\\
&\quad-\frac{1}{(n-1)m}\sum_{j=1}^m
\sqn{\nabla f_{i,j}(x^{t+1})- \nabla f_{i}(x^{t+1})-\nabla f_{i,j}(x^{t})+\nabla f_{i}(x^{t})}.
\end{align*}

\textbf{Summing~\eqref{eqpp1} over $i$.}
Hence, taking the sum over $i=1,\ldots,n$ in~\eqref{eqpp1}, we obtain
\begin{align*}
&\Expc{\sum_{i=1}^n \sqn{g_i^{t+1}-\nabla f_i(x^{t+1})}}\\
&=\frac{b_{\mathrm{grp}}-1}{n} \sum_{i=1}^n\sqn{g_i^t - \nabla f_i(x^{t})}\\
&\quad+\frac{b_{\mathrm{grp}}-1}{nm}\sum_{i=1}^n\sum_{j=1}^m
\sqn{\nabla f_{i,j}(x^{t+1})-\nabla f_i(x^{t+1})-\nabla f_{i,j}(x^{t})+\nabla f_i(x^{t})}\\
&\quad+\frac{n-b_{\mathrm{grp}}}{n}\sum_{i=1}^n \Expb{\sqn{g_i^{t}+d^t-\nabla f_i(x^{t+1})}}{\mathcal{F}^t, i\notin\tilde{\Omega}^t},
\end{align*}
with
\begin{align}
&\sum_{i=1}^n\Expb{\sqn{g_i^{t}+d^t-\nabla f_i(x^{t+1})}}{\mathcal{F}^t, i\notin\tilde{\Omega}^t}\notag\\
&=\sum_{i=1}^n\left\| g_i^{t}-\nabla f_i(x^{t})- \frac{n}{n-1}\left(\nabla f_{i}(x^{t+1})-\nabla f_{i}(x^{t})-\nabla f(x^{t+1})+\nabla f(x^{t}) \right) \right\|^2\notag\\
&\quad+\frac{n-1-b_{\mathrm{grp}}}{b_{\mathrm{grp}}(n-2)}\frac{1}{n-1}\sum_{i=1}^n\sum_{i'\in [n]\backslash\{i\}}
\sqn{\nabla f_{i'}(x^{t+1})-\nabla f_{i'}(x^{t})-a_i^t}\label{eqp2}\\
&\quad+\frac{1}{b_{\mathrm{grp}}m}\sum_{i=1}^n\sum_{j=1}^m
\sqn{\nabla f_{i,j}(x^{t+1})- \nabla f_{i}(x^{t+1})-\nabla f_{i,j}(x^{t})+\nabla f_{i}(x^{t})},\notag
\end{align}
with the term~\eqref{eqp2} replaced by zero if $n=2$.

We now justify how the last term arises from summing the third term in~\eqref{eqp1} over $i$.
For $i'\in[n]$ and $j\in[m]$, recall
\begin{align*}
\widetilde{\Delta}_{i'}^t(j)\eqdef \nabla f_{i',j}(x^{t+1})-\nabla f_{i'}(x^{t+1})-\nabla f_{i',j}(x^{t})+\nabla f_{i'}(x^{t}).
\end{align*}
Let
\begin{align*}
S \eqdef \sum_{i'=1}^n\sum_{j=1}^m \sqn{\widetilde{\Delta}_{i'}^t(j)}.
\end{align*}
Then
\begin{align*}
&\frac{1}{b_{\mathrm{grp}}}\sum_{i=1}^n \left(
\frac{1}{(n-1)m}\sum_{i'=1}^n\sum_{j=1}^m \sqn{\widetilde{\Delta}_{i'}^t(j)}
-\frac{1}{(n-1)m}\sum_{j=1}^m \sqn{\widetilde{\Delta}_{i}^t(j)}
\right)\\
&\eqtext{(a)}
\frac{1}{b_{\mathrm{grp}}}\left(
\sum_{i=1}^n \frac{1}{(n-1)m}\sum_{i'=1}^n\sum_{j=1}^m \sqn{\widetilde{\Delta}_{i'}^t(j)}
-\frac{1}{(n-1)m}\sum_{i=1}^n\sum_{j=1}^m \sqn{\widetilde{\Delta}_{i}^t(j)}
\right)\\
&\eqtext{(b)}
\frac{1}{b_{\mathrm{grp}}}\left(
\frac{n}{(n-1)m}\sum_{i'=1}^n\sum_{j=1}^m \sqn{\widetilde{\Delta}_{i'}^t(j)}
-\frac{1}{(n-1)m}\sum_{i=1}^n\sum_{j=1}^m \sqn{\widetilde{\Delta}_{i}^t(j)}
\right)\\
&\eqtext{(c)}
\frac{1}{b_{\mathrm{grp}}}\left(\frac{n}{(n-1)m}S-\frac{1}{(n-1)m}S\right)
=\frac{1}{b_{\mathrm{grp}}}\cdot \frac{1}{m}S\\
&\eqtext{(d)}
\frac{1}{b_{\mathrm{grp}}m}\sum_{i=1}^n\sum_{j=1}^m \sqn{\widetilde{\Delta}_{i}^t(j)}\\
&\eqtext{(e)}
\frac{1}{b_{\mathrm{grp}}m}\sum_{i=1}^n\sum_{j=1}^m
\sqn{\nabla f_{i,j}(x^{t+1})- \nabla f_{i}(x^{t+1})-\nabla f_{i,j}(x^{t})+\nabla f_{i}(x^{t})}.
\end{align*}
Here, (a) is linearity of summation, (b) uses $\sum_{i=1}^n 1=n$, (c) is the definition of $S$,
(d) simplifies $\frac{n-1}{(n-1)m}=\frac{1}{m}$, and (e) expands $\widetilde{\Delta}_i^t(j)$.

Moreover, assume that $n\geq 3$. We further simplify the double-sum term appearing in~\eqref{eqp2}.
Recall that
\begin{align*}
a_i^{t}\eqdef \frac{1}{n-1}\sum_{i'\in [n]\backslash\{i\}} \left(\nabla f_{i'}(x^{t+1})-\nabla f_{i'}(x^{t})\right).
\end{align*}
For each fixed $i\in[n]$, since $a_i^t$ is the average of the $(n-1)$ vectors
$\{\nabla f_{i'}(x^{t+1})-\nabla f_{i'}(x^{t})\}_{i'\neq i}$, we have
\begin{align*}
\sum_{i'\in [n]\backslash\{i\}}\left(\nabla f_{i'}(x^{t+1})-\nabla f_{i'}(x^{t})-a_i^t\right)=0.
\end{align*}
Expanding the square and using the previous identity to cancel the cross term yields
\begin{align*}
&\frac{1}{n-1}\sum_{i'\in [n]\backslash\{i\}} \sqn{\nabla f_{i'}(x^{t+1})-\nabla f_{i'}(x^{t})-a_i^t}\\
&=\frac{1}{n-1}\sum_{i'\in [n]\backslash\{i\}} \sqn{\nabla f_{i'}(x^{t+1})-\nabla f_{i'}(x^{t})}
-\sqn{a_i^t}.
\end{align*}
Summing this identity over $i=1,\ldots,n$ gives
\begin{align*}
&\frac{1}{n-1}\sum_{i=1}^n\sum_{i'\in [n]\backslash\{i\}}
\sqn{\nabla f_{i'}(x^{t+1})-\nabla f_{i'}(x^{t})-a_i^t}\\
&=\frac{1}{n-1}\sum_{i=1}^n\sum_{i'\in [n]\backslash\{i\}}
\sqn{\nabla f_{i'}(x^{t+1})-\nabla f_{i'}(x^{t})}
-\sum_{i=1}^n \sqn{a_i^t}.
\end{align*}
In the first term on the right-hand side, each index $i'\in[n]$ appears exactly $n-1$ times in the inner sum
(since it is excluded only when $i=i'$). Therefore,
\begin{align*}
\frac{1}{n-1}\sum_{i=1}^n\sum_{i'\in [n]\backslash\{i\}}
\sqn{\nabla f_{i'}(x^{t+1})-\nabla f_{i'}(x^{t})}
=\sum_{i=1}^n \sqn{\nabla f_{i}(x^{t+1})-\nabla f_{i}(x^{t})},
\end{align*}
and hence
\begin{align}
&\frac{1}{n-1}\sum_{i=1}^n\sum_{i'\in [n]\backslash\{i\}}
\sqn{\nabla f_{i'}(x^{t+1})-\nabla f_{i'}(x^{t})-a_i^t}\notag\\
&=\sum_{i=1}^n \sqn{\nabla f_{i}(x^{t+1})-\nabla f_{i}(x^{t})}
-\sum_{i=1}^n \sqn{a_i^t}.
\label{eq:prob_reduce_1}
\end{align}

Next, define the full-average drift
\begin{align*}
a^{t}\eqdef \nabla f(x^{t+1})-\nabla f(x^{t})
=\frac{1}{n}\sum_{i=1}^n \left(\nabla f_i(x^{t+1})-\nabla f_i(x^{t})\right).
\end{align*}
We first use the standard identity
\begin{align*}
\frac{1}{n}\sum_{i=1}^n \sqn{a_i^t}
=\sqn{a^t}+\frac{1}{n}\sum_{i=1}^n \sqn{a_i^t-a^t},
\end{align*}
which holds because $a^t=\frac{1}{n}\sum_{i=1}^n a_i^t$.
To compute the second term, we invoke Lemma~\ref{lem:prop1} with $N=n$, $B=n-1$ and vectors
\begin{align*}
v_i \eqdef \nabla f_i(x^{t+1})-\nabla f_i(x^{t})\qquad (i\in[n]),
\end{align*}
so that $v=a^t$. For each $i\in[n]$, the quantity $a_i^t$ is the average of the $n-1$ vectors $\{v_{i'}\}_{i'\neq i}$,
and the family of subsets $[n]\backslash\{i\}$ is uniformly distributed over all $(n-1)$-subsets of $[n]$.
Therefore Lemma~\ref{lem:prop1} yields
\begin{align*}
\frac{1}{n}\sum_{i=1}^n \sqn{a_i^t-a^t}
=\frac{1}{n(n-1)^2}\sum_{i=1}^n \sqn{\nabla f_{i}(x^{t+1})-\nabla f_{i}(x^{t})-a^t}.
\end{align*}
Combining the last two displays gives
\begin{align}
\frac{1}{n}\sum_{i=1}^n \sqn{a_i^t}
=\sqn{a^t}+\frac{1}{n(n-1)^2}\sum_{i=1}^n \sqn{\nabla f_{i}(x^{t+1})-\nabla f_{i}(x^{t})-a^t}.
\label{eq:avg_ai_sq}
\end{align}
Multiplying~\eqref{eq:avg_ai_sq} by $n$ and substituting into~\eqref{eq:prob_reduce_1}, we obtain
\begin{align*}
&\frac{1}{n-1}\sum_{i=1}^n\sum_{i'\in [n]\backslash\{i\}}
\sqn{\nabla f_{i'}(x^{t+1})-\nabla f_{i'}(x^{t})-a_i^t}\\
&=\sum_{i=1}^n\sqn{\nabla f_{i}(x^{t+1})-\nabla f_{i}(x^{t})}-n\sqn{a^t} \\
&\quad-\frac{1}{(n-1)^2}\sum_{i=1}^n \sqn{\nabla f_{i}(x^{t+1})-\nabla f_{i}(x^{t})-a^t}.
\end{align*}
Finally, since $a^t$ is the average of $\nabla f_i(x^{t+1})-\nabla f_i(x^{t})$, we have the identity
\begin{align*}
&\sum_{i=1}^n\sqn{\nabla f_{i}(x^{t+1})-\nabla f_{i}(x^{t})}-n\sqn{a^t} \\
&=\sum_{i=1}^n\sqn{\nabla f_{i}(x^{t+1})-\nabla f_{i}(x^{t})-\nabla f(x^{t+1})+\nabla f(x^{t})}.
\end{align*}
Therefore,
\begin{align}
&\frac{1}{n-1}\sum_{i=1}^n\sum_{i'\in [n]\backslash\{i\}}
\sqn{\nabla f_{i'}(x^{t+1})-\nabla f_{i'}(x^{t})-a_i^t}\notag\\
&=\left(1-\frac{1}{(n-1)^2}\right)
\sum_{i=1}^n\sqn{\nabla f_{i}(x^{t+1})-\nabla f_{i}(x^{t})-\nabla f(x^{t+1})+\nabla f(x^{t})}\notag\\
&=\frac{n(n-2)}{(n-1)^2}\sum_{i=1}^n\sqn{\nabla f_{i}(x^{t+1})-\nabla f_{i}(x^{t})-\nabla f(x^{t+1})+\nabla f(x^{t})}.
\label{eq:prob_reduce_final}
\end{align}

Substituting~\eqref{eq:prob_reduce_final} into~\eqref{eqp2} (and recalling that the term~\eqref{eqp2} is replaced by zero if $n=2$),
and combining with the remaining terms derived above, we obtain that for any $n\geq 2$ and any $b_{\mathrm{grp}}\in[n]$,
\begin{align*}
&\Expc{\sum_{i=1}^n \sqn{g_i^{t+1}-\nabla f_i(x^{t+1})}}\\
&=\frac{b_{\mathrm{grp}}-1}{n} \sum_{i=1}^n\sqn{g_i^t - \nabla f_i(x^{t})}\\
&\quad+\frac{b_{\mathrm{grp}}-1}{nm}\sum_{i=1}^n\sum_{j=1}^m
\sqn{\nabla f_{i,j}(x^{t+1})-\nabla f_i(x^{t+1})-\nabla f_{i,j}(x^{t})+\nabla f_i(x^{t})}\\
&\quad+\frac{n-b_{\mathrm{grp}}}{n}
\sum_{i=1}^n\left\| g_i^{t}-\nabla f_i(x^{t})- \frac{n}{n-1}\left(\nabla f_{i}(x^{t+1})-\nabla f_{i}(x^{t})-\nabla f(x^{t+1})+\nabla f(x^{t}) \right) \right\|^2\\
&\quad+\frac{n-b_{\mathrm{grp}}}{(n-1)^2}\frac{n-1-b_{\mathrm{grp}}}{b_{\mathrm{grp}}}\sum_{i=1}^n
\sqn{\nabla f_{i}(x^{t+1})-\nabla f_{i}(x^{t})-\nabla f(x^{t+1})+\nabla f(x^{t})}\\
&\quad+\frac{n-b_{\mathrm{grp}}}{bnm}\sum_{i=1}^n\sum_{j=1}^m
\sqn{\nabla f_{i,j}(x^{t+1})- \nabla f_{i}(x^{t+1})-\nabla f_{i,j}(x^{t})+\nabla f_{i}(x^{t})}.\notag
\end{align*}

There remains to simplify the quadratic term
\begin{align*}
&\sum_{i=1}^n\left\| g_i^{t}-\nabla f_i(x^{t})- \frac{n}{n-1}\left(\nabla f_{i}(x^{t+1})-\nabla f_{i}(x^{t})-\nabla f(x^{t+1})+\nabla f(x^{t}) \right) \right\|^2.
\end{align*}
Expanding the squared norms and collecting terms yields
\begin{align*}
&\sum_{i=1}^n\left\| g_i^{t}-\nabla f_i(x^{t})- \frac{n}{n-1}\left(\nabla f_{i}(x^{t+1})-\nabla f_{i}(x^{t})-\nabla f(x^{t+1})+\nabla f(x^{t}) \right) \right\|^2\\
&=\sum_{i=1}^n\sqn{g_i^{t}-\nabla f_i(x^{t})}
+\frac{n^2}{(n-1)^2}\sum_{i=1}^n\sqn{\nabla f_{i}(x^{t+1})-\nabla f_{i}(x^{t})-\nabla f(x^{t+1})+\nabla f(x^{t})}\\
&\quad-\frac{2n}{n-1}\sum_{i=1}^n\left\langle g_i^{t}-\nabla f_i(x^{t}),
\nabla f_{i}(x^{t+1})-\nabla f_{i}(x^{t})-\nabla f(x^{t+1})+\nabla f(x^{t})\right\rangle.
\end{align*}
Substituting this expansion back and regrouping the coefficients of identical terms, we arrive at
\begin{align*}
&\Expc{\sum_{i=1}^n \sqn{g_i^{t+1}-\nabla f_i(x^{t+1})}}\\
&=\frac{n-1}{n} \sum_{i=1}^n\sqn{g_i^t - \nabla f_i(x^{t})}\\
&\quad-\frac{2(n-b_{\mathrm{grp}})}{n-1}\sum_{i=1}^n\left\langle g_i^{t}-\nabla f_i(x^{t}),
\nabla f_{i}(x^{t+1})-\nabla f_{i}(x^{t})-\nabla f(x^{t+1})+\nabla f(x^{t})\right\rangle\\
&\quad+\frac{(n-b_{\mathrm{grp}})(b_{\mathrm{grp}}+1)}{b_{\mathrm{grp}}(n-1)}\sum_{i=1}^n
\sqn{\nabla f_{i}(x^{t+1})-\nabla f_{i}(x^{t})-\nabla f(x^{t+1})+\nabla f(x^{t})}\\
&\quad+\frac{n-2b_{\mathrm{grp}}+b_{\mathrm{grp}}^2}{bnm}\sum_{i=1}^n\sum_{j=1}^m
\sqn{\nabla f_{i,j}(x^{t+1})- \nabla f_{i}(x^{t+1})-\nabla f_{i,j}(x^{t})+\nabla f_{i}(x^{t})}.\notag
\end{align*}

We now control the cross term in the previous expression. Recall that for any vectors $u,v\in\R^d$ and any $\alpha>0$,
Young's inequality gives
\begin{align*}
-2\langle u,v\rangle \leq \frac{1}{\alpha}\sqn{u}+\alpha \sqn{v}.
\end{align*}
Choosing $\alpha=2n$ and applying this with
\begin{align*}
u \eqdef g_i^{t}-\nabla f_i(x^{t}),
\qquad
v \eqdef \nabla f_{i}(x^{t+1})-\nabla f_{i}(x^{t})-\nabla f(x^{t+1})+\nabla f(x^{t}),
\end{align*}
we obtain, for every $i\in[n]$,
\begin{align*}
&-2\left\langle g_i^{t}-\nabla f_i(x^{t}),\nabla f_{i}(x^{t+1})-\nabla f_{i}(x^{t})-\nabla f(x^{t+1})+\nabla f(x^{t})\right\rangle\\
&\leq \frac{1}{2n}\sqn{g_i^{t}-\nabla f_i(x^{t})}
+2n\sqn{\nabla f_{i}(x^{t+1})-\nabla f_{i}(x^{t})-\nabla f(x^{t+1})+\nabla f(x^{t})}.
\end{align*}
Substituting this bound into the identity obtained above for
$\Expc{\sum_{i=1}^n \sqn{g_i^{t+1}-\nabla f_i(x^{t+1})}}$, we get
\begin{align*}
&\Expc{\sum_{i=1}^n \sqn{g_i^{t+1}-\nabla f_i(x^{t+1})}}\\
&\leq \left(\frac{n-1}{n} +\frac{n-b_{\mathrm{grp}}}{2n(n-1)}\right)\sum_{i=1}^n\sqn{g_i^t - \nabla f_i(x^{t})}\\
&\quad+\left(\frac{(n-b_{\mathrm{grp}})(b_{\mathrm{grp}}+1)}{b_{\mathrm{grp}}(n-1)}+\frac{2n(n-b_{\mathrm{grp}})}{n-1}\right)
\sum_{i=1}^n\sqn{\nabla f_{i}(x^{t+1})-\nabla f_{i}(x^{t}) -\nabla f(x^{t+1})+\nabla f(x^{t})}\\
&\quad+\frac{n-2b_{\mathrm{grp}}+b_{\mathrm{grp}}^2}{bnm}\sum_{i=1}^n\sum_{j=1}^m
\sqn{\nabla f_{i,j}(x^{t+1})- \nabla f_{i}(x^{t+1})-\nabla f_{i,j}(x^{t})+\nabla f_{i}(x^{t})}.
\end{align*}
We now simplify each term. First, since $b_{\mathrm{grp}}\geq 1$, we have
\begin{align*}
\frac{n-1}{n} +\frac{n-b_{\mathrm{grp}}}{2n(n-1)} \leq \frac{n-1}{n}+\frac{1}{2n}=1-\frac{1}{2n}.
\end{align*}
Second, using $\frac{b_{\mathrm{grp}}+1}{b_{\mathrm{grp}}}\leq 2$ and factoring out $\frac{n-b_{\mathrm{grp}}}{n-1}$, we obtain
\begin{align*}
\frac{(n-b_{\mathrm{grp}})(b_{\mathrm{grp}}+1)}{b_{\mathrm{grp}}(n-1)}+\frac{2n(n-b_{\mathrm{grp}})}{n-1}
=\frac{n-b_{\mathrm{grp}}}{n-1}\left(\frac{b_{\mathrm{grp}}+1}{b_{\mathrm{grp}}}+2n\right)
\leq \frac{n-b_{\mathrm{grp}}}{n-1}\left(2+2n\right)
=\frac{2(n-b_{\mathrm{grp}})(n+1)}{n-1}.
\end{align*}
Finally, by the similarity assumptions (with parameters $\delta_1$ and $\delta_2$), we have
\begin{align*}
\sum_{i=1}^n\sqn{\nabla f_{i}(x^{t+1})-\nabla f_{i}(x^{t})-\nabla f(x^{t+1})+\nabla f(x^{t})}
&\leq n\,\delta_1^2\sqn{x^{t+1}-x^t},\\
\frac{1}{nm}\sum_{i=1}^n\sum_{j=1}^m
\sqn{\nabla f_{i,j}(x^{t+1})- \nabla f_{i}(x^{t+1})-\nabla f_{i,j}(x^{t})+\nabla f_{i}(x^{t})}
&\leq \delta_2^2\sqn{x^{t+1}-x^t}.
\end{align*}
Combining the previous inequalities yields
\begin{align*}
&\Expc{\sum_{i=1}^n \sqn{g_i^{t+1}-\nabla f_i(x^{t+1})}}\\
&\leq\left(1-\frac{1}{2n}\right)\sum_{i=1}^n\sqn{g_i^t - \nabla f_i(x^{t})}
+\frac{2n(n-b_{\mathrm{grp}})(n+1)}{n-1}\delta_1^2\sqn{x^{t+1}-x^t}
+\frac{n-2b_{\mathrm{grp}}+b_{\mathrm{grp}}^2}{b_{\mathrm{grp}}}\delta_2^2\sqn{x^{t+1}-x^t}.
\end{align*}

\medskip

We now study the averaged estimator $g^{t+1}\eqdef \frac{1}{n}\sum_{i=1}^n g_i^{t+1}$.
Using the update rules of Algorithm~\ref{algsilage2a} (anchor reset on $i^t$, recursive updates on $\Omega^t$,
and drift updates on $[n]\setminus\tilde{\Omega}^t$), we can write
\begin{align*}
g^{t+1}
&=\frac{1}{n}\nabla f_{i^t}(x^{t+1})
+\frac{1}{n}\sum_{i\in\Omega^t}\left(g_{i}^t+ \nabla f_{i,j_i^t}(x^{t+1})-\nabla f_{i,j_i^t}(x^{t})\right)
+\frac{1}{n}\sum_{i\in [n]\backslash\tilde{\Omega}^t}\left(g_{i}^t+ d^t\right)\\
&=g^t + \frac{1}{n}\left(\nabla f_{i^t}(x^{t+1})-g_{i^t}^t\right)
+\frac{1}{n}\sum_{i\in\Omega^t}\left( \nabla f_{i,j_i^t}(x^{t+1})-\nabla f_{i,j_i^t}(x^{t})\right)
+\frac{n-b_{\mathrm{grp}}}{n} d^t.
\end{align*}

Next, we incorporate the anchor index into the sum carefully. Recall that $\tilde{\Omega}^t=\Omega^t\cup\{i^t\}$ and define, for each $i\in\tilde{\Omega}^t$,
\begin{align*}
\Delta_i^t
&\eqdef
\nabla f_{i,j_i^t}(x^{t+1})-\nabla f_{i,j_i^t}(x^{t}).
\end{align*}
Since $d^t=\frac{1}{b_{\mathrm{grp}}}\sum_{i\in\tilde{\Omega}^t}\Delta_i^t$, we have
\begin{align*}
\frac{1}{n}\sum_{i\in\Omega^t}\Delta_i^t+\frac{n-b_{\mathrm{grp}}}{n}\,d^t
&=
\frac{1}{n}\left(\sum_{i\in\tilde{\Omega}^t}\Delta_i^t-\Delta_{i^t}^t\right)
+\frac{n-b_{\mathrm{grp}}}{n}\,d^t
\\
&=
\frac{b_{\mathrm{grp}}}{n}\,d^t-\frac{1}{n}\Delta_{i^t}^t+\frac{n-b_{\mathrm{grp}}}{n}\,d^t
\\
&=
d^t-\frac{1}{n}\Delta_{i^t}^t.
\end{align*}
Therefore, the exact pathwise identity is
\begin{align*}
g^{t+1}
&=
g^t
+\frac{1}{n}\left(\nabla f_{i^t}(x^{t+1})-g_{i^t}^t\right)
+d^t
-\frac{1}{n}\Delta_{i^t}^t
\\
&=
g^t
+\frac{1}{n}\left(\nabla f_{i^t}(x^{t})-g_{i^t}^t\right)
+d^t+\eta^t,
\end{align*}
where
\begin{align*}
\eta^t
&\eqdef
\frac{1}{n}\left[
\nabla f_{i^t}(x^{t+1})-\nabla f_{i^t}(x^t)-\Delta_{i^t}^t
\right].
\end{align*}
Conditionally on $\mathcal{F}^t$ and $i^t$, the index $j_{i^t}^t$ is uniform on $[m]$, and hence
\begin{align*}
\Expb{\Delta_{i^t}^t}{\mathcal{F}^t,i^t}
&=
\nabla f_{i^t}(x^{t+1})-\nabla f_{i^t}(x^t).
\end{align*}
Thus,
\begin{align*}
\Expb{\eta^t}{\mathcal{F}^t,i^t}
&=0,
&
\Expc{\eta^t}
&=0.
\end{align*}
Taking conditional expectation in the corrected pathwise identity gives
\begin{align*}
\Expc{g^{t+1}-\nabla f(x^{t+1})}
&=
g^t-\nabla f(x^{t+1})
+\frac{1}{n}\Expb{\nabla f_{i^t}(x^t)-g_{i^t}^t}{\mathcal{F}^t}
+\Expc{d^t}
+\Expc{\eta^t}.
\end{align*}
Since $i^t$ is uniform on $[n]$ and independent of $\mathcal{F}^t$, we have
\begin{align*}
\Expb{\nabla f_{i^t}(x^t)}{\mathcal{F}^t}
&=
\nabla f(x^t),
&
\Expb{g_{i^t}^t}{\mathcal{F}^t}
&=
g^t.
\end{align*}
Moreover, each $\Delta_i^t$ is an unbiased estimator of $\nabla f_i(x^{t+1})-\nabla f_i(x^t)$, and $\tilde{\Omega}^t$ is uniform over size-$b_{\mathrm{grp}}$ subsets of $[n]$, hence
\begin{align*}
\Expc{d^t}
&=
\nabla f(x^{t+1})-\nabla f(x^t).
\end{align*}
Using also $\Expc{\eta^t}=0$, we obtain
\begin{align*}
\Expc{g^{t+1}-\nabla f(x^{t+1})}
&=
g^t-\nabla f(x^{t+1})
+\frac{1}{n}\left(\nabla f(x^t)-g^t\right)
+\nabla f(x^{t+1})-\nabla f(x^t)
\\
&=
\frac{n-1}{n}\left(g^t-\nabla f(x^t)\right).
\end{align*}

We now turn to the conditional second moment. Using the variance decomposition
$\Expc{\sqn{Z}}=\sqn{\Expc{Z}}+\Expc{\sqn{Z-\Expc{Z}}}$ with
$Z\eqdef g^{t+1}-\nabla f(x^{t+1})$, we obtain
\begin{align*}
\Expc{ \sqn{g^{t+1}-\nabla f(x^{t+1})}}
&=\frac{(n-1)^2}{n^2} \sqn{g^t-\nabla f(x^{t})}
+\Expc{\sqn{g^{t+1}-\nabla f(x^{t+1})-\frac{n-1}{n}\left(g^t-\nabla f(x^{t})\right)}}.
\end{align*}

We now simplify the expansion of the centered term 
\begin{align*}
g^{t+1}
&=g^t+\frac{1}{n}\left(\nabla f_{i^t}(x^{t+1})-g_{i^t}^t\right)
+\frac{1}{n}\sum_{i\in\Omega^t}\left(\nabla f_{i,j_i^t}(x^{t+1})-\nabla f_{i,j_i^t}(x^{t})\right)
+\frac{n-b_{\mathrm{grp}}}{n}\,d^t,
\end{align*}
where
\begin{align*}
d^t=\frac{1}{b_{\mathrm{grp}}}\sum_{i\in\tilde{\Omega}^t}\Big(\nabla f_{i,j_i^t}(x^{t+1})-\nabla f_{i,j_i^t}(x^{t})\Big),
\qquad \tilde{\Omega}^t\eqdef \Omega^t\cup\{i^t\}.
\end{align*}
Starting from
\begin{align*}
g^{t+1}-\nabla f(x^{t+1})-\frac{n-1}{n}\left(g^t-\nabla f(x^{t})\right),
\end{align*}
we substitute the above expression for $g^{t+1}$ and regroup terms:
\begin{align*}
&g^{t+1}-\nabla f(x^{t+1})-\frac{n-1}{n}\left(g^t-\nabla f(x^{t})\right)\\
&=\left(g^t-\nabla f(x^{t+1})\right)-\frac{n-1}{n}g^t+\frac{n-1}{n}\nabla f(x^{t})
+\frac{1}{n}\left(\nabla f_{i^t}(x^{t+1})-g_{i^t}^t\right)\\
&\quad+\frac{1}{n}\sum_{i\in\Omega^t}\left(\nabla f_{i,j_i^t}(x^{t+1})-\nabla f_{i,j_i^t}(x^{t})\right)
+\frac{n-b_{\mathrm{grp}}}{n}\,d^t\\
&=\frac{1}{n}\left(\nabla f_{i^t}(x^{t+1})-g_{i^t}^t+g^t-\nabla f(x^{t+1})\right)\\
&\quad+\frac{1}{n}\sum_{i\in\Omega^t}\left(\nabla f_{i,j_i^t}(x^{t+1})-\nabla f_{i,j_i^t}(x^{t})\right)
+\frac{n-b_{\mathrm{grp}}}{n}\,d^t
+\frac{n-1}{n}\left(\nabla f(x^{t})-\nabla f(x^{t+1})\right).
\end{align*}
We now distribute the deterministic drift term $\frac{n-1}{n}\left(\nabla f(x^{t})-\nabla f(x^{t+1})\right)$ across the
$\Omega^t$-sum and the drift part. Using $|\Omega^t|=b_{\mathrm{grp}}-1$ and $|\tilde{\Omega}^t|=b_{\mathrm{grp}}$, we have the identity
\begin{align*}
\frac{n-1}{n}\left(\nabla f(x^{t})-\nabla f(x^{t+1})\right)
=\frac{1}{n}\sum_{i\in\Omega^t}\left(\nabla f(x^{t})-\nabla f(x^{t+1})\right)
+\frac{n-b_{\mathrm{grp}}}{n}\left(\nabla f(x^{t})-\nabla f(x^{t+1})\right).
\end{align*}
Substituting this into the previous display yields
\begin{align*}
&g^{t+1}-\nabla f(x^{t+1})-\frac{n-1}{n}\left(g^t-\nabla f(x^{t})\right)\\
&=\frac{1}{n}\left(\nabla f_{i^t}(x^{t+1})-\nabla f(x^{t+1})-g_{i^t}^t+g^t\right)\\
&\quad+\frac{1}{n}\sum_{i\in\Omega^t}\left(\nabla f_{i,j_i^t}(x^{t+1})-\nabla f(x^{t+1})-\nabla f_{i,j_i^t}(x^{t})+\nabla f(x^{t})\right)\\
&\quad+\frac{n-b_{\mathrm{grp}}}{n}\left(d^t-\big(\nabla f(x^{t+1})-\nabla f(x^{t})\big)\right).
\end{align*}
Finally, we expand the last difference using the definition of $d^t$:
\begin{align*}
d^t-\big(\nabla f(x^{t+1})-\nabla f(x^{t})\big)
&=\frac{1}{b_{\mathrm{grp}}}\sum_{i\in\tilde{\Omega}^t}\Big(\nabla f_{i,j_i^t}(x^{t+1})-\nabla f_{i,j_i^t}(x^{t})\Big)
-\big(\nabla f(x^{t+1})-\nabla f(x^{t})\big)\\
&=\frac{1}{b_{\mathrm{grp}}}\sum_{i\in\tilde{\Omega}^t}\Big(\nabla f_{i,j_i^t}(x^{t+1})-\nabla f(x^{t+1})-\nabla f_{i,j_i^t}(x^{t})+\nabla f(x^{t})\Big).
\end{align*}
Substituting this back gives the desired decomposition:
\begin{align*}
&g^{t+1}-\nabla f(x^{t+1})-\frac{n-1}{n}\left(g^t-\nabla f(x^{t})\right)\\
&=\frac{1}{n}\left(\nabla f_{i^t}(x^{t+1})-\nabla f(x^{t+1})-g_{i^t}^t+g^t\right)\\
&\quad+\frac{1}{n}\sum_{i\in\Omega^t}\left(\nabla f_{i,j_i^t}(x^{t+1})-\nabla f(x^{t+1})-\nabla f_{i,j_i^t}(x^{t})+\nabla f(x^{t})\right)\\
&\quad+\frac{n-b_{\mathrm{grp}}}{b_{\mathrm{grp}}n}\sum_{i\in \tilde{\Omega}^t}\left( \nabla f_{i,j_i^t}(x^{t+1})-\nabla f(x^{t+1})-\nabla f_{i,j_i^t}(x^{t})+\nabla f(x^{t})\right).
\end{align*}
Consequently,
\begin{align*}
&\sqn{g^{t+1}-\nabla f(x^{t+1})-\frac{n-1}{n}\left(g^t-\nabla f(x^{t})\right)}\\
&=\left\|\frac{1}{n}\left(\nabla f_{i^t}(x^{t+1})-\nabla f(x^{t+1})-g_{i^t}^t+g^t\right)\right.\\
&\quad+\frac{1}{n}\sum_{i\in\Omega^t}\left( \nabla f_{i,j_i^t}(x^{t+1})-\nabla f(x^{t+1})-\nabla f_{i,j_i^t}(x^{t})+\nabla f(x^{t})\right)\\
&\quad\left.{}+\frac{n-b_{\mathrm{grp}}}{b_{\mathrm{grp}}n} \sum_{i\in \tilde{\Omega}^t}\left( \nabla f_{i,j_i^t}(x^{t+1})-\nabla f(x^{t+1})-\nabla f_{i,j_i^t}(x^{t})+\nabla f(x^{t})\right)\right\|^2.
\end{align*}

Therefore, we arrive at
\begin{align}\label{eq:internal}
&\Expc{ \sqn{g^{t+1}-\nabla f(x^{t+1})}} \notag\\
&=\frac{(n-1)^2}{n^2} \sqn{g^t-\nabla f(x^{t})}
+\mathbb{E}\!\left[\left\|\frac{1}{n}\left(\nabla f_{i^t}(x^{t+1})-\nabla f(x^{t+1})-g_{i^t}^t+g^t\right)\right.\right. \notag\\
&\quad+\frac{1}{n}\sum_{i\in\Omega^t}\left( \nabla f_{i,j_i^t}(x^{t+1})-\nabla f(x^{t+1})-\nabla f_{i,j_i^t}(x^{t})+\nabla f(x^{t})\right) \notag\\
&\quad\left.\left.\left.\!\!\!{}+\frac{n-b_{\mathrm{grp}}}{b_{\mathrm{grp}}n} \sum_{i\in \tilde{\Omega}^t}\left( \nabla f_{i,j_i^t}(x^{t+1})-\nabla f(x^{t+1})-\nabla f_{i,j_i^t}(x^{t})+\nabla f(x^{t})\right)\right\|^2\,\right|\mathcal{F}^t\right].
\end{align}

We now show explicitly how to pass from \eqref{eq:internal}
to the representation where the ``outer'' deviations are grouped inside one squared norm and the ``inner'' centered noises are separated.

For each $i\in[n]$ and $j\in[m]$, define the centered inner-noise increment
\begin{align*}
\widetilde{\Delta}_i^t(j)\eqdef \nabla f_{i,j}(x^{t+1})-\nabla f_i(x^{t+1})-\nabla f_{i,j}(x^{t})+\nabla f_i(x^{t}),
\end{align*}
and define the outer deviation increment
\begin{align*}
\bar{\Delta}_i^t \eqdef \nabla f_i(x^{t+1})-\nabla f(x^{t+1})-\nabla f_i(x^{t})+\nabla f(x^{t}).
\end{align*}
Then, for every $i\in[n]$ and every realization of $j_i^t$, we have the identity
\begin{align}
\label{eq:split_inner_outer}
\nabla f_{i,j_i^t}(x^{t+1})-\nabla f(x^{t+1})-\nabla f_{i,j_i^t}(x^{t})+\nabla f(x^{t})
=\bar{\Delta}_i^t+\widetilde{\Delta}_i^t(j_i^t).
\end{align}

We now rewrite the vector inside the conditional squared norm by using~\eqref{eq:split_inner_outer} term-by-term.
Set
\begin{align*}
A^t &\eqdef \frac{1}{n}\left(\nabla f_{i^t}(x^{t+1})-\nabla f(x^{t+1})-g_{i^t}^t+g^t\right),\\
B^t &\eqdef \frac{1}{n}\sum_{i\in\Omega^t}\left( \nabla f_{i,j_i^t}(x^{t+1})-\nabla f(x^{t+1})-\nabla f_{i,j_i^t}(x^{t})+\nabla f(x^{t})\right),\\
C^t &\eqdef \frac{n-b_{\mathrm{grp}}}{b_{\mathrm{grp}}n} \sum_{i\in \tilde{\Omega}^t}\left( \nabla f_{i,j_i^t}(x^{t+1})-\nabla f(x^{t+1})-\nabla f_{i,j_i^t}(x^{t})+\nabla f(x^{t})\right).
\end{align*}
Then the conditional squared norm above is $\sqn{A^t+B^t+C^t}$.
Using~\eqref{eq:split_inner_outer} in $B^t$ and $C^t$ yields
\begin{align}
B^t
&=\frac{1}{n}\sum_{i\in\Omega^t}\bar{\Delta}_i^t
+\frac{1}{n}\sum_{i\in\Omega^t}\widetilde{\Delta}_i^t(j_i^t),
\label{eq:B-split}\\
C^t
&=\frac{n-b_{\mathrm{grp}}}{b_{\mathrm{grp}}n}\sum_{i\in\tilde{\Omega}^t}\bar{\Delta}_i^t
+\frac{n-b_{\mathrm{grp}}}{b_{\mathrm{grp}}n}\sum_{i\in\tilde{\Omega}^t}\widetilde{\Delta}_i^t(j_i^t).
\label{eq:C-split}
\end{align}
Moreover, in $A^t$ we add and subtract $\nabla f_{i^t}(x^t)-\nabla f(x^t)$:
\begin{align}
A^t
&=\frac{1}{n}\left(\nabla f_{i^t}(x^{t})-\nabla f(x^{t})-g_{i^t}^t+g^t\right)
+\frac{1}{n}\left(\nabla f_{i^t}(x^{t+1})-\nabla f_{i^t}(x^{t})-\nabla f(x^{t+1})+\nabla f(x^{t})\right)\notag\\
&=\frac{1}{n}\left(\nabla f_{i^t}(x^{t})-\nabla f(x^{t})-g_{i^t}^t+g^t\right)
+\frac{1}{n}\bar{\Delta}_{i^t}^t.
\label{eq:A-split}
\end{align}
Now combine~\eqref{eq:B-split},~\eqref{eq:C-split}, and~\eqref{eq:A-split}:
\begin{align*}
A^t+B^t+C^t
&=\frac{1}{n}\left(\nabla f_{i^t}(x^{t})-\nabla f(x^{t})-g_{i^t}^t+g^t\right)
+\left(\frac{1}{n}\bar{\Delta}_{i^t}^t+\frac{1}{n}\sum_{i\in\Omega^t}\bar{\Delta}_i^t+\frac{n-b_{\mathrm{grp}}}{b_{\mathrm{grp}}n}\sum_{i\in\tilde{\Omega}^t}\bar{\Delta}_i^t\right)\\
&\quad+\frac{1}{n}\sum_{i\in\Omega^t}\widetilde{\Delta}_i^t(j_i^t)
+\frac{n-b_{\mathrm{grp}}}{b_{\mathrm{grp}}n}\sum_{i\in\tilde{\Omega}^t}\widetilde{\Delta}_i^t(j_i^t).
\end{align*}
Since $\tilde{\Omega}^t=\Omega^t\cup\{i^t\}$ and $|\tilde{\Omega}^t|=b_{\mathrm{grp}}$, we have
\begin{align*}
\frac{1}{n}\bar{\Delta}_{i^t}^t+\frac{1}{n}\sum_{i\in\Omega^t}\bar{\Delta}_i^t
=\frac{1}{n}\sum_{i\in\tilde{\Omega}^t}\bar{\Delta}_i^t
=\frac{b_{\mathrm{grp}}}{n}\cdot \frac{1}{b_{\mathrm{grp}}}\sum_{i\in\tilde{\Omega}^t}\bar{\Delta}_i^t,
\end{align*}
and therefore
\begin{align}
\frac{1}{n}\bar{\Delta}_{i^t}^t+\frac{1}{n}\sum_{i\in\Omega^t}\bar{\Delta}_i^t+\frac{n-b_{\mathrm{grp}}}{b_{\mathrm{grp}}n}\sum_{i\in\tilde{\Omega}^t}\bar{\Delta}_i^t
&=\left(\frac{b_{\mathrm{grp}}}{n}+\frac{n-b_{\mathrm{grp}}}{n}\right)\frac{1}{b_{\mathrm{grp}}}\sum_{i\in\tilde{\Omega}^t}\bar{\Delta}_i^t \notag\\
&=\frac{1}{b_{\mathrm{grp}}}\sum_{i\in\tilde{\Omega}^t}\bar{\Delta}_i^t.
\label{eq:outer-combine}
\end{align}
Similarly, using again $\tilde{\Omega}^t=\Omega^t\cup\{i^t\}$, we can split the second noise sum as
\begin{align}
\label{eq:noise-split}
\sum_{i\in\tilde{\Omega}^t}\widetilde{\Delta}_i^t(j_i^t)
=\sum_{i\in\Omega^t}\widetilde{\Delta}_i^t(j_i^t)+\widetilde{\Delta}_{i^t}^t(j_{i^t}^t),
\end{align}
and hence
\begin{align}
\frac{1}{n}\sum_{i\in\Omega^t}\widetilde{\Delta}_i^t(j_i^t)+\frac{n-b_{\mathrm{grp}}}{b_{\mathrm{grp}}n}\sum_{i\in\tilde{\Omega}^t}\widetilde{\Delta}_i^t(j_i^t)
&=\left(\frac{1}{n}+\frac{n-b_{\mathrm{grp}}}{b_{\mathrm{grp}}n}\right)\sum_{i\in\Omega^t}\widetilde{\Delta}_i^t(j_i^t)
+\frac{n-b_{\mathrm{grp}}}{b_{\mathrm{grp}}n}\widetilde{\Delta}_{i^t}^t(j_{i^t}^t)\notag\\
&=\frac{1}{b_{\mathrm{grp}}}\sum_{i\in\Omega^t}\widetilde{\Delta}_i^t(j_i^t)
+\frac{n-b_{\mathrm{grp}}}{b_{\mathrm{grp}}n}\widetilde{\Delta}_{i^t}^t(j_{i^t}^t).
\label{eq:noise-combine}
\end{align}

Substituting~\eqref{eq:outer-combine} and~\eqref{eq:noise-combine} back, we obtain the exact decomposition
\begin{align}
\label{eq:ABC-decomp}
A^t+B^t+C^t
&=\underbrace{\frac{1}{n}\left(\nabla f_{i^t}(x^{t})-\nabla f(x^{t})-g_{i^t}^t+g^t\right)
+\frac{1}{b_{\mathrm{grp}}}\sum_{i\in\tilde{\Omega}^t}\bar{\Delta}_i^t}_{\text{outer part}} \notag\\
&\quad +\underbrace{\frac{1}{b_{\mathrm{grp}}}\sum_{i\in\Omega^t}\widetilde{\Delta}_i^t(j_i^t)
+\frac{n-b_{\mathrm{grp}}}{b_{\mathrm{grp}}n}\widetilde{\Delta}_{i^t}^t(j_{i^t}^t)}_{\text{inner noise}}.
\end{align}

We now expand the conditional second moment with all cross terms and show explicitly why they vanish.
Define the ``outer part'' and the ``inner noise'' as in~\eqref{eq:ABC-decomp}:
\begin{align*}
R^t &\eqdef \frac{1}{n}\left(\nabla f_{i^t}(x^{t})-\nabla f(x^{t})-g_{i^t}^t+g^t\right)
+\frac{1}{b_{\mathrm{grp}}}\sum_{i\in\tilde{\Omega}^t}\bar{\Delta}_i^t,\\
N^t &\eqdef \frac{1}{b_{\mathrm{grp}}}\sum_{i\in\Omega^t}\widetilde{\Delta}_i^t(j_i^t)
+\frac{n-b_{\mathrm{grp}}}{b_{\mathrm{grp}}n}\widetilde{\Delta}_{i^t}^t(j_{i^t}^t),
\end{align*}
so that $A^t+B^t+C^t = R^t+N^t$.
Recall that, conditional on $\mathcal{F}^t$ and on the sampled sets $(i^t,\Omega^t)$ (equivalently $\tilde{\Omega}^t$),
the vectors $R^t$ and $\bar{\Delta}_i^t$ are deterministic, while the randomness in $N^t$ comes only from the indices
$\{j_i^t\}_{i\in\Omega^t}$ and $j_{i^t}^t$.

\textbf{Step 1: expand $\|R^t+N^t\|^2$ and drop the outer--inner cross term.}

We first expand the square:
\begin{align*}
\Expc{\sqn{A^t+B^t+C^t}}
&=\Expc{\sqn{R^t+N^t}} \\
&=\Expc{\sqn{R^t}} + 2\,\Expc{\inner{R^t}{N^t}} + \Expc{\sqn{N^t}}.
\end{align*}
We show that the cross term is zero. Using the tower property with conditioning on $(\mathcal{F}^t,\tilde{\Omega}^t)$,
\begin{align*}
\Expc{\inner{R^t}{N^t}}
&=\Expb{\Expb{\inner{R^t}{N^t}}{\mathcal{F}^t,\tilde{\Omega}^t}}{\mathcal{F}^t}.
\end{align*}
Conditional on $(\mathcal{F}^t,\tilde{\Omega}^t)$, the vector $R^t$ is deterministic and thus can be pulled out:
\begin{align*}
\Expb{\inner{R^t}{N^t}}{\mathcal{F}^t,\tilde{\Omega}^t}
&=\inner{R^t}{\Expb{N^t}{\mathcal{F}^t,\tilde{\Omega}^t}}.
\end{align*}
Moreover, for each $i\in\tilde{\Omega}^t$, the index $j_i^t$ is uniform on $[m]$, hence
\begin{align*}
\Expb{\widetilde{\Delta}_i^t(j_i^t)}{\mathcal{F}^t,\tilde{\Omega}^t}=0.
\end{align*}
Therefore,
\begin{align*}
\Expb{N^t}{\mathcal{F}^t,\tilde{\Omega}^t}
&=\frac{1}{b_{\mathrm{grp}}}\sum_{i\in\Omega^t}\Expb{\widetilde{\Delta}_i^t(j_i^t)}{\mathcal{F}^t,\tilde{\Omega}^t}
+\frac{n-b_{\mathrm{grp}}}{b_{\mathrm{grp}}n}\Expb{\widetilde{\Delta}_{i^t}^t(j_{i^t}^t)}{\mathcal{F}^t,\tilde{\Omega}^t}
=0,
\end{align*}
and hence
\begin{align*}
\underbrace{\Expb{\inner{R^t}{N^t}}{\mathcal{F}^t,\tilde{\Omega}^t}}_{=\,0}=0
\qquad\Longrightarrow\qquad
\underbrace{\Expc{\inner{R^t}{N^t}}}_{=\,0}=0.
\end{align*}
Plugging this into the square expansion yields
\begin{align*}
\Expc{\sqn{R^t+N^t}}=\Expc{\sqn{R^t}}+\Expc{\sqn{N^t}}.
\end{align*}

\textbf{Step 2: expand $\|N^t\|^2$ and drop all inner-noise cross terms.}

Write $N^t$ as the sum of two parts,
\begin{align*}
N^t = N_1^t + N_2^t,
\qquad
N_1^t \eqdef \frac{1}{b_{\mathrm{grp}}}\sum_{i\in\Omega^t}\widetilde{\Delta}_i^t(j_i^t),
\qquad
N_2^t \eqdef \frac{n-b_{\mathrm{grp}}}{b_{\mathrm{grp}}n}\widetilde{\Delta}_{i^t}^t(j_{i^t}^t).
\end{align*}
Then
\begin{align*}
\Expc{\sqn{N^t}}
&=\Expc{\sqn{N_1^t+N_2^t}} \\
&=\Expc{\sqn{N_1^t}} + 2\,\Expc{\inner{N_1^t}{N_2^t}} + \Expc{\sqn{N_2^t}}.
\end{align*}
We show that $\Expc{\inner{N_1^t}{N_2^t}}=0$. Conditional on $(\mathcal{F}^t,\tilde{\Omega}^t)$,
the random variables $\{j_i^t\}_{i\in\Omega^t}$ are independent of $j_{i^t}^t$, and the vectors
$\widetilde{\Delta}_i^t(j_i^t)$ have zero conditional mean. Hence,
\begin{align*}
\Expb{\inner{N_1^t}{N_2^t}}{\mathcal{F}^t,\tilde{\Omega}^t}
&=\frac{n-b_{\mathrm{grp}}}{b_{\mathrm{grp}}^2n}\sum_{i\in\Omega^t}
\Expb{\inner{\widetilde{\Delta}_i^t(j_i^t)}{\widetilde{\Delta}_{i^t}^t(j_{i^t}^t)}}{\mathcal{F}^t,\tilde{\Omega}^t}\\
&=\frac{n-b_{\mathrm{grp}}}{b_{\mathrm{grp}}^2n}\sum_{i\in\Omega^t}
\inner{\Expb{\widetilde{\Delta}_i^t(j_i^t)}{\mathcal{F}^t,\tilde{\Omega}^t}}{\Expb{\widetilde{\Delta}_{i^t}^t(j_{i^t}^t)}{\mathcal{F}^t,\tilde{\Omega}^t}}\\
&=\frac{n-b_{\mathrm{grp}}}{b_{\mathrm{grp}}^2n}\sum_{i\in\Omega^t}\inner{0}{0}=0,
\end{align*}
and therefore
\begin{align*}
\underbrace{\Expc{\inner{N_1^t}{N_2^t}}}_{=\,0}=0.
\end{align*}
It remains to compute $\Expc{\sqn{N_1^t}}$. Expanding the square gives
\begin{align*}
\sqn{N_1^t}
&=\sqn{\frac{1}{b_{\mathrm{grp}}}\sum_{i\in\Omega^t}\widetilde{\Delta}_i^t(j_i^t)} \\
&=\frac{1}{b_{\mathrm{grp}}^2}\sum_{i\in\Omega^t}\sqn{\widetilde{\Delta}_i^t(j_i^t)}
+\frac{2}{b_{\mathrm{grp}}^2}\sum_{\substack{i,k\in\Omega^t\\ i<k}}
\inner{\widetilde{\Delta}_i^t(j_i^t)}{\widetilde{\Delta}_k^t(j_k^t)}.
\end{align*}
Conditional on $(\mathcal{F}^t,\tilde{\Omega}^t)$, the random indices $(j_i^t)_{i\in\Omega^t}$ are independent,
and each $\widetilde{\Delta}_i^t(j_i^t)$ has zero conditional mean. Hence for $i\neq k$,
\begin{align*}
\Expb{\inner{\widetilde{\Delta}_i^t(j_i^t)}{\widetilde{\Delta}_k^t(j_k^t)}}{\mathcal{F}^t,\tilde{\Omega}^t}
&=\inner{\Expb{\widetilde{\Delta}_i^t(j_i^t)}{\mathcal{F}^t,\tilde{\Omega}^t}}
{\Expb{\widetilde{\Delta}_k^t(j_k^t)}{\mathcal{F}^t,\tilde{\Omega}^t}}
=\inner{0}{0}=0.
\end{align*}
Therefore all cross terms vanish in expectation:
\begin{align*}
\underbrace{\Expc{\sum_{\substack{i,k\in\Omega^t\\ i<k}}
\inner{\widetilde{\Delta}_i^t(j_i^t)}{\widetilde{\Delta}_k^t(j_k^t)}}}_{=\,0}=0,
\end{align*}
and we obtain
\begin{align*}
\Expc{\sqn{N_1^t}}
=\frac{1}{b_{\mathrm{grp}}^2}\Expc{\sum_{i\in\Omega^t}\sqn{\widetilde{\Delta}_i^t(j_i^t)}}.
\end{align*}
Finally, $\sqn{N_2^t}=\frac{(n-b_{\mathrm{grp}})^2}{b_{\mathrm{grp}}^2n^2}\sqn{\widetilde{\Delta}_{i^t}^t(j_{i^t}^t)}$, so
\begin{align*}
\Expc{\sqn{N_2^t}}
=\frac{(n-b_{\mathrm{grp}})^2}{b_{\mathrm{grp}}^2n^2}\Expc{\sqn{\widetilde{\Delta}_{i^t}^t(j_{i^t}^t)}}.
\end{align*}
Combining the last displays yields
\begin{align*}
\Expc{\sqn{N^t}}
=\frac{1}{b_{\mathrm{grp}}^2}\Expc{\sum_{i\in\Omega^t}\sqn{\widetilde{\Delta}_i^t(j_i^t)}}
+\frac{(n-b_{\mathrm{grp}})^2}{b_{\mathrm{grp}}^2n^2}\Expc{\sqn{\widetilde{\Delta}_{i^t}^t(j_{i^t}^t)}}.
\end{align*}

\textbf{Conclusion.}

Putting Step~1 and Step~2 together, we obtain
\begin{align*}
\Expc{\sqn{A^t+B^t+C^t}}
&=\Expc{\sqn{R^t}}
+\frac{1}{b_{\mathrm{grp}}^2}\Expc{\sum_{i\in\Omega^t}\sqn{\widetilde{\Delta}_i^t(j_i^t)}}\\
&\quad+\frac{(n-b_{\mathrm{grp}})^2}{b_{\mathrm{grp}}^2n^2}\Expc{\sqn{\widetilde{\Delta}_{i^t}^t(j_{i^t}^t)}},
\end{align*}
i.e.,
\begin{align*}
&\Expc{\sqn{A^t+B^t+C^t}} \\
&=\Expc{\left\|\frac{1}{n}\left(\nabla f_{i^t}(x^{t})-\nabla f(x^{t})-g_{i^t}^t+g^t\right)
+\frac{1}{b_{\mathrm{grp}}}\sum_{i\in\tilde{\Omega}^t}\bar{\Delta}_i^t\right\|^2}\\
&\quad+\frac{1}{b_{\mathrm{grp}}^2}\Expc{\sum_{i\in\Omega^t}\sqn{\widetilde{\Delta}_i^t(j_i^t)}}
+\frac{(n-b_{\mathrm{grp}})^2}{b_{\mathrm{grp}}^2n^2}\Expc{\sqn{\widetilde{\Delta}_{i^t}^t(j_{i^t}^t)}}.
\end{align*}

Finally, substituting back $\bar{\Delta}_i^t$ and $\widetilde{\Delta}_i^t(j)$ gives exactly
\begin{align*}
&\Expc{ \sqn{g^{t+1}-\nabla f(x^{t+1})}}\\
&=\frac{(n-1)^2}{n^2} \sqn{g^t-\nabla f(x^{t})}
+\mathbb{E}\!\left[\left\|\frac{1}{n}\left(\nabla f_{i^t}(x^{t})-\nabla f(x^{t})-g_{i^t}^t+g^t\right)\right.\right.\\
&\quad\left.\left.\left.\!\!\!{}+\frac{1}{b_{\mathrm{grp}}} \sum_{i\in \tilde{\Omega}^t}\left( \nabla f_{i}(x^{t+1})-\nabla f(x^{t+1})-\nabla f_{i}(x^{t})+\nabla f(x^{t})\right)\right\|^2\,\right|\mathcal{F}^t\right]\\
&\quad+\frac{1}{b_{\mathrm{grp}}^2}\Expc{ \sum_{i\in \Omega^t}\sqn{ \nabla f_{i,j_i^t}(x^{t+1})-\nabla f_i(x^{t+1})-\nabla f_{i,j_i^t}(x^{t})+\nabla f_i(x^t)}}\\
&\quad+\frac{(n-b_{\mathrm{grp}})^2}{b_{\mathrm{grp}}^2n^2}\Expc{ \sqn{ \nabla f_{i^t,j_{i^t}^t}(x^{t+1})-\nabla f_{i^t}(x^{t+1})-\nabla f_{i^t,j_{i^t}^t}(x^{t})+\nabla f_{i^t}(x^t)}}.
\end{align*}

We now bound the last two noise terms.
For $i\in[n]$ and $j\in[m]$, recall the shorthand
\begin{align*}
\widetilde{\Delta}_i^t(j)\eqdef \nabla f_{i,j}(x^{t+1})-\nabla f_i(x^{t+1})-\nabla f_{i,j}(x^{t})+\nabla f_i(x^{t}).
\end{align*}

\textbf{Noise term over $\Omega^t$.}

Using the tower property and conditioning on $(i^t,\Omega^t)$, we obtain
\begin{align*}
&\frac{1}{b_{\mathrm{grp}}^2}\Expc{ \sum_{i\in \Omega^t}\sqn{ \nabla f_{i,j_i^t}(x^{t+1})-\nabla f_i(x^{t+1})-\nabla f_{i,j_i^t}(x^{t})+\nabla f_i(x^t)}}\\
&=\frac{1}{b_{\mathrm{grp}}^2}\Expc{\sum_{i\in\Omega^t}\sqn{\widetilde{\Delta}_i^t(j_i^t)}}\\
&\eqtext{(a)}\frac{1}{b_{\mathrm{grp}}^2}\Expb{\Expb{\sum_{i\in\Omega^t}\sqn{\widetilde{\Delta}_i^t(j_i^t)}}{\mathcal{F}^t,i^t,\Omega^t}}{\mathcal{F}^t}\\
&\eqtext{(b)}\frac{1}{b_{\mathrm{grp}}^2}\Expb{\sum_{i\in\Omega^t}\Expb{\sqn{\widetilde{\Delta}_i^t(j_i^t)}}{\mathcal{F}^t,i^t,\Omega^t}}{\mathcal{F}^t}\\
&\eqtext{(c)}\frac{1}{b_{\mathrm{grp}}^2}\Expb{\sum_{i\in\Omega^t}\frac{1}{m}\sum_{j=1}^m \sqn{\widetilde{\Delta}_i^t(j)}}{\mathcal{F}^t}\\
&\eqtext{(d)}\frac{1}{b_{\mathrm{grp}}^2}\sum_{i=1}^n \Prob(i\in\Omega^t)\cdot \frac{1}{m}\sum_{j=1}^m \sqn{\widetilde{\Delta}_i^t(j)}\\
&\eqtext{(e)}\frac{b_{\mathrm{grp}}-1}{b_{\mathrm{grp}}^2nm}\sum_{i=1}^n\sum_{j=1}^m \sqn{\widetilde{\Delta}_i^t(j)}\\
&=\frac{b_{\mathrm{grp}}-1}{b_{\mathrm{grp}}^2nm}\sum_{i=1}^n\sum_{j=1}^m
\sqn{\nabla f_{i,j}(x^{t+1})-\nabla f_i(x^{t+1})-\nabla f_{i,j}(x^{t})+\nabla f_i(x^{t})}.
\end{align*}
Here, (a)--(b) use the tower property and linearity of conditional expectation, (c) uses that conditional on
$(\mathcal{F}^t,i^t,\Omega^t)$ each $j_i^t$ is uniform on $[m]$, (d) uses
$\Expb{\sum_{i\in\Omega^t}a_i}{\mathcal{F}^t}=\sum_{i=1}^n a_i\,\Prob(i\in\Omega^t)$ for $\mathcal{F}^t$-measurable $(a_i)$,
and (e) uses $\Prob(i\in\Omega^t)=\frac{b_{\mathrm{grp}}-1}{n}$.

\textbf{Noise term for the anchor index $i^t$.}

Similarly, by conditioning first on $i^t$ and then on $j_{i^t}^t$, we obtain
\begin{align*}
&\frac{(n-b_{\mathrm{grp}})^2}{b_{\mathrm{grp}}^2n^2}\Expc{ \sqn{ \nabla f_{i^t,j_{i^t}^t}(x^{t+1})-\nabla f_{i^t}(x^{t+1})-\nabla f_{i^t,j_{i^t}^t}(x^{t})+\nabla f_{i^t}(x^t)}}\\
&=\frac{(n-b_{\mathrm{grp}})^2}{b_{\mathrm{grp}}^2n^2}\Expc{\sqn{\widetilde{\Delta}_{i^t}^t(j_{i^t}^t)}}\\
&\eqtext{(a)}\frac{(n-b_{\mathrm{grp}})^2}{b_{\mathrm{grp}}^2n^2}\Expb{\Expb{\sqn{\widetilde{\Delta}_{i^t}^t(j_{i^t}^t)}}{\mathcal{F}^t,i^t}}{\mathcal{F}^t}\\
&\eqtext{(b)}\frac{(n-b_{\mathrm{grp}})^2}{b_{\mathrm{grp}}^2n^2}\Expb{\frac{1}{m}\sum_{j=1}^m \sqn{\widetilde{\Delta}_{i^t}^t(j)}}{\mathcal{F}^t}\\
&\eqtext{(c)}\frac{(n-b_{\mathrm{grp}})^2}{b_{\mathrm{grp}}^2n^2}\cdot \frac{1}{n}\sum_{i=1}^n \frac{1}{m}\sum_{j=1}^m \sqn{\widetilde{\Delta}_i^t(j)}\\
&=\frac{(n-b_{\mathrm{grp}})^2}{b_{\mathrm{grp}}^2n^3m}\sum_{i=1}^n\sum_{j=1}^m \sqn{\widetilde{\Delta}_i^t(j)}\\
&=\frac{(n-b_{\mathrm{grp}})^2}{b_{\mathrm{grp}}^2n^3m}\sum_{i=1}^n\sum_{j=1}^m
\sqn{\nabla f_{i,j}(x^{t+1})-\nabla f_i(x^{t+1})-\nabla f_{i,j}(x^{t})+\nabla f_i(x^{t})}.
\end{align*}
Here, (a) is the tower property, (b) uses that conditional on $(\mathcal{F}^t,i^t)$ the index $j_{i^t}^t$ is uniform on $[m]$,
and (c) uses that $i^t$ is uniform on $[n]$ and independent of $\mathcal{F}^t$.

Combining these two identities yields
\begin{align*}
&\frac{1}{b_{\mathrm{grp}}^2}\Expc{ \sum_{i\in \Omega^t}\sqn{ \nabla f_{i,j_i^t}(x^{t+1})-\nabla f_i(x^{t+1})-\nabla f_{i,j_i^t}(x^{t})+\nabla f_i(x^t)}} \\
&\quad+\frac{(n-b_{\mathrm{grp}})^2}{b_{\mathrm{grp}}^2n^2}\Expc{ \sqn{ \nabla f_{i^t,j_{i^t}^t}(x^{t+1})-\nabla f_{i^t}(x^{t+1})-\nabla f_{i^t,j_{i^t}^t}(x^{t})+\nabla f_{i^t}(x^t)}}\\
&=\frac{b_{\mathrm{grp}}-1}{b_{\mathrm{grp}}^2nm}\sum_{i=1}^n\sum_{j=1}^m\sqn{\nabla f_{i,j}(x^{t+1})-\nabla f_i(x^{t+1})-\nabla f_{i,j}(x^{t})+\nabla f_i(x^{t})} \\
&\quad+\frac{(n-b_{\mathrm{grp}})^2}{b_{\mathrm{grp}}^2n^3m}\sum_{i=1}^n\sum_{j=1}^m\sqn{\nabla f_{i,j}(x^{t+1})-\nabla f_i(x^{t+1})-\nabla f_{i,j}(x^{t})+\nabla f_i(x^{t})}\\
&=\frac{n^2-2n+b_{\mathrm{grp}}}{b_{\mathrm{grp}}n^3m}\sum_{i=1}^n\sum_{j=1}^m\sqn{\nabla f_{i,j}(x^{t+1})-\nabla f_i(x^{t+1})-\nabla f_{i,j}(x^{t})+\nabla f_i(x^{t})}.
\end{align*}
Finally, applying the internal similarity condition (with parameter $\delta_2$) to $(x^{t+1},x^t)$ gives
\begin{align*}
\frac{n^2-2n+b_{\mathrm{grp}}}{b_{\mathrm{grp}}n^3m}\sum_{i=1}^n\sum_{j=1}^m\sqn{\nabla f_{i,j}(x^{t+1})-\nabla f_i(x^{t+1})-\nabla f_{i,j}(x^{t})+\nabla f_i(x^{t})}
\leq\frac{n^2-2n+b_{\mathrm{grp}}}{b_{\mathrm{grp}}n^2}\delta_2^2\sqn{x^{t+1}-x^t}.
\end{align*}

Furthermore, we bound the remaining conditional second moment term by separating the two contributions inside the squared norm. Define
\begin{align*}
U^t \eqdef \frac{1}{n}\left(\nabla f_{i^t}(x^{t})-\nabla f(x^{t})-g_{i^t}^t+g^t\right),
\qquad
V^t \eqdef \frac{1}{b_{\mathrm{grp}}} \sum_{i\in \tilde{\Omega}^t}\left( \nabla f_{i}(x^{t+1})-\nabla f(x^{t+1})-\nabla f_{i}(x^{t})+\nabla f(x^{t})\right).
\end{align*}
Then, by the elementary inequality $\sqn{u+v}\leq 2\sqn{u}+2\sqn{v}$, we have
\begin{align*}
&\mathbb{E}\!\left[\left\|U^t+V^t\right\|^2\,\Big|\,\mathcal{F}^t\right]
\leq 2\,\Expc{\sqn{U^t}}+2\,\Expc{\sqn{V^t}}\\
&=\frac{2}{n^2}\Expc{\sqn{\nabla f_{i^t}(x^{t})-\nabla f(x^{t})-g_{i^t}^t+g^t}}\\
&\quad+2\Expc{\sqn{\frac{1}{b_{\mathrm{grp}}} \sum_{i\in \tilde{\Omega}^t}\left( \nabla f_{i}(x^{t+1})-\nabla f(x^{t+1})-\nabla f_{i}(x^{t})+\nabla f(x^{t})\right)}}.
\end{align*}

We now bound the second term $\Expc{\sqn{V^t}}$ using the variance formula for uniform sampling without replacement.
Apply Lemma~\ref{lem:prop1} with
\begin{align*}
N\eqdef n,\qquad B\eqdef b_{\mathrm{grp}},\qquad
v_i \eqdef \nabla f_{i}(x^{t+1})-\nabla f(x^{t+1})-\nabla f_{i}(x^{t})+\nabla f(x^{t}),
\qquad
v \eqdef \frac{1}{n}\sum_{i=1}^n v_i = 0.
\end{align*}
Since $\tilde{\Omega}^t$ is a uniformly random subset of $[n]$ of size $b_{\mathrm{grp}}$, Lemma~\ref{lem:prop1} yields
\begin{align*}
\Expc{\sqn{\frac{1}{b_{\mathrm{grp}}}\sum_{i\in\tilde{\Omega}^t} v_i}}
=\frac{n-b_{\mathrm{grp}}}{b_{\mathrm{grp}}n(n-1)}\sum_{i=1}^n \sqn{v_i}.
\end{align*}
Substituting back $v_i$ gives
\begin{align*}
&\Expc{\sqn{\frac{1}{b_{\mathrm{grp}}}\sum_{i\in\tilde{\Omega}^t}\left( \nabla f_{i}(x^{t+1})-\nabla f(x^{t+1})-\nabla f_{i}(x^{t})+\nabla f(x^{t})\right)}}\\
&= \frac{n-b_{\mathrm{grp}}}{b_{\mathrm{grp}}n(n-1)}\sum_{i=1}^n \sqn{ \nabla f_{i}(x^{t+1})-\nabla f(x^{t+1})-\nabla f_{i}(x^{t})+\nabla f(x^{t})}.
\end{align*}
Combining the previous two displays with the bound derived above for the inner-noise terms yields
\begin{align*}
\Expc{ \sqn{g^{t+1}-\nabla f(x^{t+1})}}
&\leq\frac{(n-1)^2}{n^2} \sqn{g^t-\nabla f(x^{t})}
+\frac{2}{n^2}\Expc{\sqn{\nabla f_{i^t}(x^{t})-\nabla f(x^{t})-g_{i^t}^t+g^t}}\\
&\quad+ \frac{2(n-b_{\mathrm{grp}})}{b_{\mathrm{grp}}n(n-1)}\sum_{i=1}^n \sqn{ \nabla f_{i}(x^{t+1})-\nabla f(x^{t+1})-\nabla f_{i}(x^{t})+\nabla f(x^{t})}\\
&\quad+\frac{n^2-2n+b_{\mathrm{grp}}}{b_{\mathrm{grp}}n^2}\delta_2^2\sqn{x^{t+1}-x^t}.
\end{align*}
Next, since $i^t$ is uniform on $[n]$ and independent of $\mathcal{F}^t$, we have
\begin{align*}
\Expc{\sqn{\nabla f_{i^t}(x^{t})-\nabla f(x^{t})-g_{i^t}^t+g^t}}
&=\frac{1}{n}\sum_{i=1}^n \sqn{\nabla f_{i}(x^{t})-\nabla f(x^{t})-g_{i}^t+g^t}\\
&=\frac{1}{n}\sum_{i=1}^n \sqn{g_i^t-\nabla f_{i}(x^{t})}
-\sqn{g^t-\nabla f(x^{t})}.
\end{align*}

Substituting this identity and using the similarity bound (with parameter $\delta_1$) gives
\begin{align*}
&\Expc{ \sqn{g^{t+1}-\nabla f(x^{t+1})}}\\
&\leq\frac{(n-1)^2}{n^2} \sqn{g^t-\nabla f(x^{t})}
+\frac{2}{n^3}\sum_{i=1}^n\sqn{g_i^t-\nabla f_{i}(x^{t})}
-\frac{2}{n^2}\sqn{g^t-\nabla f(x^{t})}\\
&\quad+ \frac{2(n-b_{\mathrm{grp}})}{b_{\mathrm{grp}}n(n-1)}\sum_{i=1}^n \sqn{ \nabla f_{i}(x^{t+1})-\nabla f(x^{t+1})-\nabla f_{i}(x^{t})+\nabla f(x^{t})}\\
&\quad+\frac{n^2-2n+b_{\mathrm{grp}}}{b_{\mathrm{grp}}n^2}\delta_2^2\sqn{x^{t+1}-x^t}\\
&\leq\frac{(n-1)^2}{n^2} \sqn{g^t-\nabla f(x^{t})}
+\frac{2}{n^3}\sum_{i=1}^n\sqn{g_i^t-\nabla f_{i}(x^{t})}
-\frac{2}{n^2}\sqn{g^t-\nabla f(x^{t})}\\
&\quad+ \frac{2(n-b_{\mathrm{grp}})}{b_{\mathrm{grp}}(n-1)}\delta_1^2\sqn{x^{t+1}-x^t}
+\frac{n^2-2n+b_{\mathrm{grp}}}{b_{\mathrm{grp}}n^2}\delta_2^2\sqn{x^{t+1}-x^t}\\
&\leq\frac{n-2}{n} \sqn{g^t-\nabla f(x^{t})}+\frac{2}{n^3}\sum_{i=1}^n\sqn{g_i^t-\nabla f_{i}(x^{t})}\\
&\quad+ \frac{2(n-b_{\mathrm{grp}})}{b_{\mathrm{grp}}(n-1)}\delta_1^2\sqn{x^{t+1}-x^t}+\frac{n^2-2n+b_{\mathrm{grp}}}{b_{\mathrm{grp}}n^2}\delta_2^2\sqn{x^{t+1}-x^t}.
\end{align*}
\end{proof}

\subsubsection*{Proof of Theorem~\ref{theo2}}

The following theorem is the general nonconvex convergence guarantee for Algorithm~\ref{algsilage2a}.
The proof uses Lemma~\ref{lem:silage_variance_bounds_ngtm} to choose a Lyapunov function whose estimator-error terms match the one-step descent recursion.

\silageNgtMNC*

\begin{proof}[Proof of Theorem~\ref{theo2}]
We rely on the variance bounds established in Lemma~\ref{lem:silage_variance_bounds_ngtm}.
Specifically, item~2 of the lemma provides the bound for the global estimator error:
\begin{align*}
&\Expc{\sqn{g^{t+1}-\nabla f(x^{t+1})}}\\
&\leq\frac{n-2}{n}\sqn{g^t-\nabla f(x^{t})}
+\frac{2}{n^3}\sum_{i=1}^n \sqn{g_i^t-\nabla f_i(x^{t})}\\
&\quad+\frac{2(n-b_{\mathrm{grp}})}{b_{\mathrm{grp}}(n-1)}\delta_1^2\sqn{x^{t+1}-x^t}
+\frac{n^2-2n+b_{\mathrm{grp}}}{b_{\mathrm{grp}}n^2}\delta_2^2\sqn{x^{t+1}-x^t},
\end{align*}
and item~1 provides the bound for the average error:
\begin{align*}
&\Expc{\frac1n\sum_{i=1}^n \sqn{g_i^{t+1}-\nfi{x^{t+1}}}}\\
&\leq\left(1-\frac{1}{2n}\right)\frac1n\sum_{i=1}^n \sqn{g_i^t-\nfi{x^{t}}}\\
&\quad+\frac{2(n-b_{\mathrm{grp}})(n+1)}{n-1}\delta_1^2\sqn{x^{t+1}-x^t}
+\frac{n-2b_{\mathrm{grp}}+b_{\mathrm{grp}}^2}{b_{\mathrm{grp}}n}\delta_2^2\sqn{x^{t+1}-x^t}.
\end{align*}

We also have the descent lemma for $L$-smooth functions:
\begin{align*}
f(x^{t+1})
&\leq f(x^t)
+\frac{\gamma}{2}\sqn{g^{t}-\nabla f(x^t)}
-\frac{\gamma}{2}\sqn{\nabla f(x^t)}\\
&\quad+\left(\frac{L}{2}-\frac{1}{2\gamma}\right)\sqn{x^{t+1}-x^t}.
\end{align*}
We define the Lyapunov function
\begin{align*}
\Psi^t\eqdef f(x^t)-\finf + \frac{\gamma n}{4}\sqn{g^{t}-\nabla f(x^t)}+\frac{\gamma}{n}\sum_{i=1}^n\sqn{g_i^t-\nabla f_{i}(x^{t})}.
\end{align*}

Let us denote the Lyapunov coefficients by $C_1 \eqdef \frac{\gamma n}{4}$ and $C_2\eqdef \frac{\gamma}{n}$.
Taking the conditional expectation of $\Psi^{t+1}$ and substituting the descent lemma together with the two variance bounds above, we obtain
\begin{align*}
&\Expc{\Psi^{t+1}} \\
&=\Expc{f(x^{t+1})}-\finf
+C_1\Expc{\sqn{g^{t+1}-\nabla f(x^{t+1})}}\\
&\quad+C_2\Expc{\sum_{i=1}^n\sqn{g_i^{t+1}-\nabla f_i(x^{t+1})}}\\
&\leq
\underbrace{\begin{aligned}[t]
&f(x^t)-\finf
+\frac{\gamma}{2}\sqn{g^{t}-\nabla f(x^t)}
-\frac{\gamma}{2}\sqn{\nabla f(x^t)}\\
&\quad+\left(\frac{L}{2}-\frac{1}{2\gamma}\right)\sqn{x^{t+1}-x^t}
\end{aligned}}_{\text{Descent lemma}}\\
&\quad +C_1\Bigg[\frac{n-2}{n}\sqn{g^t-\nabla f(x^{t})}
+\frac{2}{n^3}\sum_{i=1}^n \sqn{g_i^t-\nabla f_i(x^{t})}\\
&\qquad\qquad
+\frac{2(n-b_{\mathrm{grp}})}{b_{\mathrm{grp}}(n-1)}\delta_1^2\sqn{x^{t+1}-x^t}
+\frac{n^2-2n+b_{\mathrm{grp}}}{b_{\mathrm{grp}}n^2}\delta_2^2\sqn{x^{t+1}-x^t}\Bigg]\\
&\quad +C_2\Bigg[n\left(1-\frac{1}{2n}\right)
\frac{1}{n}\sum_{i=1}^n \sqn{g_i^t-\nabla f_i(x^{t})}\\
&\qquad\qquad
+n\Bigg(\frac{2(n-b_{\mathrm{grp}})(n+1)}{n-1}\delta_1^2\sqn{x^{t+1}-x^t}\\
&\qquad\qquad\qquad
+\frac{n-2b_{\mathrm{grp}}+b_{\mathrm{grp}}^2}{b_{\mathrm{grp}}n}\delta_2^2\sqn{x^{t+1}-x^t}\Bigg)\Bigg].
\end{align*}
We now group the terms by the error components $\sqn{g^t-\nabla f(x^t)}$, $\sum_{i=1}^n\sqn{g_i^t-\nabla f_i(x^t)}$, and the stepsize $\sqn{x^{t+1}-x^t}$:
\begin{align*}
\Expc{\Psi^{t+1}}
&\leq f(x^t)-\finf-\frac{\gamma}{2}\sqn{\nabla f(x^t)}\\
&\quad+\left(\frac{\gamma}{2}+C_1\frac{n-2}{n}\right)\sqn{g^t-\nabla f(x^t)}\\
&\quad+\left(C_1\frac{2}{n^3}+C_2\left(1-\frac{1}{2n}\right)\right)\\
&\qquad\cdot\sum_{i=1}^n\sqn{g_i^t-\nabla f_i(x^t)}\\
&\quad+\Bigg(\frac{L}{2}-\frac{1}{2\gamma}
+C_1\frac{2(n-b_{\mathrm{grp}})}{b_{\mathrm{grp}}(n-1)}\delta_1^2
+C_2n\frac{2(n-b_{\mathrm{grp}})(n+1)}{n-1}\delta_1^2\\
&\qquad\qquad
+C_1\frac{n^2-2n+b_{\mathrm{grp}}}{b_{\mathrm{grp}}n^2}\delta_2^2
+C_2n\frac{n-2b_{\mathrm{grp}}+b_{\mathrm{grp}}^2}{b_{\mathrm{grp}}n}\delta_2^2\Bigg)\sqn{x^{t+1}-x^t}.
\end{align*}
Substituting $C_1=\frac{\gamma n}{4}$ and $C_2=\frac{\gamma}{n}$ yields the simplifications
\begin{align*}
\frac{\gamma}{2}+C_1\frac{n-2}{n}
&=\frac{\gamma}{2}+\frac{\gamma n}{4}\cdot\frac{n-2}{n}
=\frac{\gamma}{2}+\frac{\gamma(n-2)}{4}
=\frac{\gamma n}{4}=C_1,\\
C_1\frac{2}{n^3}+C_2\left(1-\frac{1}{2n}\right)
&=\frac{\gamma n}{4}\cdot\frac{2}{n^3}+\frac{\gamma}{n}\left(1-\frac{1}{2n}\right)
=\frac{\gamma}{2n^2}+\frac{\gamma}{n}-\frac{\gamma}{2n^2}
=\frac{\gamma}{n}=C_2,
\end{align*}
and hence the coefficients of the error terms match exactly those in $\Psi^t$. Therefore,
\begin{align*}
\Expc{\Psi^{t+1}}
&\leq \Psi^t-\frac{\gamma}{2}\sqn{\nabla f(x^t)}\\
&\quad+\left(\frac{L}{2}-\frac{1}{2\gamma}
+\gamma\left(\frac{n(n-b_{\mathrm{grp}})}{2b_{\mathrm{grp}}(n-1)}
+\frac{2(n-b_{\mathrm{grp}})(n+1)}{n-1}\right)\delta_1^2\right.\\
&\qquad\qquad\left.
+\gamma\left(\frac{n^2-2n+b_{\mathrm{grp}}}{4b_{\mathrm{grp}}n}
+\frac{n-2b_{\mathrm{grp}}+b_{\mathrm{grp}}^2}{b_{\mathrm{grp}}n}\right)\delta_2^2\right)\sqn{x^{t+1}-x^t}.
\end{align*}
Using $b_{\mathrm{grp}}\geq 1$, we have $\frac{n}{2b_{\mathrm{grp}}}\leq \frac{n}{2}$, and thus
\begin{align*}
&\frac{n(n-b_{\mathrm{grp}})}{2b_{\mathrm{grp}}(n-1)}
+\frac{2(n-b_{\mathrm{grp}})(n+1)}{n-1}\\
&\quad=\frac{n-b_{\mathrm{grp}}}{n-1}\left(\frac{n}{2b_{\mathrm{grp}}}+2(n+1)\right)\\
&\quad\leq \frac{(n-b_{\mathrm{grp}})(5n+4)}{2(n-1)}.
\end{align*}
Moreover,
\begin{align*}
\frac{n^2-2n+b_{\mathrm{grp}}}{4b_{\mathrm{grp}}n}+\frac{n-2b_{\mathrm{grp}}+b_{\mathrm{grp}}^2}{b_{\mathrm{grp}}n}
=\frac{n^2+2n-7b_{\mathrm{grp}}+4b_{\mathrm{grp}}^2}{4b_{\mathrm{grp}}n}.
\end{align*}
Therefore,
\begin{align*}
\Expc{\Psi^{t+1}}
&\leq \Psi^t-\frac{\gamma}{2}\sqn{\nabla f(x^t)} \\
&\quad+\left(\frac{L}{2}-\frac{1}{2\gamma}
+\frac{\gamma}{2}\,\delta_1^2\frac{(n-b_{\mathrm{grp}})(5n+4)}{n-1}\right.\\
&\qquad\qquad\left.
+\frac{\gamma}{2}\,\delta_2^2\frac{n^2+2n-7b_{\mathrm{grp}}+4b_{\mathrm{grp}}^2}{2b_{\mathrm{grp}}n}\right)
\sqn{x^{t+1}-x^t}.
\end{align*}

Hence, the last term is nonpositive whenever
\begin{align*}
L-\frac{1}{\gamma}
+\gamma\left(
\delta_1^2\frac{(n-b_{\mathrm{grp}})(5n+4)}{n-1}
+\delta_2^2\frac{n^2+2n-7b_{\mathrm{grp}}+4b_{\mathrm{grp}}^2}{2b_{\mathrm{grp}}n}
\right)
&\leq 0,
\end{align*}
that is, whenever
\begin{align*}
\left(
\delta_1^2\frac{(n-b_{\mathrm{grp}})(5n+4)}{n-1}
+\delta_2^2\frac{n^2+2n-7b_{\mathrm{grp}}+4b_{\mathrm{grp}}^2}{2b_{\mathrm{grp}}n}
\right)\gamma^2
+L\gamma
\leq 1.
\end{align*}
By Lemma~\ref{lem:quadratic_step_bound}, applied with
\begin{align*}
a &\eqdef
\delta_1^2\frac{(n-b_{\mathrm{grp}})(5n+4)}{n-1}
+\delta_2^2\frac{n^2+2n-7b_{\mathrm{grp}}+4b_{\mathrm{grp}}^2}{2b_{\mathrm{grp}}n},
\qquad
b \eqdef L,
\end{align*}
it therefore suffices to impose
\begin{align*}
\gamma
\leq
\left(
L+\sqrt{
\delta_1^2\frac{(n-b_{\mathrm{grp}})(5n+4)}{n-1}
+\delta_2^2\frac{n^2+2n-7b_{\mathrm{grp}}+4b_{\mathrm{grp}}^2}{2b_{\mathrm{grp}}n}}
\right)^{-1},
\end{align*}
which is exactly the stepsize condition in the theorem.

Under this stepsize condition, we obtain the recursion
\begin{align*}
\Expc{\Psi^{t+1}}\leq \Psi^t-\frac{\gamma}{2}\sqn{\nabla f(x^t)}.
\end{align*}
Unrolling this inequality for $t=0,\ldots,T$ and using $\Psi^{t}\geq 0$ yields
\begin{align*}
\frac{\gamma}{2}\sum_{t=0}^{T}\Exp{\sqn{\nabla f(x^t)}}\leq \Psi^0.
\end{align*}
Finally, by defining $\tilde{x}^T$ as $x^t$ for $t$ chosen uniformly at random in $\{0,\ldots,T\}$, we have
\begin{align*}
\Exp{\sqn{\nabla f(\tilde{x}^T)}}=\frac{1}{T+1}\sum_{t=0}^{T}\Exp{\sqn{\nabla f(x^t)}}
\leq \frac{2\Psi^0}{\gamma(T+1)},
\end{align*}
which concludes the proof.
\end{proof}


The next corollary converts Theorem~\ref{theo2} into iteration and component-gradient complexity with arbitrary initialization.
It then specializes the bound to the natural choice $b_{\mathrm{grp}}=m$, which is admissible in the regime $n>m$.

\silageNgtMArbitraryInit*

\begin{proof}[Proof of Corollary~\ref{cor:silage_ngt_m_arbitrary_init}]
By Theorem~\ref{theo2}, for the chosen value of $b_{\mathrm{grp}}$ and stepsize
\begin{align*}
\gamma
&=
\rb{
L+\sqrt{
\delta_1^2\frac{(n-b_{\mathrm{grp}})(5n+4)}{n-1}
+\delta_2^2\frac{n^2+2n-7b_{\mathrm{grp}}+4b_{\mathrm{grp}}^2}{2b_{\mathrm{grp}}n}}
}^{-1},
\end{align*}
we have
\begin{align*}
\Exp{\sqn{\nf{\tilde{x}^T}}}
&=
\frac{1}{T+1}\sum_{t=0}^{T}\Exp{\sqn{\nf{x^t}}}
\le
\frac{2\Psi_{b_{\mathrm{grp}}}^0}{\gamma(T+1)}.
\end{align*}
Thus, if $T+1\ge 2\Psi_{b_{\mathrm{grp}}}^0/(\gamma\epsilon)$, then $\Exp{\sqn{\nf{\tilde{x}^T}}}\le \epsilon$. Equivalently, the required number of iterations satisfies
\begin{align*}
T
&=
\cO\rb{
\frac{\Psi_{b_{\mathrm{grp}}}^0}{\gamma\epsilon}
}
\\
&=
\cO\rb{
\rb{
L+\sqrt{
\delta_1^2\frac{(n-b_{\mathrm{grp}})(5n+4)}{n-1}
+\delta_2^2\frac{n^2+2n-7b_{\mathrm{grp}}+4b_{\mathrm{grp}}^2}{2b_{\mathrm{grp}}n}}
}
\frac{\Psi_{b_{\mathrm{grp}}}^0}{\epsilon}
}.
\end{align*}

It remains to count component-gradient evaluations. At each iteration of Algorithm~\ref{algsilage2a}, the algorithm computes one full group gradient $\nfi{x^{t+1}}$, which costs $m$ component-gradient evaluations. In addition, the update uses one extra component-gradient evaluation for the selected group and two component-gradient evaluations for each of the other $b_{\mathrm{grp}}-1$ groups in the batch. Hence, for $b_{\mathrm{grp}}<n$, the per-iteration cost is
\begin{align*}
m+1+2(b_{\mathrm{grp}}-1)
&=
m+2b_{\mathrm{grp}}-1
=
\cO\rb{m+b_{\mathrm{grp}}}.
\end{align*}
When $b_{\mathrm{grp}}=n$, the extra component gradient for the selected group is not needed, and the cost is $m+2(b_{\mathrm{grp}}-1)$, which is still $\cO\rb{m+b_{\mathrm{grp}}}$. Therefore, multiplying the iteration bound by the per-iteration cost gives
\begin{align*}
\cO\rb{
(m+b_{\mathrm{grp}})
\rb{
L+\sqrt{
\delta_1^2\frac{(n-b_{\mathrm{grp}})(5n+4)}{n-1}
+\delta_2^2\frac{n^2+2n-7b_{\mathrm{grp}}+4b_{\mathrm{grp}}^2}{2b_{\mathrm{grp}}n}}
}
\frac{\Psi_{b_{\mathrm{grp}}}^0}{\epsilon}
}
\end{align*}
component-gradient evaluations, excluding initialization.

Now specialize to $b_{\mathrm{grp}}=m$. Since $n>m$, this choice is admissible. Define
\begin{align*}
A_1(m)
&\eqdef
\frac{(n-m)(5n+4)}{n-1},
&
A_2(m)
&\eqdef
\frac{n^2+2n-7m+4m^2}{2mn}.
\end{align*}
Then the post-initialization component-gradient complexity is
\begin{align*}
\cO\rb{
m\rb{
L+\sqrt{\delta_1^2A_1(m)+\delta_2^2A_2(m)}
}
\frac{\Psi_m^0}{\epsilon}
}.
\end{align*}
Using $\sqrt{a+b_{\mathrm{grp}}}\le \sqrt a+\sqrt b_{\mathrm{grp}}$, we get
\begin{align*}
m\rb{
L+\sqrt{\delta_1^2A_1(m)+\delta_2^2A_2(m)}
}
&\le
mL+m\delta_1\sqrt{A_1(m)}+m\delta_2\sqrt{A_2(m)}.
\end{align*}
Since $n>m$, we have $n\ge 2$ and
\begin{align*}
\frac{5n+4}{n-1}
&=
5+\frac{9}{n-1}
\le
14.
\end{align*}
Therefore
\begin{align*}
A_1(m)
&\le
14(n-m),
&
m\delta_1\sqrt{A_1(m)}
&=
\cO\rb{m\sqrt{n-m}\,\delta_1}.
\end{align*}
Similarly, using $m\le n$ and $n\ge 2$,
\begin{align*}
n^2+2n-7m+4m^2
&\le
n^2+2n+4m^2
\le
7n^2,
\end{align*}
and hence
\begin{align*}
A_2(m)
&\le
\frac{7n}{2m},
&
m\delta_2\sqrt{A_2(m)}
&=
\cO\rb{\sqrt{nm}\,\delta_2}.
\end{align*}
Combining the last three displays gives
\begin{align*}
m\rb{
L+\sqrt{\delta_1^2A_1(m)+\delta_2^2A_2(m)}
}
&=
\cO\rb{
mL+m\sqrt{n-m}\,\delta_1+\sqrt{nm}\,\delta_2
}.
\end{align*}
Thus, with $b_{\mathrm{grp}}=m$, the expected component-gradient complexity is
\begin{align*}
\cO\rb{
\rb{
mL+m\sqrt{n-m}\,\delta_1+\sqrt{nm}\,\delta_2
}
\frac{\Psi_m^0}{\epsilon}
}.
\end{align*}

Now, we assume exact initialization.


For the choice $b_{\mathrm{grp}}=m$, the stepsize condition in Theorem~\ref{theo2} gives
\begin{align*}
\gamma
&=
\rb{
L+\sqrt{
\delta_1^2\frac{(n-b_{\mathrm{grp}})(5n+4)}{n-1}
+\delta_2^2\frac{n^2+2n-7b_{\mathrm{grp}}+4b_{\mathrm{grp}}^2}{2b_{\mathrm{grp}}n}
}
}^{-1}
\\
&=
\rb{
L+\sqrt{
\delta_1^2\frac{(n-m)(5n+4)}{n-1}
+\delta_2^2\frac{n^2+2n-7m+4m^2}{2mn}
}
}^{-1}.
\end{align*}

The exact initialization $g_i^0=\nfi{x^0}$ for every $i\in[n]$ implies
\begin{align*}
g^0
&=
\frac1n\sum_{i=1}^n g_i^0
=
\frac1n\sum_{i=1}^n \nfi{x^0}
=
\nf{x^0}.
\end{align*}
Therefore, both initialization-error terms in the Lyapunov quantity from Theorem~\ref{theo2} vanish:
\begin{align*}
\sqn{g^0-\nf{x^0}}
&=0,
&
\frac1n\sum_{i=1}^n\sqn{g_i^0-\nfi{x^0}}
&=0.
\end{align*}
Hence, for $b_{\mathrm{grp}}=m$,
\begin{align*}
\Psi_m^0
&=
f(x^0)-\finf
=
\Delta_0.
\end{align*}

Applying Corollary~\ref{cor:silage_ngt_m_arbitrary_init} with $b_{\mathrm{grp}}=m$ and $\Psi_m^0=\Delta_0$ gives the post-initialization iteration complexity
\begin{align*}
T
&=
\cO\rb{
\rb{
L+\sqrt{
\delta_1^2\frac{(n-m)(5n+4)}{n-1}
+\delta_2^2\frac{n^2+2n-7m+4m^2}{2mn}
}
}
\frac{\Delta_0}{\epsilon}
}.
\end{align*}
Using $\sqrt{a+b_{\mathrm{grp}}}\le \sqrt a+\sqrt b_{\mathrm{grp}}$, we obtain
\begin{align*}
T
&=
\cO\rb{
\rb{
L
+\delta_1\sqrt{\frac{(n-m)(5n+4)}{n-1}}
+\delta_2\sqrt{\frac{n^2+2n-7m+4m^2}{2mn}}
}
\frac{\Delta_0}{\epsilon}
}.
\end{align*}
Since $n>m$, we have $n\ge 2$ and
\begin{align*}
\frac{5n+4}{n-1}
&=
5+\frac{9}{n-1}
\le
14,
\end{align*}
and hence
\begin{align*}
\sqrt{\frac{(n-m)(5n+4)}{n-1}}
&=
\cO\rb{\sqrt{n-m}}.
\end{align*}
Moreover, using $m\le n$ and $n\ge 2$,
\begin{align*}
n^2+2n-7m+4m^2
&\le
n^2+2n+4m^2
\le
7n^2,
\end{align*}
so that
\begin{align*}
\sqrt{\frac{n^2+2n-7m+4m^2}{2mn}}
&=
\cO\rb{\sqrt{\frac{n}{m}}}.
\end{align*}
Combining the last displays yields
\begin{align*}
T
&=
\cO\rb{
\rb{
L+\sqrt{n-m}\,\delta_1+\sqrt{\frac{n}{m}}\,\delta_2
}
\frac{\Delta_0}{\epsilon}
}.
\end{align*}

The same Corollary~\ref{cor:silage_ngt_m_arbitrary_init} gives the expected post-initialization component-gradient complexity
\begin{align*}
\cO\rb{
\rb{
mL+m\sqrt{n-m}\,\delta_1+\sqrt{nm}\,\delta_2
}
\frac{\Delta_0}{\epsilon}
}.
\end{align*}

It remains to add the initialization cost. Computing $g_i^0=\nfi{x^0}$ requires evaluating the full group gradient
\begin{align*}
\nfi{x^0}
&=
\frac1m\sum_{j=1}^m \nabla f_{i,j}(x^0),
\end{align*}
which costs $m$ component-gradient evaluations for a fixed group $i$. Repeating this for all $i\in[n]$ costs
\begin{align*}
\sum_{i=1}^n m
&=
mn
\end{align*}
component-gradient evaluations. Therefore, the total expected component-gradient complexity, including initialization, is
\begin{align*}
\cO\rb{
nm +
\rb{
mL+m\sqrt{n-m}\,\delta_1+\sqrt{nm}\,\delta_2
}
\frac{\Delta_0}{\epsilon}
}.
\end{align*}

\end{proof}

\subsection{Analysis under the Polyak--\L{}ojasiewicz condition}\label{sec:app-pl-ngt-m}

The P\L{} argument for the regime $n>m$ is parallel.
The only change is in the constant controlling the Lyapunov recursion: both similarity parameters now enter through the batch-dependent quantity $\Xi_{n,b_{\mathrm{grp}}}$ below.
Under Assumption~\ref{ass:pl_condition}, this gives an $O\left(\cdot\log\frac{1}{\epsilon}\right)$ dependence for the objective gap, rather than the $O\left(\cdot\frac{1}{\epsilon}\right)$ dependence of the general nonconvex stationarity result.
As above, the theorem is stated globally, while the proof only invokes P\L{} along the iterates; local P\L{} regions are therefore sufficient whenever the trajectory stays inside them.

The next theorem is the P\L{} analogue of Theorem~\ref{theo2}.
The quantity $\Xi_{n,b_{\mathrm{grp}}}$ summarizes the batch-size-dependent similarity terms that enter the P\L{} Lyapunov recursion.

\begin{restatable}[Linear convergence under P\L, case $n>m$]{theorem}{silageNgtMPL}
\label{theo:silage_pl_ngtm}
Let Assumptions~\ref{ass:lower-bounded}, \ref{ass:f-smooth}, \ref{ass:delta1}, \ref{ass:delta2}, and~\ref{ass:pl_condition} hold,
where Assumption~\ref{ass:pl_condition} holds with parameter $\mu>0$.
Run Algorithm~\ref{algsilage2a} with batch size $b_{\mathrm{grp}}\in[n]$, arbitrary initial estimators
$g_1^0,\ldots,g_n^0\in\R^d$, and $g^0\eqdef \frac1n\sum_{i=1}^n g_i^0$.
Define
\begin{align}
\Xi_{n,b_{\mathrm{grp}}}
\eqdef
\frac{2(n-b_{\mathrm{grp}})}{n-1}\rb{\frac{n}{3b_{\mathrm{grp}}}+4(n+1)}\delta_1^2
+\frac{n^2+10n-23b_{\mathrm{grp}}+12b_{\mathrm{grp}}^2}{3b_{\mathrm{grp}}n}\delta_2^2.
\label{eq:Xi_nb_def}
\end{align}
Let the stepsize be set as
$0<\gamma\le \min\left\{\rb{L+\sqrt{2\Xi_{n,b_{\mathrm{grp}}}}}^{-1},\,\frac{1}{3\mu n}\right\}$.
Then, for every $T\ge 0$,
\begin{align}
\Exp{f(x^T)-\fstar}
\le
(1-\gamma\mu)^T\rb{\Psi_{\mathrm{PL}}^0},
\label{eq:silage_pl_rate_ngtm}
\end{align}
where
\begin{equation*}
\Psi_{\mathrm{PL}}^0
\eqdef
f(x^0)-\fstar
+\frac{\gamma n}{3}\sqn{g^0-\nf{x^0}}
+4\gamma\frac1n\sum_{i=1}^n\sqn{g_i^0-\nfi{x^0}}.
\end{equation*}
\end{restatable}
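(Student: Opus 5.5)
The plan mirrors the proof of Theorem~\ref{theo:silage_pl} for the case $m\ge n$. We start from the descent inequality of Lemma~\ref{lem:inexact_descent} with $x^{t+1}=x^t-\gamma g^t$ and take expectations. To both sides we add $\nu\,\Exp{\sqn{g^{t+1}-\nabla f(x^{t+1})}}$ and $\rho\,\Exp{\frac1n\sum_i\sqn{g_i^{t+1}-\nabla f_i(x^{t+1})}}$. Both terms are then bounded with items~1 and~2 of Lemma~\ref{lem:silage_variance_bounds_ngtm}. After regrouping, the right-hand side contains four kinds of terms: $-\frac\gamma2\Exp{\sqn{\nabla f(x^t)}}$; a coefficient $\frac\gamma2+\nu\frac{n-2}{n}$ on the global error; a coefficient $\rho(1-\frac1{2n})+\nu\frac{2}{n^2}$ on the averaged table error; and a coefficient on $\Exp{\sqn{x^{t+1}-x^t}}$ equal to
\begin{align*}
\frac L2-\frac1{2\gamma}+\nu\Big(\tfrac{2(n-b_{\mathrm{grp}})}{b_{\mathrm{grp}}(n-1)}\delta_1^2+\tfrac{n^2-2n+b_{\mathrm{grp}}}{b_{\mathrm{grp}}n^2}\delta_2^2\Big)+\rho\Big(\tfrac{2(n-b_{\mathrm{grp}})(n+1)}{n-1}\delta_1^2+\tfrac{n-2b_{\mathrm{grp}}+b_{\mathrm{grp}}^2}{b_{\mathrm{grp}}n}\delta_2^2\Big).
\end{align*}

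Next we fix the weights to match $\Psi_{\mathrm{PL}}^0$, namely $\nu=\frac{\gamma n}{3}$ and $\rho=4\gamma$. With this choice the last coefficient becomes $\frac L2-\frac1{2\gamma}+\gamma\,\Xi_{n,b_{\mathrm{grp}}}$. This should be checked by direct algebra. For the $\delta_1$ part:
\[
\frac{2(n-b_{\mathrm{grp}})}{n-1}\Big(\frac{n}{3b_{\mathrm{grp}}}+4(n+1)\Big).
\]
For the $\delta_2$ part:
\[
\frac{n^2-2n+b_{\mathrm{grp}}}{3b_{\mathrm{grp}}n}+\frac{4(n-2b_{\mathrm{grp}}+b_{\mathrm{grp}}^2)}{b_{\mathrm{grp}}n}=\frac{n^2+10n-23b_{\mathrm{grp}}+12b_{\mathrm{grp}}^2}{3b_{\mathrm{grp}}n}.
\]
Both agree with \eqref{eq:Xi_nb_def}. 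The coefficient is therefore nonpositive as soon as $2\Xi_{n,b_{\mathrm{grp}}}\gamma^2+L\gamma\le1$. By Lemma~\ref{lem:quadratic_step_bound} with $a=2\Xi_{n,b_{\mathrm{grp}}}$ and $b=L$, this follows from $\gamma\le(L+\sqrt{2\Xi_{n,b_{\mathrm{grp}}}})^{-1}$, so the $\sqn{x^{t+1}-x^t}$ term can be dropped.

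The remaining step is to verify the two contraction inequalities. For the global error we need
\[
\frac\gamma2+\frac{\gamma n}{3}\cdot\frac{n-2}{n}=\frac{\gamma n}{3}-\frac{\gamma}{6}\le(1-\gamma\mu)\frac{\gamma n}3,
\]
which is equivalent to $\gamma\mu n\le\frac12$. For the table error we need
\[
4\gamma\Big(1-\frac1{2n}\Big)+\frac{2\gamma}{3n}=4\gamma-\frac{4\gamma}{3n}\le(1-\gamma\mu)4\gamma,
\]
which is equivalent to $\gamma\mu\le\frac1{3n}$. Both follow from $\gamma\le\frac1{3\mu n}$, and this is where the second branch of the stepsize minimum enters. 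With $\Psi^t\eqdef\frac n3\Exp{\sqn{g^t-\nabla f(x^t)}}+\frac4n\sum_i\Exp{\sqn{g_i^t-\nabla f_i(x^t)}}$, the recursion takes exactly the form required by Lemma~\ref{lemma:good_recursion_pl} with $C=0$. That lemma also needs $\gamma<1/\mu$, which holds since $\gamma\le\frac1{3\mu n}$. Applying it gives \eqref{eq:silage_pl_rate_ngtm} with $\gamma\Psi^0$ absorbed into $\Psi_{\mathrm{PL}}^0$.

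The main obstacle is the bookkeeping in the first step: the weights $\nu,\rho$ have to be chosen so that the contraction constraints hold while the $\sqn{x^{t+1}-x^t}$ coefficient reproduces $\Xi_{n,b_{\mathrm{grp}}}$ exactly. Everything else is a routine combination of earlier lemmas. If the $\delta_1$ term does not match exactly, the fallback is to bound it using $b_{\mathrm{grp}}\ge1$, which would only loosen the constant.
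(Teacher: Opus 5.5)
Your proposal is correct and follows the paper's proof essentially step for step. It uses the same weights $\nu=\gamma n/3$ and $\rho=4\gamma$, and the same exact identification of the $\sqn{x^{t+1}-x^t}$ coefficient with $\gamma\,\Xi_{n,b_{\mathrm{grp}}}$, so no fallback is needed; it also uses the same contraction conditions $\gamma\mu n\le\frac12$ and $\gamma\mu\le\frac1{3n}$ and the same application of Lemma~\ref{lemma:good_recursion_pl} with $C=0$, where your explicit check that $\gamma<1/\mu$ is a detail the paper leaves implicit.
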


\begin{proof}[Proof of Theorem~\ref{theo:silage_pl_ngtm}]
Let us fix constants $\nu,\rho\in[0,\infty)$ that we will define later.
We start from the descent lemma for $L$-smooth functions, applied with $x^{t+1}=x^t-\gamma g^t$:
\begin{align}
f(x^{t+1})
&\leq f(x^t) + \frac{\gamma}{2}\sqn{g^{t}-\nabla f(x^t)}-\frac{\gamma}{2}\sqn{\nabla f(x^t)} \notag\\
&\quad + \left(\frac{L}{2}-\frac{1}{2\gamma}\right)\sqn{x^{t+1}-x^t}.
\label{eq:descent_pl_silage_ngtm}
\end{align}
Taking expectation and adding $\nu\,\Exp{\sqn{g^{t+1}-\nabla f(x^{t+1})}}$ and
$\rho\,\Exp{\frac{1}{n}\sum_{i=1}^n\sqn{g_i^{t+1}-\nabla f_i(x^{t+1})}}$ to both sides, and then using Lemma~\ref{lem:silage_variance_bounds_ngtm}, we obtain
\begin{align*}
&\Exp{f(x^{t+1})}
+\nu\,\Exp{\sqn{g^{t+1}-\nabla f(x^{t+1})}}\\
&\quad+\rho\,\Exp{\frac{1}{n}\sum_{i=1}^n\sqn{g_i^{t+1}-\nabla f_i(x^{t+1})}}\\
&\leq \Exp{f(x^t)}
-\frac{\gamma}{2}\Exp{\sqn{\nabla f(x^t)}}
+\frac{\gamma}{2}\Exp{\sqn{g^{t}-\nabla f(x^t)}}
+\left(\frac{L}{2}-\frac{1}{2\gamma}\right)\Exp{\sqn{x^{t+1}-x^t}}\\
&\quad+\nu\,\Exp{\begin{aligned}[t]
&\frac{n-2}{n}\sqn{g^t-\nabla f(x^{t})}
+\frac{2}{n^3}\sum_{i=1}^n \sqn{g_i^t-\nabla f_i(x^{t})}\\
&\quad+\frac{2(n-b_{\mathrm{grp}})}{b_{\mathrm{grp}}(n-1)}\delta_1^2\sqn{x^{t+1}-x^t}
+\frac{n^2-2n+b_{\mathrm{grp}}}{b_{\mathrm{grp}}n^2}\delta_2^2\sqn{x^{t+1}-x^t}
\end{aligned}}\\
&\quad+\rho\,\Exp{\begin{aligned}[t]
&\left(1-\frac{1}{2n}\right)\frac{1}{n}\sum_{i=1}^n
\sqn{g_i^t-\nabla f_i(x^{t})}\\
&\quad+\frac{2(n-b_{\mathrm{grp}})(n+1)}{n-1}\delta_1^2\sqn{x^{t+1}-x^t}
+\frac{n-2b_{\mathrm{grp}}+b_{\mathrm{grp}}^2}{b_{\mathrm{grp}}n}\delta_2^2\sqn{x^{t+1}-x^t}
\end{aligned}}.
\end{align*}
Rearranging the terms yields
\begin{align}
&\Exp{f(x^{t+1})}
+\nu\,\Exp{\sqn{g^{t+1}-\nabla f(x^{t+1})}}
\notag\\
&\quad+\rho\,\Exp{\frac{1}{n}\sum_{i=1}^n\sqn{g_i^{t+1}-\nabla f_i(x^{t+1})}}\notag\\
&\leq \Exp{f(x^t)}
-\frac{\gamma}{2}\Exp{\sqn{\nabla f(x^t)}} \notag\\
&\quad+\left(\frac{\gamma}{2}+\nu\frac{n-2}{n}\right)\Exp{\sqn{g^{t}-\nabla f(x^{t})}}\notag\\
&\quad+\left(\rho\left(1-\frac{1}{2n}\right)+\nu\frac{2}{n^2}\right)
\Exp{\frac{1}{n}\sum_{i=1}^n\sqn{g_i^{t}-\nabla f_i(x^{t})}}\notag\\
&\quad+\Bigg(\frac{L}{2}-\frac{1}{2\gamma}
+\nu\frac{2(n-b_{\mathrm{grp}})}{b_{\mathrm{grp}}(n-1)}\delta_1^2
+\nu\frac{n^2-2n+b_{\mathrm{grp}}}{b_{\mathrm{grp}}n^2}\delta_2^2
+\rho\frac{2(n-b_{\mathrm{grp}})(n+1)}{n-1}\delta_1^2
+\rho\frac{n-2b_{\mathrm{grp}}+b_{\mathrm{grp}}^2}{b_{\mathrm{grp}}n}\delta_2^2\Bigg)\notag\\
&\qquad\cdot\Exp{\sqn{x^{t+1}-x^t}}.
\label{eq:pl_silage_ngtm_prechoice}
\end{align}

We now choose
\begin{align}
\label{eq:nu_rho_choice_ngtm}
\nu \eqdef \frac{\gamma n}{3},
\qquad
\rho \eqdef 4\gamma.
\end{align}
With this choice, the coefficient of $\Exp{\sqn{x^{t+1}-x^t}}$ in~\eqref{eq:pl_silage_ngtm_prechoice} becomes
\begin{align*}
&\frac{L}{2}-\frac{1}{2\gamma}
+\nu\frac{2(n-b_{\mathrm{grp}})}{b_{\mathrm{grp}}(n-1)}\delta_1^2
+\nu\frac{n^2-2n+b_{\mathrm{grp}}}{b_{\mathrm{grp}}n^2}\delta_2^2
+\rho\frac{2(n-b_{\mathrm{grp}})(n+1)}{n-1}\delta_1^2
+\rho\frac{n-2b_{\mathrm{grp}}+b_{\mathrm{grp}}^2}{b_{\mathrm{grp}}n}\delta_2^2\\
&=\frac{L}{2}-\frac{1}{2\gamma}
+\gamma\Bigg(
\frac{2(n-b_{\mathrm{grp}})}{n-1}\left(\frac{n}{3b_{\mathrm{grp}}}+4(n+1)\right)\delta_1^2\\
&\qquad\qquad
+\frac{n^2+10n-23b_{\mathrm{grp}}+12b_{\mathrm{grp}}^2}{3b_{\mathrm{grp}}n}\delta_2^2
\Bigg)\\
&=\frac{L}{2}-\frac{1}{2\gamma}+\gamma\,\Xi_{n,b_{\mathrm{grp}}},
\end{align*}
where $\Xi_{n,b_{\mathrm{grp}}}$ is defined in~\eqref{eq:Xi_nb_def}.

Thus, the term is nonpositive whenever
\begin{align*}
L-\frac{1}{\gamma}+2\gamma\,\Xi_{n,b_{\mathrm{grp}}}\leq 0,
\end{align*}
that is, whenever
\begin{align*}
2\Xi_{n,b_{\mathrm{grp}}}\gamma^2+L\gamma\leq 1.
\end{align*}
By Lemma~\ref{lem:quadratic_step_bound}, applied with
\begin{align*}
a &\eqdef 2\Xi_{n,b_{\mathrm{grp}}},
\qquad
b \eqdef L,
\end{align*}
it therefore suffices to impose
\begin{align*}
\gamma \leq \left(L+\sqrt{2\Xi_{n,b_{\mathrm{grp}}}}\right)^{-1},
\end{align*}
which is exactly the first constraint in the stepsize condition of Theorem~\ref{theo:silage_pl_ngtm}.


Next, with the choice~\eqref{eq:nu_rho_choice_ngtm}, we claim that if $\gamma \leq \frac{1}{3\mu n}$, then
\begin{align}
\frac{\gamma}{2}+\nu\frac{n-2}{n} &\leq (1-\gamma\mu)\nu,
\label{eq:pl_coeff1_ngtm}\\
\rho\left(1-\frac{1}{2n}\right)+\nu\frac{2}{n^2} &\leq (1-\gamma\mu)\rho.
\label{eq:pl_coeff2_ngtm}
\end{align}
Indeed, for~\eqref{eq:pl_coeff1_ngtm},
\begin{align*}
\frac{\gamma}{2}+\nu\frac{n-2}{n}
&=\frac{\gamma}{2}+\nu-\frac{2\nu}{n}
=\frac{\gamma}{2}+\frac{\gamma n}{3}-\frac{2\gamma}{3}
=\frac{\gamma n}{3}-\frac{\gamma}{6}
=\nu-\frac{\gamma}{6}\\
&\leq \nu-\gamma^2\mu\,\frac{n}{3}
=(1-\gamma\mu)\nu,
\end{align*}
where the inequality holds whenever $\gamma\mu\leq \frac{1}{2n}$, which is implied by $\gamma\leq \frac{1}{3\mu n}$.
For~\eqref{eq:pl_coeff2_ngtm}, we have
\begin{align*}
\rho\left(1-\frac{1}{2n}\right)+\nu\frac{2}{n^2}
&=4\gamma\left(1-\frac{1}{2n}\right)+\frac{\gamma n}{3}\cdot\frac{2}{n^2}\\
&=4\gamma-\frac{2\gamma}{n}+\frac{2\gamma}{3n}
=4\gamma-\frac{4\gamma}{3n}
=\rho\left(1-\frac{1}{3n}\right)\\
&\leq \rho(1-\gamma\mu)
=(1-\gamma\mu)\rho,
\end{align*}
where the inequality is equivalent to $\gamma\mu\leq \frac{1}{3n}$.

Substituting~\eqref{eq:nu_rho_choice_ngtm},~\eqref{eq:pl_coeff1_ngtm}, and~\eqref{eq:pl_coeff2_ngtm}
into~\eqref{eq:pl_silage_ngtm_prechoice}, we obtain
\begin{align}
&\Exp{f(x^{t+1})}
+ \frac{\gamma n}{3}\Exp{\sqn{g^{t+1}-\nabla f(x^{t+1})}}
+ 4\gamma\,\Exp{\frac{1}{n}\sum_{i=1}^n\sqn{g_i^{t+1}-\nabla f_i(x^{t+1})}}\notag\\
&\leq \Exp{f(x^{t})}
-\frac{\gamma}{2}\Exp{\sqn{\nabla f(x^t)}} \notag\\
&\quad + (1-\gamma\mu)\left(
\frac{\gamma n}{3}\Exp{\sqn{g^{t}-\nabla f(x^{t})}}
+ 4\gamma\,\Exp{\frac{1}{n}\sum_{i=1}^n\sqn{g_i^{t}-\nabla f_i(x^{t})}}
\right).
\label{eq:pl_silage_recursion_ngtm}
\end{align}

Finally, define the nonnegative sequence
\begin{align}
\label{eq:psi_pl_silage_ngtm}
\Psi^t \eqdef \frac{n}{3}\Exp{\sqn{g^{t}-\nabla f(x^{t})}}
+ 4\,\Exp{\frac{1}{n}\sum_{i=1}^n\sqn{g_i^{t}-\nabla f_i(x^{t})}}.
\end{align}
Then~\eqref{eq:pl_silage_recursion_ngtm} reads
\begin{align*}
\Exp{f(x^{t+1})} + \gamma \Psi^{t+1}
\leq \Exp{f(x^{t})} -\frac{\gamma}{2}\Exp{\sqn{\nabla f(x^t)}} + (1-\gamma\mu)\gamma \Psi^{t}.
\end{align*}
In view of Lemma~\ref{lemma:good_recursion_pl} (with $C=0$), we conclude that for every $T\geq 0$,
\begin{align*}
\Exp{f(x^{T})-\fstar} \leq (1-\gamma\mu)^{T}\left((f(x^0)-\fstar)+\gamma\Psi^0\right).
\end{align*}
Substituting~\eqref{eq:psi_pl_silage_ngtm} at $t=0$ yields~\eqref{eq:silage_pl_rate_ngtm}.
\end{proof}


\subsection{Optimizing the batch size $b_{\mathrm{grp}}$}\label{sec:app-opt-bgrp}
The preceding convergence bound in the regime $n>m$ depends on the minibatch size $b_{\mathrm{grp}}$.
This raises a practical question: which value of $b_{\mathrm{grp}}$ should be used to minimize the total number of component-gradient evaluations?
In this section, we optimize the bound from Theorem~\ref{theo2} with respect to $b_{\mathrm{grp}}$.

We use the upper bound on the total number of component-gradient evaluations as the criterion for choosing $b_{\mathrm{grp}}$.
Assume that $g_i^0=\nabla f_i(x^0)$ for all $i\in[n]$, so that
\begin{align*}
\Psi^0=\Delta_0 \eqdef f(x^0)-\finf.
\end{align*}
Let $\mathrm{IFO}(b_{\mathrm{grp}})$ denote the total number of component-gradient evaluations $\nabla f_{i,j}$ needed to obtain an $\epsilon$-stationary point.

By Theorem~\ref{theo2}, with
\begin{align*}
A_1(b_{\mathrm{grp}}) &\eqdef \frac{(n-b_{\mathrm{grp}})(5n+4)}{n-1},
\qquad
A_2(b_{\mathrm{grp}}) \eqdef \frac{n^2+2n-7b_{\mathrm{grp}}+4b_{\mathrm{grp}}^2}{2b_{\mathrm{grp}}n},
\end{align*}
it is sufficient to take
\begin{align*}
T+1
=
\cO\rb{
\rb{
L+\sqrt{\delta_1^2 A_1(b_{\mathrm{grp}})+\delta_2^2 A_2(b_{\mathrm{grp}})}
}\frac{\Delta_0}{\epsilon}
}.
\end{align*}
Furthermore, each iteration computes $m+2b_{\mathrm{grp}}-1$ component gradients (and $m+2b_{\mathrm{grp}}-2$ when $b_{\mathrm{grp}}=n$, which changes the final complexity only by an additive constant). Since the initialization $g_i^0=\nabla f_i(x^0)$ costs $nm$ component gradients, we obtain
\begin{align}
\label{eq:ifo_ngtm_exact_b}
\mathrm{IFO}(b_{\mathrm{grp}})
=
\cO\rb{
nm +
\rb{m+2b_{\mathrm{grp}}-1}
\rb{
L+\sqrt{\delta_1^2 A_1(b_{\mathrm{grp}})+\delta_2^2 A_2(b_{\mathrm{grp}})}
}
\frac{\Delta_0}{\epsilon}
}.
\end{align}

We now simplify the dependence on $b_{\mathrm{grp}}$. Since $n\geq 2$,
\begin{align*}
5(n-b_{\mathrm{grp}})\leq A_1(b_{\mathrm{grp}})\leq 14(n-b_{\mathrm{grp}}),
\end{align*}
because $\frac{5n+4}{n-1}\in[5,14]$ for all $n\geq 2$. Also,
\begin{align*}
A_2(b_{\mathrm{grp}})
=
\frac{n}{2b_{\mathrm{grp}}}+\frac{1}{b_{\mathrm{grp}}}-\frac{7}{2n}+\frac{2b_{\mathrm{grp}}}{n}.
\end{align*}
For $1\leq b_{\mathrm{grp}}\leq n$, the last three terms are jointly nonnegative. Over a common denominator,
\begin{align*}
\frac{1}{b_{\mathrm{grp}}}-\frac{7}{2n}+\frac{2b_{\mathrm{grp}}}{n}
=
\frac{2b_{\mathrm{grp}}^2-\frac{7}{2}b_{\mathrm{grp}}+n}{b_{\mathrm{grp}}n}.
\end{align*}
As a function of $b_{\mathrm{grp}}$, the numerator $2b_{\mathrm{grp}}^2-\frac{7}{2}b_{\mathrm{grp}}+n$ is a convex parabola with vertex at $b_{\mathrm{grp}}=\frac{7}{8}<1$, hence nondecreasing on $[1,n]$; its minimum there is attained at $b_{\mathrm{grp}}=1$ and equals $n-\frac{3}{2}\geq\frac{1}{2}>0$, using $n\geq 2$. Therefore $\frac{1}{b_{\mathrm{grp}}}-\frac{7}{2n}+\frac{2b_{\mathrm{grp}}}{n}>0$.
Hence
\begin{align*}
A_2(b_{\mathrm{grp}})\geq \frac{n}{2b_{\mathrm{grp}}}.
\end{align*}
On the other hand, because $b_{\mathrm{grp}}\leq n$,
\begin{align*}
A_2(b_{\mathrm{grp}})
&\leq \frac{n}{2b_{\mathrm{grp}}}+\frac{1}{b_{\mathrm{grp}}}+2
\leq \frac{n}{2b_{\mathrm{grp}}}+\frac{n}{b_{\mathrm{grp}}}+\frac{2n}{b_{\mathrm{grp}}}
= \frac{7n}{2b_{\mathrm{grp}}}.
\end{align*}
Therefore,
\begin{align*}
A_1(b_{\mathrm{grp}})=\Theta(n-b_{\mathrm{grp}}),
\qquad
A_2(b_{\mathrm{grp}})=\Theta\!\left(\frac{n}{b_{\mathrm{grp}}}\right).
\end{align*}
Since also $m+2b_{\mathrm{grp}}-1=\Theta(m+b_{\mathrm{grp}})$ and $\sqrt{u+v}=\Theta(\sqrt{u}+\sqrt{v})$ for all $u,v\geq 0$, \eqref{eq:ifo_ngtm_exact_b} becomes
\begin{align}
\label{eq:ifo_ngtm_simplified_b}
\mathrm{IFO}(b_{\mathrm{grp}})
=
\Theta\left(
nm +
\Phi(b_{\mathrm{grp}})\frac{\Delta_0}{\epsilon}
\right),
\qquad
\Phi(b_{\mathrm{grp}})\eqdef (m+b_{\mathrm{grp}})\left(
L+\delta_1\sqrt{n-b_{\mathrm{grp}}}+\delta_2\sqrt{\frac{n}{b_{\mathrm{grp}}}}
\right).
\end{align}
Thus, up to absolute constants, minimizing the total oracle complexity is equivalent to minimizing the one-dimensional function $\Phi(b_{\mathrm{grp}})$ over $b_{\mathrm{grp}}\in[n]$, i.e.,
\begin{align}
\label{eq:b_star_ngtm}
b_{\mathrm{grp}}^\star \in \argmin_{b_{\mathrm{grp}}\in[n]} \Phi(b_{\mathrm{grp}}).
\end{align}

The continuous relaxation already shows that the optimal batch size is parameter dependent. Indeed, for $b_{\mathrm{grp}}\in[1,n]$,
\begin{align*}
\Phi'(b_{\mathrm{grp}})
=
L
+\delta_1\frac{2n-m-3b_{\mathrm{grp}}}{2\sqrt{n-b_{\mathrm{grp}}}}
+\delta_2\frac{\sqrt n\,(b_{\mathrm{grp}}-m)}{2b_{\mathrm{grp}}^{3/2}}.
\end{align*}
Hence there is no universal closed-form choice of $b_{\mathrm{grp}}$ that minimizes all terms simultaneously. This is also clear from the three contributions in \eqref{eq:ifo_ngtm_simplified_b}:
\begin{align*}
\Phi_L(b_{\mathrm{grp}}) &\eqdef (m+b_{\mathrm{grp}})L,\\
\Phi_{\delta_1}(b_{\mathrm{grp}}) &\eqdef (m+b_{\mathrm{grp}})\sqrt{n-b_{\mathrm{grp}}}\,\delta_1,\\
\Phi_{\delta_2}(b_{\mathrm{grp}}) &\eqdef (m+b_{\mathrm{grp}})\sqrt{\frac{n}{b_{\mathrm{grp}}}}\,\delta_2.
\end{align*}
The smoothness term $\Phi_L(b_{\mathrm{grp}})$ is minimized by the smallest possible batch size. The $\delta_2$-term is minimized exactly at $b_{\mathrm{grp}}=m$, because
\begin{align}
\label{eq:delta2_term_minimized_at_m}
(m+b_{\mathrm{grp}})\sqrt{\frac{n}{b_{\mathrm{grp}}}}
=
\sqrt n\left(\sqrt b_{\mathrm{grp}}+\frac{m}{\sqrt b_{\mathrm{grp}}}\right)
\geq 2\sqrt{nm},
\end{align}
with equality if and only if $b_{\mathrm{grp}}=m$. Finally, the $\delta_1$-term is completely eliminated by taking $b_{\mathrm{grp}}=n$.

Two particularly relevant choices are therefore $b_{\mathrm{grp}}=m$ and $b_{\mathrm{grp}}=n$. If $b_{\mathrm{grp}}=m$, then \eqref{eq:ifo_ngtm_simplified_b} yields
\begin{align}
\label{eq:ifo_ngtm_b_eq_m}
\mathrm{IFO}(m)
=
\cO\rb{
nm +
\rb{
mL + m\sqrt{n-m}\,\delta_1 + \sqrt{nm}\,\delta_2
}\frac{\Delta_0}{\epsilon}
},
\end{align}
where we used \eqref{eq:delta2_term_minimized_at_m}. If $b_{\mathrm{grp}}=n$, then $A_1(n)=0$ and
\begin{align*}
A_2(n)=\frac{n^2+2n-7n+4n^2}{2n^2}
=\frac{5}{2}\left(1-\frac{1}{n}\right),
\end{align*}
so \eqref{eq:ifo_ngtm_exact_b} gives
\begin{align}
\label{eq:ifo_ngtm_b_eq_n}
\mathrm{IFO}(n)
=
\cO\rb{
nm +
n\rb{L+\delta_2}\frac{\Delta_0}{\epsilon}
}.
\end{align}

Therefore, choosing $b_{\mathrm{grp}}=n$ is \emph{not} optimal in general. At the level of the upper bounds \eqref{eq:ifo_ngtm_b_eq_m}--\eqref{eq:ifo_ngtm_b_eq_n}, the choice $b_{\mathrm{grp}}=n$ can only improve upon $b_{\mathrm{grp}}=m$ in the regime
\begin{align*}
m\sqrt{n-m}\,\delta_1
=
\Omega\left(
(n-m)L + (n-\sqrt{nm})\delta_2
\right),
\end{align*}
that is, when the benefit of removing the $\delta_1$-term completely outweighs the larger $n$-scale smoothness cost and the larger $\delta_2$-contribution. Otherwise, the balanced choice $b_{\mathrm{grp}}=m$ is preferable. In summary, the optimal batch size in the case $n>m$ is the minimizer $b_{\mathrm{grp}}^\star$ in \eqref{eq:b_star_ngtm}, and the explicit choices $b_{\mathrm{grp}}=m$ and $b_{\mathrm{grp}}=n$ capture the two main regimes of interest. Rather than reporting a separate ablation, the batch-size grid search of Appendix~\ref{sec:hyperparameter_tuning} navigates this tradeoff---per-iteration cost $\cO\rb{m+b_{\mathrm{grp}}}$ growing with $b_{\mathrm{grp}}$, the $\delta_2$-term minimized at $b_{\mathrm{grp}}=m$, and the $\delta_1$-term removed only at $b_{\mathrm{grp}}=n$---empirically, selecting the $b_{\mathrm{grp}}$ that minimizes oracle complexity on each instance.


\newpage
\section{Supplementary Experimental Details}
\label{sec:experiments_similarity_logreg}

This appendix extends the experimental evaluation presented in Section~\ref{sec:experiments}, providing comprehensive details on the dataset generation process, the formal Hessian interpretation of the nested similarity parameters, and our estimation protocols.

Our empirical evaluation is organized around the nested finite-sum structure
\begin{align}
\label{eq:exp_dbl_sum}
f(x)\eqdef \frac{1}{n}\sum_{i=1}^n f_i(x),
\qquad
f_i(x)\eqdef \frac{1}{m}\sum_{j=1}^m f_{i,j}(x),
\end{align}
in which the partition into $n$ groups of size $m$ is \emph{chosen by the experimenter}. Our theory predicts that optimization is highly sensitive to this choice: it is beneficial to group the data so that the within-group similarity is high (small $\delta_2$), while the influence of across-group similarity ($\delta_1$) depends on the dimensional regime ($m \ge n$ versus $n > m$). The rest of this appendix specifies the logistic-regression task and the synthetic data used in our experiments, derives the Hessian-based characterization of $\delta_1$ and $\delta_2$, describes both theoretical and empirical procedures for estimating $L$, $\delta_1$, and $\delta_2$, and reports hardware and hyperparameter-tuning details.

\subsection{Binary logistic regression: loss, gradient, Hessian, and data geometry}
\label{sec:exp_logreg_hessian}

Our empirical study uses binary logistic regression augmented with a smooth nonconvex regularizer. Each component takes the form
\begin{align}
\label{eq:exp_logreg_component}
f_{i,j}(x)
\;\eqdef\;
\ell(x;a_{i,j},y_{i,j}) + \lambda\, r(x),
\qquad
\ell(x;a,y)\eqdef \log\bigl(1+\exp(-y\,a^\top x)\bigr),
\end{align}
where $y_{i,j}\in\{\pm 1\}$ are the labels and $a_{i,j}\in\R^d$ are the feature vectors. The regularizer $r:\R^d\to\R$ introduces nonconvexity while remaining twice continuously differentiable:
\begin{align}
\label{eq:exp_noncvx_reg}
r(x)\eqdef \sum_{\ell=1}^d \frac{x_\ell^2}{1+x_\ell^2}.
\end{align}

Letting $\sigma(t)\eqdef (1+e^{-t})^{-1}$ denote the standard sigmoid, the gradient and Hessian of the unregularized logistic loss are
\begin{align}
\nabla \ell(x;a,y)
&= -y\,a\,\sigma(-y\,a^\top x),
\\
\nabla^2 \ell(x;a,y)
&= \alpha(x;a,y)\,a a^\top,
\qquad
\alpha(x;a,y)\eqdef \sigma(z)\bigl(1-\sigma(z)\bigr),
\quad
z\eqdef -y\,a^\top x,
\end{align}
so that $0<\alpha(x;a,y)\le \tfrac{1}{4}$ for all $x$, $a$, and $y$. For the regularizer~\eqref{eq:exp_noncvx_reg}, the coordinate-wise gradient and Hessian are
\begin{align}
\label{eq:exp_noncvx_reg_grad_hess}
\rb{\nabla r(x)}_\ell
= \frac{2x_\ell}{(1+x_\ell^2)^2},
\qquad
\nabla^2 r(x)
= \Diag\!\rb{\frac{2(1-3x_\ell^2)}{(1+x_\ell^2)^3}}_{\ell=1}^d.
\end{align}
Combining these contributions, the full component gradient and Hessian are
\begin{align}
\nabla f_{i,j}(x)
&= -y_{i,j}\,a_{i,j}\,\sigma(-y_{i,j}\,a_{i,j}^\top x) + \lambda \nabla r(x),
\\
\label{eq:exp_logreg_hess}
\nabla^2 f_{i,j}(x)
&= \alpha_{i,j}(x)\,a_{i,j}a_{i,j}^\top + \lambda \nabla^2 r(x),
\qquad
0<\alpha_{i,j}(x)\le \tfrac{1}{4}.
\end{align}

Two structural properties of~\eqref{eq:exp_logreg_hess} drive the rest of the analysis. First, the data-dependent term is a scaled rank-one matrix whose magnitude is governed by the feature norm, $\opnorm{a a^\top}=\sqn{a}$. Second, although the regularizer is nonconvex and $\nabla^2 r(x)$ depends on $x$, it is \emph{identical across all components} in~\eqref{eq:exp_logreg_component}. Consequently, the regularizer contribution cancels exactly in every nested Hessian difference, such as $\nabla^2 f_{i,j}(x)-\nabla^2 f_i(x)$ and $\nabla^2 f_i(x)-\nabla^2 f(x)$. Therefore, the similarity parameters $\delta_1$ and $\delta_2$ are governed entirely by the underlying \emph{data geometry} (through the rank-one terms) and by the chosen grouping partition.

As a concrete illustration, within a fixed group $i$ the deviation of an individual sample's Hessian from the local group average reads
\begin{align}
\label{eq:exp_within_group_hess_diff}
\nabla^2 f_{i,j}(x)-\nabla^2 f_i(x)
&= \alpha_{i,j}(x)\,a_{i,j}a_{i,j}^\top
-\frac{1}{m}\sum_{k=1}^m \alpha_{i,k}(x)\,a_{i,k}a_{i,k}^\top.
\end{align}
This isolates the precise structural quantity that drives the Hessian-based characterization of similarity established in Lemma~\ref{lem:hessian_characterization_delta}.

\subsection{Partition regimes and synthetic datasets}
\label{subsec:partition-regimes}

To empirically validate our theory, we generate synthetic grouped datasets that natively respect the nested finite-sum structure of~\eqref{eq:exp_dbl_sum}. We investigate two aspect ratios, $(n,m)=(50,250)$ and $(n,m)=(250,50)$, corresponding to the $m\ge n$ and $n>m$ regimes, respectively.

For each aspect ratio, we construct four datasets that realize all four quadrants of the qualitative similarity space:
\begin{align*}
    &(\delta_1 \text{ small}, \delta_2 \text{ small}),\qquad
    (\delta_1 \text{ small}, \delta_2 \text{ large}),\qquad
    (\delta_1 \text{ large}, \delta_2 \text{ small}),\qquad
    (\delta_1 \text{ large}, \delta_2 \text{ large}).
\end{align*}
The datasets are generated directly in their grouped form, so a sample $(i,j)$ belongs to group $i$ by construction and no post-hoc clustering step is required. To evaluate methods that operate on a single-level finite sum, the same data are also stored in flattened format.

The base task is the binary logistic regression with the nonconvex regularizer $r(x) \eqdef \sum_{\ell=1}^d x_\ell^2 / (1+x_\ell^2)$ introduced in~\eqref{eq:exp_logreg_component}. For each feature--label pair $(a_{i,j},y_{i,j})\in \R^d\times\{-1,+1\}$, the component loss is
\begin{align*}
    f_{i,j}(x)
    &=
    \log\bigl(1+\exp(-y_{i,j}a_{i,j}^{\top}x)\bigr)
    +
    \lambda\, r(x),
    \qquad \lambda=200.
\end{align*}
Because the same regularizer $r(x)$ is added to every component, it cancels exactly in the Hessian differences used to define $\delta_1$ and $\delta_2$, while continuing to contribute to the global and maximum smoothness constants $L$ and $L_{\max}$.

The feature vectors $a_{i,j}$ are generated using a hierarchical latent-mixture model. We define $K$ latent components partitioned evenly into $T$ macro-families, such that each family contains $q=K/T$ components. We let
\begin{align*}
    t(k)&\eqdef 1+\left\lfloor\frac{k-1}{q}\right\rfloor,
    \qquad k\in[K],
\end{align*}
denote the macro-family index of component $k$. We sample $T+K$ orthonormal directions
\begin{align*}
    v_1,\ldots,v_T,u_1,\ldots,u_K\in\R^d
\end{align*}
and establish the cluster centers as:
\begin{align*}
    c_r &\eqdef r_{\mathrm{inter}}v_r,
    \qquad
    \mu_k \eqdef c_{t(k)} + r_{\mathrm{intra}}u_k .
\end{align*}
The scalar $r_{\mathrm{inter}}$ dictates the separation between distinct macro-families, while $r_{\mathrm{intra}}$ controls the variance among latent components within the same macro-family.

Each group $i \in [n]$ is assigned a prototype mixture vector $\beta_i\in\R^K$. To prevent degenerate zero probabilities when sampling, we smooth this prototype using a small constant $\varepsilon_{\mathrm{dir}}$:
\begin{align*}
    \widetilde\beta_i
    &=
    (1-\varepsilon_{\mathrm{dir}})\beta_i
    +
    \varepsilon_{\mathrm{dir}}\frac{\mathbf{1}}{K}.
\end{align*}
The data points are then sampled hierarchically:
\begin{align*}
    \pi_i&\sim \mathrm{Dirichlet}(\alpha_{\mathrm{dir}}\widetilde\beta_i),
    \qquad
    z_{i,j}\sim \mathrm{Categorical}(\pi_i),
    \qquad
    a_{i,j}\sim \mathcal{N}(\mu_{z_{i,j}},\sigma_{\mathrm{obs}}^2 I_d).
\end{align*}
All datasets are generated in dimension $d=1000$. The $(n,m)=(50,250)$ configuration uses seed $42$, and the $(n,m)=(250,50)$ configuration uses seed $142$.

Labels are generated via a Gaussian teacher logistic model. The teacher direction $x_\star$ is normalized and rescaled by the inverse median absolute raw logit to ensure class probabilities do not degenerate. Labels are sampled according to:
\begin{align*}
    \mathbb{P}(y_{i,j}=1\mid a_{i,j})
    &=
    \frac{1}{1+\exp(-a_{i,j}^{\top}x_\star)}.
\end{align*}
If the empirical positive-class fraction falls outside the $[0.3, 0.7]$ range, the teacher direction is resampled.

To formalize the four structural regimes, let $e_k$ denote the $k$-th standard basis vector in $\R^K$, and define the base indices:
\begin{align*}
    a_r&\eqdef (r-1)q+1,
    \qquad
    s(i)\eqdef 1+((i-1)\bmod T).
\end{align*}
By manipulating only the prototype mixture assignments $\beta_i$ and the geometric scalars, we construct the four distinct data profiles:

\paragraph{Small $\delta_1$, Small $\delta_2$.}
All groups share the same tightly concentrated prototype:
\begin{align*}
    \beta_i &= 0.95 e_1 + 0.05 e_2,
    \qquad \forall i\in[n].
\end{align*}
Consequently, the groups are mutually similar ($\delta_1$ is small), and each group is internally coherent ($\delta_2$ is small).

\paragraph{Small $\delta_1$, Large $\delta_2$.}
All groups share the same broad prototype, activating one component per macro-family:
\begin{align*}
    \beta_i
    &=
    \frac1T\sum_{r=1}^T e_{a_r},
    \qquad \forall i\in[n].
\end{align*}
Because the mixture is identical across groups, $\delta_1$ remains small; however, because each group mixes broadly separated macro-families, $\delta_2$ is fundamentally large.

\paragraph{Large $\delta_1$, Small $\delta_2$.}
Groups are assigned strictly different macro-family identities, yet each group remains concentrated exclusively inside its assigned family:
\begin{align*}
    \beta_i
    &=
    0.95 e_{a_{s(i)}} + 0.05 e_{a_{s(i)}+1}.
\end{align*}
Thus, different groups represent distinct statistical populations (large $\delta_1$), while remaining internally homogeneous (small $\delta_2$).

\paragraph{Large $\delta_1$, Large $\delta_2$.}
For this highly heterogeneous regime, we scale the dimensions to $K=64$, $T=8$, and $q=8$. We define:
\begin{align*}
    \gamma_r
    &\eqdef
    \frac12 e_{a_r}+\frac12 e_{a_r+4},
    \qquad r\in[T],
\end{align*}
and construct cyclic family-pair prototypes:
\begin{align*}
    \beta^{(r)}
    &\eqdef
    0.7\gamma_r + 0.3\gamma_{r+1},
    \qquad \gamma_{T+1}\eqdef \gamma_1.
\end{align*}
Each group is assigned $\beta_i=\beta^{(s(i))}$. This ensures each group maintains a distinct macro-family anchor (driving $\delta_1$ high), while simultaneously mixing widely separated components (driving $\delta_2$ high).

\begin{table}[t]
    \centering
    \small
    \setlength{\tabcolsep}{4pt}
    \begin{tabular}{llrrrrrr}
        \toprule
        \textbf{Setting} & \textbf{Regime} & $K$ & $T$ & $\alpha_{\rm dir}$ & $\varepsilon_{\rm dir}$ & $\sigma_{\rm obs}$ & $(r_{\rm inter},r_{\rm intra})$ \\
        \midrule
        $m\ge n$ & $(\delta_1 \text{ small}, \delta_2 \text{ small})$ & 32 & 4 & 2000 & $10^{-4}$ & 0.01 & $(8,0.02)$ \\
        $m\ge n$ & $(\delta_1 \text{ small}, \delta_2 \text{ large})$ & 32 & 4 & 1000 & $10^{-3}$ & 0.02 & $(25,0.5)$ \\
        $m\ge n$ & $(\delta_1 \text{ large}, \delta_2 \text{ small})$ & 32 & 4 & 1000 & $10^{-4}$ & 0.01 & $(25,0.02)$ \\
        $m\ge n$ & $(\delta_1 \text{ large}, \delta_2 \text{ large})$ & 64 & 8 & 1000 & $10^{-3}$ & 0.02 & $(30,0.5)$ \\
        \midrule
        $n>m$ & $(\delta_1 \text{ small}, \delta_2 \text{ small})$ & 32 & 4 & 4000 & $10^{-4}$ & 0.005 & $(8,0.01)$ \\
        $n>m$ & $(\delta_1 \text{ small}, \delta_2 \text{ large})$ & 32 & 4 & 1500 & $10^{-3}$ & 0.02 & $(30,0.5)$ \\
        $n>m$ & $(\delta_1 \text{ large}, \delta_2 \text{ small})$ & 32 & 4 & 1500 & $10^{-4}$ & 0.005 & $(30,0.01)$ \\
        $n>m$ & $(\delta_1 \text{ large}, \delta_2 \text{ large})$ & 64 & 8 & 1500 & $10^{-3}$ & 0.02 & $(30,0.5)$ \\
        \bottomrule
    \end{tabular}
    \caption{Generation parameters for the synthetic grouped logistic regression datasets.}
    \label{tab:synthetic-generation-params}
\end{table}


\subsubsection{Interpretation of the controlled regimes}
\label{subsubsec:synthetic-regime-interpretation}

We now explain why the construction above produces the intended relative magnitudes of $\delta_1$ and $\delta_2$. The discussion relies on the Hessian interpretation of the gradient-difference similarity assumptions (formalized in Lemma~\ref{lem:hessian_characterization_delta} below). For the logistic-regression components used here, let
\begin{align*}
    H_{i,j}(x) &\eqdef \nabla^2 f_{i,j}(x), \qquad
    H_i(x) \eqdef \frac1m\sum_{j=1}^m H_{i,j}(x), \qquad
    H(x) \eqdef \frac1n\sum_{i=1}^n H_i(x).
\end{align*}
Writing
\begin{align*}
    \alpha_{i,j}(x)
    &\eqdef
    \sigma(-y_{i,j}a_{i,j}^{\top}x)
    \bigl(1-\sigma(-y_{i,j}a_{i,j}^{\top}x)\bigr),
\end{align*}
with $\sigma(t)=(1+\exp(-t))^{-1}$, we have
\begin{align*}
    H_{i,j}(x)
    &=
    \alpha_{i,j}(x)a_{i,j}a_{i,j}^{\top}
    +
    \lambda \nabla^2 r(x).
\end{align*}
Since $\lambda\nabla^2 r(x)$ is identical across components, it cancels in both $H_{i,j}(x)-H_i(x)$ and $H_i(x)-H(x)$; hence the empirical similarities $\delta_1$ and $\delta_2$ are controlled exclusively by the geometry of the rank-one logistic terms $\alpha_{i,j}(x)a_{i,j}a_{i,j}^{\top}$. The regularizer does, however, contribute to the smoothness constants. Indeed,
\begin{align*}
    \nabla^2 r(x)
    &=
    \Diag\!\rb{
    \frac{2(1-3x_\ell^2)}{(1+x_\ell^2)^3}
    }_{\ell=1}^d,
    \qquad
    \opnorm{\nabla^2 r(x)}\le 2,
\end{align*}
so the choice $\lambda=200$ adds a smoothness contribution of order $400$ while leaving the nested similarity estimates untouched.

To connect this finite-sample construction to the similarity assumptions, fix $h=x-y$. By the fundamental theorem of calculus,
\begin{align*}
    &\nabla f_i(x)-\nabla f(x)-\nabla f_i(y)+\nabla f(y) \\
    &\qquad =
    \int_0^1
    \bigl(H_i(y+\tau h)-H(y+\tau h)\bigr)h
    \,d\tau,
\end{align*}
and similarly
\begin{align*}
    &\nabla f_{i,j}(x)-\nabla f_i(x)-\nabla f_{i,j}(y)+\nabla f_i(y) \\
    &\qquad =
    \int_0^1
    \bigl(H_{i,j}(y+\tau h)-H_i(y+\tau h)\bigr)h
    \,d\tau.
\end{align*}
Therefore, small average deviations of the group Hessians $H_i$ from the global Hessian $H$ produce a small $\delta_1$, while small average deviations of the component Hessians $H_{i,j}$ from the group Hessian $H_i$ produce a small $\delta_2$.

At the population level, this mechanism is transparent. Let $M_k(x)$ denote the average logistic Hessian contribution of latent component $k$ at point $x$ (excluding the common regularizer). Conditional on the group mixture $\pi_i$, the population analogue of the group Hessian has the form
\begin{align*}
    \overline H_i(x)
    &\approx
    \sum_{k=1}^K \pi_{i,k} M_k(x)
    +
    \lambda\nabla^2 r(x).
\end{align*}
Writing $\overline\pi\eqdef \frac1n\sum_{i=1}^n \pi_i$, the between-group deviation is governed by
\begin{align*}
    \overline H_i(x)-\overline H(x)
    &\approx
    \sum_{k=1}^K
    (\pi_{i,k}-\overline\pi_k)M_k(x),
\end{align*}
whereas the within-group dispersion is governed by
\begin{align*}
    \sum_{k=1}^K \pi_{i,k}
    \opnorm{
    M_k(x)-\sum_{\ell=1}^K \pi_{i,\ell}M_\ell(x)
    }^2 .
\end{align*}
Hence $\delta_1$ is small when the group mixtures $\pi_i$ are similar across $i$ and large when different groups place mass on different latent families. Analogously, $\delta_2$ is small when each $\pi_i$ concentrates on one local component (or on nearby components) and large when each $\pi_i$ spreads mass over well-separated components.

The Dirichlet concentration parameter $\alpha_{\mathrm{dir}}$ controls how closely each $\pi_i$ tracks its prototype $\beta_i$. Since
\begin{align*}
    \mathbb{E}[\pi_i] &= \widetilde\beta_i,\\
    \operatorname{Var}(\pi_{i,k})
    &=
    \frac{\widetilde\beta_{i,k}(1-\widetilde\beta_{i,k})}
    {\alpha_{\mathrm{dir}}+1},
\end{align*}
large values of $\alpha_{\mathrm{dir}}$ force the realized mixtures $\pi_i$ to concentrate around the prescribed prototypes. The small smoothing parameter $\varepsilon_{\mathrm{dir}}$ only prevents degenerate zero entries in the Dirichlet parameters and has negligible effect on the intended geometry.

We now interpret each of the four regimes in turn.

\paragraph{Small $\delta_1$, Small $\delta_2$.}
All groups share the same tightly concentrated prototype
\begin{align*}
    \beta_i = 0.95e_1+0.05e_2.
\end{align*}
Up to small Dirichlet fluctuations, every group then has the same mixture; hence $H_i(x)$ is close to $H(x)$ for every $i$, and $\delta_1$ is small. The prototype is itself nearly pure, and the geometry parameters use very small observation noise and within-family radius,
\begin{align*}
    \sigma_{\rm obs}\in\{0.01,0.005\},
    \qquad
    r_{\rm intra}\in\{0.02,0.01\}.
\end{align*}
Consequently, samples within each group concentrate around almost the same latent component, so the matrices $H_{i,j}(x)$ stay close to their local average $H_i(x)$ and $\delta_2$ is small. The large values of $\alpha_{\rm dir}$ and the very small $\varepsilon_{\rm dir}=10^{-4}$ further suppress accidental mixture variability.

\paragraph{Small $\delta_1$, Large $\delta_2$.}
All groups again share the same prototype, but the prototype is now broad,
\begin{align*}
    \beta_i
    =
    \frac1T\sum_{r=1}^T e_{a_r}.
\end{align*}
Because the same mixture is used for every group, the group Hessians $H_i(x)$ remain close to each other and to $H(x)$, so $\delta_1$ stays small. Each group, however, now mixes one active component from every macro-family. The macro-family centers are separated at scale
\begin{align*}
    r_{\rm inter}=25
    \quad\text{for } m\ge n,
    \qquad
    r_{\rm inter}=30
    \quad\text{for } n>m,
\end{align*}
and the directions defining the macro-centers are orthonormal. For components belonging to different macro-families, the corresponding rank-one matrices are therefore substantially different: at the level of the idealized centers, if $r\neq r'$, the matrices $c_rc_r^\top$ and $c_{r'}c_{r'}^\top$ have operator-norm separation of order $r_{\rm inter}^2$. As a result, within a single group, the component Hessians $H_{i,j}(x)$ fluctuate strongly around $H_i(x)$, producing a large $\delta_2$. The concentration values $\alpha_{\rm dir}\in\{1000,1500\}$ keep this broad mixture stable across groups, preventing $\delta_1$ from becoming comparable to $\delta_2$.

\paragraph{Large $\delta_1$, Small $\delta_2$.}
The group prototype now depends on the assigned macro-family $s(i)$:
\begin{align*}
    \beta_i
    =
    0.95e_{a_{s(i)}}+0.05e_{a_{s(i)}+1}.
\end{align*}
Different groups are thus anchored to different macro-families. Since the macro-centers are separated by $r_{\rm inter}\in\{25,30\}$, the corresponding group-level Hessian averages are far apart, and the terms $H_i(x)-H(x)$ become large on average; this is the mechanism producing a large $\delta_1$.

At the same time, each group is nearly pure within its assigned family, with very small within-family radius and observation noise,
\begin{align*}
    r_{\rm intra}\in\{0.02,0.01\},
    \qquad
    \sigma_{\rm obs}\in\{0.01,0.005\}.
\end{align*}
Therefore, although different groups are far from one another, samples inside a fixed group concentrate around a single local component. Hence $H_{i,j}(x)$ stays close to $H_i(x)$ within each group and $\delta_2$ remains small. This regime cleanly isolates between-group heterogeneity from within-group heterogeneity.

\paragraph{Large $\delta_1$, Large $\delta_2$.}
The last regime combines the two mechanisms above. We enlarge the latent dictionary to $K=64$ and the number of macro-families to $T=8$, giving more distinct family identities. For each family we define
\begin{align*}
    \gamma_r
    =
    \frac12 e_{a_r}+\frac12 e_{a_r+4},
\end{align*}
so that $\gamma_r$ already mixes two distinct components inside macro-family $r$. The group prototypes are
\begin{align*}
    \beta^{(r)}
    &=
    0.7\gamma_r + 0.3\gamma_{r+1},
    \qquad \gamma_{T+1}=\gamma_1,\\
    \beta_i
    &=
    \beta^{(s(i))}.
\end{align*}
The dominant term $0.7\gamma_{s(i)}$ gives each group a distinct macro-family anchor: since $s(i)$ varies across groups, the group Hessians $H_i(x)$ have different dominant macro-family contributions, which drives $\delta_1$ large. The secondary term $0.3\gamma_{s(i)+1}$ guarantees that every group also mixes an additional separated macro-family. Together with $r_{\rm inter}=30$ and $r_{\rm intra}=0.5$, this yields substantial variation among the component Hessians inside each group, keeping $\delta_2$ large as well.

This regime is deliberately more structured than a symmetric broad mixture. A fully symmetric broad mixture would make every group internally diverse but too similar to every other group, yielding large $\delta_2$ but only moderate $\delta_1$. The cyclic family-anchored construction avoids this by assigning each group a different dominant family while retaining internal diversity through the secondary family and the two-component mixtures $\gamma_r$.

\subsection{Estimating $L$, $\delta_1$, $\delta_2$ in theory}
\label{sec:exp_estimating_deltas_theory}

We first record an explicit upper bound on the Lipschitz smoothness constant of the logistic regression objective~\eqref{eq:prob} obtained directly from the component Hessian in~\eqref{eq:exp_logreg_hess}. Because the regularizer is nonconvex, the relevant quantity for smoothness is the operator norm of the Hessian, not only a Loewner upper bound. We therefore bound $\opnorm{\nabla^2 f(x)}$ term by term. The logistic part of the component Hessian is positive semidefinite, and since $\alpha_{i,j}(x)\le \tfrac{1}{4}$ for all $x$, $i$, and $j$, averaging over the $nm$ samples and using monotonicity of the operator norm on PSD matrices gives
\begin{align*}
\opnorm{\frac{1}{nm}\sum_{i=1}^n\sum_{j=1}^m \alpha_{i,j}(x)\,a_{i,j}a_{i,j}^\top}
\;\le\;
\opnorm{\frac{1}{4nm}\,\mathbf{A}^\top \mathbf{A}}
\;=\;
\lambda_{\max}\!\rb{\frac{1}{4nm}\,\mathbf{A}^\top \mathbf{A}},
\end{align*}
where $\mathbf{A}\in\R^{nm\times d}$ is the design matrix stacking all feature vectors $a_{i,j}^\top$ as rows. The regularizer Hessian in~\eqref{eq:exp_noncvx_reg_grad_hess} is indefinite but bounded in operator norm: its diagonal entries $2(1-3x_\ell^2)/(1+x_\ell^2)^3$ lie in $[-1/2,\,2]$, so $\opnorm{\nabla^2 r(x)}\le 2$. Combining the two contributions via the triangle inequality for the operator norm yields
\begin{align*}
\opnorm{\nabla^2 f(x)}
\;\le\;
\lambda_{\max}\!\rb{\frac{1}{4nm}\,\mathbf{A}^\top \mathbf{A}}
+ 2\lambda,
\end{align*}
and a valid theoretical choice for the smoothness constant is therefore
\begin{align*}
L \;=\; \lambda_{\max}\!\rb{\frac{1}{4nm}\,\mathbf{A}^\top \mathbf{A}} + 2\lambda.
\end{align*}

For the similarity constants $\delta_1$ and $\delta_2$, a closed-form expression of this kind is not available. Instead, we provide an equivalent characterization of the gradient-difference conditions~\eqref{eq:exp_delta1}--\eqref{eq:exp_delta2} as operator-norm constraints on stacked Hessian deviations. This characterization both clarifies how the data-generation mechanisms of Section~\ref{subsubsec:synthetic-regime-interpretation} translate into the magnitudes of $\delta_1$ and $\delta_2$, and motivates the empirical estimation protocol of Section~\ref{sec:exp_estimating_deltas}.

\subsubsection{Hessian characterization of $\delta_1$- and $\delta_2$-similarity}
\label{sec:exp_similarity_hessian}

Recall the gradient-difference similarity conditions: there exist $\delta_1,\delta_2\ge 0$ such that for all $x,y\in\R^d$,
\begin{align}
\label{eq:exp_delta1}
\frac{1}{n}\sum_{i=1}^n \sqn{\nabla f_i(x)-\nabla f(x)-\nabla f_i(y)+\nabla f(y)} &\le \delta_1^2\sqn{x-y}, \\
\label{eq:exp_delta2}
\frac{1}{nm}\sum_{i=1}^n\sum_{j=1}^m \sqn{\nabla f_{i,j}(x)-\nabla f_i(x)-\nabla f_{i,j}(y)+\nabla f_i(y)} &\le \delta_2^2\sqn{x-y}.
\end{align}

Because these conditions are stated in terms of gradient differences, it is useful to connect them to second-order variability when the component losses are twice continuously differentiable. The following lemma provides an equivalent characterization in terms of Hessian deviations, which directly guides how we estimate $(\delta_1,\delta_2)$ in practice.

\begin{lemma}[Hessian characterization of $\delta_1$- and $\delta_2$-similarity]
\label{lem:hessian_characterization_delta}
Let $f(x)\eqdef \frac{1}{n}\sum_{i=1}^n f_i(x)$ and
$f_i(x)\eqdef \frac{1}{m}\sum_{j=1}^m f_{i,j}(x)$.

\smallskip
\noindent
\textbf{(i) $\delta_1$-similarity.}
Assume that $f_i$ is twice continuously differentiable for all $i\in[n]$.
For each $x\in\R^d$, define the linear operator
$\mathcal{A}_1(x):\R^d\to(\R^d)^n$ by
\begin{align}
    \mathcal{A}_1(x)s
    \eqdef
    \frac{1}{\sqrt n}
    \big(
    (\nabla^2 f_i(x)-\nabla^2 f(x))s
    \big)_{i=1}^n .
    \label{eq:A1_definition}
\end{align}
Then \eqref{eq:exp_delta1} holds if and only if
\begin{align}
    \opnorm{\mathcal{A}_1(x)}\le \delta_1,
    \qquad
    \forall x\in\R^d .
    \label{eq:hess_delta1_stacked_opnorm}
\end{align}
Equivalently,
\begin{align}
    \lambda_{\max}\rb{
    \frac1n\sum_{i=1}^n
    \rb{\nabla^2 f_i(x)-\nabla^2 f(x)}^\top
    \rb{\nabla^2 f_i(x)-\nabla^2 f(x)}
    }
    \le \delta_1^2,
    \qquad
    \forall x\in\R^d .
    \label{eq:hess_delta1_spectral}
\end{align}

\smallskip
\noindent
\textbf{(ii) $\delta_2$-similarity.}
Assume that $f_{i,j}$ is twice continuously differentiable for all
$i\in[n]$ and $j\in[m]$.
For each $x\in\R^d$, define the linear operator
$\mathcal{A}_2(x):\R^d\to(\R^d)^{nm}$ by
\begin{align}
    \mathcal{A}_2(x)s
    \eqdef
    \frac{1}{\sqrt{nm}}
    \big(
    (\nabla^2 f_{i,j}(x)-\nabla^2 f_i(x))s
    \big)_{i\in[n],\,j\in[m]} .
    \label{eq:A2_definition}
\end{align}
Then \eqref{eq:exp_delta2} holds if and only if
\begin{align}
    \opnorm{\mathcal{A}_2(x)}\le \delta_2,
    \qquad
    \forall x\in\R^d .
    \label{eq:hess_delta2_stacked_opnorm}
\end{align}
Equivalently,
\begin{align}
    \lambda_{\max}\rb{
    \frac1{nm}\sum_{i=1}^n\sum_{j=1}^m
    \rb{\nabla^2 f_{i,j}(x)-\nabla^2 f_i(x)}^\top
    \rb{\nabla^2 f_{i,j}(x)-\nabla^2 f_i(x)}
    }
    \le \delta_2^2,
    \qquad
    \forall x\in\R^d .
    \label{eq:hess_delta2_spectral}
\end{align}
\end{lemma}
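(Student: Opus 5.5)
Both parts have the same structure, so I will prove (i) in detail and obtain (ii) by the same argument with $(f_i,f)$ replaced by $(f_{i,j},f_i)$ and the normalization $1/\sqrt n$ replaced by $1/\sqrt{nm}$. Write $D_i(z)\eqdef\nabla^2 f_i(z)-\nabla^2 f(z)$. The spectral reformulation \eqref{eq:hess_delta1_spectral} is immediate from the definition of $\mathcal{A}_1(x)$:
\begin{align*}
\sqn{\mathcal{A}_1(x)s}=\frac1n\sum_{i=1}^n\sqn{D_i(x)s}=s^\top\rb{\frac1n\sum_{i=1}^n D_i(x)^\top D_i(x)}s .
\end{align*}
Hence $\opnorm{\mathcal{A}_1(x)}^2=\lambda_{\max}\rb{\mathcal{A}_1(x)^\top\mathcal{A}_1(x)}$ is exactly the left-hand side of \eqref{eq:hess_delta1_spectral}. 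It therefore remains to prove that \eqref{eq:exp_delta1} is equivalent to \eqref{eq:hess_delta1_stacked_opnorm}.

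\emph{Sufficiency ($\Leftarrow$).} Fix $x,y$ and set $h=x-y$. By the fundamental theorem of calculus (valid since each $f_i$ is $C^2$),
\begin{align*}
\rb{\nabla f_i(x)-\nabla f(x)-\nabla f_i(y)+\nabla f(y)}_{i=1}^n=\sqrt n\int_0^1\mathcal{A}_1(y+\tau h)h\,d\tau .
\end{align*}
This is an identity in the Hilbert space $(\R^d)^n$ equipped with the norm $\sum_i\sqn{\cdot}$. Applying Minkowski's integral inequality (equivalently Jensen, using convexity of the norm) bounds the norm of the left side by $\sqrt n\int_0^1\opnorm{\mathcal{A}_1(y+\tau h)}\norm{h}\,d\tau\le\sqrt n\,\delta_1\norm{h}$. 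Squaring and dividing by $n$ gives \eqref{eq:exp_delta1}.

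\emph{Necessity ($\Rightarrow$).} Fix $x$ and a direction $s$, and apply \eqref{eq:exp_delta1} to the pair $(x+ts,\,x)$ for $t>0$. Dividing by $t^2$ yields
\begin{align*}
\frac1n\sum_i\sqn{\frac{\nabla f_i(x+ts)-\nabla f_i(x)}{t}-\frac{\nabla f(x+ts)-\nabla f(x)}{t}}\le\delta_1^2\sqn{s}.
\end{align*}
As $t\to0^+$, each difference quotient converges to $\nabla^2 f_i(x)s$ and $\nabla^2 f(x)s$ respectively, by twice differentiability. Since the sum is finite, the left side converges to $\sqn{\mathcal{A}_1(x)s}$, so $\sqn{\mathcal{A}_1(x)s}\le\delta_1^2\sqn s$ for every $s$, which is \eqref{eq:hess_delta1_stacked_opnorm}.

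The only step requiring any care is the vector-valued integral bound in the sufficiency direction. I handle it either by Minkowski's inequality in the product space, or concretely by Jensen's inequality applied coordinatewise before summing, and I note that continuity of the Hessians makes the integrand continuous, hence integrable. The rest is routine, and part (ii) follows verbatim with $\mathcal{A}_2$.
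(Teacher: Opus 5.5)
Your proof is correct and follows the same route as the paper: the fundamental theorem of calculus with a Jensen/Minkowski bound in the stacked space $(\R^d)^n$ for sufficiency, and difference quotients along $x+ts$ with $t\downarrow 0$ for necessity, with (ii) obtained verbatim. The only addition is that you explicitly verify the spectral reformulation through $\opnorm{\mathcal{A}_1(x)}^2=\lambda_{\max}\bigl(\mathcal{A}_1(x)^\top\mathcal{A}_1(x)\bigr)$, a step the paper leaves implicit.
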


\begin{proof}
We prove the two equivalences separately.

\paragraph{(i) $\delta_1$-similarity.}
Define the centered gradients
\begin{align}
    G_i(x)\eqdef \nabla f_i(x)-\nabla f(x).
    \notag
\end{align}
Then \eqref{eq:exp_delta1} is equivalent to
\begin{align}
    \frac1n\sum_{i=1}^n
    \sqn{G_i(x)-G_i(y)}
    \le
    \delta_1^2\sqn{x-y},
    \qquad
    \forall x,y\in\R^d .
    \label{eq:delta1_G_form}
\end{align}

\smallskip
\noindent
\emph{($\Rightarrow$)}
Fix $x\in\R^d$, $s\in\R^d$, and $\alpha>0$, and set
$y=x+\alpha s$. Applying \eqref{eq:delta1_G_form} gives
\begin{align}
    \frac1n\sum_{i=1}^n
    \sqn{
    \frac{G_i(x+\alpha s)-G_i(x)}{\alpha}}
    \le
    \delta_1^2\sqn{s}.
    \label{eq:delta1_scaled_difference}
\end{align}
Since each $f_i$ is twice continuously differentiable,
\begin{align}
    \frac{G_i(x+\alpha s)-G_i(x)}{\alpha}
    \xrightarrow[\alpha\downarrow 0]{}
    \rb{\nabla^2 f_i(x)-\nabla^2 f(x)}s .
    \notag
\end{align}
Taking $\alpha\downarrow 0$ in \eqref{eq:delta1_scaled_difference} yields
\begin{align}
    \frac1n\sum_{i=1}^n
    \sqn{
    \rb{\nabla^2 f_i(x)-\nabla^2 f(x)}s}
    \le
    \delta_1^2\sqn{s},
    \qquad
    \forall x,s\in\R^d .
    \label{eq:delta1_directional_hessian}
\end{align}
By the definition of $\mathcal{A}_1(x)$, this is exactly
$\sqn{\mathcal{A}_1(x)s}\le \delta_1^2\sqn{s}$ for all $s$,
which is equivalent to $\opnorm{\mathcal{A}_1(x)}\le \delta_1$.

\smallskip
\noindent
\emph{($\Leftarrow$)}
Assume \eqref{eq:hess_delta1_stacked_opnorm}. Fix $x,y\in\R^d$
and set $s=y-x$. By the fundamental theorem of calculus,
\begin{align}
    G_i(y)-G_i(x)
    =
    \int_0^1
    \rb{\nabla^2 f_i(x+\tau s)-\nabla^2 f(x+\tau s)}s
    \,d\tau .
    \notag
\end{align}
Stacking the vectors and using Jensen's inequality in the product space
$(\R^d)^n$, we obtain
\begin{align}
    \frac1n\sum_{i=1}^n \sqn{G_i(y)-G_i(x)}
    &=
    \sqn{
    \int_0^1
    \mathcal{A}_1(x+\tau s)s
    \,d\tau}
    \notag
    \\
    &\le
    \int_0^1
    \sqn{\mathcal{A}_1(x+\tau s)s}
    \,d\tau
    \notag
    \\
    &\le
    \int_0^1
    \delta_1^2\sqn{s}
    \,d\tau
    =
    \delta_1^2\sqn{y-x}.
    \notag
\end{align}
This is \eqref{eq:exp_delta1}.

\paragraph{(ii) $\delta_2$-similarity.}
Define the within-group centered gradients
\begin{align}
    G_{i,j}(x)\eqdef \nabla f_{i,j}(x)-\nabla f_i(x).
    \notag
\end{align}
Then \eqref{eq:exp_delta2} is equivalent to
\begin{align}
    \frac1{nm}\sum_{i=1}^n\sum_{j=1}^m
    \sqn{G_{i,j}(x)-G_{i,j}(y)}
    \le
    \delta_2^2\sqn{x-y},
    \qquad
    \forall x,y\in\R^d .
    \label{eq:delta2_H_form}
\end{align}

\smallskip
\noindent
\emph{($\Rightarrow$)}
Fix $x\in\R^d$, $s\in\R^d$, and $\alpha>0$, and set
$y=x+\alpha s$. Applying \eqref{eq:delta2_H_form} gives
\begin{align}
    \frac1{nm}\sum_{i=1}^n\sum_{j=1}^m
    \sqn{
    \frac{G_{i,j}(x+\alpha s)-G_{i,j}(x)}{\alpha}}
    \le
    \delta_2^2\sqn{s}.
    \label{eq:delta2_scaled_difference}
\end{align}
Since the functions $f_{i,j}$ are twice continuously differentiable,
\begin{align}
    \frac{G_{i,j}(x+\alpha s)-G_{i,j}(x)}{\alpha}
    \xrightarrow[\alpha\downarrow 0]{}
    \rb{\nabla^2 f_{i,j}(x)-\nabla^2 f_i(x)}s .
    \notag
\end{align}
Taking $\alpha\downarrow 0$ in \eqref{eq:delta2_scaled_difference} yields
\begin{align}
    \frac1{nm}\sum_{i=1}^n\sum_{j=1}^m
    \sqn{
    \rb{\nabla^2 f_{i,j}(x)-\nabla^2 f_i(x)}s}
    \le
    \delta_2^2\sqn{s},
    \qquad
    \forall x,s\in\R^d .
    \label{eq:delta2_directional_hessian}
\end{align}
By the definition of $\mathcal{A}_2(x)$, this is exactly
$\sqn{\mathcal{A}_2(x)s}\le \delta_2^2\sqn{s}$ for all $s$,
which is equivalent to $\opnorm{\mathcal{A}_2(x)}\le \delta_2$.

\smallskip
\noindent
\emph{($\Leftarrow$)}
Assume \eqref{eq:hess_delta2_stacked_opnorm}. Fix $x,y\in\R^d$
and set $s=y-x$. By the fundamental theorem of calculus,
\begin{align}
    G_{i,j}(y)-G_{i,j}(x)
    =
    \int_0^1
    \rb{\nabla^2 f_{i,j}(x+\tau s)-\nabla^2 f_i(x+\tau s)}s
    \,d\tau .
    \notag
\end{align}
Stacking the vectors and using Jensen's inequality in the product space
$(\R^d)^{nm}$, we obtain
\begin{align}
    \frac1{nm}\sum_{i=1}^n\sum_{j=1}^m
    \sqn{G_{i,j}(y)-G_{i,j}(x)}
    &=
    \sqn{
    \int_0^1
    \mathcal{A}_2(x+\tau s)s
    \,d\tau}
    \notag
    \\
    &\le
    \int_0^1
    \sqn{\mathcal{A}_2(x+\tau s)s}
    \,d\tau
    \notag
    \\
    &\le
    \int_0^1
    \delta_2^2\sqn{s}
    \,d\tau
    =
    \delta_2^2\sqn{y-x}.
    \notag
\end{align}
This is \eqref{eq:exp_delta2}.
\end{proof}

\paragraph{Practical takeaway.}
Lemma~\ref{lem:hessian_characterization_delta} reduces the task of understanding $(\delta_1,\delta_2)$ to quantifying \emph{Hessian variability} across groups (for $\delta_1$) and within groups (for $\delta_2$). This perspective guides both the partition design described above and the empirical measurement protocol introduced next.

\subsection{Estimating $L$, $\delta_1$, $\delta_2$ in practice}
\label{sec:exp_estimating_deltas}

We estimate the smoothness and similarity constants directly on the logistic-regression instances used in the experiments. For a data point $(a_{i,j},y_{i,j})$, the logistic part of the component Hessian has the form
\begin{align}
    H^{\rm log}_{i,j}(x)
    &=
    \alpha_{i,j}(x)\,a_{i,j}a_{i,j}^{\top},
    \qquad
    \alpha_{i,j}(x)
    =
    \sigma(-y_{i,j}a_{i,j}^{\top}x)
    \bigl(1-\sigma(-y_{i,j}a_{i,j}^{\top}x)\bigr),
\end{align}
so that $0\leq \alpha_{i,j}(x)\leq 1/4$. The nonconvex regularizer contributes the same diagonal Hessian $\lambda\nabla^2 r(x)$ to every component, where
\begin{align}
    \nabla^2 r(x)
    =
    \Diag\!\rb{
        \frac{2(1-3x_\ell^2)}{(1+x_\ell^2)^3}
    }_{\ell=1}^d .
\end{align}
This regularization term therefore contributes to the smoothness constants but cancels exactly in the Hessian differences that define $\delta_1$ and $\delta_2$.

We report two complementary families of estimates: \emph{data-only worst-case surrogates}, obtained by replacing every $\alpha_{i,j}(x)$ with its uniform upper bound $1/4$, and \emph{empirical constants}, computed on a finite probe set drawn from the optimization trajectory. The empirical constants are the ones actually used for hyperparameter tuning in the main experiments.

\paragraph{Data-only worst-case surrogates.}
These quantities are independent of the optimization trajectory: they use only $\alpha_{i,j}(x)\leq 1/4$ and $\opnorm{\nabla^2 r(x)}\leq 2$. Specifically,
\begin{align}
    L_{\max}^{\rm wc}
    &\eqdef
    \max_{i,j}
    \rb{
        \tfrac{1}{4}\sqn{a_{i,j}} + 2\lambda
    }, \\
    L^{\rm wc}
    &\eqdef
    \frac{1}{4}\,
    \opnorm{
        \frac{1}{nm}\sum_{i=1}^n\sum_{j=1}^m
        a_{i,j}a_{i,j}^{\top}
    }
    + 2\lambda,
\end{align}
and the data-only surrogate similarity constants are
\begin{align}
    \delta_{1,{\rm wc}}
    &\eqdef
    \sqrt{
    \frac{1}{n}\sum_{i=1}^n
    \opnorm{
        \frac{1}{4m}\sum_{j=1}^m a_{i,j}a_{i,j}^{\top}
        -
        \frac{1}{4nm}\sum_{r=1}^n\sum_{s=1}^m
        a_{r,s}a_{r,s}^{\top}
    }^2
    }, \\
    \delta_{2,{\rm wc}}
    &\eqdef
    \sqrt{
    \frac{1}{nm}\sum_{i=1}^n\sum_{j=1}^m
    \opnorm{
        \frac{1}{4}a_{i,j}a_{i,j}^{\top}
        -
        \frac{1}{4m}\sum_{k=1}^m a_{i,k}a_{i,k}^{\top}
    }^2
    } .
\end{align}
These quantities are independent of the optimization trajectory and provide a useful scale for the geometry of the design matrices. Because they replace the logistic weights $\alpha_{i,j}(x)$ by $1/4$ \emph{before} centering, they should be interpreted as conservative diagnostics rather than as exact global upper bounds on the best similarity constants $\delta_1$, $\delta_2$.

\paragraph{Empirical constants on the optimization region.}
The constants used to tune the main experiments are empirical estimates computed on a finite probe set. Starting from $x^0=0$, we run $20$ full-gradient steps,
\begin{align}
    x^{s+1}=x^s-\eta_{\rm probe}\nabla f(x^s),
    \qquad s=0,\ldots,19,
\end{align}
and collect
\begin{align}
    \mathcal{X}_{\rm probe}
    =
    \{x^0,x^{s_1},\ldots,x^{s_5}\},
\end{align}
where $s_1,\ldots,s_5$ are five evenly spaced indices along this short trajectory. The probe stepsize is set from the precomputed worst-case smoothness estimate as
\begin{align}
    \eta_{\rm probe}=\frac{1}{2L^{\rm wc}}.
\end{align}

The empirical constants are then defined as the maxima of the relevant quantities over $\mathcal{X}_{\rm probe}$,
\begin{align}
    \widehat L
    &\eqdef
    \max_{x\in\mathcal{X}_{\rm probe}}
    \opnorm{\nabla^2 f(x)}, \\
    \widehat L_{\max}
    &\eqdef
    \max_{x\in\mathcal{X}_{\rm probe}}
    \max_{i,j}
    \opnorm{\nabla^2 f_{i,j}(x)}, \\
    \widehat\delta_1
    &\eqdef
    \max_{x\in\mathcal{X}_{\rm probe}}
    \sqrt{
    \frac{1}{n}\sum_{i=1}^n
    \opnorm{\nabla^2 f_i(x)-\nabla^2 f(x)}^2
    }, \\
    \widehat\delta_2
    &\eqdef
    \max_{x\in\mathcal{X}_{\rm probe}}
    \sqrt{
    \frac{1}{nm}\sum_{i=1}^n\sum_{j=1}^m
    \opnorm{\nabla^2 f_{i,j}(x)-\nabla^2 f_i(x)}^2
    } .
\end{align}
The empirical quantities $\widehat{\delta}_1$ and $\widehat{\delta}_2$ reported below are conservative operator-norm diagnostics: they average individual operator norms and therefore upper-bound the corresponding stacked-operator norms characterized in Lemma~\ref{lem:hessian_characterization_delta}. We use them as practical tuning proxies, not as exact evaluations of the best similarity constants.

For $\widehat L_{\max}$ we exploit the rank-one form of the component Hessian, which yields the bound
\begin{align}
    \opnorm{\nabla^2 f_{i,j}(x)}
    \leq
    \alpha_{i,j}(x)\sqn{a_{i,j}}
    +
    \lambda\max_{\ell}
    \left|
        \frac{2(1-3x_\ell^2)}{(1+x_\ell^2)^3}
    \right| .
\end{align}

\paragraph{Matrix-free operator-norm estimation.}
All operator norms are computed matrix-free, through Hessian--vector products. For instance,
\begin{align}
    H^{\rm log}_{i,j}(x)\,v
    &=
    \alpha_{i,j}(x)\,a_{i,j}(a_{i,j}^{\top}v), \\
    H^{\rm log}_{i}(x)\,v
    &=
    \frac{1}{m}\sum_{j=1}^m
    \alpha_{i,j}(x)\,a_{i,j}(a_{i,j}^{\top}v), \\
    H^{\rm log}(x)\,v
    &=
    \frac{1}{nm}\sum_{i=1}^n\sum_{j=1}^m
    \alpha_{i,j}(x)\,a_{i,j}(a_{i,j}^{\top}v).
\end{align}
For smoothness estimates of $L$ and $L_{\max}$, the Hessian--vector product includes the diagonal regularizer contribution $\lambda\nabla^2 r(x)\,v$. For similarity estimates, the same regularizer is shared by all components and cancels in the centered Hessian differences, so the corresponding Hessian-difference products use only the logistic-loss part, e.g.\ $(H_i^{\rm log}(x)-H^{\rm log}(x))\,v$ and $(H_{i,j}^{\rm log}(x)-H_i^{\rm log}(x))\,v$.

Each operator norm was estimated by power iteration. We used $20$ power iterations for both the smoothness and the similarity diagnostics, with tolerances $10^{-5}$ for smoothness and $10^{-4}$ for similarity estimates. The $\delta_2$ diagnostic enumerates all $nm$ components and estimates the norm of each $H_{i,j}^{\rm log}(x)-H_i^{\rm log}(x)$ separately; batching is used only for memory efficiency and does not subsample the components.

The resulting empirical constants $\widehat L$, $\widehat L_{\max}$, $\widehat\delta_1$, and $\widehat\delta_2$ used for the synthetic experiments (rounded to two decimal places) are reported in Table~\ref{tab:synthetic-regimes} of Section~\ref{sec:experiments}.

\subsection{Hardware and implementation details}

All experiments were implemented in Python 3.12.8 using PyTorch 2.5.1 (CUDA 11.8 backend, cuDNN 9.1.0). Execution was performed on a workstation featuring dual Intel Xeon Gold 6246 CPUs (24 physical cores, 48 logical threads) alongside an NVIDIA GeForce RTX 3090 (24\,GB) and an RTX 2080 Ti (12\,GB).


\subsection{Hyperparameter tuning}
\label{sec:hyperparameter_tuning}

\subsubsection{Stepsizes}

For all methods in the main synthetic comparison, we used the largest theoretically admissible constant stepsize. Concretely, after fixing the final batch size from the batch-tuning stage (and, for \algname{D-ZeroSARAH}, the final pair $(s,b)$ of client subset size and local batch size), we substituted these values into the theoretical stepsize expressions. \algname{SILAGE}$(m>n)$ involves no batch-size tuning; for \algname{SILAGE}$(n>m)$ we plug in the selected batch size $b_{\mathrm{grp}}$; for flattened \algname{ZeroSARAH} and \algname{SILVER} we plug in the selected flat minibatch size $b$; and for \algname{D-ZeroSARAH} we plug in the selected pair $(s,b)$. For flattened \algname{ZeroSARAH} and \algname{D-ZeroSARAH}, whose finer-grid runs did not reach the target tolerance within the $40$-epoch budget, the selected batch size (or pair $(s,b)$) is the one chosen by the fallback rule described below, namely the configuration achieving the smallest final squared gradient norm at the end of the $40$-epoch budget. The resulting formulas are summarized in Table~\ref{tab:stepsize-formulas}, and the corresponding numerical values are listed in Table~\ref{tab:stepsize-values}. All numerical values are reported rounded to six significant digits.

\begin{table}[t]
\centering
\small
\setlength{\tabcolsep}{5pt}
\caption{Theoretical constant stepsize formulas used in the synthetic experiments. Here $N=nm$ denotes the total sample size, $b$ the minibatch size, and $s$ the client subset size.}
\label{tab:stepsize-formulas}
\begin{tabular}{ll}
\toprule
\textbf{Method} & \textbf{Stepsize formula} \\
\midrule
\algname{SILAGE}$(m>n)$ &
$\gamma = \rb{L + \delta_2 \sqrt{\frac{n-p}{np}}}^{-1}, \qquad p=\frac{n}{m}$ \\[0.75em]

\algname{SILAGE}$(n>m)$ &
$\gamma = \rb{L + \sqrt{\delta_1^2 \frac{(n-b_{\mathrm{grp}})(5n+4)}{n-1}
+ \delta_2^2 \frac{n^2+2n-7b_{\mathrm{grp}}+4b_{\mathrm{grp}}^2}{2b_{\mathrm{grp}}n}}}^{-1}$ \\[0.75em]

\algname{ZeroSARAH} &
$\gamma = \rb{L_{\max}\rb{1+\frac{\sqrt{8N}}{b}}}^{-1}$ \\[0.75em]

\algname{SILVER} &
$\gamma = \min\!\left\{\frac{1}{L_{\max}}, \frac{b}{\delta_{\mathrm{flat}}\sqrt{N}}\right\}$ \\[0.75em]

\algname{D-ZeroSARAH} &
$\gamma = \rb{L_{\max}\rb{1+\frac{\sqrt{8nm}}{sb}}}^{-1}$ \\
\bottomrule
\end{tabular}
\end{table}

\begin{longtable}{lllccc}
\caption{Numerical constant stepsizes used in the synthetic experiments, obtained by evaluating the formulas of Table~\ref{tab:stepsize-formulas} at the final selected batch size $b$ and, where applicable, client subset size $s$. A dash (---) indicates that the corresponding hyperparameter is not used by the method.}
\label{tab:stepsize-values}\\
\toprule
\textbf{Method} & \textbf{Setting} & \textbf{Regime} & $b$ & $s$ & $\gamma$ \\
\midrule
\endfirsthead

\toprule
\textbf{Method} & \textbf{Setting} & \textbf{Regime} & $b$ & $s$ & $\gamma$ \\
\midrule
\endhead

\bottomrule
\endfoot

\algname{SILAGE}$(m>n)$ & $m>n$ & $(\delta_1 \text{ small}, \delta_2 \text{ small})$ & --- & --- & $2.490563 \times 10^{-3}$ \\
\algname{SILAGE}$(m>n)$ & $m>n$ & $(\delta_1 \text{ small}, \delta_2 \text{ large})$ & --- & --- & $1.492335 \times 10^{-3}$ \\
\algname{SILAGE}$(m>n)$ & $m>n$ & $(\delta_1 \text{ large}, \delta_2 \text{ small})$ & --- & --- & $2.408654 \times 10^{-3}$ \\
\algname{SILAGE}$(m>n)$ & $m>n$ & $(\delta_1 \text{ large}, \delta_2 \text{ large})$ & --- & --- & $1.579522 \times 10^{-3}$ \\
\midrule
\algname{SILAGE}$(n>m)$ & $n>m$ & $(\delta_1 \text{ small}, \delta_2 \text{ small})$ & 6 & --- & $2.486395 \times 10^{-3}$ \\
\algname{SILAGE}$(n>m)$ & $n>m$ & $(\delta_1 \text{ small}, \delta_2 \text{ large})$ & 1 & --- & $4.107040 \times 10^{-4}$ \\
\algname{SILAGE}$(n>m)$ & $n>m$ & $(\delta_1 \text{ large}, \delta_2 \text{ small})$ & 1 & --- & $1.562372 \times 10^{-4}$ \\
\algname{SILAGE}$(n>m)$ & $n>m$ & $(\delta_1 \text{ large}, \delta_2 \text{ large})$ & 1 & --- & $1.943185 \times 10^{-4}$ \\
\midrule
\algname{ZeroSARAH} & $m>n$ & $(\delta_1 \text{ small}, \delta_2 \text{ small})$ & 192 & --- & $9.07760 \times 10^{-4}$ \\
\algname{ZeroSARAH} & $m>n$ & $(\delta_1 \text{ small}, \delta_2 \text{ large})$ & 192 & --- & $6.77773 \times 10^{-4}$ \\
\algname{ZeroSARAH} & $m>n$ & $(\delta_1 \text{ large}, \delta_2 \text{ small})$ & 11  & --- & $6.03772 \times 10^{-5}$ \\
\algname{ZeroSARAH} & $m>n$ & $(\delta_1 \text{ large}, \delta_2 \text{ large})$ & 11  & --- & $5.36743 \times 10^{-5}$ \\
\algname{ZeroSARAH} & $n>m$ & $(\delta_1 \text{ small}, \delta_2 \text{ small})$ & 192 & --- & $9.07937 \times 10^{-4}$ \\
\algname{ZeroSARAH} & $n>m$ & $(\delta_1 \text{ small}, \delta_2 \text{ large})$ & 31  & --- & $1.42545 \times 10^{-4}$ \\
\algname{ZeroSARAH} & $n>m$ & $(\delta_1 \text{ large}, \delta_2 \text{ small})$ & 31  & --- & $1.42782 \times 10^{-4}$ \\
\algname{ZeroSARAH} & $n>m$ & $(\delta_1 \text{ large}, \delta_2 \text{ large})$ & 31  & --- & $1.42554 \times 10^{-4}$ \\
\midrule
\algname{SILVER} & $m>n$ & $(\delta_1 \text{ small}, \delta_2 \text{ small})$ & 46  & --- & $2.402861 \times 10^{-3}$ \\
\algname{SILVER} & $m>n$ & $(\delta_1 \text{ small}, \delta_2 \text{ large})$ & 128 & --- & $1.794076 \times 10^{-3}$ \\
\algname{SILVER} & $m>n$ & $(\delta_1 \text{ large}, \delta_2 \text{ small})$ & 96  & --- & $1.796170 \times 10^{-3}$ \\
\algname{SILVER} & $m>n$ & $(\delta_1 \text{ large}, \delta_2 \text{ large})$ & 96  & --- & $1.596704 \times 10^{-3}$ \\
\algname{SILVER} & $n>m$ & $(\delta_1 \text{ small}, \delta_2 \text{ small})$ & 46  & --- & $2.403326 \times 10^{-3}$ \\
\algname{SILVER} & $n>m$ & $(\delta_1 \text{ small}, \delta_2 \text{ large})$ & 128 & --- & $1.596628 \times 10^{-3}$ \\
\algname{SILVER} & $n>m$ & $(\delta_1 \text{ large}, \delta_2 \text{ small})$ & 96  & --- & $1.599284 \times 10^{-3}$ \\
\algname{SILVER} & $n>m$ & $(\delta_1 \text{ large}, \delta_2 \text{ large})$ & 50  & --- & $1.596730 \times 10^{-3}$ \\
\midrule
\algname{D-ZeroSARAH} & $m>n$ & $(\delta_1 \text{ small}, \delta_2 \text{ small})$ & 6 & 1  & $4.47422 \times 10^{-5}$ \\
\algname{D-ZeroSARAH} & $m>n$ & $(\delta_1 \text{ small}, \delta_2 \text{ large})$ & 1 & 31 & $1.60172 \times 10^{-4}$ \\
\algname{D-ZeroSARAH} & $m>n$ & $(\delta_1 \text{ large}, \delta_2 \text{ small})$ & 1 & 31 & $1.60359 \times 10^{-4}$ \\
\algname{D-ZeroSARAH} & $m>n$ & $(\delta_1 \text{ large}, \delta_2 \text{ large})$ & 1 & 31 & $1.42552 \times 10^{-4}$ \\
\algname{D-ZeroSARAH} & $n>m$ & $(\delta_1 \text{ small}, \delta_2 \text{ small})$ & 1 & 41 & $2.75836 \times 10^{-4}$ \\
\algname{D-ZeroSARAH} & $n>m$ & $(\delta_1 \text{ small}, \delta_2 \text{ large})$ & 1 & 96 & $3.71824 \times 10^{-4}$ \\
\algname{D-ZeroSARAH} & $n>m$ & $(\delta_1 \text{ large}, \delta_2 \text{ small})$ & 1 & 96 & $3.72454 \times 10^{-4}$ \\
\algname{D-ZeroSARAH} & $n>m$ & $(\delta_1 \text{ large}, \delta_2 \text{ large})$ & 1 & 96 & $3.71847 \times 10^{-4}$ \\
\end{longtable}

\subsubsection{Batch-size tuning}
We used method- and setting-specific grids for batch-size tuning. No batch-size tuning was performed for \algname{SILAGE}$(m>n)$. For \algname{SILAGE}$(n>m)$ in the $n>m$ setting, we searched over
\begin{equation*}
\begin{aligned}
\{ &1,\, 6,\, 11,\, 16,\, 21,\, 26,\, 31,\, 36,\, 41,\, 46,\, 51,\, 56,\, 61,\, 66,\, 71,\, 76,\, 81,\, 86,\, 91,\, 96,\, \\
   &101,\, 106,\, 111,\, 116,\, 121,\, 126,\, 131,\, 136,\, 141,\, 146,\, 151,\, 156,\, 161,\, 166,\, 171,\, 176,\, \\
   &181,\, 186,\, 191,\, 196,\, 201,\, 206,\, 211,\, 216,\, 221,\, 226,\, 231,\, 236,\, 241,\, 246,\, 250 \}.
\end{aligned}
\end{equation*}
For flattened \algname{ZeroSARAH}, in both the $m>n$ and $n>m$ settings, we searched over
\begin{equation*}
\begin{aligned}
\{ &1,\, 6,\, 11,\, 16,\, 21,\, 26,\, 31,\, 36,\, 41,\, 46,\, 64,\, 96,\, 128,\, 192,\, 256,\, 384,\, 512,\, \\
   &768,\, 1024,\, 1536,\, 2048,\, 3072,\, 4096,\, 6144,\, 8192,\, 10240,\, 12500 \}.
\end{aligned}
\end{equation*}
For \algname{SILVER}, we used the combined grid
\begin{equation*}
\begin{aligned}
\{ &1,\, 6,\, 11,\, 16,\, 21,\, 26,\, 31,\, 36,\, 41,\, 46,\, 50,\, 64,\, 96,\, 128,\, 192,\, 256,\, 384,\, 512,\, \\
   &768,\, 1024,\, 1536,\, 2048,\, 3072,\, 4096,\, 6144,\, 8192,\, 10240,\, 12500 \},
\end{aligned}
\end{equation*}
whose lower part had already been computed and whose upper part was added later. For \algname{D-ZeroSARAH}, we used setting-dependent grids: in the $m>n$ setting,
\begin{equation*}
\begin{aligned}
b &\in \{1,\, 6,\, 11,\, 16,\, 21,\, 26,\, 31,\, 36,\, 41,\, 46,\, 56,\, 66,\, 76,\, 86,\, 96,\, 111,\, 126,\, 151,\, 176,\, 201,\, 226,\, 250\}, \\
s &\in \{1,\, 6,\, 11,\, 16,\, 21,\, 26,\, 31,\, 36,\, 41,\, 46,\, 50\},
\end{aligned}
\end{equation*}
where $b$ is the local batch size and $s$ is the client subset size; in the $n>m$ setting,
\begin{equation*}
\begin{aligned}
b &\in \{1,\, 6,\, 11,\, 16,\, 21,\, 26,\, 31,\, 36,\, 41,\, 46,\, 50\}, \\
s &\in \{1,\, 6,\, 11,\, 16,\, 21,\, 26,\, 31,\, 36,\, 41,\, 46,\, 56,\, 66,\, 76,\, 86,\, 96,\, 111,\, 126,\, 151,\, 176,\, 201,\, 226,\, 250\}.
\end{aligned}
\end{equation*}
Whenever a tuning grid contained runs that reached the target tolerance within the $40$-epoch budget, we selected the final batch size (or pair $(s,b)$ for \algname{D-ZeroSARAH}) as the one requiring the smallest number of epochs to converge. For the finer-grid \algname{ZeroSARAH} and \algname{D-ZeroSARAH} sweeps, none of the runs reached the target tolerance within $40$ epochs; in those cases, we applied a fallback rule and selected the configuration with the smallest final squared gradient norm at the end of the budget.

\begin{table}[t]
\centering
\small
\setlength{\tabcolsep}{5pt}
\caption{Final batch sizes selected for \algname{SILAGE}$(n>m)$ in the $n>m$ setting across the four similarity regimes.}
\begin{tabular}{lc}
\toprule
\textbf{Regime} & $b_{\mathrm{grp}}$ \\
\midrule
$(\delta_1 \text{ small}, \delta_2 \text{ small})$ & 6 \\
$(\delta_1 \text{ small}, \delta_2 \text{ large})$ & 1 \\
$(\delta_1 \text{ large}, \delta_2 \text{ small})$ & 1 \\
$(\delta_1 \text{ large}, \delta_2 \text{ large})$ & 1 \\
\bottomrule
\end{tabular}
\end{table}

\begin{table}[t]
\centering
\small
\setlength{\tabcolsep}{5pt}
\caption{Final flat minibatch sizes selected for flattened \algname{ZeroSARAH} via the fallback rule on the finer grid.}
\begin{tabular}{llc}
\toprule
\textbf{Setting} & \textbf{Regime} & $b$ \\
\midrule
$m>n$ & $(\delta_1 \text{ small}, \delta_2 \text{ small})$ & 192 \\
$m>n$ & $(\delta_1 \text{ small}, \delta_2 \text{ large})$ & 192 \\
$m>n$ & $(\delta_1 \text{ large}, \delta_2 \text{ small})$ & 11 \\
$m>n$ & $(\delta_1 \text{ large}, \delta_2 \text{ large})$ & 11 \\
\midrule
$n>m$ & $(\delta_1 \text{ small}, \delta_2 \text{ small})$ & 192 \\
$n>m$ & $(\delta_1 \text{ small}, \delta_2 \text{ large})$ & 31 \\
$n>m$ & $(\delta_1 \text{ large}, \delta_2 \text{ small})$ & 31 \\
$n>m$ & $(\delta_1 \text{ large}, \delta_2 \text{ large})$ & 31 \\
\bottomrule
\end{tabular}
\end{table}

\begin{table}[t]
\centering
\small
\setlength{\tabcolsep}{5pt}
\caption{Final flat minibatch sizes selected for \algname{SILVER} in each setting and similarity regime.}
\begin{tabular}{llc}
\toprule
\textbf{Setting} & \textbf{Regime} & $b$ \\
\midrule
$m>n$ & $(\delta_1 \text{ small}, \delta_2 \text{ small})$ & 46 \\
$m>n$ & $(\delta_1 \text{ small}, \delta_2 \text{ large})$ & 128 \\
$m>n$ & $(\delta_1 \text{ large}, \delta_2 \text{ small})$ & 96 \\
$m>n$ & $(\delta_1 \text{ large}, \delta_2 \text{ large})$ & 96 \\
\midrule
$n>m$ & $(\delta_1 \text{ small}, \delta_2 \text{ small})$ & 46 \\
$n>m$ & $(\delta_1 \text{ small}, \delta_2 \text{ large})$ & 128 \\
$n>m$ & $(\delta_1 \text{ large}, \delta_2 \text{ small})$ & 96 \\
$n>m$ & $(\delta_1 \text{ large}, \delta_2 \text{ large})$ & 50 \\
\bottomrule
\end{tabular}
\end{table}

\begin{table}[t]
\centering
\small
\setlength{\tabcolsep}{5pt}
\caption{Final local batch sizes $b$ and client subset sizes $s$ selected for \algname{D-ZeroSARAH} via the fallback rule on the finer grid.}
\begin{tabular}{llcc}
\toprule
\textbf{Setting} & \textbf{Regime} & $b$ & $s$ \\
\midrule
$m>n$ & $(\delta_1 \text{ small}, \delta_2 \text{ small})$ & 6 & 1 \\
$m>n$ & $(\delta_1 \text{ small}, \delta_2 \text{ large})$ & 1 & 31 \\
$m>n$ & $(\delta_1 \text{ large}, \delta_2 \text{ small})$ & 1 & 31 \\
$m>n$ & $(\delta_1 \text{ large}, \delta_2 \text{ large})$ & 1 & 31 \\
\midrule
$n>m$ & $(\delta_1 \text{ small}, \delta_2 \text{ small})$ & 1 & 41 \\
$n>m$ & $(\delta_1 \text{ small}, \delta_2 \text{ large})$ & 1 & 96 \\
$n>m$ & $(\delta_1 \text{ large}, \delta_2 \text{ small})$ & 1 & 96 \\
$n>m$ & $(\delta_1 \text{ large}, \delta_2 \text{ large})$ & 1 & 96 \\
\bottomrule
\end{tabular}
\end{table}

\end{document}